\definecolor{myblue}{RGB}{0,0, 152}
\newtheorem{lemma}{Lemma}[section]
\newtheorem{proposition}{Proposition}[section]
\newtheorem{assumption}{Assumption}[section]
\newtheorem{definition}{Definition}[section]
\crefname{assumption}{assumption}{assumptions}
\Crefname{assumption}{Assumption}{Assumptions}
\crefname{definition}{definition}{definitions}
\Crefname{definition}{Definition}{Definitions}
\crefname{proposition}{proposition}{propositions}
\Crefname{proposition}{Proposition}{Propositions}
\crefname{lemma}{lemma}{lemmas}
\Crefname{lemma}{Lemma}{Lemmas}
\crefname{theorem}{theorem}{theorems}
\Crefname{theorem}{Theorem}{Theorems}
\newcommand{\R}{\mathbb{R}} %real numbers
\newcommand{\E}{\mathbb{E}} %expectation
\newcommand{\A}{{\bm A}}
\newcommand{\D}{{\bm D}}
\newcommand{\Z}{{\bm Z}}
\newcommand{\Xx}{{\bm X}}
\newcommand{\G}{{\bm G}}
\newcommand{\eye}{{\bm I}}
\newcommand{\J}{{\bm J}}
\newcommand{\boldbeta}{{\bm \beta}}
\newcommand{\x}{{\bm x}}
\newcommand{\w}{{\bm w}}
\newcommand{\z}{{\bm z}}
\newcommand{\g}{{\bm g}}
\newcommand{\vvec}{{\bm v}}
\newcommand{\zero}{{\bm 0}}
\newcommand{\Loss}{\mathcal{L}}
\newcommand{\loss}{\ell}
\newcommand{\risk}{\mathcal{R}}
\newcommand{\X}{\mathcal{X}}
\newcommand{\Dcal}{\mathcal{D}}
\newcommand{\prox}{\mathcal{P}}
\DeclareMathOperator*{\argmin}{arg\,min}
\title{Stable and Interpretable Unrolled Dictionary Learning}
\author{\name Bahareh Tolooshams \email btolooshams@seas.harvard.edu \\
       \name Demba Ba \email demba@seas.harvard.edu \\
       \addr School of Engineering and Applied Sciences\\
       Harvard University
       }
\begin{document}

\maketitle
% \vspace{-4mm}
%%%%%%%%%%%%%%%%%%%%%%%%%%%%
\begin{abstract}
The dictionary learning problem, representing data as a combination of a few atoms, has long stood as a popular method for learning representations in statistics and signal processing. The most popular dictionary learning algorithm alternates between sparse coding and dictionary update steps, and a rich literature has studied its theoretical convergence. The success of dictionary learning relies on access to a ``good'' initial estimate of the dictionary and the ability of the sparse coding step to provide an unbiased estimate of the code. The growing popularity of unrolled sparse coding networks has led to the empirical finding that backpropagation through such networks performs dictionary learning. We offer the theoretical analysis of these empirical results through PUDLE, a Provable Unrolled Dictionary LEarning method. We provide conditions on the network initialization and data distribution sufficient to recover and preserve the support of the latent code. Additionally, we address two challenges; first, the vanilla unrolled sparse coding computes a biased code estimate, and second, gradients during backpropagated learning can become unstable. We show approaches to reduce the bias of the code estimate in the forward pass, and that of the dictionary estimate in the backward pass. We propose strategies to resolve the learning instability by tuning network parameters and modifying the loss function. Overall, we highlight the impact of loss, unrolling, and backpropagation on convergence. We complement our findings through synthetic and image denoising experiments. Finally, we demonstrate PUDLE's interpretability, a driving factor in designing deep networks based on iterative optimizations, by building a mathematical relation between network weights, its output, and the training set.
\end{abstract}
%
%%%%%%%%%%%%%%%%%%%%%%%%%%%%
\section{Introduction}\label{sec:intro}
This paper\footnote{Source code is available at \url{https://github.com/btolooshams/stable-interpretable-unrolled-dl}} considers the dictionary learning problem, namely representing data $\x \in \X \subset \R^m$ as linear combinations of a few atoms from a dictionary $\D \in \Dcal \subset \R^{m \times p}$. Given $\x$ and $\D$, the problem of recovering the sparse (few non-zero elements) coefficients $\z \in \R^p$ is referred to as sparse coding, and can be solved through the lasso~\citep{tibshirani1996lasso} (also known as basis pursuit~\citep{chen2001atomic}):
\begin{equation}\label{eq:lasso}
\begin{array}{c}
\loss_{\x}(\D) \coloneqq \min_{\z \in \R^p}\ \Loss_{\x}(\z, \D) + h(\z)
\end{array}
\end{equation}
where $\Loss_{\x}(\z, \D) = \frac{1}{2} \| \x - \D \z \|_2^2$, and $h(\z) = \lambda \| \z \|_1$. Specifically, the problem aims to recover a dictionary $\D^*$ that generates the data, i.e.,
\begin{equation}\label{eq:gen}
\begin{array}{c}
\x =  \D^{\ast} \z^{\ast}
\end{array}
\end{equation}
where $\z^{\ast}$ is sparse. Olshausen and Field~\citep{olshausen1997sparse} introduced~\eqref{eq:gen} in computational neuroscience as a model for how early layers of the visual cortex process natural images. Sparse coding has been widely studied and utilized in the statistics~\citep{hastie2015statistical} and signal processing communities~\citep{elad2010sparse}. A few practical examples are denoising~\citep{elad2016denoise}, super-resolution~\citep{yang2010superres}, text processing~\citep{jenatton2011proximal}, and classification~\citep{mairal2009supdl}, where it enables the extraction of sparse high-dimensional features representing data. Moreover, sparse modelling is ubiquitous in many other fields such as seismic signal processing~\citep{filho2018seismic}, radar sensing for target detections~\citep{bajwa2011radar}, and astrophysics for image reconstruction from interferometric data~\citep{akiyama2017imaging}. Furthermore, \citet{cleary2017trans, cleary2021compressed} use this model to learn a dictionary consisting of gene modules for efficient imaging transcriptomics.

Sparse coding has been utilized to construct neural architectures through approaches such as sparse energy-based models~\citep{ranzato2007sparsenergy, ranzato2008sparsedbn} or recurrent sparsifying encoders~\citep{gregor2010lista}. The latter has initiated a growing literature on constructing interpretable deep networks based on an approach referred to as algorithm unrolling~\citep{hershey2014unfold, monga2019algorithm}. Deep unrolled neural networks have gained popularity as inference maps in recent years due to their computational efficiency and their performance in various domains such as image denoising~\citep{simon2019rethinking, tolooshams2020tnnls, tolooshams2020icml}, super-resolution~\citep{wang2015supersparse}, medical imaging~\citep{solomon2020deepPCA}, deblurring~\citep{schuler2016deblur, li2020deblur}, radar sensing~\citep{tolooshams2021unfolding}, and speech processing~\citep{hershey2014unfold}.
%
%%%%%%%%%%%%%%%%%%
%%%
\begin{figure}[t]
	\centering
	\includegraphics[width=0.8\linewidth]{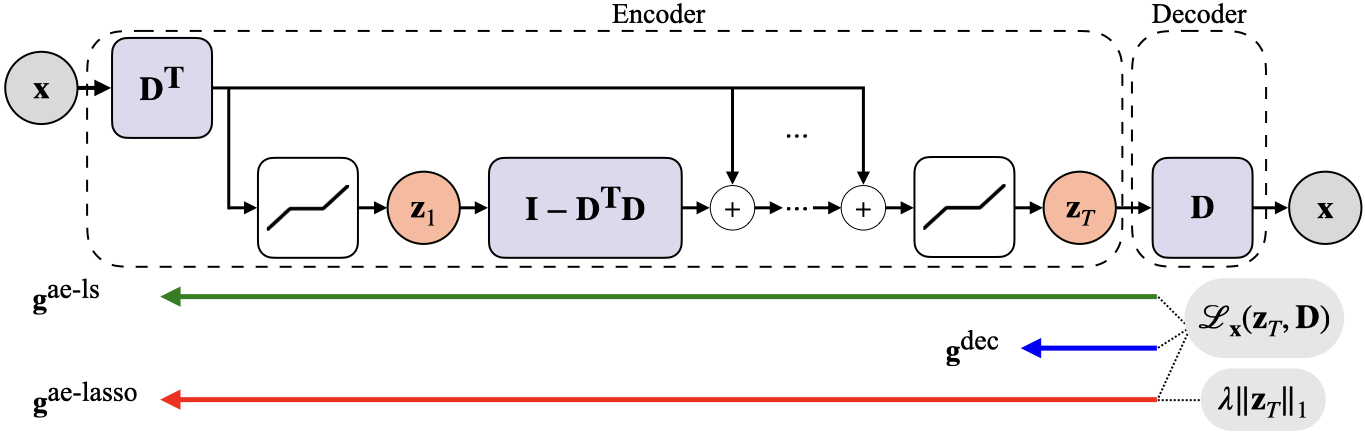}
	\caption{Provable unrolled dictionary learning (PUDLE): Unrolled network architecture with dictionary $\D$.}
	\vspace{-4mm}
	\label{fig:network}
	\vspace{-2mm}
\end{figure}
%%%
%

Prior to the advent of unrolled networks, gradient-based dictionary learning relied on analytic gradients computed from the lasso given the sparse code. With unrolled networks, automatic differentiation~\citep{baydin2018automatic}, referred to as backpropagation~\citep{lecun2012efficient} in the reverse-mode, gained attention for parameter estimation~\citep{tolooshams2018mlsp}. The automatic gradient is obtained by backpropagation through the algorithm used to estimate the code. Automatic differentiation in reverse and forward-mode~\citep{franceschi2017forward} is used in other areas, e.g., hyperparameter selection~\citep{feurer2019hyperparameter}, and in a more relevant context, in the seminal work of LISTA~\citep{gregor2010lista}. Other works demonstrated empirically the convergence of $\ell_1$-based dictionary learning by backpropagation through unrolled networks~\citep{tolooshams2020tnnls}. Given finite computational power,~\citet{tolooshams2020tnnls} convert sparse coding into an encoder by unrolling $T$ iterations of ISTA~\citep{daubehies2004ista, blumensath2008ista}, and attach to it a linear decoder for reconstructing. Unrolled networks obtained in this manner suffer from two important limitations.

First, the sparse coding step in the forward pass computes a biased estimate of the code. This results, in turn, in a biased estimate of the backward gradient and, hence, a degradation of dictionary recovery performance. Second, as studied recently~\citep{malezieux2022understanding}, inaccuracies in the early iterations of the unrolled network make backpropagation  unstable. We address both of these shortcomings in this paper. Moreover, while~\citet{malezieux2022understanding} analyze the gradient computed by backpropagation through unrolled sparse-coding networks, there is no known theoretical analysis of how weight updates using this gradient impact the recovery of a ground-truth code $\z^{\ast}$, nor of their convergence to a ground-truth dictionary $\D^{\ast}$.

This paper proposes a Provable Unrolled Dictionary LEarning (PUDLE) (\Cref{fig:network}). We aim to recover $\D^{\ast}$ by training the network using backpropagation with a learning rate of $\eta$. Three different choices affect the gradient: the number of unrolled iterations, the loss, and whether one backpropagates through the decoder only or through both the encoder and decoder. We highlight the impact of such choices on the convergence of the training algorithm. Backpropagation through the decoder results in the analytic gradient $\g_t^{\text{dec}}$ using the code estimate $\z_t$. The gradients $\g_t^{\text{ae-lasso}}$ and $\g_t^{\text{ae-ls}}$ are computed by backpropagation through the autoencoder using the lasso and least-squares objectives, respectively (\Cref{algo:unfolded}). We compare the gradients with the classical gradient-based alternating-minimization algorithm for dictionary learning~\citep{chatterji2017alternating} (i.e., cycling between sparse coding and dictionary update steps using the analytic gradient $\hat \g$ (\Cref{algo:altmin})), and provide a theoretical analysis of gradient-based recovery of the dictionary $\D^{\ast}$. We provide sufficient conditions under which the gradient computation, hence the learning, is stable. Additionally, we show how using the reconstruction loss with backpropagation not only does not suffer from backpropagated instability but also ameliorates the propagation of the forward pass bias into the gradient estimate from the backward pass. Finally, we demonstrate the interpretability of the unrolled network. Our contributions are:
%
% \subsection{Contributions}
%
\begin{itemize}[noitemsep, topsep=0pt, leftmargin=12pt]
\item {\bf Unrolled sparse coding}\quad Unlike prior work~\citep{malezieux2022understanding} that studies the ability of sparse coding to recover the \emph{solution of the lasso~\eqref{eq:lasso}} given the current estimate of the dictionary (we call this local estimation), we study unrolled sparse coding for recovery of the \emph{true generating code in~\eqref{eq:gen}} (we call this global estimation). We provide sufficient conditions on the network and data distributions such that the forward pass recovers (\Cref{thm:supprec}) and preserves (\Cref{thm:supppres}) the correct code support. Assuming support identification, we show the linear convergence of the code estimated through the unrolled iterations to the solution of the lasso (\Cref{thm:fwdz}). We provide an explicit code expression at unrolled layer $t$ and its error with respect the ground-truth code $\z^{\ast}$; we highlight the biased estimate of the code when the forward pass strictly solves lasso (\Cref{thm:fwdzerrorvariable,thm:fwdzerrorfixed}). Moreover, in a more general scenario, we show that the error in the code estimate is upper bounded by two terms, i.e., one associated with the dictionary error and the other to the bias of the estimate of code amplitude, due to $\ell_1$-based optimization (\Cref{thm:fwdzglobal}). The latter highlights that vanilla lasso ($\ell_1$-based) sparse coding computes a biased estimate of codes, and below we discuss strategies to either alleviate this bias in the forward pass or mitigate its propagation into the backward pass for dictionary learning.
\item {\bf Mitigation of coding bias propagation into dictionary learning}\quad We study gradient estimation for dictionary learning in PUDLE. We decompose the upper bound on the gradient errors compared to the gradient direction to recover $\D^{\ast}$ into terms involving the current dictionary error, the bias of the code estimate, and the lasso loss used to compute the gradient. We show that using only the reconstruction loss while backpropagating (i.e., $\g_t^{\text{ae-ls}}$) results in the vanishing of the upper bound due to the usage of lasso loss. This means that given fixed $\lambda$, $\g_t^{\text{ae-ls}}$ ameliorates the propagation of the forward pass bias into the backward pass. Specifically, we show that $\g_t^{\text{ae-ls}}$ is a better estimator of the direction to recover $\D^{\ast}$ than $\g_t^{\text{dec}}$ and $\g_t^{\text{ae-lasso}}$. Hence, weight updates using $\g_t^{\text{ae-ls}}$ converges to a closer neighbourhood of $\D^{\ast}$ (\Cref{thm:globalgradient}). In a supervised image denoising task, we show that the advantage of $\g_t^{\text{ae-ls}}$ goes beyond dictionary learning; $\g_t^{\text{ae-ls}}$ results in better image denoising compared to $\g_t^{\text{dec}}$. Furthermore, our network outperforms the sparse coding scheme in NOODL, a state-of-the-art online dictionary learning algorithm~\citep{rambhatla2018noodl} (\Cref{tab:psnr}). Moreover, we show that the bias in the estimate of $\D^{\ast}$ vanishes as $\lambda_t = \lambda \nu^t$ (with $0 <\nu < 1$) decays within the forward unrolled layers (\Cref{fig:baslines_full}). This strategy, supported by \Cref{thm:fwdzerrorvariable}, results in an unbiased estimate of the code $\z^{\ast}$ and recovery of $\D^{\ast}$ (\Cref{thm:dictvariabledec}).
\item {\bf Stability of unrolled learning}\quad Our approach to resolve the instability issue of backpropagation in unrolled networks is two-fold. First, we show that under proper dictionary initialization, the instability of the gradient $\g_t^{\text{ae-lasso}}$ computation, studied by~\citet{malezieux2022understanding}, as $T$ increases is resolved. We give a condition under which the code support is identified and recovered after one iteration and, hence, gradient computation stays stable. Second, in the absence of support identification in early iterations, we propose to use the gradient $\g_t^{\text{ae-ls}}$ which resolves the stability issue introduced by lasso loss in the backward pass. We highlight this stability through image denoising training without gradient explosion (\Cref{fig:psnr}).
\item {\bf Interpretable sparse codes and dictionary}\quad Prior work has discussed algorithm unrolling for designing interpretable deep architectures based on optimization models~\citep{monga2019algorithm}, or interpretability of sparse representations in dictionary learning models~\citep{kim2010intersparsecodingvision}. However, there is no known work to mathematically characterize the interpretability of unrolled network architectures. In this regard, first, we construct a mathematical relation between learned weights (dictionary) at gradient convergence and the training data (\Cref{thm:interp}). Second, we relate the inferred representation/reconstruction of test examples to the training data. We highlight several interpretable features of the unrolled dictionary learning network. Specifically, we perform analysis that provide insights into questions such as {\it why am I learning a particular feature in the dictionary?} or {\it from what part of the training set or an image I am learning that feature?} (\Cref{fig:mnist_01234_learn_image_cont_to_dict}). Moreover, we provide an explanation of the relation between the new test image denoised/reconstructed through the network and the training dataset. The model provides insights on {\it how training images are used to reconstruct a new test image} (\Cref{fig:mnist_01234_learn_interpolate_gz}) or {\it how the test image picks up training images that have a similar representation to itself to reconstruct} (\Cref{fig:mnist_01234_learn_interpolate_code_sim}). 
\end{itemize}
%
%%%%%%%%%%%%%%%%%%
\section{Related Works} 
There is vast literature on the theoretical convergence of dictionary learning. \citet{spielman2021sparsedict} proposed a factorization method to recover the dictionary in the undercomplete setting (i.e., $p \leq m$). \citet{barak2015dlsumofsq} proposed to solve dictionary learning via sum-of-squares semidefinite program. K-SVD~\citep{aharon2006ksvd} and MOD~\citep{engan1999mod} are popular greedy approaches. Alternating-minimization-based methods have been used extensively in theory and practice~\citep{Jain2013lowrank, agarwal2014overdl, arora2014overdl}.

Recent work has incorporated gradient-based updates into alternating minimization~\citep{chatterji2017alternating, arora2015sparsecoding, rambhatla2018noodl}. \citet{chatterji2017alternating} provided a finite sample analysis and convergence guarantees when updating the dictionary using the analytic gradient. \citet{arora2015sparsecoding} proposed neurally plausible sparse coding approaches with analytic gradients. Another work focused on online dictionary learning~\citep{mairal2009onlinedl} with an unbiased gradient updates~\citep{rambhatla2018noodl}.~\citet{arora2015sparsecoding} discussed methods to reduce the bias of dictionary estimate, and~\citet{rambhatla2018noodl} showed how to reduce bias in code and dictionary estimates. A common feature in the above-mentioned work is the use of analytic gradients, i.e., explicitly designing gradient updates independent of the sparse coding step and not utilizing automatic gradients with deep learning optimizers. A theoretical analysis of backpropagation for dictionary learning exists only for shallow autoencoders~\citep{rangamani2018sparseae,nguyen2019dynamics}.

The theoretical analysis of unrolled neural networks has mainly analyzed the convergence speed of variants of LISTA~\citep{gregor2010lista}, where the focus is on sparse coding (i.e., the encoder) not dictionary learning~\citep{sperchmann2012learnstruc, xin2016maxsparse, moreau2017trainsparsefactor, gires2018tradeoffconvacc, chen2018unfoldista, liu2019alista, ablin2019stepsize}. \citet{moreau2017trainsparsefactor} showed that upon successful factorization of the Gram matrix of the dictionary within layers, the network achieves accelerated convergence. \citet{gires2018tradeoffconvacc} examined the tradeoffs between reconstruction accuracy and convergence speed of LISTA. Moreover, \citet{chen2018unfoldista} studied the learning dynamics of the weights and biases of unrolled-ISTA and proved that it achieves linear convergence. Follow-up works investigated the dynamics of step size in a recursive sparse coding encoder~\citep{liu2019alista, ablin2019stepsize}. \citet{ablin2019stepsize} minimized the lasso through backpropagation but still assumed the knowledge of the dictionary at the decoder.

\citet{ablin2020super} compared analytic and automatic gradient estimators of min-min optimizations with smooth and differentiable functions. Moreover,~\citet{malezieux2022understanding} studied the stability of gradient approximation in the early regime of unrolling for dictionary learning. Unlike our work, where we evaluate the gradients for model recovery,~\citet{ablin2020super} and~\citet{malezieux2022understanding} studied the asymptotic gradient errors locally in each step of an alternating minimization and did not provide errors concerning $\z^{\ast}$ or $\D^{\ast}$.

%%%%%%%%%%%%%%%%%%%%%%%%%%%%
\section{Preliminaries}\label{sec:prelim}
Given $n$ independent samples, dictionary learning aims to minimize the empirical risk, i.e.,
\begin{equation}\label{eq:erm}
\begin{array}{c}
\min_{\D \in \Dcal}\ \risk_n(\D)\quad \text{with}\quad  \risk_n(\D) \triangleq \frac{1}{n} \sum_{i=1}^n \loss_{\x^i}(\D)
\end{array}
\end{equation}
where $\lim_{n \to \infty} \risk_n(\D) = \E_{\x \in \X}\ [\loss_{\x}(\D)]\ \text{a.s.}$ To prevent scaling ambiguity between the code $\z$ and dictionary $\D$, it is common to constrain the norm of the dictionary columns. Hence, we define the set of feasible solutions for the dictionary as $\Dcal \triangleq \{ \D \in \R^{m \times p}\ \text{s.t.}\ \forall j\in \{1,2,\ldots,p\},\ \| \D_j \|_2^2 \leq 1 \}$. We can project estimates of $\D$ onto the feasible set by performing $\D_j \leftarrow \nicefrac{1}{\max (\| \D_j \|_2, 1)} \D_j$, either at every update or at the end of training. We assume certain properties on the data, specifically its domain (\Cref{assum:domain}), energy (\Cref{assum:boundx}), code distribution (\Cref{assum:distz}), and generating dictionary (\Cref{assum:d}).
%
%%%%%%% assumptions
\begin{assumption}[Domain signals]\label{assum:domain}
$\X$ and $\Dcal$ are both compact convex sets.
\end{assumption}
\begin{assumption}[Bounded signals]\label{assum:boundx}
$\exists\ M > 0\ \text{s.t.}\ \| \x \|_2 < M\  \forall \x \in \X$.
\end{assumption}
\begin{assumption}[Code distribution]\label{assum:distz}
The code $\z^{\ast}$ is at most $s$-sparse with the support $S^{\ast} = \text{supp}(\z^{\ast})$. Each element in $S^{\ast}$ is chosen from the set $[1, p]$, uniformly at random without replacement. $p_i = P(i \in S^{\ast}) = \Theta(s/p)$, and $p_{ij} = P(i,j \in S^{\ast}) = \Theta(s^2/p^2)$. Given the support, $\z_S^{\ast}$ is i.i.d, has symmetric probability distribution density function, $\E[\z_{(i)}^{\ast} \mid i \in S^{\ast}] = 0$ and $\E[\z_{(S)}^{\ast} \z_{(S^{\ast})}^{\ast \text{T}} \mid S^{\ast}] = \eye$. Moreover, the non-zero entries of the code are sub-Gaussian and lower bounded, i.e., for $i \in S^{\ast}$, $| \z_{(i)}^{\ast} | \geq C_{\min}$ where $0 < C_{\min} \leq 1$.
\end{assumption}
\begin{assumption}[Generating dictionary]\label{assum:d}
$\D^{\ast}$ is $\mu$-incoherent (see \Cref{def:mu}) where $\mu = \mathcal{O}(\log{(m)})$. $\D^{\ast}$ is unit-norm columns matrix ($\| \D_i^{\ast} \|_2 = 1$), $\| \D^{\ast} \|_2 = \mathcal{O}(\sqrt{p/m})$, and $p = \mathcal{O}(m)$.
\end{assumption}
To achieve model recovery using gradient descent, we assume an appropriate dictionary initialization, i.e.,
\begin{assumption}[Dictionary closeness]\label{assum:closeness}
The initial dictionary $\D^{(0)}$ is $(\delta_{0}, 2)$-close to $\D^{\ast}$  (see \Cref{def:closeness}). The dictionary closeness at every update is denoted by $\| \D_j^{(l)} - \D_j^{\ast} \|_2 \leq \delta_l\ \forall j$. Furthermore, $\delta_l = \mathcal{O}^{\ast}(1 / \log{p})$.
\end{assumption}

\citet{arora2015sparsecoding} proposed a dictionary initialization method offering $(\delta, 2)$-close to ${\bm D}^{\ast}$ for $\delta = O^{\ast}(1 / \log m)$. The method is based on pairwise reweighting of samples $\{\x^i\}_{i=1}^n$ from the generative model~\eqref{eq:gen}, and does not require access to $\D^{\ast}$. In addition,  \citet{rambhatla2018noodl} utilize dictionary closeness assumptions and such dictionary initialization for their theoretical analysis. Moreover, \citet{Agarwal2017dictcluster} proposed a clustering approach to find a close initial estimate of the dictionary.

Given the $\mu$-incoherence of $\D^{\ast}$ (\Cref{assum:d}) and $\delta_l$-closeness of the dictionary, $\D^{(l)}$ is $\mu_l$-incoherent, i.e.,
\begin{restatable}[$\mu_l$-incoherent]{lemma}{mul}\label{lemma:mul}
$\D^{(l)}$ is $\mu_l$-incoherent where $\mu_l = \mu + 2 \sqrt{m} \delta_l$.
\end{restatable}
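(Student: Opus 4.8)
The plan is to bound the off-diagonal inner products $\langle \D_i^{(l)}, \D_j^{(l)} \rangle$ for $i \neq j$ directly against the corresponding inner products of the ground-truth dictionary, whose magnitudes are at most $\mu/\sqrt{m}$ by the $\mu$-incoherence of $\D^{\ast}$ (\Cref{assum:d}). The key algebraic step is to expand bilinearly while keeping one perturbed factor in each cross term, i.e.
\begin{equation*}
\langle \D_i^{(l)}, \D_j^{(l)} \rangle - \langle \D_i^{\ast}, \D_j^{\ast} \rangle = \langle \D_i^{(l)} - \D_i^{\ast},\, \D_j^{(l)} \rangle + \langle \D_i^{\ast},\, \D_j^{(l)} - \D_j^{\ast} \rangle ,
\end{equation*}
and then apply the triangle inequality together with Cauchy--Schwarz to the two resulting cross terms.

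Next I would control each factor using the stated hypotheses. By the closeness \Cref{assum:closeness}, both column perturbations obey $\| \D_i^{(l)} - \D_i^{\ast} \|_2 \leq \delta_l$ and $\| \D_j^{(l)} - \D_j^{\ast} \|_2 \leq \delta_l$. By \Cref{assum:d} the ground-truth columns are unit-norm, so $\| \D_i^{\ast} \|_2 = 1$; and since $\D^{(l)}$ is maintained in the feasible set $\Dcal$, its columns satisfy $\| \D_j^{(l)} \|_2 \leq 1$. Hence each cross term is at most $\delta_l$, and
\begin{equation*}
\bigl| \langle \D_i^{(l)}, \D_j^{(l)} \rangle \bigr| \leq \bigl| \langle \D_i^{\ast}, \D_j^{\ast} \rangle \bigr| + 2\delta_l \leq \frac{\mu}{\sqrt{m}} + 2\delta_l = \frac{\mu + 2\sqrt{m}\,\delta_l}{\sqrt{m}},
\end{equation*}
which is precisely incoherence with parameter $\mu_l = \mu + 2\sqrt{m}\,\delta_l$.

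The one delicate point, and the only place where care is required, is the asymmetric way of splitting the difference: keeping $\D_j^{(l)}$ (rather than $\D_j^{\ast}$) as the undifferenced factor in the first cross term is what lets me invoke the feasible-set bound $\| \D_j^{(l)} \|_2 \leq 1$ and avoid picking up a spurious quadratic remainder. A naive symmetric expansion $\langle \D_i^{\ast} + \bm{e}_i,\, \D_j^{\ast} + \bm{e}_j \rangle$ with $\| \bm{e}_i \|_2, \| \bm{e}_j \|_2 \leq \delta_l$ would instead produce $\mu/\sqrt{m} + 2\delta_l + \delta_l^2$, and one would then have to argue the $\delta_l^2$ term away (possible since $\delta_l < 1$, but unnecessary here). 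With the columns-in-$\Dcal$ normalization in hand the estimate is essentially a one-line perturbation bound; the remaining thing to get right is matching the $1/\sqrt{m}$ scaling in the incoherence definition so that the additive $2\delta_l$ reads off correctly as $2\sqrt{m}\,\delta_l/\sqrt{m}$.
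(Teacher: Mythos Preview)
Your proposal is correct and essentially the same as the paper's proof. The paper uses the mirror-image telescoping identity
\[
\langle \D_i^{(l)}, \D_j^{(l)} \rangle = \langle \D_i^{\ast}, \D_j^{\ast} \rangle - \langle \D_i^{\ast} - \D_i^{(l)},\, \D_j^{\ast} \rangle - \langle \D_i^{(l)},\, \D_j^{\ast} - \D_j^{(l)} \rangle,
\]
pairing the perturbation with $\D_j^{\ast}$ and $\D_i^{(l)}$ rather than with $\D_j^{(l)}$ and $\D_i^{\ast}$ as you do, but this is immaterial: both versions avoid the $\delta_l^2$ remainder and land on $\mu/\sqrt{m} + 2\delta_l$ by Cauchy--Schwarz and the unit-norm/feasibility bounds.
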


The recurrent encoder and decoder, which perform the computations shown in~\Cref{algo:unfolded}, use the loss $\Loss$ and proximal operator $\prox_{ b}(v) \triangleq \text{sign}(v) \max( | v | - b, 0)$ for the $\ell_1$ norm $h \colon \R^p \to \R$. The encoder implements ISTA~\citep{daubehies2004ista, blumensath2008ista} with step size $\alpha$, assumed to be less than $\nicefrac{1}{\sigma_{\text{max}}^2(\D)}$. With infinite encoder unrolling, the encoder's output is the solution to the lasso~\eqref{eq:lasso}, following the optimality condition (\Cref{lemma:kktlasso}) where we denote $f_{\x}(\z, \D) \triangleq \Loss_{\x}(\z, \D) + h(\z)$. One immediate observation is that $\lambda \geq \|\D^{\text{T}} \x \|_{\infty} \Leftrightarrow \{\zero\} \in \argmin f_{\x}(\z, \D)$. We assume $\lambda < \|\D^{\text{T}} \x \|_{\infty}$. We specify in~\Cref{thm:supprec} and ~\Cref{thm:supppres} the conditions on $\lambda$ at every encoder iteration to ensure support recovery and its preservation through the encoder. In case of a constant $\lambda$ across encoder iterations while using $\D^{\ast}$ as the dictionary (i.e., sparse coding using $\ell_1$ norm), the network recovers a biased code $\hat \z^{\ast}$. We denote this amplitude error in the code by $\hat \delta^{\ast} \triangleq \| \hat \z^{\ast} - \z^{\ast} \|_2$ which is small and goes to zero with $\lambda$ decaying through the encoder.

In addition, we assume the solution to~\eqref{eq:lasso} is unique; sufficient conditions for uniqueness in the overcomplete case (i.e., $p > m$) are extensively studied in the literature~\citep{wainwright2009sharp, candes2009l1, tibshirani2012lassounique}. \citet{tibshirani2012lassounique} discussed that the solution is unique with probability one if entries of $\D$ are drawn from a continuous probability distribution~\citep{tibshirani2012lassounique} (\Cref{assum:unique}). This assumption implies that $\D^{\text{T}}_S \D_S$ is full-rank. We argue that as long as the data $\x \in \X$ are sampled from a continuous distribution, this assumption holds for the entire learning process. The preservation of this property is guaranteed at all iterations of the alternating minimization proposed in~\citep{agarwal2014overdl}. Moreover, this assumption has been previously considered in analyses of unrolled sparse coding networks~\citep{ablin2019stepsize, malezieux2022understanding} and can be extended to $\ell_1$-based optimization problems~\citep{ tibshirani2012lassounique, rosset2004boosting}. 
\begin{assumption}[Lasso uniqueness]\label{assum:unique}
The entries of the dictionary $\D$ are continuously distributed. Hence, the minimizer of~\eqref{eq:lasso} is unique, i.e., $\hat \z = \argmin f_{\x}(\z, \D)$ with probability one.
\end{assumption}
\Cref{lemma:fixedpoint} states the fixed-point property of the encoder recursion~\citep{parikh2014proximal}. Given the definitions for {\it Lipschitz} and {\it Lipschitz differentiable} functions, (\Cref{def:lip,def:lipdiff}), the loss $\Loss$ and function $h$ satisfy following {\it Lipschitz} properties.
\begin{lemma}[Fixed-point property of lasso]\label{lemma:fixedpoint}
Given~\Cref{assum:unique}, we have $\zero \in \nabla_1 \Loss_{\x}(\hat \z, \D) + \partial h(\hat \z)$. The minimizer is a fixed-point of the mapping, i.e., $\hat \z = \prox_{\alpha \lambda}(\hat \z - \alpha \nabla_1 \Loss_{\x}(\hat \z, \D)) = \Phi(\hat \z)$~\citep{parikh2014proximal}.
\end{lemma}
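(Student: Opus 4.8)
The plan is to establish the two assertions in order, relying on standard convex-analytic tools. For the first claim, I would invoke Fermat's rule for subgradients: the objective $f_{\x}(\cdot, \D) = \Loss_{\x}(\cdot, \D) + h$ is convex (a sum of the convex quadratic $\Loss_{\x}$ and the convex norm $h = \lambda\|\cdot\|_1$), so $\hat\z$ is a global minimizer if and only if $\zero \in \partial f_{\x}(\hat\z, \D)$. Because $\Loss_{\x}$ is differentiable in its first argument, the subdifferential sum rule applies without any qualification constraint, giving $\partial f_{\x}(\hat\z, \D) = \nabla_1 \Loss_{\x}(\hat\z, \D) + \partial h(\hat\z)$. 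This yields the stated inclusion $\zero \in \nabla_1 \Loss_{\x}(\hat\z, \D) + \partial h(\hat\z)$; \Cref{assum:unique} guarantees that $\hat\z$ is the unique such point, so the condition characterizes a single, well-defined minimizer.

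For the fixed-point claim, I would first record the defining variational characterization of the proximal operator. By construction $\prox_{\alpha\lambda}(\vvec) = \text{sign}(\vvec)\max(|\vvec|-\alpha\lambda, 0)$ is exactly the minimizer of $\z \mapsto \tfrac12\|\z - \vvec\|_2^2 + \alpha h(\z)$, i.e. the proximal map of $\alpha h$; its own first-order optimality condition reads $\u = \prox_{\alpha\lambda}(\vvec) \iff \tfrac{1}{\alpha}(\vvec - \u) \in \partial h(\u)$. I would either cite this standard fact or verify the soft-thresholding formula coordinatewise against $\partial(\lambda|\cdot|)$.

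The two pieces then combine directly. Substituting $\vvec = \hat\z - \alpha \nabla_1 \Loss_{\x}(\hat\z, \D)$ into the proximal characterization, the identity $\hat\z = \prox_{\alpha\lambda}(\hat\z - \alpha\nabla_1\Loss_{\x}(\hat\z,\D))$ is equivalent to $-\nabla_1\Loss_{\x}(\hat\z,\D) \in \partial h(\hat\z)$, which is precisely the optimality inclusion from the first step. Hence $\hat\z = \Phi(\hat\z)$, and the minimizer is a fixed point of the ISTA map.

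I do not anticipate a genuine obstacle here, as this is the classical equivalence between subgradient optimality and the proximal fixed point~\citep{parikh2014proximal}; the only points requiring care are confirming that the sum rule for subdifferentials applies (guaranteed by the differentiability of $\Loss_{\x}$) and that the soft-thresholding operator coincides with $\prox_{\alpha h}$. The uniqueness supplied by \Cref{assum:unique} is what lets me speak of \emph{the} minimizer $\hat\z$ rather than an arbitrary element of the solution set.
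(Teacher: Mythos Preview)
Your proposal is correct and follows the standard convex-analytic route (Fermat's rule plus the variational characterization of the prox). The paper does not actually supply a proof of this lemma; it states the result with a citation to \citet{parikh2014proximal} and treats it as a known fact, so your write-up is exactly the kind of argument one would give to fill that gap.
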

\begin{lemma}[Lipschitz differentiable least squares]\label{lemma:lipdiffloss}
Given $\Loss_{\x}(\z, \D) = \frac{1}{2} \| \x - \D \z \|_2^2$, $\Dcal$, and~\Cref{assum:boundx}, the loss is Lipschitz differentiable. Let $L_1$ and $L_2$ denote the Lipschitz constants of the first derivatives $\nabla_1\Loss_{\x}(\z, \D)$ and $\nabla_2\Loss_{\x}(\z, \D)$, $L_{11}$ and $L_{21}$ the Lipschitz constants of the second derivatives $\nabla_{11}^2\Loss_{\x}(\z, \D)$ and $\nabla_{21}^2\Loss_{\x}(\z, \D)$, all w.r.t $\z$. Let $\nabla_1\Loss_{\x}(\z, \D)$ be $L_{1D}$-Lipschitz w.r.t $\D$, and we denote the Lipschitz constant of $\nabla_{11}\Loss_{\x}(\z, \D)$ and $\nabla_{21}\Loss_{\x}(\z, \D)$ w.r.t to $\D$ by $L_{11D}$ and $L_{21D}$, respectively.
\end{lemma}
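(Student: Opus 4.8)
The plan is to exploit that $\Loss_{\x}(\z, \D) = \tfrac{1}{2}\|\x - \D\z\|_2^2$ is a polynomial of degree two in $\z$ and of degree two in the entries of $\D$; it is therefore jointly smooth in $(\z, \D)$, so every derivative appearing in the statement exists and is continuous. Consequently, establishing ``Lipschitz differentiability'' and exhibiting the constants reduces to (i) writing each derivative in closed form and (ii) bounding the operator norm of its increment uniformly over the compact domain, using only sub-multiplicativity and the triangle inequality. The domain is compact because $\|\x\|_2 < M$ (\Cref{assum:boundx}), $\D \in \Dcal$ forces $\sigma_{\text{max}}^2(\D) \le \|\D\|_F^2 \le p$ (with the sharper $\|\D\|_2 = \mathcal{O}(\sqrt{p/m})$ from \Cref{assum:d}), and the codes $\z$ lie in a bounded set, as I discuss below.

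The explicit derivatives are $\nabla_1\Loss_{\x}(\z,\D) = \D^{\text{T}}(\D\z - \x)$, $\nabla_2\Loss_{\x}(\z,\D) = (\D\z - \x)\z^{\text{T}}$, and $\nabla_{11}^2\Loss_{\x}(\z,\D) = \D^{\text{T}}\D$, while $\nabla_{21}^2\Loss_{\x}$ is the bilinear map $\Delta \mapsto \Delta^{\text{T}}(\D\z - \x) + \D^{\text{T}}\Delta\z$ obtained by differentiating $\nabla_1\Loss_{\x}$ in $\D$. For the constants taken with respect to $\z$: since $\nabla_1\Loss_{\x}$ is affine in $\z$ with slope $\D^{\text{T}}\D$, one reads off $L_1 = \sup_{\D\in\Dcal}\sigma_{\text{max}}^2(\D)$; since $\nabla_{11}^2\Loss_{\x} = \D^{\text{T}}\D$ is constant in $\z$, $L_{11} = 0$; telescoping $\z_1\z_1^{\text{T}} - \z_2\z_2^{\text{T}} = \z_1(\z_1 - \z_2)^{\text{T}} + (\z_1 - \z_2)\z_2^{\text{T}}$ gives $L_2 = \mathcal{O}(\|\D\|_2\sup\|\z\|_2 + M)$; and the $\z$-increment of $\nabla_{21}^2\Loss_{\x}$ is $\Delta \mapsto \Delta^{\text{T}}\D(\z_1 - \z_2) + \D^{\text{T}}\Delta(\z_1 - \z_2)$, whose operator norm is at most $2\|\D\|_2\|\z_1 - \z_2\|_2$, so $L_{21} = 2\sup\|\D\|_2$.

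For the constants taken with respect to $\D$, the key algebraic device is the same telescoping identity $\D_1^{\text{T}}\D_1 - \D_2^{\text{T}}\D_2 = \D_1^{\text{T}}(\D_1 - \D_2) + (\D_1 - \D_2)^{\text{T}}\D_2$. Applying it to $\nabla_1\Loss_{\x}$ yields $L_{1D} = \mathcal{O}((\sup\|\D\|_2)\sup\|\z\|_2 + M)$; applying it directly to $\nabla_{11}^2\Loss_{\x} = \D^{\text{T}}\D$ gives $L_{11D} = 2\sup\|\D\|_2$; and bounding the $\D$-increment of $\nabla_{21}^2\Loss_{\x}$, namely $\Delta \mapsto \Delta^{\text{T}}(\D_1 - \D_2)\z + (\D_1 - \D_2)^{\text{T}}\Delta\z$, gives $L_{21D} = 2\sup\|\z\|_2$. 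Each supremum is finite on the compact domain, which is exactly the content of the lemma.

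The one step that is not purely mechanical, and hence the main obstacle, is justifying the uniform bound $\sup\|\z\|_2 < \infty$ on the codes, since $L_2$, $L_{1D}$, and $L_{21D}$ all depend on it (whereas $L_1$, $L_{11}$, $L_{21}$, $L_{11D}$ depend only on $\D$ and $\x$). Here I would use that the codes fed into $\Loss$ are lasso solutions, or ISTA iterates converging to one: comparing the objective at $\hat\z$ against $\z = \zero$ gives $\lambda\|\hat\z\|_1 \le \tfrac{1}{2}\|\x\|_2^2 < \tfrac{1}{2}M^2$, hence $\|\hat\z\|_2 \le \|\hat\z\|_1 \le M^2/(2\lambda)$, a bound uniform over $\x \in \X$ and $\D \in \Dcal$. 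Equivalently, one invokes \Cref{assum:domain} to work on a fixed compact code set. With this bound in hand, all the suprema above are finite and the constants are well defined.
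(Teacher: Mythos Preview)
The paper does not supply a separate proof for this lemma; it is stated in the preliminaries as a notational device, relying implicitly on the fact that a quadratic loss on a compact domain has all the required Lipschitz derivatives. Your proposal is correct and in fact more explicit than the paper's own treatment: you compute each derivative in closed form and read off the constants via the telescoping identity $A_1^{\text{T}}A_1 - A_2^{\text{T}}A_2 = A_1^{\text{T}}(A_1-A_2) + (A_1-A_2)^{\text{T}}A_2$, which is the natural route. Your handling of the code bound via the lasso objective comparison $\lambda\|\hat\z\|_1 \le \tfrac12\|\x\|_2^2$ is also sound and matches the spirit of \Cref{assum:domain,assum:boundx}. One minor remark: your observation that $L_{11}=0$ for the pure least-squares loss is correct (the Hessian $\D^{\text{T}}\D$ is constant in $\z$), but the paper keeps $L_{11}$ abstract throughout, presumably to leave room for generalization beyond quadratics (cf.\ the discussion in the limitations appendix); this does not affect correctness, only generality of presentation.
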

\begin{lemma}[Lipschitz proximal]\label{lemma:lipprox}
Given $h(\z) = \lambda \| \z \|_1$, its proximal operator has bounded sub-derivative, i.e., $\| \partial \prox_{\lambda}(\z) \|_2 \leq c_{\text{prox}}$.
\end{lemma}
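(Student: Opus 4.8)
The plan is to exploit the fact that $\prox_{\lambda}$ acts coordinatewise, so that any element of its sub-derivative is a diagonal matrix and computing its spectral norm reduces to a scalar bound. First I would recall that, by the definition $\prox_{b}(v) = \text{sign}(v)\max(|v|-b,0)$ given in the preliminaries, the operator $\prox_{\lambda}$ applied to $\z \in \R^p$ is elementwise soft-thresholding, with $i$-th coordinate $S_{\lambda}(z_{(i)}) = \text{sign}(z_{(i)})\max(|z_{(i)}|-\lambda,0)$. Each scalar map $S_{\lambda}$ is piecewise affine: it equals $z_{(i)}-\lambda$ for $z_{(i)}>\lambda$, equals $0$ for $|z_{(i)}|<\lambda$, and equals $z_{(i)}+\lambda$ for $z_{(i)}<-\lambda$.

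Next I would differentiate on the smooth pieces, obtaining $S_{\lambda}'(z_{(i)}) = 1$ whenever $|z_{(i)}|>\lambda$ and $S_{\lambda}'(z_{(i)})=0$ whenever $|z_{(i)}|<\lambda$. At the two kink points $z_{(i)}=\pm\lambda$ the map is not differentiable, so I would pass to the Clarke sub-derivative, which is the convex hull of the one-sided slopes, namely $\partial S_{\lambda}(\pm\lambda)=[0,1]$. In every case the scalar sub-derivative is contained in $[0,1]$.

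Because $\prox_{\lambda}$ is separable across coordinates, any selection from its sub-Jacobian $\partial\prox_{\lambda}(\z)$ is a diagonal matrix with diagonal entries $d_i \in [0,1]$. The spectral norm of a diagonal matrix equals the largest absolute diagonal entry, so $\| \partial\prox_{\lambda}(\z) \|_2 = \max_i |d_i| \leq 1$, uniformly in $\z$. This establishes the claim with the explicit constant $c_{\text{prox}} = 1$.

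The only delicate point is the treatment of the kink points $z_{(i)}=\pm\lambda$: one must interpret $\partial\prox_{\lambda}$ as a set-valued (Clarke) sub-derivative rather than an ordinary Jacobian, and check that even the extreme admissible slopes remain in $[0,1]$ so that the bound holds for every selection. Everything else is a routine scalar computation, and the separability of the proximal operator does the rest of the work by turning the matrix norm into a coordinatewise maximum.
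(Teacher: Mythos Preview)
Your argument is correct and complete. The paper does not actually prove \Cref{lemma:lipprox}; it is stated in the preliminaries as a basic fact and used throughout (for instance in \Cref{lemma:lipmapping}) without justification. Your coordinate-wise analysis of the soft-thresholding operator, handling of the kink points via the Clarke sub-derivative, and reduction of the spectral norm to the maximal diagonal entry is the standard way to establish this, and it yields the explicit constant $c_{\text{prox}} = 1$, which is exactly what is needed for $\rho = c_{\text{prox}}(1-\alpha\mu) < 1$ in \Cref{lemma:lipmapping}.
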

%
%%%%%%%%%%%%%%%%%%%%%%%%%%%%
\section{Unrolled Dictionary Learning}\label{sec:main}
The gradients defined in PUDLE (\Cref{algo:unfolded}) can be compared against the local direction at each update of classical alternating-minimization (\Cref{algo:altmin}). Assuming there are infinite samples, i.e.,
\begin{equation}\label{eq:glocal}
\begin{array}{c}
\text{Best local direction}:\quad \hat \g\ \triangleq\ \lim_{n \to \infty} \frac{1}{n} \sum_{i=1}^n \nabla_2 \Loss_{\x^i}(\hat \z^{i}, \D) = \E_{\x \in \X}\ [ \nabla_{2} \Loss_{\x}(\hat \z, \D)]
\end{array}
\end{equation}
where $\hat \z = \argmin_{\z \in \R^p} \Loss_{\x}(\z, \D) + h(\z)$. Additionally, to assess the estimators for model recovery, hence dictionary learning, we compare them against gradient pointing towards $\D^{\ast}$, namely
\begin{equation}\label{eq:gglobal}
\begin{array}{l}
\text{Desired global gradient for $\D^{\ast}$}:\ \ \g^{\ast}\ \triangleq\ \lim_{n \to \infty} \frac{1}{n} \sum_{i=1}^n \nabla_2 \Loss_{\x^i}(\z^{i\ast}, \D) = \E_{\x \in \X}\ [ \nabla_2 \Loss_{\x}(\z^{\ast}, \D)].
\end{array}
\end{equation}
To see why the above is the desired direction, $(\z^{\ast}, \D^{\ast})$ is a critical point of the loss $\Loss$ which reaches zero for data following the model~\eqref{eq:gen}. Hence, to reach $\D^{\ast} \in \argmin_{\D \in \Dcal} \E_{\x \in \X}[\Loss_{\x}(\z^{\ast}, \D)]$, we move towards the direction minimizing the loss in expectation. Specifically, using the gradient $\nabla_2 \Loss_{\x}(\z^{\ast}, \D) = - (\x - \D \z^{\ast}) \z^{\ast \text{T}} = (\D - \D^{\ast}) \z^{\ast} \z^{\ast \text{T}}$ as a descent direction, we move from $\D$ toward $\D^{\ast}$ modulo the code presence matrix $\z^{\ast} \z^{\ast \text{T}}$. Given these directions, we analyze the error of the gradients $\g_t^{\text{dec}}$, $\g_t^{\text{ae-lasso}}$, and $\g_t^{\text{ae-ls}}$ assuming infinite samples. In local analysis, we compare the code and gradient estimates to the lasso optimization in each update of the alternating minimization. In global analysis, we evaluate the performance in recovery of the ground-truth code $\z^{\ast}$ and the dictionary $\D^{\ast}$. In this regard, we first study the forward pass.
%
%
%%%%%
\begin{algorithm}[t]
\SetAlgoLined
\KwInitialize{Samples $\{\x^i\}_{i=1}^n \in \X$, initial dictionary $\D^{(0)}$}
{\bf Repeat:} $l = 0,1,\ldots, \text{number of epochs}$\\
 \quad {\bf Sparse coding step}:\quad  $\z^{i(l)} = \argmin_{\z} \Loss_{\x^i}(\z, \D^{(l)}) + h(\z)$,\quad (for $i \in [1,n])$\\
 \quad {\bf Dictionary update}:\quad  $\D^{(l+1)} = \D^{(l)} - \eta \hat \g^{(l)}\quad \text{where}\quad \hat \g^{(l)}\ \triangleq\ \frac{1}{n} \sum_{i=1}^n \nabla_{2} \Loss_{\x^i}(\z^{i(l)}, \D^{(l)})$
 \caption{Classical alternating-minimization-based dictionary learning using lasso~\eqref{eq:lasso}.}
 \label{algo:altmin}
\end{algorithm}
%%%%%
\begin{algorithm}[t]
\SetAlgoLined
\KwInitialize{Samples $\{\x^i\}_{i=1}^n \in \X$, initial dictionary $\D^{(0)}$, and $\z_0 = \zero$.}
{\bf Repeat:} $l = 0,1,\ldots, \text{number of epochs}$\\
\quad {\bf Forward pass}: (for $i \in [1,n]$)
\vspace{-2mm}
\begin{equation}\label{eq:encdec}
\begin{array}{l}
\text{Encoder:}\quad \z_{t+1}^{i(l)} = \Phi(\z_{t}^{i(l)}, \D^{(l)}) = \prox_{\alpha \lambda}(\z_{t}^{i(l)} - \alpha \nabla_1 \Loss_{\x^i}(\z_t^{i(l)}, \D^{(l)}))\ \text{(repeat for $T$)}\\
\text{Decoder:}\qquad \hat \x^{i(l)} = \D^{(l)} \z_T^{i(l)}
\end{array}
\end{equation}
\vspace{-2mm}
\quad {\bf Backward pass}: $\D^{(l+1)} = \D^{(l)} - \eta \g_T^{(l)}\quad \text{where}\ \g_T^{(l)}\ \text{is either of}$
  \begin{equation}
  \begin{array}{l}
 \g_{T}^{(l)\ \text{dec}}\ \triangleq\  \frac{1}{n} \sum_{i=1}^n  \nabla_2 \Loss_{\x^i}(\z_{T}^{i(l)}, \D^{(l)})\\
 \g_{T}^{(l)\ \text{ae-lasso}}\ \triangleq\  \frac{1}{n} \sum_{i=1}^n \nabla_2 \Loss_{\x^i}(\z_{T}^{i(l)}, \D^{(l)}) + \J_T^{i(l)+} \left(\nabla_1 \Loss_{\x^i}(\z_{T}^{i(l)}, \D^{(l)}) + \partial h(\z_{T}^{i(l)})\right)\\
 \g_{T}^{(l)\ \text{ae-ls}}\ \triangleq\  \frac{1}{n} \sum_{i=1}^n \nabla_2 \Loss_{\x^i}(\z_{T}^{i(l)}, \D^{(l)}) + \J_T^{i(l)+} \nabla_1 \Loss_{\x^i}(\z_{T}^{i(l)}, \D^{(l)})
  \end{array}
  \end{equation}
  See \Cref{def:jacobian} for $\J_T^{+}$.
 \caption{PUDLE: Provable unrolled dictionary learning framework.}
 \label{algo:unfolded}
\end{algorithm}
%%%%%
%
%%%%%%%%%%%%%%%%%%%%%%%%%%%%
\subsection{Forward pass}\label{sec:forward}
We show convergence results in the forward pass for $\z$ and the Jacobian, i.e.,
\begin{definition}[Code Jacobian]\label{def:jacobian}
Given $\D$, the Jacobian of $\z_t$ is defined as $\J_t \triangleq \frac{\partial \z_t}{\partial \D}$ with adjoint $\J_t^+$.
\end{definition}
The forward pass analyses give upper bounds on the error between $\z_t$ and $\hat \z$ and the error between $\J_t$ and $\hat \J$ as a function of unrolled iterations $t$. We define $\hat \J$ as following: considering the function $\z \rightarrow \Loss_{\x}(\z, \D) + h(\z)$, $\hat \z(\D)$ is its minimizer and $\hat \J = \frac{\partial \hat \z(\D)}{\partial \D}$. We will require these errors in \Cref{sec:backward}, where we analyze the gradient estimation errors. Similar to~\citep{chatterji2017alternating}, the error associated with $\g_t^{\text{dec}}$ depends on the code convergence. Unlike $\g_t^{\text{dec}}$, the convergence of backpropagation with gradient estimates $\g_t^{\text{ae-lasso}}$ and $\g_t^{\text{ae-ls}}$ relies on the convergence properties of the code \emph{and} the Jacobian~\citep{ablin2020super}. Forward-pass theories are based on studies by \citet{gilbert1992automatic} on the convergence of variables and their derivatives in an iterative process governed by a smooth operator~\citep{gilbert1992automatic}. Moreover, \citet{hale2007fixed} studied the convergence analysis of fixed point iterations for $\ell_1$ regularized optimization problems~\citep{hale2007fixed}.
\paragraph{Support recovery and preservation} We re-state a result from~\citep{hale2007fixed} on support selection.
\begin{proposition}[Finite-iteration support selection]\label{prop:supp}
Given ~\Cref{assum:unique}, let $\hat \z = \argmin f_{\x}(\z, \D)$ with $S \triangleq \text{supp}(\hat \z)$. There exists a $B>0$ such that $ \text{supp}(\z_t) = S, \forall t > B$.
\end{proposition}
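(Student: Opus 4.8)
The plan is to exploit the explicit form of the soft-thresholding operator together with the optimality (KKT) characterization of $\hat\z$ and the convergence of the ISTA iterates $\z_t \to \hat\z$, so that the pre-threshold vectors $\vvec_t \triangleq \z_t - \alpha \nabla_1 \Loss_{\x}(\z_t, \D)$ eventually land, coordinate by coordinate, on the correct side of the thresholding dead-zone $[-\alpha\lambda, \alpha\lambda]$. First I would record the convergence $\z_t \to \hat\z$: since $\alpha < 1/\sigma_{\text{max}}^2(\D)$ the proximal-gradient map $\Phi$ is non-expansive, $\hat\z$ is its unique fixed point by \Cref{lemma:fixedpoint} and \Cref{assum:unique}, and standard proximal-gradient convergence for the convex objective $f_{\x}$ yields $\z_t \to \hat\z$. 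By \Cref{lemma:lipdiffloss} the map $\nabla_1\Loss_{\x}(\cdot,\D)$ is Lipschitz, hence continuous, so $\vvec_t \to \hat\vvec \triangleq \hat\z - \alpha \nabla_1\Loss_{\x}(\hat\z,\D)$.

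Next I would split the coordinates using the condition $\zero \in \nabla_1\Loss_{\x}(\hat\z,\D) + \lambda\,\partial\|\hat\z\|_1$ from \Cref{lemma:fixedpoint}. Writing $g^{\ast} \triangleq \nabla_1\Loss_{\x}(\hat\z,\D)$, for $i\in S$ this forces $g_i^{\ast} = -\lambda\,\text{sign}(\hat\z_i)$, so $\hat\vvec_i = \hat\z_i + \alpha\lambda\,\text{sign}(\hat\z_i)$ has magnitude $|\hat\z_i| + \alpha\lambda > \alpha\lambda$; for $i\notin S$ it gives $|g_i^{\ast}|\le\lambda$ and $\hat\vvec_i = -\alpha g_i^{\ast}$, so $|\hat\vvec_i| = \alpha|g_i^{\ast}| \le \alpha\lambda$. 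Thus the on-support coordinates of $\hat\vvec$ sit strictly outside the dead-zone with the correct sign, while the off-support coordinates sit inside it. Because there are only finitely many coordinates and $\vvec_t \to \hat\vvec$, for each $i\in S$ there is a $B_i$ beyond which $|\vvec_{t,i}| > \alpha\lambda$ and $\text{sign}(\vvec_{t,i}) = \text{sign}(\hat\z_i)$, so that $(\z_{t+1})_i = \prox_{\alpha\lambda}(\vvec_t)_i \ne 0$ with the right sign; taking $B$ to be the maximum of these finitely many $B_i$ gives $\text{supp}(\z_t)\supseteq S$ for all $t>B$.

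The main obstacle is the reverse inclusion on the off-support coordinates, and it is exactly where strict complementarity is required: if some $i\notin S$ had $|g_i^{\ast}| = \lambda$, then $\hat\vvec_i$ would lie on the boundary $\pm\alpha\lambda$ of the dead-zone, and $\vvec_{t,i}$ could straddle that edge indefinitely, so coordinate $i$ might re-enter $\text{supp}(\z_t)$ infinitely often and finite identification would fail. I would close this gap by invoking the continuous-distribution hypothesis of \Cref{assum:unique}, which ensures, with probability one, that the equicorrelation set $\{i : |g_i^{\ast}| = \lambda\}$ coincides with $S$, i.e. $|g_i^{\ast}| < \lambda$ strictly for every $i\notin S$. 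Strictness produces a uniform gap $\alpha(\lambda - |g_i^{\ast}|) > 0$, so $\vvec_t\to\hat\vvec$ forces $|\vvec_{t,i}| < \alpha\lambda$ for all large $t$, whence $(\z_{t+1})_i = 0$. Combining both inclusions and enlarging $B$ to cover the finitely many coordinate-wise thresholds gives $\text{supp}(\z_t) = S$ for all $t>B$, reproducing the finite-identification argument of \citet{hale2007fixed}.
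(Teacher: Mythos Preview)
The paper does not supply its own proof of this proposition: it is explicitly stated as a re-statement of the finite-identification result of \citet{hale2007fixed}, and no argument is given in the appendix. Your write-up is essentially a clean reproduction of the Hale--Yin--Zhang argument: convergence of the ISTA iterates, continuity of the pre-threshold map, and the KKT split into active and inactive coordinates, with the soft-thresholding dead-zone doing the work. That is the right route and matches what the cited reference does.

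One point deserves tightening. You correctly isolate strict complementarity (no inactive coordinate satisfies $|g_i^{\ast}| = \lambda$) as the place where the argument can break, and you appeal to \Cref{assum:unique} to rule it out. But \Cref{assum:unique} as written posits that the \emph{entries of $\D$} are continuously distributed; this gives columns in general position and hence uniqueness of $\hat\z$ (Tibshirani), but uniqueness alone does not force the equicorrelation set to coincide with the support. Strict complementarity is a separate genericity condition, typically obtained by assuming $\x$ (or the noise) is drawn from a continuous distribution --- which the paper alludes to in the discussion surrounding \Cref{assum:unique} (``as long as the data $\x \in \X$ are sampled from a continuous distribution''), but does not fold into the assumption itself. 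So your proof is sound once strict complementarity is in hand; just be precise that you are using the continuous distribution of $\x$, not of $\D$, to secure it, and that this is an additional (standard) nondegeneracy hypothesis implicit in the citation of \citet{hale2007fixed} rather than a consequence of \Cref{assum:unique} as stated.
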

This means the unrolled encoder identifies the support in finite iterations. Support recovery in finite iterations has been studied in the literature for LISTA~\citep{chen2018unfoldista}, Step-LISTA~\citep{ablin2019stepsize}, and shallow autoencoders~\citep{arora2015sparsecoding, rangamani2018sparseae, nguyen2019dynamics, tolooshams2020icml}. We show that under proper initialization of the dictionary, the encoder achieves linear convergence. \citet{arora2015sparsecoding} discussed some appropriate initialization which is used by \citet{rambhatla2018noodl}. Given initial closeness $\delta_0$, the encoder selects and recovers the correct signed support of the code with high probability in one iteration $B=1$ (\Cref{thm:supprec}), and the iterations preserve the correct support (\Cref{thm:supppres}). In spite of slow convergence of ISTA~\cite{liang2014local}, support recovery after one iteration in unrolled networks is studied in the literature~\citep{arora2015sparsecoding,rambhatla2018noodl,chen2018unfoldista,nguyen2019dynamics}.
\begin{restatable}[Forward pass support recovery]{theorem}{fwdsupprec}\label{thm:supprec}
Given \Cref{assum:distz,assum:d}, suppose $\D^{(l)}$ is $\delta_l = \mathcal{O}^{\ast}(1 / \sqrt{\log{p}})$ close to $\D^{\ast}$. If $s = \mathcal{O}^{\ast}(\sqrt{m} / \mu \log{m})$, and $\mu = \mathcal{O}(\log{m})$, then with probability of at least $1 - \epsilon^{(l)}_{\text{supp-rec}}$, the choice of $\lambda_0 = C_{\min} / 4$ recovers the support of the code $\z^{\ast}$ in one encoder iteration, i.e., $\text{sign}(\mathcal{S}_{\alpha \lambda_0}(\alpha \D^{(l)\text{T}} \x) = \text{sign}(\z^{\ast})$, where $\epsilon^{(l)}_{\text{supp-rec}} = 2 p\exp{(- \frac{C_{\min}^2}{\mathcal{O}^{\ast}(\delta_l^2)})}$.
\end{restatable}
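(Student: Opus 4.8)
The plan is to exploit that the encoder is initialized at $\z_0 = \zero$, so a single ISTA step collapses to a soft-thresholding of the correlation vector. Since $\nabla_1 \Loss_{\x}(\zero, \D^{(l)}) = -\D^{(l)\text{T}}\x$, the first iterate is $\z_1 = \prox_{\alpha\lambda_0}(\alpha\D^{(l)\text{T}}\x) = \mathcal{S}_{\alpha\lambda_0}(\alpha\D^{(l)\text{T}}\x)$, and because $\alpha > 0$ the signed support of $\z_1$ is governed entirely by whether $|(\D^{(l)\text{T}}\x)_i|$ exceeds $\lambda_0$ and by the sign of $(\D^{(l)\text{T}}\x)_i$; the factor $\alpha$ cancels. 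Substituting $\x = \D^{\ast}\z^{\ast}$, I would decompose each coordinate, for $i\in S^{\ast}$, as a deterministic \emph{signal} term $\D_i^{(l)\text{T}}\D_i^{\ast}\z_i^{\ast}$ plus a random \emph{crosstalk} term $\sum_{j\in S^{\ast}\setminus\{i\}} \D_i^{(l)\text{T}}\D_j^{\ast}\z_j^{\ast}$, while for $i\notin S^{\ast}$ only the pure crosstalk $\sum_{j\in S^{\ast}} \D_i^{(l)\text{T}}\D_j^{\ast}\z_j^{\ast}$ remains.

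For the signal term I would write $\D_i^{(l)} = \D_i^{\ast} + \Delta_i$ with $\|\Delta_i\|_2 \leq \delta_l$ from \Cref{assum:closeness}, so that $\D_i^{(l)\text{T}}\D_i^{\ast} = 1 + \Delta_i^{\text{T}}\D_i^{\ast} = 1 - O(\delta_l)$. Hence the signal equals $(1 - O(\delta_l))\z_i^{\ast}$, which inherits the sign of $\z_i^{\ast}$ and has magnitude at least $(1-O(\delta_l))C_{\min}$ using $|\z_i^{\ast}| \geq C_{\min}$ from \Cref{assum:distz}. The crosstalk I would control by sub-Gaussian concentration after conditioning on $S^{\ast}$: given the support, the entries $\{\z_j^{\ast}\}$ are independent, mean-zero, unit-variance, and sub-Gaussian (\Cref{assum:distz}), so the crosstalk is sub-Gaussian with variance proxy $\sigma^2 \propto \sum_{j\in S^{\ast}} (\D_i^{(l)\text{T}}\D_j^{\ast})^2$, and a tail bound gives $P(|\text{crosstalk}| > C_{\min}/4) \leq 2\exp(-C_{\min}^2/\mathcal{O}^{\ast}(\sigma^2))$ per coordinate.

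The crux is therefore showing $\sigma^2 = \mathcal{O}^{\ast}(\delta_l^2)$. I would split $\D_i^{(l)\text{T}}\D_j^{\ast} = \D_i^{\ast\text{T}}\D_j^{\ast} + \Delta_i^{\text{T}}\D_j^{\ast}$: the incoherence terms contribute $\sum_j (\D_i^{\ast\text{T}}\D_j^{\ast})^2 \leq s\mu^2/m$, while the perturbation terms give $\sum_j(\Delta_i^{\text{T}}\D_j^{\ast})^2 = \Delta_i^{\text{T}}\D_{S^{\ast}}^{\ast}\D_{S^{\ast}}^{\ast\text{T}}\Delta_i \leq \|\D_{S^{\ast}}^{\ast}\|_2^2\,\delta_l^2$. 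Bounding $\|\D_{S^{\ast}}^{\ast}\|_2^2 \leq 1 + s\mu/\sqrt{m} = O(1)$ via $\mu$-incoherence and a Gershgorin argument, and substituting $s = \mathcal{O}^{\ast}(\sqrt{m}/\mu\log m)$ and $\mu = \mathcal{O}(\log m)$ to see that $s\mu^2/m = \mathcal{O}^{\ast}(\delta_l^2)$, yields $\sigma^2 = \mathcal{O}^{\ast}(\delta_l^2)$ as required. Finally I would assemble the pieces with $\lambda_0 = C_{\min}/4$: on the event that every crosstalk magnitude is below $C_{\min}/4$, for $i\in S^{\ast}$ the pre-threshold value has magnitude at least $(1-O(\delta_l))C_{\min} - C_{\min}/4 > \lambda_0$ and the sign of $\z_i^{\ast}$, so it survives thresholding with the correct sign, while for $i\notin S^{\ast}$ the pre-threshold value lies below $\lambda_0$ and is zeroed. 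A union bound over all $p$ coordinates then produces the stated $\epsilon^{(l)}_{\text{supp-rec}} = 2p\exp(-C_{\min}^2/\mathcal{O}^{\ast}(\delta_l^2))$.

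The hard part will be the variance estimate $\sigma^2 = \mathcal{O}^{\ast}(\delta_l^2)$: naively bounding each perturbation coefficient by $\delta_l$ and summing over the $\Theta(s)$ support entries gives the far too loose $s\delta_l^2$, which grows with $m$. The spectral-norm bookkeeping on $\D_{S^{\ast}}^{\ast}$ — replacing the sum of squared perturbation inner products by $\|\D_{S^{\ast}}^{\ast}\|_2^2\,\delta_l^2$ and invoking incoherence to keep $\|\D_{S^{\ast}}^{\ast}\|_2 = O(1)$ — together with the prescribed scalings of $s$, $\mu$, and $\delta_l$ is precisely what makes the exponent sharpen to $C_{\min}^2/\mathcal{O}^{\ast}(\delta_l^2)$ rather than a vacuous quantity.
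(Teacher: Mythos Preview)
Your proposal is correct and follows essentially the same route as the paper: signal/crosstalk decomposition of $\D^{(l)\text{T}}\x$, lower-bounding the signal via dictionary closeness, bounding the crosstalk variance by combining $\mu$-incoherence of $\D^{\ast}$ with the spectral-norm control $\|\D_{S^{\ast}}^{\ast}\|_2 = O(1)$ (via Gershgorin) on the perturbation piece, then sub-Gaussian concentration and a union bound over the $p$ coordinates. The only minor difference is that the paper uses the polarization identity (with unit-norm columns) to obtain the slightly sharper signal bound $\langle \D_i^{(l)}, \D_i^{\ast}\rangle \geq 1 - \delta_l^2/2$ rather than your $1 - O(\delta_l)$, but either version suffices for the threshold comparison with $\lambda_0 = C_{\min}/4$.
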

\begin{restatable}[Forward pass support preservation]{theorem}{fwdsupppres}\label{thm:supppres}
Given \Cref{assum:distz,assum:d}, suppose $\D^{(l)}$ is $\delta_l = \mathcal{O}^{\ast}(1 / \log{p})$ close to $\D^{\ast}$. If $s = \mathcal{O}^{\ast}(\sqrt{m} / \mu \log{m})$, $\mu = \mathcal{O}(\log{m})$, and the regularizer and step size are chosen such that $\lambda_t^{(l)} = \frac{\mu_l}{\sqrt{m}} \| \z^{\ast} - \z_t \|_1 + a_{\gamma} = {\bm \Omega}(\frac{s \log{m}}{\sqrt{m}})$ and 
$\alpha^{(l)} \leq 1 - \frac{2\lambda_t^{(l)} - (1 - \frac{\delta_l^2}{2}) C_{\min}}{\lambda_{t-1}^{(l)}}$, then with probability of at least $1 - \epsilon^{(l)}_{\text{supp-pres}}$, the support, recovered at the first iteration, is preserved through the encoder iterations. We have $a_{\gamma} = \mathcal{O}(\sqrt{s\delta_l})$ and $\epsilon^{(l)}_{\text{supp-pres}} \coloneqq \epsilon^{(l)}_{\text{supp-rec}} + \epsilon^{(l)}_{\gamma} = 2 p \exp{(\frac{-C_{\min}^2}{\mathcal{O}^{\ast}(\delta_l^2)})}+ 2 s \exp{(\frac{-1}{\mathcal{O}(\delta_l)})}$.
\end{restatable}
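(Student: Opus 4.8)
The plan is to argue by induction on the unrolled index $t$: assuming the iterate $\z_t$ has exactly the signed support of $\z^{\ast}$ (the base case $t=1$ being \Cref{thm:supprec}), I would show that the soft-thresholding update $\z_{t+1} = \prox_{\alpha \lambda_t}(\z_t + \alpha \D^{(l)\text{T}}(\x - \D^{(l)} \z_t))$ again produces the signed support $S^{\ast} = \text{supp}(\z^{\ast})$. Writing $\vvec_{t+1} \triangleq \z_t + \alpha \D^{(l)\text{T}}(\x - \D^{(l)} \z_t)$ for the pre-threshold vector, preservation amounts to two coordinatewise claims: $|(\vvec_{t+1})_{(i)}| \leq \alpha \lambda_t$ for $i \notin S^{\ast}$ (so the prox zeroes it), and $\text{sign}((\vvec_{t+1})_{(i)}) = \text{sign}(\z^{\ast}_{(i)})$ together with $|(\vvec_{t+1})_{(i)}| > \alpha \lambda_t$ for $i \in S^{\ast}$ (so the prox keeps it with the correct sign).

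The computational core is a single decomposition. Using the generative model $\x = \D^{\ast} \z^{\ast}$ and the inductive fact that $\z_t$ and $\z^{\ast}$ share the support $S^{\ast}$, I would split $\D^{(l)\text{T}}(\x - \D^{(l)} \z_t) = \D^{(l)\text{T}} \D^{(l)} (\z^{\ast} - \z_t) + \D^{(l)\text{T}}(\D^{\ast} - \D^{(l)}) \z^{\ast}$, isolating a cross-talk/signal term from a dictionary-error term whose $i$-th entry I denote $\gamma_{(i)}$. The off-diagonal of the first term is controlled by incoherence: for any $i$, $|\sum_{j \in S^{\ast}, j \neq i} (\D^{(l)}_i)^{\text{T}} \D^{(l)}_j (\z^{\ast} - \z_t)_{(j)}| \leq \tfrac{\mu_l}{\sqrt{m}} \| \z^{\ast} - \z_t \|_1$ by \Cref{lemma:mul}, which is exactly the first summand of the prescribed $\lambda_t$. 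Each $\gamma_{(i)}$ is a weighted sum of the mean-zero sub-Gaussian code entries (\Cref{assum:distz}) with coefficients of magnitude at most $\delta_l$, hence has variance proxy at most $s \delta_l^2$; a sub-Gaussian tail bound then gives $|\gamma_{(i)}| \leq a_\gamma$ with $a_\gamma = \mathcal{O}(\sqrt{s \delta_l})$ off an event of probability $2 \exp(-1/\mathcal{O}(\delta_l))$, and the accompanying union bound yields the $\epsilon^{(l)}_\gamma = 2 s \exp(-1/\mathcal{O}(\delta_l))$ contribution. This is precisely what $a_\gamma$ is inserted to absorb, so that for $i \notin S^{\ast}$ one gets $|(\vvec_{t+1})_{(i)}| \leq \alpha(\tfrac{\mu_l}{\sqrt{m}} \| \z^{\ast} - \z_t \|_1 + a_\gamma) = \alpha \lambda_t$ on the good event.

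For on-support coordinates I would isolate the diagonal signal term $(\D^{(l)}_i)^{\text{T}} \D^{\ast}_i \z^{\ast}_{(i)}$ and use the unit-norm columns of $\D^{\ast}$ together with $\| \D^{(l)}_i - \D^{\ast}_i \|_2 \leq \delta_l$ (\Cref{assum:closeness,assum:d}) to obtain $(\D^{(l)}_i)^{\text{T}} \D^{\ast}_i \geq 1 - \tfrac{\delta_l^2}{2}$, so the signal has the correct sign and magnitude at least $(1 - \tfrac{\delta_l^2}{2}) C_{\min}$. Collecting terms, $(\vvec_{t+1})_{(i)}$ is the retained part $(1 - \alpha \| \D^{(l)}_i \|_2^2)(\z_t)_{(i)}$ plus $\alpha$ times (signal $+$ cross-talk $+$ $\gamma_{(i)}$), where cross-talk and $\gamma_{(i)}$ together are bounded in magnitude by $\lambda_t$ by construction. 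Requiring the thresholded value $|(\vvec_{t+1})_{(i)}| - \alpha \lambda_t$ to remain strictly positive with the sign of $\z^{\ast}_{(i)}$ then reduces to a lower bound comparing the $(1 - \tfrac{\delta_l^2}{2}) C_{\min}$ signal against the current threshold and cross-talk (the two $\lambda_t$ contributions) net of the retained signal carried in $(\z_t)_{(i)}$, which in turn was diminished by the previous threshold $\alpha \lambda_{t-1}$ — and this is exactly what the stated condition $\alpha \leq 1 - \frac{2\lambda_t - (1 - \delta_l^2/2) C_{\min}}{\lambda_{t-1}}$ encodes. A final union bound of the support-recovery failure from \Cref{thm:supprec} with the dictionary-error event gives $\epsilon^{(l)}_{\text{supp-pres}} = \epsilon^{(l)}_{\text{supp-rec}} + \epsilon^{(l)}_\gamma$.

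I expect the main obstacle to be the on-support magnitude argument and the clean derivation of the step-size condition, because $\lambda_t$ is itself defined through $\| \z^{\ast} - \z_t \|_1$, so the thresholds at consecutive iterations are coupled through the retained term $(1 - \alpha \| \D^{(l)}_i \|_2^2)(\z_t)_{(i)}$ and the previous threshold $\alpha \lambda_{t-1}$. Making the induction self-consistent requires simultaneously tracking how the coordinatewise error contracts (so that $\lambda_t$ decreases) and ensuring the decaying threshold never erodes an on-support entry past zero. By contrast, the off-support bound and the sub-Gaussian concentration for $\gamma$ are comparatively routine given \Cref{lemma:mul} and \Cref{assum:distz}.
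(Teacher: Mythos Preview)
Your proposal is correct and takes essentially the same route as the paper: identical inductive setup, the same split into an incoherence-controlled cross-talk term and a sub-Gaussian dictionary-error term $\gamma$ (with the same $a_\gamma=\mathcal{O}(\sqrt{s\delta_l})$ and $2s$-factor union bound), and the same mechanism behind the step-size condition. The only detail the paper makes explicit that you leave sketched is the intermediate inequality $\alpha\lambda_t \leq \tfrac{1-\alpha}{2}\z_t^{\min} + \tfrac{\alpha}{2}(1-\tfrac{\delta_l^2}{2})C_{\min}$ together with the inductive lower bound $\z_t^{\min}\geq \alpha\lambda_{t-1}$, which combine to give the stated $\alpha$-condition.
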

The support preservation conditions on $\lambda_t$ and $\alpha$ introduce two insights. First, with an increase of $t$, the code error decrease, hence the lower bound on $\lambda_t$. Second, the decay of $\lambda_t$ as the encoder unrolls increases the upper bound on $\alpha$. Hence, we suggest a decaying strategy in values of $\lambda_t$ as $t$ increases.

The utilization of knowledge of the code error, as we do in \Cref{thm:supppres}, to set the proper thresholding/bias/regularization parameters ($\lambda$) constitutes a fairly standard practice. Below we discuss similar results in the literature. For the preservation of correct signed-support in a sparse coding network, \citet{rambhatla2018noodl} provided a proper thresholding value at every iteration as a function of the $\ell_1$-norm of the code error with respect to a ground-truth code; they additionally demonstrated an upper bound on the estimate of the code coefficients as a function of dictionary closeness. Moreover, \citet{nguyen2019dynamics} used information on the range of ground-truth code to choose proper biases in their neural network to guarantee support recovery. \citet{chen2018unfoldista} similarly provided an upper bound on the bias of their unrolled sparse coding network at every layer as a function of $\ell_2$-norm error between the code estimate at the layer and the ground-truth code. Overall, the error between a code estimate and the ground-truth code appearing in the lower bound on $\lambda_t^{(l)}$ can further simplified into terms related to terms such as the dictionary closeness $\delta_l$, code sparsity. For example, \citet{chatterji2017alternating}, for their particular sparse coding algorithm, provided $\ell_{\infty}$-norm upper bound as a function of terms such as code sparsity, data dimensionality, code range, and dictionary error.
\paragraph{Code convergence and error}  Given the support recovery and its preservation, the encoder convergence studied in~\citep{malezieux2022understanding} can achieve linear convergences after its first iteration. We re-state this result on the rate of convergence of the encoder in~\Cref{thm:fwdz}. We drop the superscript $(l)$ to simplify the notation.
%
%%%
% \vspace{-4mm}
\begin{wrapfigure}[14]{r}{0.35\textwidth}
	\centering
	\includegraphics[width=0.999\linewidth]{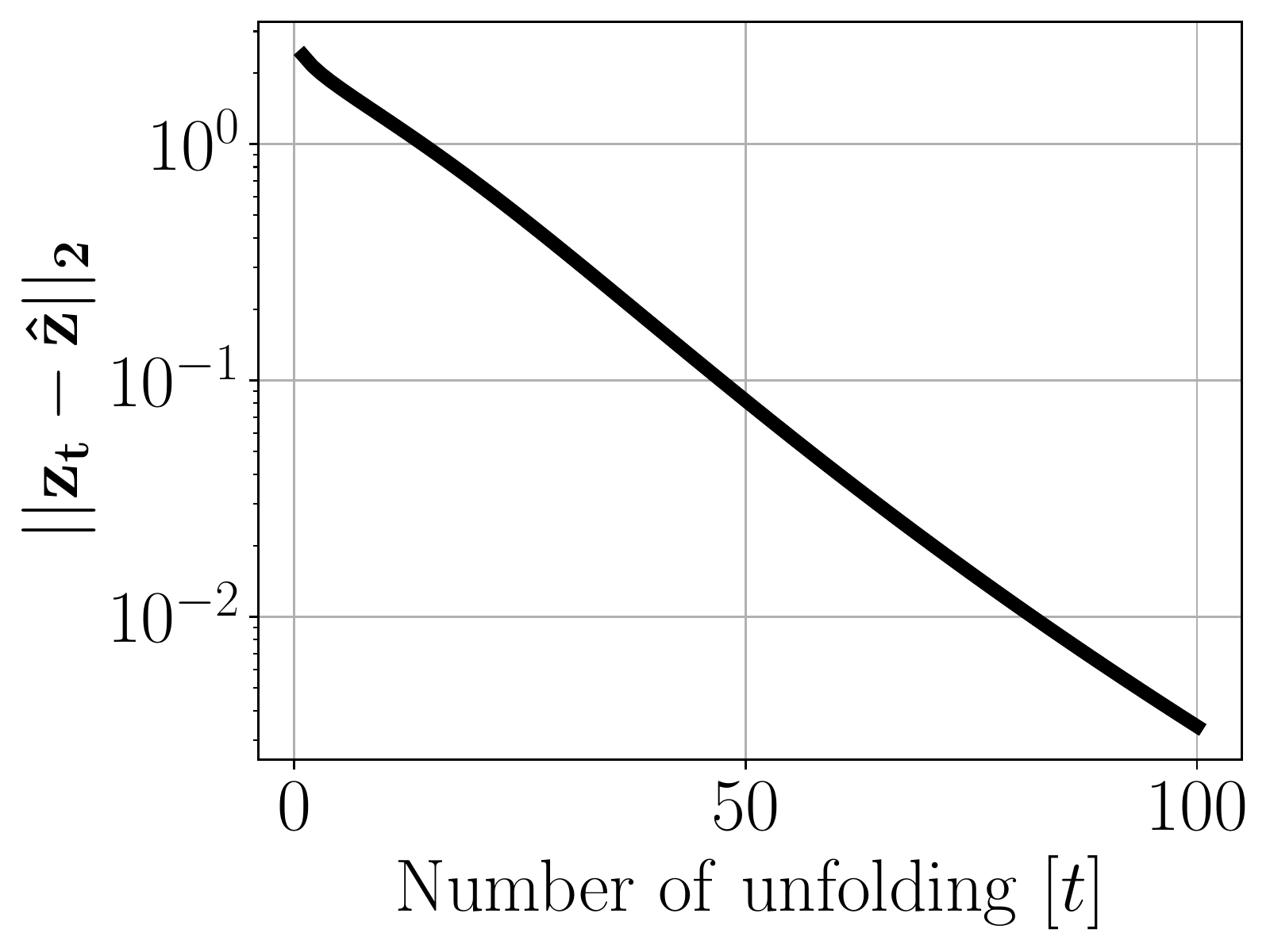}
	\caption{Code convergence (\Cref{thm:fwdz}). As the network unrolls, $\z_t$ converges to $\hat \z$, the solution of lasso.}
	\label{fig:forwardpass}
\end{wrapfigure}
%%%
%
%
%%%%%
%
\begin{restatable}[Local forward pass code convergence]{theorem}{fwdz}\label{thm:fwdz}
Given the encoder $\z_{t+1} = \Phi(\z_{t}, \D)$,~\Cref{assum:unique},~\Cref{lemma:lipmapping,lemma:strongconvexloss,lemma:fixedpoint}, then $\exists\ \rho <1, B > 0\ \text{s.t.}\ \| \z_{t} - \hat \z \|_2  \leq \mathcal{O}(\rho^{t})\ \forall t > B$, where $\hat \z$ is the unique minimizer of lasso~\eqref{eq:lasso}. Furthermore, given~\Cref{thm:supprec} and \Cref{thm:supppres}, $B = 1$.
\end{restatable}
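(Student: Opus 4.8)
The plan is to combine finite support identification with a contraction argument on the support subspace, exactly in the spirit of \citet{hale2007fixed}. \textbf{Step 1 (support identification).} By \Cref{prop:supp} there exists $B > 0$ such that $\text{supp}(\z_t) = \text{supp}(\hat \z) = S$ for all $t > B$; moreover the soft-thresholding operator forces the signs on $S$ to agree with $\text{sign}(\hat \z_S)$ once the iterates are close, so both the active set and the sign pattern stabilize. Under \Cref{thm:supprec,thm:supppres} this identification occurs after a single iteration and is preserved thereafter, which is precisely what collapses the finite-but-unspecified $B$ down to $B = 1$.

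\textbf{Step 2 (reduction to an affine map on $S$).} Once the active set and signs are frozen, I would observe that the off-support coordinates of the pre-threshold vector stay below $\alpha \lambda$ in magnitude --- this is the content of support preservation --- so $\z_{t,S^c} = \hat \z_{S^c} = \zero$ and the dynamics decouple onto $\R^{S}$. On the support the proximal operator acts \emph{affinely}, $\prox_{\alpha \lambda}(v)_S = v_S - \alpha \lambda\, \text{sign}(\hat \z_S)$, and since $\nabla_1 \Loss_{\x}(\z, \D) = \D^{\text{T}}(\D \z - \x)$ the restricted recursion becomes
\[
\z_{t+1,S} = (\eye - \alpha \D_S^{\text{T}} \D_S)\, \z_{t,S} + \alpha \D_S^{\text{T}} \x - \alpha \lambda\, \text{sign}(\hat \z_S).
\]

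\textbf{Step 3 (contraction and iteration).} Using the fixed-point property $\hat \z = \Phi(\hat \z)$ from \Cref{lemma:fixedpoint} and subtracting, the constant shift cancels and the error obeys $\z_{t+1,S} - \hat \z_S = (\eye - \alpha \D_S^{\text{T}} \D_S)(\z_{t,S} - \hat \z_S)$. Setting $\rho \triangleq \| \eye - \alpha \D_S^{\text{T}} \D_S \|_2$, the strong convexity of $\Loss$ on the support (\Cref{lemma:strongconvexloss}), which follows from the full-rankness of $\D_S^{\text{T}} \D_S$ guaranteed by \Cref{assum:unique}, gives $\D_S^{\text{T}} \D_S \succ \zero$; together with the step-size bound $\alpha < 1 / \sigma_{\max}^2(\D)$ this yields $\zero \prec \eye - \alpha \D_S^{\text{T}} \D_S \prec \eye$ and hence $\rho = 1 - \alpha \sigma_{\min}^2(\D_S) < 1$. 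The Lipschitz/non-expansiveness bookkeeping of \Cref{lemma:lipmapping} is what makes this estimate rigorous across the full coordinates, not just on $S$. Iterating from $t = B$ then gives $\| \z_t - \hat \z \|_2 = \| \z_{t,S} - \hat \z_S \|_2 \leq \rho^{\,t-B} \| \z_B - \hat \z \|_2 = \mathcal{O}(\rho^t)$ for all $t > B$, and substituting $B = 1$ from Step 1 completes the claim.

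\textbf{Main obstacle.} The delicate point is not the algebra but establishing $\rho < 1$ honestly in the overcomplete regime. Globally $\D^{\text{T}} \D$ has rank at most $m < p$ and is singular, so $\Phi$ is merely non-expansive and \emph{no} global contraction exists; the strict contraction materializes only after the support identification of Step 1 restricts the dynamics to $\R^{S}$, where $\D_S^{\text{T}} \D_S$ is positive definite. I would therefore need to argue carefully that the off-support coordinates genuinely remain zero (so the decoupling in Step 2 is exact rather than approximate) and that $\sigma_{\min}^2(\D_S)$ --- though possibly small, which pushes $\rho$ toward $1$ and degrades the rate --- is bounded away from zero, so that $\rho$ is strictly less than one. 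This is also the reason the general statement can only assert a finite $B$, while the quantitative initialization of \Cref{thm:supprec,thm:supppres} is what pins it to $B = 1$.
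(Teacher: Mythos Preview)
Your proposal is correct and follows essentially the same contraction-after-support-identification argument as the paper. The only cosmetic difference is that you specialize explicitly to the least-squares loss and write out the affine recursion $\z_{t+1,S}-\hat\z_S=(\eye-\alpha\D_S^{\text{T}}\D_S)(\z_{t,S}-\hat\z_S)$ with $\rho=1-\alpha\sigma_{\min}^2(\D_S)$, whereas the paper stays at the level of \Cref{lemma:lipmapping} and bounds $\|\nabla_1\Phi(\z_t,\D)\|_2\leq c_{\text{prox}}(1-\alpha\mu)$ before applying $\|\Phi(\z_t)-\Phi(\hat\z)\|_2\leq\rho\|\z_t-\hat\z\|_2$ via the fixed-point property; since $\nabla_{11}^2\Loss_{\x}=\D^{\text{T}}\D$, $\mu=\sigma_{\min}^2(\D_S)$, and $c_{\text{prox}}=1$ for soft-thresholding, the two $\rho$'s coincide and the arguments are the same.
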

\Cref{thm:fwdz} shows that in PUDLE, $\z_t$ converges to $\hat \z$ at a linear rate eventually after a certain number of unrolling (\Cref{fig:forwardpass}). The local linear convergence of ISTA and FISTA~\citep{beck2009fast} (with global rates of $\mathcal{O}(\nicefrac{1}{t})$ and $\mathcal{O}(\nicefrac{1}{t^2})$) in the neighbourhood of a fixed-point is studied in~\citep{tao2016local}. The speed of convergence depends on when support selection happens (\Cref{prop:supp})~\citep{bredies2008linear,zhang2017new, liang2014local}. We showed in \Cref{thm:supprec} and \Cref{thm:supppres} that under mild assumptions, the support is selected and recovered after one encoder iteration. In addition to local convergence, we focus on recovery of $\z^{\ast}$ and show error on the unrolled code coefficients $\z_{t, (j)}^{(l)}$ with respect to ground-truth $\z_{(j)}^{\ast}$ as $t$ increases. In~\Cref{thm:fwdzerrorvariable}, we consider the case where $\lambda_t$ at layer $t$ is set to according to \Cref{thm:supppres}; the bias decreases as the code error decreases among the layers and dictionary updates. We provide an upper bound on the coefficients errors as a function of code sparsity, dictionary error, and an unrolling error $e_{t,j}^{(l)\text{unroll}}$. The unrolling error goes to zero for appropriately large $t$. Moreover, \Cref{thm:fwdzerrorfixed} studies the case where the bias is fixed across the layers. In this scenario, we observe an additional term of $\lambda^{\text{fixed}}$ in the upper bounds on the code coefficients error; this term shows that the code error when we strictly perform $\ell_1$-norm based sparse coding does not go to zero. We refer to this error as an amplitude bias estimate error.
\begin{restatable}[Global forward pass code error with variable $\lambda_t$]{theorem}{fwdzerrorvariable}\label{thm:fwdzerrorvariable}
Given \Cref{assum:distz,assum:d}, suppose $\D^{(l)}$ is $\mu_l$-incoherent and $\delta_l = \mathcal{O}^{\ast}(1 / \log{p})$ close to $\D^{\ast}$. If $s = \mathcal{O}^{\ast}(\sqrt{m} / \mu \log{m})$, $\mu = \mathcal{O}(\log{m})$, and the regularizer and step size are chosen such that $\lambda_t^{(l)} = \frac{\mu_l}{\sqrt{m}} \| \z^{\ast} - \z_t \|_1 + a_{\gamma} = {\bm \Omega}(\frac{s \log{m}}{\sqrt{m}})$ and 
$\alpha^{(l)} \leq 1 - \frac{2\lambda_t^{(l)} - (1 - \frac{\delta_l^2}{2}) C_{\min}}{\lambda_{t-1}^{(l)}}$, then with probability of at least $1 - \epsilon^{(l)}_{\text{supp-pres}}$, for $j\in \text{supp}(\z^{\ast})$, the code coefficient error is
\begin{equation}
|\z_{t,(j)}^{(l)} - \z_{(j)}^{\ast} | \leq  \mathcal{O}(\sqrt{s \| \D_j^{(l)} - \D_j^{\ast} \|_2} + e_{t,j}^{(l)\text{unroll}})
\end{equation}
and
\begin{equation}
\z_{T, (j)} = \z^{\ast}_{(j)} (1 - \beta_j^{(l)}) + \zeta_{T,j}^{(l)}
\end{equation}
where $e_{t,j}^{(l)\text{unroll}} \coloneqq 2(s-1)t \alpha \frac{\mu_l}{\sqrt{m}} \max_i | \z_{0,(i)}^{(l)} - \z_{(i)}^{\ast} | \delta_{\alpha,t-1} + | \z_{0,(j)}^{(l)} - \z_{(j)}^{\ast} | \delta_{\alpha,t}$, $\delta_{\alpha, t} \coloneqq (1 - \alpha + 2 \alpha \frac{\mu_l}{\sqrt{m}})^t$, $| \zeta_{T,j}^{(l)} | = \mathcal{O}(a_{\gamma})$ with $a_{\gamma} = \mathcal{O}(\sqrt{s\delta_l})$, $\beta_j^{(l)} = \langle\D_j^{\ast} - \D_j^{(l)}, \D_j^{\ast}\rangle \leq \frac{\delta_l^2}{2}$ and  $\epsilon^{(l)}_{\text{supp-pres}} \coloneqq \epsilon^{(l)}_{\text{supp-rec}} + \epsilon^{(l)}_{\gamma} = 2 p \exp{(\frac{-C_{\min}^2}{\mathcal{O}^{\ast}(\delta_l^2)})}+ 2 s \exp{(\frac{-1}{\mathcal{O}(\delta_l)})}$. With appropriately large $t$, $|\z_{t,(j)}^{(l)} - \z_{(j)}^{\ast} | = \mathcal{O}(\sqrt{s \| \D_j^{(l)} - \D_j^{\ast} \|_2})$.
\end{restatable}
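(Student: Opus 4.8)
The plan is to reduce everything to the recovered support and then track a restricted ISTA error recursion. By \Cref{thm:supprec} and \Cref{thm:supppres}, with probability at least $1-\epsilon^{(l)}_{\text{supp-pres}}$ the signed support $S=\text{supp}(\z^{\ast})$ is recovered at the first encoder iteration and preserved for all $t$, so that $\z_{t,S^c}^{(l)}=\zero$ and $\text{sign}(\z_{t,S}^{(l)})=\text{sign}(\z_S^{\ast})$. On this event $\prox_{\alpha\lambda_t^{(l)}}$ acts affinely on the support, subtracting $\alpha\lambda_t^{(l)}\text{sign}(\z_S^{\ast})$. Substituting the model $\x=\D^{\ast}\z^{\ast}=\D_S^{\ast}\z_S^{\ast}$ into $\z_{t+1}=\prox_{\alpha\lambda_t}(\z_t-\alpha\nabla_1\Loss_{\x}(\z_t,\D^{(l)}))$ and writing $\Delta_t\triangleq\z_{t,S}^{(l)}-\z_S^{\ast}$ (dropping the superscript $(l)$ on $\D$), I would derive
\[\Delta_{t+1}=(\eye-\alpha\D_S^{\text{T}}\D_S)\Delta_t-\alpha\D_S^{\text{T}}(\D_S-\D_S^{\ast})\z_S^{\ast}-\alpha\lambda_t^{(l)}\text{sign}(\z_S^{\ast}).\]

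Next I would split each matrix into its diagonal and off-diagonal parts. With unit-norm columns the diagonal of $\eye-\alpha\D_S^{\text{T}}\D_S$ equals $1-\alpha$, while the off-diagonal entries $-\alpha\langle\D_j,\D_k\rangle$ are bounded by $\alpha\mu_l/\sqrt{m}$ through the $\mu_l$-incoherence of \Cref{lemma:mul}. In the forcing term, the diagonal contribution to coordinate $j$ is $-\alpha\langle\D_j,\D_j-\D_j^{\ast}\rangle\z_{(j)}^{\ast}=-\alpha\beta_j^{(l)}\z_{(j)}^{\ast}$ with $\beta_j^{(l)}=\langle\D_j^{\ast}-\D_j,\D_j^{\ast}\rangle\le\delta_l^2/2$, which is precisely the signal-attenuation term, and the off-diagonal contribution is the cross term $-\alpha\sum_{k\in S,\,k\ne j}\langle\D_j,\D_k-\D_k^{\ast}\rangle\z_{(k)}^{\ast}$. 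This gives a scalar recursion for each $j\in S$ with self-decay $1-\alpha$ and coupling to the remaining $s-1$ coordinates of size at most $\alpha\mu_l/\sqrt{m}$.

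I would then unroll this coupled scalar recursion from the initialization $\z_0$. Propagating the self-decay gives the factor $\delta_{\alpha,t}=(1-\alpha+2\alpha\mu_l/\sqrt{m})^t$ multiplying the initial error, producing the term $|\z_{0,(j)}^{(l)}-\z_{(j)}^{\ast}|\delta_{\alpha,t}$; accumulating the cross-coupling over $t$ steps against the $s-1$ neighbours (each of size at most $\alpha\mu_l/\sqrt{m}$) and $\max_i|\z_{0,(i)}^{(l)}-\z_{(i)}^{\ast}|$ produces the term $2(s-1)t\alpha\frac{\mu_l}{\sqrt{m}}\max_i|\z_{0,(i)}^{(l)}-\z_{(i)}^{\ast}|\delta_{\alpha,t-1}$, so the two combine into $e_{t,j}^{(l)\text{unroll}}$, which vanishes for large $t$ under the admissible step size. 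Passing to the accumulated forcing (the fixed point of the diagonal map), the signal term leaves the attenuation $\z_{(j)}^{\ast}(1-\beta_j^{(l)})$, and the residual $\lambda_t^{(l)}$-bias together with the off-diagonal cross term is collected into $\zeta_{T,j}^{(l)}$, giving $\z_{T,(j)}=\z_{(j)}^{\ast}(1-\beta_j^{(l)})+\zeta_{T,j}^{(l)}$.

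The main obstacle is the high-probability control of the cross term, namely showing $|\zeta_{T,j}^{(l)}|=\mathcal{O}(a_\gamma)=\mathcal{O}(\sqrt{s\delta_l})$. Conditioned on $S$, the sum $\sum_{k\in S,\,k\ne j}\langle\D_j,\D_k-\D_k^{\ast}\rangle\z_{(k)}^{\ast}$ is a weighted combination of independent, symmetric, mean-zero sub-Gaussian entries $\z_{(k)}^{\ast}$ (\Cref{assum:distz}) whose coefficient vector has $\ell_2$-norm controlled by the column-closeness $\|\D_k-\D_k^{\ast}\|_2\le\delta_l$; a sub-Gaussian tail bound then yields magnitude $\mathcal{O}(a_\gamma)$ with failure probability $\epsilon^{(l)}_\gamma=2s\exp(-1/\mathcal{O}(\delta_l))$. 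Since the prescribed $\lambda_t^{(l)}=\frac{\mu_l}{\sqrt{m}}\|\z^{\ast}-\z_t\|_1+a_\gamma$ is of this same order once the error has contracted, its contribution is likewise $\mathcal{O}(a_\gamma)$. Combining the diagonal attenuation, the concentrated cross term, and the unrolling remainder gives $|\z_{t,(j)}^{(l)}-\z_{(j)}^{\ast}|\le\mathcal{O}(\sqrt{s\|\D_j^{(l)}-\D_j^{\ast}\|_2}+e_{t,j}^{(l)\text{unroll}})$, and a union bound over the support-recovery event and the concentration event yields the overall probability $1-\epsilon^{(l)}_{\text{supp-pres}}=1-(\epsilon^{(l)}_{\text{supp-rec}}+\epsilon^{(l)}_\gamma)$. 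Taking $t$ large annihilates $e_{t,j}^{(l)\text{unroll}}$, leaving the stated bound $\mathcal{O}(\sqrt{s\|\D_j^{(l)}-\D_j^{\ast}\|_2})$.
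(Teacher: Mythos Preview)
Your proposal is correct and follows essentially the same approach as the paper: restrict to the recovered support via \Cref{thm:supprec,thm:supppres}, write the error recursion with diagonal decay $1-\alpha$, off-diagonal coupling $\alpha\mu_l/\sqrt{m}$, the attenuation term $-\alpha\beta_j^{(l)}\z_{(j)}^{\ast}$, and the sub-Gaussian cross term $\gamma_j^{(l)}=\sum_{k\neq j}\langle\D_j,\D_k^{\ast}-\D_k\rangle\z_{(k)}^{\ast}$ concentrated at $a_\gamma=\mathcal{O}(\sqrt{s\delta_l})$, then unroll the coupled scalar system to obtain $\delta_{\alpha,t}$ and $e_{t,j}^{(l)\text{unroll}}$. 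The only point you gloss over is that the factor $2$ in $\delta_{\alpha,t}=(1-\alpha+2\alpha\mu_l/\sqrt{m})^t$ arises because the prescribed $\lambda_t^{(l)}=\frac{\mu_l}{\sqrt{m}}\|\z^{\ast}-\z_t\|_1+a_\gamma$ contributes a second copy of the $\frac{\mu_l}{\sqrt{m}}\sum_{i\neq j}E_{t,i}$ coupling on top of the one coming from $\tilde\eta_{t,j}^{(l)}$; the paper makes this explicit before unrolling, but your sketch is otherwise aligned with it.
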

\begin{restatable}[Global forward pass code error with fixed $\lambda_t$]{theorem}{fwdzerrorfixed}\label{thm:fwdzerrorfixed}
Given \Cref{assum:distz,assum:d}, suppose $\D^{(l)}$ is $\mu_l$-incoherent and $\delta_l = \mathcal{O}^{\ast}(1 / \log{p})$ close to $\D^{\ast}$. If $s = \mathcal{O}^{\ast}(\sqrt{m} / \mu \log{m})$, $\mu = \mathcal{O}(\log{m})$, and the regularizer and step size are chosen such that $\lambda_t^{(l)} = \lambda^{\text{fixed}} = \frac{\mu_l}{\sqrt{m}} \| \z^{\ast} - \z_0 \|_1 + a_{\gamma} = {\bm \Omega}(\frac{s \log{m}}{\sqrt{m}})$ and 
$\alpha^{(l)} \leq 1 - \frac{2\lambda_t^{(l)} - (1 - \frac{\delta_l^2}{2}) C_{\min}}{\lambda_{t-1}^{(l)}}$, then with probability of at least $1 - \epsilon^{(l)}_{\text{supp-pres}}$, for $j\in \text{supp}(\z^{\ast})$, the code coefficient error is
\begin{equation}
|\z_{t,(j)}^{(l)} - \z_{(j)}^{\ast} | \leq  \mathcal{O}(\sqrt{s \| \D_j^{(l)} - \D_j^{\ast} \|_2} + e_{t,j}^{(l)\text{unroll,fixed}} + \lambda^{\text{fixed}})
\end{equation}
and
\begin{equation}
\z_{T, (j)} = \z^{\ast}_{(j)} (1 - \beta_j^{(l)}) + \zeta_{T,j}^{(l)}
\end{equation}
where $e_{t,j}^{(l)\text{unroll, fixed}} \coloneqq (s-1)t \alpha \frac{\mu_l}{\sqrt{m}} \max_i | \z_{0,(i)}^{(l)} - \z_{(i)}^{\ast} | \delta_{\alpha,t-1}^{\text{fixed}} + | \z_{0,(j)}^{(l)} - \z_{(j)}^{\ast} | \delta_{\alpha,t}^{\text{fixed}}$, $\delta_{\alpha, t}^{\text{fixed}} \coloneqq (1 - \alpha + \alpha \frac{\mu_l}{\sqrt{m}})^t$, $| \zeta_{T,j}^{(l)} | = \mathcal{O}(a_{\gamma} + \lambda^{\text{fixed}})$ with $a_{\gamma} = \mathcal{O}(\sqrt{s\delta_l})$, $\beta_j^{(l)} = \langle\D_j^{\ast} - \D_j^{(l)}, \D_j^{\ast}\rangle \leq \frac{\delta_l^2}{2}$, and $\epsilon^{(l)}_{\text{supp-pres}} \coloneqq \epsilon^{(l)}_{\text{supp-rec}} + \epsilon^{(l)}_{\gamma} = 2 p \exp{(\frac{-C_{\min}^2}{\mathcal{O}^{\ast}(\delta_l^2)})}+ 2 s \exp{(\frac{-1}{\mathcal{O}(\delta_l)})}$. With appropriately large $t$, $|\z_{t,(j)}^{(l)} - \z_{(j)}^{\ast} | = \mathcal{O}(\sqrt{s \| \D_j^{(l)} - \D_j^{\ast} \|_2} + \lambda^{\text{fixed}})$.
\end{restatable}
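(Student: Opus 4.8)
The plan is to mirror the argument for the variable-$\lambda$ case (\Cref{thm:fwdzerrorvariable}), the only structural change being that the thresholding level is now held constant, which turns the decaying bias into a persistent one. First I would condition on the event that the signed support is recovered in one iteration and preserved thereafter, which holds with probability at least $1 - \epsilon^{(l)}_{\text{supp-pres}}$ by \Cref{thm:supprec,thm:supppres}; note that the chosen level $\lambda^{\text{fixed}} = \frac{\mu_l}{\sqrt{m}}\|\z^{\ast} - \z_0\|_1 + a_\gamma$ upper-bounds the variable levels $\lambda_t^{(l)}$, since on the good event $\|\z^{\ast}-\z_t\|_1$ is non-increasing, so the preservation hypotheses of \Cref{thm:supppres} remain valid and the support stays fixed on $S = \text{supp}(\z^{\ast})$ for all $t$.

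On this event the proximal map acts as a pure coordinate shift on the support, so for $j \in S$ the encoder recursion \eqref{eq:encdec} reads $\z_{t+1,(j)}^{(l)} = \z_{t,(j)}^{(l)} - \alpha\big(\D^{(l)\text{T}}\D^{(l)}\z_t^{(l)} - \D^{(l)\text{T}}\x\big)_{(j)} - \alpha\lambda^{\text{fixed}}\,\text{sign}(\z^{\ast}_{(j)})$. Substituting $\x = \D^{\ast}\z^{\ast}$ from \eqref{eq:gen} and splitting the Gram and cross-Gram matrices into diagonal and off-diagonal parts, I would write the update as a one-dimensional affine recursion in the coefficient error: the self term produces the contraction base, the term $\alpha\langle\D_j^{(l)},\D_j^{\ast}\rangle\z^{\ast}_{(j)} = \alpha(1-\beta_j^{(l)})\z^{\ast}_{(j)}$ supplies the $\z^{\ast}_{(j)}(1-\beta_j^{(l)})$ component of the claimed decomposition, the off-diagonal inner products $\langle\D_j^{(l)},\D_k^{(l)}\rangle$ and $\langle\D_j^{(l)},\D_k^{\ast}\rangle$ are controlled by $\mu_l/\sqrt{m}$ through \Cref{lemma:mul}, and the constant $-\alpha\lambda^{\text{fixed}}\text{sign}(\z^{\ast}_{(j)})$ is the bias. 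Crucially, because $\lambda^{\text{fixed}}$ no longer depends on $\z_t$, the thresholding term contributes no extra $\z_t$-proportional feedback; this is exactly why the contraction base is $1-\alpha+\alpha\frac{\mu_l}{\sqrt{m}}$ (the $\delta^{\text{fixed}}_{\alpha,t}$ factor) rather than the $1-\alpha+2\alpha\frac{\mu_l}{\sqrt{m}}$ of \Cref{thm:fwdzerrorvariable}, and likewise why the $(s-1)$ accumulation term loses its factor of two.

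Next I would unroll this affine recursion from $t=0$. The homogeneous part contracts as $\delta^{\text{fixed}}_{\alpha,t}$, giving the initialization-dependent piece $|\z_{0,(j)}^{(l)} - \z^{\ast}_{(j)}|\delta^{\text{fixed}}_{\alpha,t}$; the geometric sum of the off-diagonal perturbations over the $\le s-1$ active off-support coordinates yields $(s-1)t\alpha\frac{\mu_l}{\sqrt{m}}\max_i|\z_{0,(i)}^{(l)} - \z^{\ast}_{(i)}|\delta^{\text{fixed}}_{\alpha,t-1}$, together forming $e_{t,j}^{(l)\text{unroll,fixed}}$. The essential difference from the variable case surfaces here: summing the constant bias $-\alpha\lambda^{\text{fixed}}\text{sign}(\z^{\ast}_{(j)})$ against the series $\sum_{r=0}^{t-1}(1-\alpha+\alpha\frac{\mu_l}{\sqrt{m}})^r$ converges to a non-vanishing $\mathcal{O}(\lambda^{\text{fixed}})$ term, whereas the decaying $\lambda_t$ of \Cref{thm:fwdzerrorvariable} telescopes away leaving only $\mathcal{O}(a_\gamma)$. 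I would fold the fluctuation of the cross-correlation sum $\sum_{k\in S,\,k\ne j}\langle \D_j^{(l)}, \D_k^{\ast}-\D_k^{(l)}\rangle\z^{\ast}_{(k)}$ — whose $\sqrt{s}$ scaling comes from the symmetric, sub-Gaussian, randomly-supported code of \Cref{assum:distz} via a Hoeffding/Bernstein concentration — together with the residual bias into $\zeta_{T,j}^{(l)}$, obtaining $|\zeta_{T,j}^{(l)}| = \mathcal{O}(a_\gamma + \lambda^{\text{fixed}})$ with $a_\gamma = \mathcal{O}(\sqrt{s\delta_l})$. Bounding $\beta_j^{(l)} = \langle\D_j^{\ast}-\D_j^{(l)},\D_j^{\ast}\rangle \le \frac{\delta_l^2}{2}$ and collecting the dominant fluctuation as $\mathcal{O}(\sqrt{s\|\D_j^{(l)}-\D_j^{\ast}\|_2})$ gives the stated bound, and letting $t$ grow sends $e_{t,j}^{(l)\text{unroll,fixed}}\to 0$ (for $\alpha$ small enough that $1-\alpha+\alpha\frac{\mu_l}{\sqrt{m}}<1$), leaving $\mathcal{O}(\sqrt{s\|\D_j^{(l)}-\D_j^{\ast}\|_2}+\lambda^{\text{fixed}})$.

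The main obstacle I anticipate is the careful accounting of the persistent bias: I must show that the geometric accumulation of the constant thresholding term produces exactly an $\mathcal{O}(\lambda^{\text{fixed}})$ residual absorbed into $\zeta_{T,j}^{(l)}$ that does not contract to zero, and I must keep the concentration argument for the cross-correlation term uniform across all $t$ so that the support-preservation event and the code-error bound hold simultaneously on the same high-probability event. Separating the genuinely non-decaying amplitude bias $\mathcal{O}(\lambda^{\text{fixed}})$ from the vanishing unrolling error, while re-deriving the single (rather than doubled) incoherence coefficient, is the crux distinguishing this proof from that of \Cref{thm:fwdzerrorvariable}.
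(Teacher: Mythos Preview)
Your proposal is correct and follows essentially the same route as the paper: condition on the support-preservation event, expand the coordinatewise recursion into a self term, a $(1-\beta_j^{(l)})\z^{\ast}_{(j)}$ term, an off-diagonal incoherence-controlled term, the concentration-controlled $\gamma_j^{(l)}$ fluctuation, and the constant bias $-\alpha\lambda^{\text{fixed}}\text{sign}(\z^{\ast}_{(j)})$, then unroll exactly as in \Cref{thm:fwdzerrorvariable} with the additional $r_{t+1,j}$ term carrying the fixed bias. Your observation that the contraction base drops from $1-\alpha+2\alpha\frac{\mu_l}{\sqrt{m}}$ to $1-\alpha+\alpha\frac{\mu_l}{\sqrt{m}}$ (and the $(s-1)$ coefficient loses its factor of two) precisely because $\lambda^{\text{fixed}}$ no longer feeds back the running $\|\z^{\ast}-\z_t\|_1$ is exactly the mechanism the paper exploits, and your accounting of the persistent $\mathcal{O}(\lambda^{\text{fixed}})$ residual via the geometric sum matches the paper's treatment of the $r_{t+1,j}$ term.
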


Aside from code estimation where the network parameters (e.g., regularization and step size) are finely tuned according to support recovery and preservation conditions (\Cref{thm:supprec,thm:supppres}), we provide a general upper bound on the error between the converged code and $\z^{\ast}$; the bound can be decomposed into two terms of the dictionary error and the biased amplitude estimate of the code.

\begin{restatable}[Global forward pass code error]{theorem}{fwdzglobal}\label{thm:fwdzglobal}
Let $\hat \z$ be the fixed-point of the encoder with iterations $\z_{t+1} = \Phi(\z_{t}, \D)$. Given~\Cref{assum:unique},~\Cref{lemma:lipmapping,lemma:strongconvexloss,lemma:fixedpoint}, we have $
\| \hat \z - \z^{\ast} \|_2  \leq \mathcal{O}(\| \D - \D^{\ast} \|_2 + \hat \delta^{\ast})$, where $\hat \delta^{\ast} = \| \hat \z^{\ast} - \z^{\ast} \|_2$, $\hat \z$ is the unique minimizer of lasso~\eqref{eq:lasso} given the dictionary $\D$,  $\hat \z^{\ast}$ is the unique minimizer of lasso~\eqref{eq:lasso} given the dictionary $\D^{\ast}$, and $\z^{\ast}$ is the ground-truth code.
\end{restatable}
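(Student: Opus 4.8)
The plan is to peel off the amplitude-bias term with a single triangle inequality and then reduce everything to a stability (sensitivity) bound for the lasso minimizer under a change of dictionary. Since $\hat\delta^{\ast} = \|\hat \z^{\ast} - \z^{\ast}\|_2$ by definition,
\begin{equation}
\|\hat \z - \z^{\ast}\|_2 \leq \|\hat \z - \hat \z^{\ast}\|_2 + \|\hat \z^{\ast} - \z^{\ast}\|_2 = \|\hat \z - \hat \z^{\ast}\|_2 + \hat\delta^{\ast},
\end{equation}
so the entire content of the theorem reduces to establishing $\|\hat \z - \hat \z^{\ast}\|_2 = \mathcal{O}(\|\D - \D^{\ast}\|_2)$, i.e.\ that the map sending a dictionary to its (unique, by \Cref{assum:unique}) lasso solution is locally Lipschitz. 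Here $\hat \z$ and $\hat \z^{\ast}$ solve the \emph{same} lasso problem (same $\x$, same $\lambda$) run with $\D$ and $\D^{\ast}$ respectively.

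\textbf{Sensitivity bound.} By \Cref{lemma:fixedpoint}, $\hat \z$ and $\hat \z^{\ast}$ are the fixed points $\hat \z = \Phi(\hat \z,\D)$ and $\hat \z^{\ast} = \Phi(\hat \z^{\ast},\D^{\ast})$. I would decompose their difference by inserting the mixed term $\Phi(\hat \z^{\ast},\D)$:
\begin{equation}
\|\hat \z - \hat \z^{\ast}\|_2 \leq \|\Phi(\hat \z,\D) - \Phi(\hat \z^{\ast},\D)\|_2 + \|\Phi(\hat \z^{\ast},\D) - \Phi(\hat \z^{\ast},\D^{\ast})\|_2.
\end{equation}
The first term compares $\Phi(\cdot,\D)$ at two codes with the \emph{same} dictionary, so by the contraction property (\Cref{lemma:lipmapping}, with factor $\rho<1$ furnished by the strong convexity of \Cref{lemma:strongconvexloss} together with the step-size choice $\alpha<\nicefrac{1}{\sigma_{\max}^2(\D)}$) it is at most $\rho\|\hat \z - \hat \z^{\ast}\|_2$. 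The second term compares $\Phi$ at the \emph{same} code with two dictionaries; since soft-thresholding is non-expansive (its sub-derivative is bounded by \Cref{lemma:lipprox}), it is at most $\alpha\|\nabla_1\Loss_{\x}(\hat \z^{\ast},\D) - \nabla_1\Loss_{\x}(\hat \z^{\ast},\D^{\ast})\|_2 \leq \alpha L_{1D}\|\D - \D^{\ast}\|_2$ by the $L_{1D}$-Lipschitzness of $\nabla_1\Loss_{\x}$ in $\D$ (\Cref{lemma:lipdiffloss}). This produces the self-bounding inequality $\|\hat \z - \hat \z^{\ast}\|_2 \leq \rho\|\hat \z - \hat \z^{\ast}\|_2 + \alpha L_{1D}\|\D - \D^{\ast}\|_2$, which rearranges to $\|\hat \z - \hat \z^{\ast}\|_2 \leq \tfrac{\alpha L_{1D}}{1-\rho}\|\D - \D^{\ast}\|_2 = \mathcal{O}(\|\D - \D^{\ast}\|_2)$; combined with the reduction, this yields the claimed bound.

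\textbf{Main obstacle.} The crux is justifying the contraction factor $\rho<1$ for the code-difference term. In the overcomplete regime ($p>m$, \Cref{assum:d}) the loss $\Loss_{\x}(\cdot,\D)$ is not globally strongly convex because $\D^{\text{T}}\D$ is rank-deficient, so $\Phi(\cdot,\D)$ is globally only non-expansive; the contraction with $\rho<1$ (\Cref{lemma:lipmapping}) rests on the \emph{restricted} strong convexity of \Cref{lemma:strongconvexloss}, valid on the active support, where \Cref{assum:unique} makes $\D_S^{\text{T}}\D_S$ full rank. Making the argument rigorous therefore requires restricting to the supports of the two minimizers and controlling any component of $\hat \z - \hat \z^{\ast}$ that lies outside a shared support; establishing that $\hat \z$ and $\hat \z^{\ast}$ may be treated on a common active set—so that the restricted contraction governs the relevant component—is the delicate step, since a coordinate that is active for one minimizer but inactive for the other is not immediately controlled by the strongly convex directions.
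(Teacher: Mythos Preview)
Your proposal is essentially identical to the paper's proof: the same triangle-inequality reduction to $\|\hat\z-\hat\z^{\ast}\|_2$, the same decomposition via the mixed term $\Phi(\hat\z^{\ast},\D)$, the same contraction-plus-Lipschitz self-bounding inequality, and the same rearrangement (the paper writes $q=\tfrac{\alpha c_{\text{prox}} L_{21}}{1-\rho}$, using a slightly different Lipschitz label than your $L_{1D}$, but the argument is the same). Your ``main obstacle'' is well spotted: the paper likewise invokes restricted strong convexity ``on the support'' without explicitly arguing that $\hat\z$ and $\hat\z^{\ast}$ can be treated on a common active set, so the concern you flag is a genuine technical gap that the paper's proof also leaves implicit.
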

This general decomposition is to emphasize that aside from the current estimate of the dictionary, the code error is a function of the forward pass algorithm used to solve the sparse coding problem. Specifically, the upper bound states that at the best scenario where there is access to the generating dictionary $\D^{\ast}$, the forward pass solving lasso with fixed $\lambda$ still gives a biased amplitude estimate of $\z^{\ast}$. Overall, the assumptions to get this bound are mild; the bound is valid independent of successful support recovery or data distribution. With incorporation of data distribution and conditions stated in \Cref{thm:supprec} and \Cref{thm:supppres}, the upper bound $\hat \delta^{\ast}$ can be replaced with terms involving $\lambda$, and reaches at zero as $\lambda$ decays across forward iterations.
\paragraph{Jacobian convergence and error}  Following properties similar to those used in \Cref{thm:fwdz}, and assuming $\J_t$ is bounded (\Cref{assum:boundj}), we show in~\Cref{thm:fwdj} that, as the PUDLE unrolls, the code Jacobian $\J_t$ converges to $\hat \J$, the Jacobian of the solution of the lasso. The convergence of the Jacobian of proximal gradient descent is also studied in~\citep{bertrand2021implicit} for hyperparameter selection through implicit differentiation~\citep{bengio2000gradient}, where the Jacobian is taken w.r.t to the hyperparameter $\lambda$ as opposed to $\D$.
\begin{assumption}[Bounded Jacobian]\label{assum:boundj}
The Jacobian is bounded, i.e., $\exists\ M_J >0,\ \text{s.t.}\ \| \J_t \|_2 \leq M_J\ \forall t$.
\end{assumption}
\begin{restatable}[Local forward pass Jacobian convergence]{theorem}{fwdj}\label{thm:fwdj}
Given the recursion $\z_{t+1} = \Phi(\z_t, \D)$, and $\hat \z$ the unique minimizer of lasso with Jacobian $\hat \J$, then $\exists\ \rho <1, B > 0\ \text{s.t.}\ \| \J_{t} - \hat \J \|_2 \leq \mathcal{O}(t \rho^{t})\ \forall t > B$. Furthermore, given~\Cref{thm:supprec} and \Cref{thm:supppres}, $B = 1$.
\end{restatable}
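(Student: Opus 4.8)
The plan is to lift the code-convergence argument of \Cref{thm:fwdz} to the level of derivatives, in the spirit of \citet{gilbert1992automatic}: once the iterates $\z_t$ converge geometrically, their Jacobians converge at the same geometric rate, but with an extra polynomial factor in $t$ accumulated through the chain rule. First I would differentiate the encoder recursion $\z_{t+1} = \Phi(\z_t, \D)$ with respect to $\D$. Writing $A_t \triangleq \partial_{\z}\Phi(\z_t, \D)$ and $B_t \triangleq \partial_{\D}\Phi(\z_t, \D)$ for the (partial) Jacobians of $\Phi$ in its first and second arguments, the chain rule gives the linear recursion
\begin{equation}
\J_{t+1} = A_t \J_t + B_t .
\end{equation}
By \Cref{lemma:fixedpoint} the minimizer $\hat\z$ is a fixed point of $\Phi$, so implicit differentiation of $\hat\z = \Phi(\hat\z, \D)$ yields $\hat\J = \hat A \hat\J + \hat B$, where $\hat A \triangleq \partial_{\z}\Phi(\hat\z,\D)$ and $\hat B \triangleq \partial_{\D}\Phi(\hat\z,\D)$.

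Next I would form the Jacobian error $\bm E_t \triangleq \J_t - \hat\J$ and subtract the two identities to obtain
\begin{equation}
\bm E_{t+1} = A_t \bm E_t + (A_t - \hat A)\hat\J + (B_t - \hat B).
\end{equation}
The crucial simplification is support identification. The only nonsmooth ingredient of $\Phi$ is the proximal operator, whose Jacobian is the diagonal projection onto $\text{supp}(\z_t)$; once the support is selected and frozen (\Cref{prop:supp}, and after a single iteration under \Cref{thm:supprec,thm:supppres}), this projection is constant, so $A_t = \hat A$ for every $t > B$ and the middle term vanishes. On the identified support $A_t$ reduces to $\eye - \alpha \D_S^{\text{T}}\D_S$ (since $\nabla_{11}^2\Loss_{\x} = \D^{\text{T}}\D$ is constant in $\z$), whose spectral norm is some $\rho < 1$ by the strong convexity of $\Loss$ on the support (\Cref{lemma:strongconvexloss}) together with the step-size bound $\alpha < 1/\sigma_{\max}^2(\D)$. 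For the forcing term I would use that $B_t$ depends on $\z$ only through the mixed second derivative $\nabla_{21}^2\Loss_{\x}$, which is Lipschitz in $\z$ (\Cref{lemma:lipdiffloss}); combined with the linear code convergence $\| \z_t - \hat\z \|_2 = \mathcal{O}(\rho^t)$ from \Cref{thm:fwdz}, this gives $\| B_t - \hat B \|_2 = \mathcal{O}(\rho^t)$.

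Finally I would solve the perturbed linear recursion $\bm E_{t+1} = \hat A\,\bm E_t + r_t$ with $\| r_t \|_2 \le C\rho^t$ by unrolling from $t = B$:
\begin{equation}
\bm E_t = \hat A^{\,t-B}\bm E_B + \sum_{k=B}^{t-1} \hat A^{\,t-1-k} r_k .
\end{equation}
Using $\| \hat A^{\,j} \|_2 \le \rho^{\,j}$ and $\| \bm E_B \|_2 < \infty$ (finite by the boundedness of the Jacobian, \Cref{assum:boundj}), the homogeneous term decays like $\rho^{t}$, while the convolution sum contributes $\sum_{k=B}^{t-1}\rho^{t-1-k}\rho^{k} = (t-B)\rho^{\,t-1}$. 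The extra factor $t$ — the signature of differentiating through an iterative map — produces the claimed $\mathcal{O}(t\rho^t)$ rate, and \Cref{thm:supprec,thm:supppres} pin down $B=1$. The main obstacle is treating the nonsmoothness of the proximal map rigorously: the whole argument hinges on the observation that after support identification the proximal Jacobian is a constant projection, which simultaneously kills the $(A_t - \hat A)$ term and makes $A_t$ a genuine contraction, leaving only the routine bookkeeping of a perturbed geometric recursion.
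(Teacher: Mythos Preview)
Your proof is correct and follows essentially the same skeleton as the paper: differentiate the recursion, subtract the fixed-point identity for $\hat\J$, and unroll a perturbed contraction of the form $\|\bm E_{t+1}\|_2 \le \rho\,\|\bm E_t\|_2 + \mathcal{O}(\rho^t)$ to obtain $\mathcal{O}(t\rho^t)$.

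The one substantive difference is how you handle the $(A_t - \hat A)\hat\J$ term. The paper keeps it and bounds it via a Lipschitz constant $L_{\Phi_1}$ for $\nabla_1\Phi$ in $\z$, absorbing it into the $c\,\|\z_t - \hat\z\|_2$ forcing term. You instead observe that for the quadratic loss $\Loss_{\x}(\z,\D)=\tfrac12\|\x-\D\z\|_2^2$ the Hessian $\nabla_{11}^2\Loss_{\x}=\D^{\text{T}}\D$ is constant in $\z$, so once the support is frozen the partial Jacobian $A_t$ equals $\hat A$ exactly and the term vanishes. Your route is sharper and more elementary in this specific setting; the paper's Lipschitz route is the one that would survive if $\Loss_{\x}$ were replaced by a non-quadratic smooth loss (e.g., the exponential-family variant mentioned in the limitations). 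Both collapse to the same recursion and the same rate.
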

The forward pass code and Jacobian convergences \emph{after} support selection is similar to the results from~\citep{malezieux2022understanding}. The highlights of our finding are that the order of upper bound convergences can be achieved from the first iteration of the encoder. In other words, we specify, in \Cref{thm:supprec} and \Cref{thm:supppres}, the dictionary and data conditions such that the support can be recovered with $B=1$. This resolves the instability issue discussed by~\citet{malezieux2022understanding} in computation of the gradient $\g_t^{\text{ae-lasso}}$ outside of the support. Finally, we show that the global Jacobian error is in the order of dictionary error.
\begin{restatable}[Global forward pass Jacobian error]{theorem}{fwdjglobal}\label{thm:fwdjglobal}
Let $\hat \z$ be the fixed-point of the encoder with iterations $\z_{t+1} = \Phi(\z_{t}, \D)$. Given~\Cref{assum:unique},~\Cref{lemma:lipmapping,lemma:strongconvexloss,lemma:fixedpoint}, we have $\| \hat \J - \J^{\ast} \|_2  \leq \mathcal{O}(\| \D - \D^{\ast} \|_2 + \hat \delta_J^{\ast})$, where $\hat \delta_J^{\ast} \coloneqq \| \hat \J^{\ast} - \J^{\ast} \|_2$, and $\hat \J$, $\hat \J^{\ast}$ and $\J^{\ast}$ are Jacobians corresponding to $\hat \z$, $\hat \z^{\ast}$ and $\z^{\ast}$.
\end{restatable}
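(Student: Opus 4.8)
The plan is to mirror the argument behind \Cref{thm:fwdzglobal}, replacing the code map by its Jacobian and using the triangle inequality
\[
\| \hat \J - \J^{\ast} \|_2 \;\leq\; \| \hat \J - \hat \J^{\ast} \|_2 + \| \hat \J^{\ast} - \J^{\ast} \|_2 .
\]
The second term is exactly $\hat \delta_J^{\ast}$ by definition, so the whole task reduces to showing that the Jacobian of the lasso solution is Lipschitz in the dictionary, i.e. $\| \hat \J - \hat \J^{\ast} \|_2 = \mathcal{O}(\| \D - \D^{\ast} \|_2)$. This isolates the algorithmic-bias contribution $\hat \delta_J^{\ast}$ (the Jacobian analogue of $\hat \delta^{\ast}$ from the code statement) from the dictionary-error contribution.

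First I would characterize $\hat \J$ as a fixed point. Differentiating the optimality recursion of \Cref{lemma:fixedpoint}, $\hat \z = \prox_{\alpha\lambda}(\hat \z - \alpha \nabla_1 \Loss_{\x}(\hat \z, \D))$, implicitly with respect to $\D$ shows that $\hat \J$ solves the affine fixed-point equation
\[
\hat \J \;=\; \Psi(\hat \J;\, \hat \z, \D) \;\triangleq\; \partial \prox_{\alpha\lambda}\big(\hat \z - \alpha \nabla_1 \Loss_{\x}(\hat \z, \D)\big)\Big[\hat \J - \alpha\big(\nabla_{11}^2 \Loss_{\x}(\hat \z, \D)\,\hat \J + \nabla_{21} \Loss_{\x}(\hat \z, \D)\big)\Big],
\]
and likewise $\hat \J^{\ast} = \Psi(\hat \J^{\ast};\, \hat \z^{\ast}, \D^{\ast})$. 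By the same contraction estimate used to prove the Jacobian convergence in \Cref{thm:fwdj} (the linear part $\partial\prox\,(\eye - \alpha \nabla_{11}^2 \Loss)$ has operator norm at most some $\rho<1$ on the active coordinates, via \Cref{lemma:lipmapping,lemma:strongconvexloss}), the map $\Psi(\cdot;\,\z,\D)$ is a $\rho$-contraction uniformly over the feasible set.

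Then I would apply the standard perturbation-of-fixed-points inequality: since $\hat \J = \Psi(\hat \J; \hat\z, \D)$ and $\hat \J^{\ast} = \Psi(\hat \J^{\ast}; \hat\z^{\ast}, \D^{\ast})$, splitting through $\Psi(\hat\J^{\ast};\hat\z,\D)$ and using the contraction gives
\[
\| \hat \J - \hat \J^{\ast} \|_2 \;\leq\; \frac{1}{1-\rho}\,\big\| \Psi(\hat \J^{\ast};\, \hat \z, \D) - \Psi(\hat \J^{\ast};\, \hat \z^{\ast}, \D^{\ast}) \big\|_2 .
\]
The right-hand side measures only how the coefficients of $\Psi$ move when $(\hat\z, \D)$ changes to $(\hat\z^{\ast}, \D^{\ast})$. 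Using the Lipschitz-differentiability bounds of \Cref{lemma:lipdiffloss} (the constants $L_{11}, L_{21}, L_{11D}, L_{21D}$) together with the boundedness of $\partial \prox$ from \Cref{lemma:lipprox} and of $\hat \J^{\ast}$ from \Cref{assum:boundj}, each factor $\nabla_{11}^2 \Loss$ and $\nabla_{21} \Loss$ changes by $\mathcal{O}(\|\hat\z - \hat\z^{\ast}\|_2 + \|\D - \D^{\ast}\|_2)$. Finally I would substitute the code-stability estimate $\|\hat\z - \hat\z^{\ast}\|_2 = \mathcal{O}(\|\D - \D^{\ast}\|_2)$ established inside the proof of \Cref{thm:fwdzglobal}, collapsing everything to $\mathcal{O}(\|\D - \D^{\ast}\|_2)$ and closing the bound.

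The hard part will be the two uniformity requirements hidden in the contraction step: (i) obtaining a contraction factor $\rho$ bounded away from $1$ uniformly in $\D$ over the feasible set, which rests on a uniform curvature lower bound for $\Loss$ on the active support (\Cref{lemma:strongconvexloss}) and hence a uniform bound on $(1-\rho)^{-1}$; and (ii) controlling the proximal-Jacobian term $\partial\prox_{\alpha\lambda}$, which is only a bounded sub-derivative and changes discontinuously when the active set changes. Under \Cref{assum:unique} the active set is well-defined, and the sub-derivative is handled by the same fixed-point Lipschitz bookkeeping that underlies \Cref{thm:fwdzglobal}; making this rigorous without invoking support recovery (which is \emph{not} assumed here) is the delicate point, exactly as it is for the code error.
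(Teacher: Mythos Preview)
Your proposal is correct and follows essentially the same route as the paper: triangle inequality to split off $\hat\delta_J^{\ast}$, then a fixed-point/contraction argument on the differentiated recursion to show $\|\hat\J - \hat\J^{\ast}\|_2 = \mathcal{O}(\|\D - \D^{\ast}\|_2)$, using the Lipschitz constants of $\nabla_1\Phi,\nabla_2\Phi$ in both $\z$ and $\D$ and then invoking $\|\hat\z - \hat\z^{\ast}\|_2 = \mathcal{O}(\|\D - \D^{\ast}\|_2)$ from the proof of \Cref{thm:fwdzglobal}. The paper writes the subtraction directly rather than via your $\Psi$ notation and ``perturbation of fixed points'' framing, but the algebra and the ingredients are the same; your concerns about uniformity of $\rho$ and the sub-differential bookkeeping are exactly the points the paper handles implicitly through \Cref{lemma:lipmapping,lemma:strongconvexloss} and the introduction of the constants $L_{\Phi_1},L_{\Phi_2},L_{\Phi_{1D}},L_{\Phi_{2D}}$.
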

%
%%%%%%%%%%%%%%%%%%%%%%%%%%%%
%%%%%%%%%%%%%%%%%%%%%%%%%%%%
%
%%%
\begin{wrapfigure}[14]{r}{0.38\textwidth}
\vspace{-5mm}
	\includegraphics[width=0.999\linewidth]{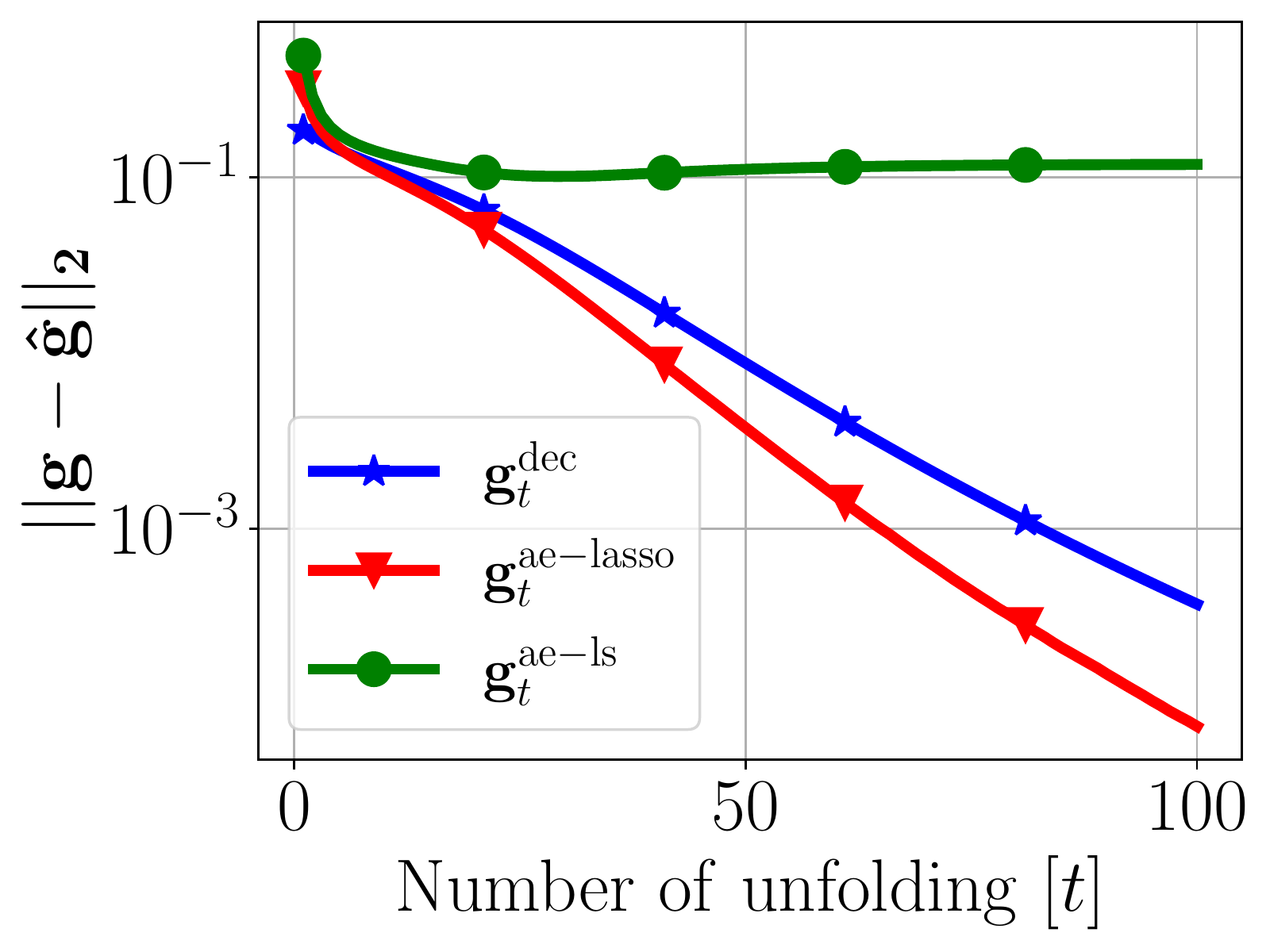}
	\caption{Convergence rate of gradients (\Cref{thm:localgradient}).}
% 	\caption{Convergence rate of gradients (\Cref{thm:localgradient}). $\g_t^{\text{dec}}$ converges to $\hat \g$ at a linear rate after certain $t$, $\g_t^{\text{ae-lasso}}$ converges faster, and  $\g_t^{\text{ae-ls}}$ is its biased estimator.}
	\label{fig:localgradient}
\end{wrapfigure}
%%%
%
\subsection{Backward pass}\label{sec:backward}
We show two results for local gradient $\hat \g$ and global gradient $\g^{\ast}$ convergence. The goal is not to provide a finite sample analysis but to emphasize the relative differences between the gradients in~\Cref{algo:unfolded}. The impact of gradient error for parameter estimation in the convex setting has been studied by~\citet{devolder2013first} indicating that the convergence to the parameter's neighbourhood is dictated by the gradient error~\citep{devolder2013first, devolder2014first}. As dictionary learning is a bi-convex problem, findings of \citet{devolder2013first} hold as well for better estimation of the local dictionary at every step of alternating minimization. Moreover, \citet{arora2015sparsecoding}, provided a detailed analysis of sparse coding and various gradient estimations for dictionary learning, showing that by computing a more accurate gradient at every step of the alternating minimization scheme, the dictionary estimates converge to a closer neighbourhood of $\D^{\ast}$. Overall, the intuition is that the size of the gradient error dictates the size of the neighbourhood of the dictionary within which one can guarantee convergence. We argue that the method with lower gradient error recovers the dictionary better.
%

%%%%%%%%%%%%%%%
\paragraph{Local gradient estimations} We highlight the effect of finite unrolling on the gradient for parameter estimation~\citep{ablin2020super}. \Cref{thm:localgradient} shows the convergence rate of gradients to $\hat \g$, determining the similarity of PUDLE and \Cref{algo:altmin}.
\begin{restatable}[Local convergence of gradients]{theorem}{localgrad}\label{thm:localgradient}
Given the forward pass convergence results (\Cref{thm:fwdz,thm:fwdj}), $\exists\ \rho <1, B > 0$ such that $\forall t > B$, the errors of gradients defined in \Cref{algo:unfolded} w.r.t $\hat \g$~\eqref{eq:glocal} satisfy
\begin{equation}\label{eq:localconv}
\begin{aligned}
\| \g_t^{\text{dec}} - \hat \g \|_2 &\leq \mathcal{O}(\rho^t)\\ 
\| \g_t^{\text{ae-lasso}}  - \hat \g \|_2 &\leq \mathcal{O}(t \rho^{2t})\\
\| \g_t^{\text{ae-ls}}  - \hat \g \|_2 &\leq \mathcal{O}(t \rho^{2t} + M_J \lambda \sqrt{s}).
\end{aligned}
\end{equation}
Moreover, the order of upper bounds is tight (see~\Cref{lemma:localtightbound}). 
\end{restatable}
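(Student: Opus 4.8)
The plan is to prove all three bounds by working with the per-sample gradients and taking expectations at the end, using $\|\E[\cdot]\|_2 \le \E[\|\cdot\|_2]$ together with the fact that the Lipschitz constants of \Cref{lemma:lipdiffloss}, the Jacobian bound $M_J$ (\Cref{assum:boundj}), and the incoherence $\mu_l$ are uniform over $\x \in \X$ by \Cref{assum:domain,assum:boundx}. The decoder bound is then immediate: since $\g_t^{\text{dec}} - \hat\g = \E[\nabla_2\Loss_\x(\z_t,\D) - \nabla_2\Loss_\x(\hat\z,\D)]$, Lipschitz differentiability of $\Loss$ in its first argument (constant $L_2$) gives $\|\g_t^{\text{dec}} - \hat\g\|_2 \le L_2\,\E\|\z_t - \hat\z\|_2 = \mathcal{O}(\rho^t)$ by \Cref{thm:fwdz}, valid for $t>B$.

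For the ae-lasso bound I would exploit the optimality condition. By \Cref{lemma:fixedpoint}, $\zero \in \nabla_1\Loss_\x(\hat\z,\D) + \partial h(\hat\z)$, so choosing the corresponding subgradient the Jacobian-correction term vanishes at the fixed point and $\hat\g = \E[\nabla_2\Loss_\x(\hat\z,\D) + \hat\J^+(\nabla_1\Loss_\x(\hat\z,\D)+\partial h(\hat\z))]$. Writing $e_t = \z_t-\hat\z$ and $E_t = \J_t-\hat\J$, the per-sample difference $\g_t^{\text{ae-lasso}} - \hat\g$ then splits into term A $= \nabla_2\Loss_\x(\z_t,\D)-\nabla_2\Loss_\x(\hat\z,\D)$ and term B $= \J_t^+(\nabla_1\Loss_\x(\z_t,\D)+\partial h(\z_t))$, the fixed-point residual being zero. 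After support identification ($t>B$, guaranteed with $B=1$ by \Cref{thm:supprec,thm:supppres}, or generically by \Cref{prop:supp}), $h$ is linear on the common support $S$, so there everything is $C^2$ and I can Taylor-expand: term A $= \nabla_{21}\Loss(\hat\z,\D)\,e_t + \mathcal{O}(\|e_t\|_2^2)$ and $\nabla_1 f_\x(\z_t) = \nabla_{11}\Loss(\hat\z,\D)\,e_t + \mathcal{O}(\|e_t\|_2^2)$, with the remainders controlled by the second-derivative Lipschitz constants $L_{11},L_{21}$ of \Cref{lemma:lipdiffloss}. Expanding $\J_t^+ = \hat\J^+ + E_t^+$ in term B, the first-order-in-$e_t$ contributions are $(\nabla_{21}\Loss + \hat\J^+\nabla_{11}\Loss)e_t$.

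The crux, and the step I expect to be the main obstacle, is showing these first-order terms cancel. This follows by differentiating the optimality identity $\nabla_1 f_\x(\hat\z(\D),\D)=\zero$ in $\D$, which yields the implicit-function relation $\nabla_{11}\Loss(\hat\z,\D)\,\hat\J = -\nabla_{21}\Loss(\hat\z,\D)$; taking adjoints gives $\hat\J^+\nabla_{11}\Loss = -\nabla_{21}\Loss$, so the first-order part vanishes. Care is needed with the adjoint/transpose conventions and with restricting the Hessian $\nabla_{11}\Loss = \D_S^{\text{T}}\D_S$ to the support block, which is invertible by \Cref{assum:unique}. What survives is $E_t^+\nabla_{11}\Loss(\hat\z,\D)\,e_t + \mathcal{O}(\|e_t\|_2^2)$, bounded by $\mathcal{O}(\|E_t\|_2\|e_t\|_2 + \|e_t\|_2^2)$. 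Substituting $\|e_t\|_2 = \mathcal{O}(\rho^t)$ (\Cref{thm:fwdz}) and $\|E_t\|_2 = \mathcal{O}(t\rho^t)$ (\Cref{thm:fwdj}) yields $\mathcal{O}(t\rho^{2t})$, and passing to expectations preserves the bound.

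Finally, the ae-ls bound follows from the ae-lasso one by bookkeeping: $\g_t^{\text{ae-ls}} = \g_t^{\text{ae-lasso}} - \frac1n\sum_i \J_t^{i+}\partial h(\z_t^i)$. On the identified support, $\partial h(\z_t) = \lambda\,\text{sign}(\z_{t,S})$, so $\|\partial h(\z_t)\|_2 \le \lambda\sqrt{|S|} \le \lambda\sqrt{s}$, and with $\|\J_t^+\|_2 \le M_J$ (\Cref{assum:boundj}) the triangle inequality gives $\|\g_t^{\text{ae-ls}} - \hat\g\|_2 \le \mathcal{O}(t\rho^{2t}) + M_J\lambda\sqrt{s}$. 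The non-vanishing offset $M_J\lambda\sqrt{s}$ is exactly the $\ell_1$ amplitude bias that $\g_t^{\text{ae-ls}}$ trades against improved recovery of $\D^{\ast}$, matching the discussion preceding the theorem.
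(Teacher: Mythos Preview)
Your proposal is correct and mirrors the paper's approach: the paper packages the same Taylor decomposition into named auxiliary terms $Q_t^{21}(\hat\z)$, $Q_t^{\text{lasso-}11}(\hat\z)$, $Q_t^{\text{ls-}11}(\hat\z)$, and $Q(\hat\z,\J_t)$ (the quadratic remainders and the first-order operator, bounded in a separate lemma), and uses the identical implicit-function cancellation $\hat\J^{+}\nabla_{11}^2\Loss_{\x}(\hat\z,\D) + \nabla_{21}^2\Loss_{\x}(\hat\z,\D) = \zero$ on the identified support to eliminate the linear-in-$e_t$ contribution. Your shortcut for the ae-ls bound---subtracting $\J_t^{+}\partial h(\z_t)$ from the ae-lasso expression and applying the triangle inequality---is a minor presentational variant of the paper's parallel computation with $Q_t^{\text{ls-}11}$ (which produces $\|\partial h(\hat\z)\|_2$ instead of $\|\partial h(\z_t)\|_2$, the two coinciding once signs match), and yields the same $M_J\lambda\sqrt{s}$ offset.
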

%
%%%%
First, upper bounds on the errors related to $\g_t^{\text{dec}}$ and $\g_t^{\text{ae-lasso}}$ go to zero as $t$ increases. Hence, both gradients converge to $\hat \g$. This means that asymptotically as $t$ increases, training PUDLE with $\g_t^{\text{dec}}$ and $\g_t^{\text{ae-lasso}}$ is equivalent to classical alternating-minimization (\Cref{algo:altmin}). Second, as $t$ increases, $\g_t^{\text{ae-lasso}}$ has faster convergence than $\g_t^{\text{dec}}$. Lastly, $\g_t^{\text{ae-ls}}$ is a biased estimator of $\hat \g$ (\Cref{fig:localgradient}). The convergence results on the error $\| \g_t^{\text{ae-lasso}}  - \hat \g \|_2$ is previously studied by \citet{malezieux2022understanding}.

Given the above convergence results, one may conclude that $\g_t^{\text{ae-lasso}}$ should be used for dictionary recovery. However, we show next that for dictionary recovery, the gradient $\g_t^{\text{ae-lasso}}$, used by~\citet{malezieux2022understanding}, is indeed a biased estimator of the global gradient $\g^{\ast}$ for recovery of $\D^{\ast}$. We decrease this bias by replacing $\g_t^{\text{ae-lasso}}$ with $\g_t^{\text{ae-ls}}$ and show that $\g_t^{\text{ae-ls}}$ results in a better recovery of $\D^{\ast}$ than $\g_t^{\text{ae-lasso}}$.
\paragraph{Global gradient estimations} \Cref{thm:globalgradient} shows the global gradient errors w.r.t $\g^{\ast}$ from \eqref{eq:gglobal}. We omit the gradient $\g_t^{\text{dec}}$, as it is asymptotically equivalent to  $\g_t^{\text{ae-lasso}}$. We study the errors in the limit to unrolling, i.e., as $t \rightarrow \infty$. This determines which PUDLE gradients recover $\D^{\ast}$ better~\citep{devolder2013first, devolder2014first}.
\begin{restatable}[Global error of gradients]{theorem}{globalgrad}\label{thm:globalgradient}
Given the convergence results from the forward pass, (\Cref{thm:fwdzglobal,thm:fwdjglobal}), the errors of gradients defined in \Cref{algo:unfolded} w.r.t global direction $\g^{\ast}$(defined in \eqref{eq:gglobal}) satisfy
\begin{equation}\label{eq:glocalconv}
\begin{aligned}
%\| \g_{\infty}^{\text{dec}} - \g^{\ast} \|_2 &\leq \mathcal{O}(\|  \D - \D^{\ast} \|_2 + \hat \delta^{\ast})\\
% \| \g_{\infty}^{\text{ae-lasso}}  - \g^{\ast} \|_2 &\leq  \mathcal{O}(\|  \D - \D^{\ast} \|_2^2 + \|  \D - \D^{\ast} \|_2 + \|  \D - \D^{\ast} \|_2 \hat \delta^{\ast} + \hat \delta^{\ast} +  \hat \delta^{\ast 2} + M_J \lambda \sqrt{s})\\
% \| \g_{\infty}^{\text{ae-ls}}  -  \g^{\ast} \|_2 &\leq \mathcal{O}(\|  \D - \D^{\ast} \|_2^2 + \|  \D - \D^{\ast} \|_2 + \|  \D - \D^{\ast} \|_2 \hat \delta^{\ast} + \hat \delta^{\ast} +  \hat \delta^{\ast 2}).
\| \g_{\infty}^{\text{ae-lasso}}  - \g^{\ast} \|_2 &\leq  \mathcal{O}(\|  \D - \D^{\ast} \|_2^2 + \|  \D - \D^{\ast} \|_2 + \hat \delta^{\ast} + \hat \delta_J^{\ast} + M_J \lambda \sqrt{s})\\
\| \g_{\infty}^{\text{ae-ls}}  -  \g^{\ast} \|_2 &\leq \mathcal{O}(\|  \D - \D^{\ast} \|_2^2 + \|  \D - \D^{\ast} \|_2 + \hat \delta^{\ast} + \hat \delta_J^{\ast}).
\end{aligned}
\end{equation}
\end{restatable}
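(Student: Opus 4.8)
The plan is to pass to the limit $t \to \infty$, reduce both estimators to expressions in the lasso solution $\hat\z$ and its Jacobian $\hat\J$, and then compare against $\g^{\ast}$ term by term. By \Cref{thm:fwdz,thm:fwdj}, as $t\to\infty$ we have $\z_T \to \hat\z$ and $\J_T \to \hat\J$, so that $\g_{\infty}^{\text{ae-lasso}} = \E_{\x}[\nabla_2\Loss_{\x}(\hat\z,\D) + \hat\J^{+}(\nabla_1\Loss_{\x}(\hat\z,\D) + \partial h(\hat\z))]$ and $\g_{\infty}^{\text{ae-ls}} = \E_{\x}[\nabla_2\Loss_{\x}(\hat\z,\D) + \hat\J^{+}\nabla_1\Loss_{\x}(\hat\z,\D)]$. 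Both estimators share the backbone $\E_{\x}[\nabla_2\Loss_{\x}(\hat\z,\D)]$ and differ only in the Jacobian-weighted residual, so I would first bound the backbone against $\g^{\ast}$ and then treat the two residuals separately.

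For the backbone, using $\nabla_2\Loss_{\x}(\z,\D) = (\D\z - \x)\z^{\text{T}}$ and the generative model $\x = \D^{\ast}\z^{\ast}$, I would set $\Delta\z \triangleq \hat\z - \z^{\ast}$ and expand
\begin{equation}
\E_{\x}[\nabla_2\Loss_{\x}(\hat\z,\D)] - \g^{\ast} = \E_{\x}\!\left[(\D - \D^{\ast})\z^{\ast}\Delta\z^{\text{T}} + \D\Delta\z\z^{\ast\text{T}} + \D\Delta\z\Delta\z^{\text{T}}\right].
\end{equation}
Bounding each outer product in operator norm using $\|\D\|_2 = \mathcal{O}(\sqrt{p/m}) = \mathcal{O}(1)$ with $p = \mathcal{O}(m)$ (\Cref{assum:d}, combined with closeness of $\D$ to $\D^{\ast}$), the bounded sub-Gaussian code of \Cref{assum:distz}, and the global code error $\|\Delta\z\|_2 = \mathcal{O}(\|\D - \D^{\ast}\|_2 + \hat\delta^{\ast})$ from \Cref{thm:fwdzglobal}, yields the common contribution $\mathcal{O}(\|\D-\D^{\ast}\|_2^2 + \|\D-\D^{\ast}\|_2 + \hat\delta^{\ast})$ present in both displayed bounds.

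The two estimators are then separated by their Jacobian-weighted residual $\E_{\x}[\hat\J^{+} r]$. I would split $\hat\J^{+} = \J^{\ast +} + (\hat\J^{+} - \J^{\ast +})$ and use the global Jacobian error $\|\hat\J - \J^{\ast}\|_2 = \mathcal{O}(\|\D-\D^{\ast}\|_2 + \hat\delta_J^{\ast})$ (\Cref{thm:fwdjglobal}) together with $\|\hat\J\|_2 \leq M_J$ (\Cref{assum:boundj}); this is what injects the $\hat\delta_J^{\ast}$ term into both bounds. For $\g_{\infty}^{\text{ae-lasso}}$ the residual is $r = \nabla_1\Loss_{\x}(\hat\z,\D) + \partial h(\hat\z)$: although \Cref{lemma:fixedpoint} guarantees $\zero \in \nabla_1\Loss_{\x}(\hat\z,\D) + \partial h(\hat\z)$, the surviving $\ell_1$-amplitude bias enters through $\|\partial h(\hat\z)\|_2 \leq \lambda\sqrt{s}$ (the recovered support has size at most $s$ by \Cref{thm:supprec,thm:supppres}) and $\|\hat\J^{+}\|_2 \leq M_J$, producing the extra $M_J\lambda\sqrt{s}$ term. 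For $\g_{\infty}^{\text{ae-ls}}$ the residual is $r = \nabla_1\Loss_{\x}(\hat\z,\D)$, and the fixed-point identity $\nabla_1\Loss_{\x}(\hat\z,\D) = -\partial h(\hat\z)$ means this Jacobian-weighted term precisely cancels the leading $\ell_1$ bias that $\E_{\x}[\nabla_2\Loss_{\x}(\hat\z,\D)]$ carries relative to $\g^{\ast}$; equivalently, $\g_{\infty}^{\text{ae-ls}}$ is the total derivative of the least-squares loss through the argmin, which is de-biased. Hence the $M_J\lambda\sqrt{s}$ term is absent from the ae-ls bound, and a final triangle inequality over the backbone and residual contributions gives both inequalities.

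The main obstacle is this last cancellation: making rigorous, rather than heuristic, the claim that for $\g_{\infty}^{\text{ae-ls}}$ the term $\hat\J^{+}\nabla_1\Loss_{\x}(\hat\z,\D)$ removes the leading amplitude bias in $\E_{\x}[\nabla_2\Loss_{\x}(\hat\z,\D)] - \g^{\ast}$ down to genuinely higher order, while for $\g_{\infty}^{\text{ae-lasso}}$ it does not. I expect this to require tracking the KKT and implicit-differentiation structure of $\hat\z$ and $\hat\J$ restricted to the recovered and preserved support $S^{\ast}$ — where $\hat\z_S$ and the corresponding rows of $\hat\J$ are governed by $(\D_S^{\text{T}}\D_S)^{-1}$ — and carefully pushing the resulting cancellation through the expectation $\E_{\x}$ using the moment conditions on $\z^{\ast}$ in \Cref{assum:distz}, so that only the $\|\D-\D^{\ast}\|_2$, $\hat\delta^{\ast}$, and $\hat\delta_J^{\ast}$ contributions survive.
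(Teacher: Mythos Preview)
Your decomposition centers on the lasso fixed point $\hat\z$; the paper instead centers on the ground-truth code $\z^{\ast}$. Concretely, the paper writes
\[
\g_t^{\text{ae-}\bullet} - \g^{\ast}
= \E_{\x}\!\big[Q(\z^{\ast},\J_t)(\z_t-\z^{\ast})\big]
+ \E_{\x}\!\big[Q_t^{21}(\z^{\ast})\big]
+ \E_{\x}\!\big[\J_t\,Q_t^{\bullet\text{-}11}(\z^{\ast})\big],
\]
with $Q_t^{\text{ls-}11}(\z^{\ast}) = \nabla_1\Loss_{\x}(\z_t,\D) - \nabla_{11}^2\Loss_{\x}(\z^{\ast},\D)(\z_t-\z^{\ast})$ and $Q_t^{\text{lasso-}11}(\z^{\ast})$ the same plus $\partial h(\z_t)$. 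The crucial identity is not the lasso KKT at $\hat\z$ but the generative-model fact $\nabla_1\Loss_{\x}(\z^{\ast},\D^{\ast})=0$. Adding and subtracting $\nabla_1\Loss_{\x}(\z^{\ast},\D)$ and then $\nabla_1\Loss_{\x}(\z^{\ast},\D^{\ast})$ inside each $Q_t^{\bullet\text{-}11}$ gives, for ae-ls, the bound $(\nicefrac{L_{11}}{2})\|\z_t-\z^{\ast}\|_2^2 + L_{1D}\|\D-\D^{\ast}\|_2$, with \emph{no} $\partial h$ term at all; for ae-lasso one picks up the extra $\|\partial h(\z_t)\|_2\le \lambda\sqrt{s}$. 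The $\hat\delta_J^{\ast}$ contribution comes from bounding $Q(\z^{\ast},\J_t)$ via $\nabla_{21}^2\Loss_{\x}(\z^{\ast},\D^{\ast}) = -\J^{\ast+}\nabla_{11}^2\Loss_{\x}(\z^{\ast},\D^{\ast})$ (implicit function theorem at the unpenalized optimum), which yields $\|Q(\z^{\ast},\J_t)\|_2 \le L_1\|\J_t-\J^{\ast}\|_2 + \mathcal{O}(\|\D-\D^{\ast}\|_2)$.

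Your route has a genuine gap precisely where you flag it, and the ae-lasso step is internally inconsistent. If you pass to the limit and use the lasso KKT, the ae-lasso residual $\hat\J^{+}(\nabla_1\Loss_{\x}(\hat\z,\D)+\partial h(\hat\z))$ is exactly zero, so $\g_\infty^{\text{ae-lasso}}=\hat\g$ and no $M_J\lambda\sqrt{s}$ term can emerge from that piece --- your claim that ``the surviving $\ell_1$-amplitude bias enters through $\|\partial h(\hat\z)\|_2$'' contradicts the KKT identity you just invoked. Worse, in your decomposition the ae-ls estimator equals $\hat\g + \E_{\x}[\hat\J^{+}\nabla_1\Loss_{\x}(\hat\z,\D)]$, and since $\|\nabla_1\Loss_{\x}(\hat\z,\D)\|_2\le\lambda\sqrt{s}$, the naive triangle inequality gives ae-ls a \emph{worse} bound than ae-lasso, the opposite of the theorem. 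Obtaining the stated improvement for ae-ls along your route would require proving the delicate cancellation you describe at the end; the paper avoids this entirely by centering the expansion at $\z^{\ast}$, where the absence of $\partial h$ in the ae-ls residual is automatic rather than a cancellation to be established.
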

%
%%%%%
%
Several factors affect the order of upper bounds: the current estimate of the dictionary, code amplitude-bias error due to $\ell_1$ norm, and the usage of $\ell_1$ norm in the loss used for backpropagation. To study the bias in the gradient computation, let consider the scenario where $\D = \D^{\ast}$. We denote those gradients by superscript $\D^{\ast}$. If the gradients are not biased, then the upper bounds should goes to zero. The gradient errors are
\begin{equation}
% \begin{aligned}
\| \g_{\infty}^{\text{ae-lasso}, \D^{\ast}}  - \g^{\ast} \|_2 \leq  \mathcal{O}(\hat \delta^{\ast} + \hat \delta_J^{\ast} + M_J \lambda \sqrt{s})
\qquad \text{and}\qquad
\| \g_{\infty}^{\text{ae-ls}, \D^{\ast}}  -  \g^{\ast} \|_2 \leq \mathcal{O}(\hat \delta^{\ast} + \hat \delta_J^{\ast}).
% \end{aligned}
\end{equation}
For $\g_{\infty}^{\text{ae-ls}, \D^{\ast}}$, the radius of the error ball is only a function of the amplitude error of the code estimated through lasso compare to the ground-truth code $\z^{\ast}$. However, the error ball for the gradient $\g_{\infty}^{\text{ae-lasso}, \D^{\ast}}$ includes an additional term concerning the usage of lasso loss containing the regularization term $\lambda$. This implies that the $\D^{\ast}$ neighbourhood at which the gradient $\g_{\infty}^{\text{ae-ls}, \D^{\ast}}$ is guaranteed to converge to is smaller than of the $\g_{\infty}^{\text{ae-lasso}, \D^{\ast}}$ (\Cref{fig:dstar}). Implications of such gradient estimation are seen in dictionary learning where $\g_{\infty}^{\text{ae-ls}}$ recovers $\D^{\ast}$ better (\Cref{fig:dl_25,fig:dl_100}). In \Cref{fig:dl_25}, the encoder unrolls for $T=25$, hence the phenomenon of implicit acceleration is seen in faster and better dictionary learning performance of $\g_{\infty}^{\text{ae-lasso}}$ than $\g_{\infty}^{\text{dec}}$. In \Cref{fig:dl_100} where $T=100$, similar performance of $\g_{\infty}^{\text{dec}}$ and $\g_{\infty}^{\text{ae-lasso}}$ illustrates their asymptotic equivalence as $t \to \infty$ (See Appendix for additional noisy dictionary learning experiments where the measurements $\x$ are corrupted with zero-mean Gaussian noise such that the Signal-to-Noise-Ratio is approximately $12$ SNR; in this setting, the aforementioned comparative analysis still holds.)
%
%%%
\begin{figure}[h]
	\centering
	%%%%%
	\begin{subfigure}[t]{0.28\linewidth}
	\centering
	\includegraphics[width=0.999\linewidth]{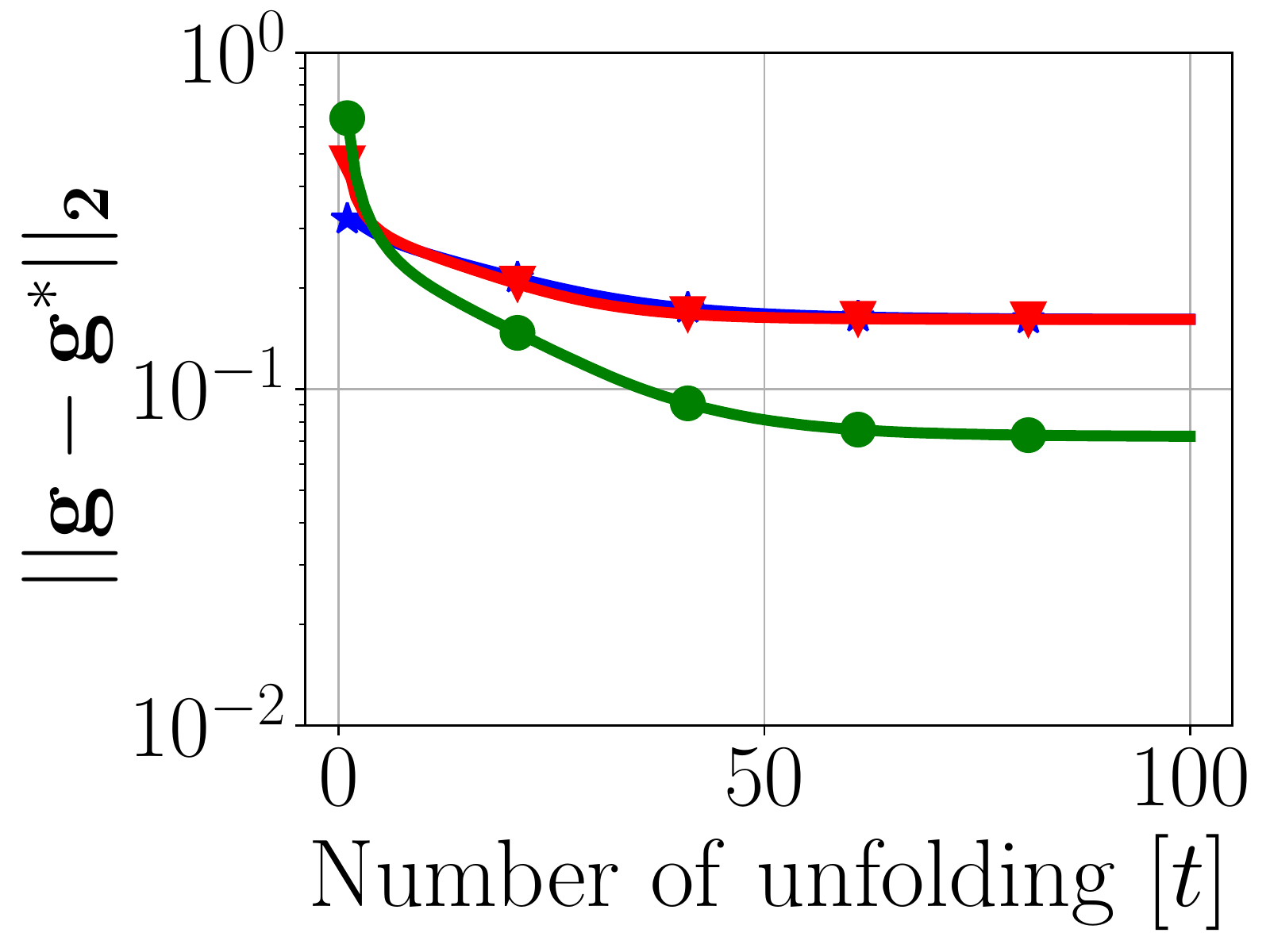}
	  \caption{Convergence for $\g^{\ast}$.}
  	\label{fig:dstar}
	\end{subfigure}
	%%%%%
	\begin{subfigure}[t]{0.28\linewidth}
	\centering
	\includegraphics[width=0.999\linewidth]{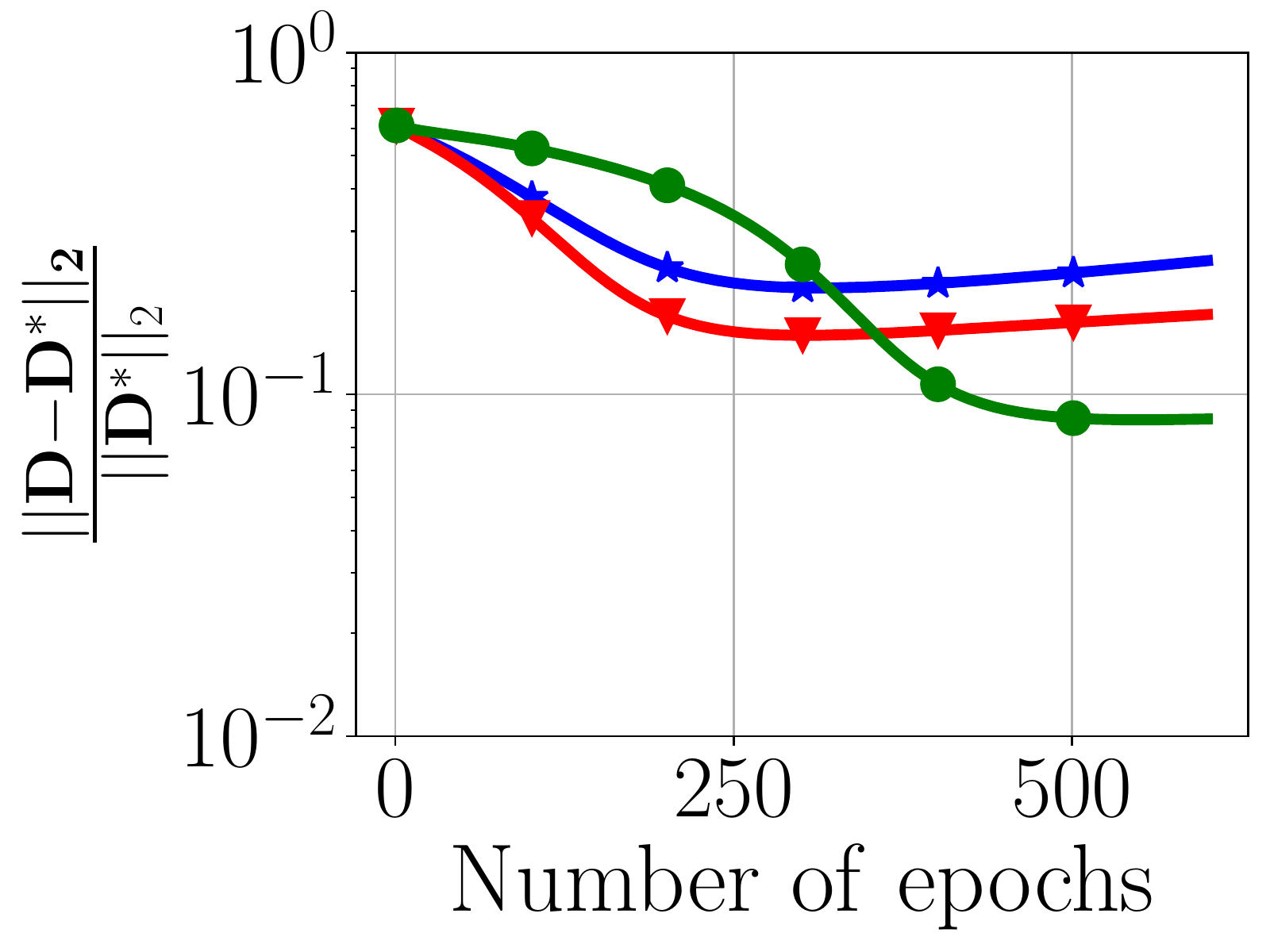}
	  \caption{Learning ($T=25$).}
  	\label{fig:dl_25}
	\end{subfigure}
	%%%%%
	\begin{subfigure}[t]{0.28\linewidth}
	\centering
	\includegraphics[width=0.999\linewidth]{figures/T100_D_Dstar_err}
	  \caption{Learning ($T=100$).}
  	\label{fig:dl_100}
	\end{subfigure}
	\caption{Results for PUDLE's global convergence (\Cref{thm:globalgradient}) and dictionary learning.}
	\label{fig:globalgradient}
	\vspace{-4mm}
\end{figure}
%%%
%

%
%%%%%%%%%%%%%%%%%%%%%%%%%%%%%
\paragraph{Towards unbiased estimation} As long as $\lambda$ is fixed within PUDLE, all defined gradients remain biased estimators of $\g^{\ast}$, due to the biased estimate of the code $\z^{\ast}$ through $\ell_1$ norm. This bias exists while dictionary learning is performed strictly using lasso through~\Cref{algo:altmin}. Given the conditions on the regularizer in~\Cref{thm:supppres} which we discussed in~\Cref{sec:forward} and the derived upper bounds in~\Cref{thm:globalgradient}, we suggest the decaying of $\lambda$ across the encoder to reduce the gradient biases and improve dictionary learning. Next, we prove in \Cref{thm:dictvariabledec} that PUDLE converges to $\D^{\ast}$ if $\lambda$ decays across the layers $t$ according to~\Cref{thm:fwdzerrorvariable}. Moreover, \Cref{thm:dictfixededec} proves that if $\lambda$ stays fixed according to~\Cref{thm:fwdzerrorfixed}, then PUDLE only guarantees to converge to a close neighbourhood of the dictionary. In these analyses, we focus on $\g_T^{\text{dec}}$. Furthermore, we show in~\Cref{sec:exp} that by decaying $\lambda$ at each unrolled layer, the gradient bias vanishes, and we recover $\D^{\ast}$.
%
%%%%%%%%%%%%%%%
\paragraph{Dictionary learning} Given the network parameters set by \Cref{thm:fwdzerrorvariable}, \Cref{thm:dictvariabledec} proves that using $\g_T^{\text{dec}}$, PUDLE recovers the dictionary; the dictionary error contracts at every update. Moreover, \Cref{thm:dictfixededec} proves that as long as $\lambda$ stays fixed across the unrolled layers, PUDLE guarantees to converge to only $\D^{\ast}$ neighbourhood characterized by the regularization parameter $\lambda$. These analyses requires for $\D^{(l)}$ to maintain a closeness to $\D^{\ast}$ which we provide a proof for in \Cref{lemma:maintaincloseness}. Hence, the dictionary closeness assumption (\Cref{assum:closeness}) stays valid.
\begin{restatable}[Dictionary learning with variable $\lambda_t$]{theorem}{dictvariabledec}\label{thm:dictvariabledec}
Given \Cref{assum:distz,assum:d}, suppose $\D^{(l)}$ is $\mu_l$-incoherent and $(\delta_l,2)$-close to $\D^{\ast}$ with $\delta_l = \mathcal{O}^{\ast}(1 / \log{p})$. If $s = \mathcal{O}(\sqrt{m})$, $\mu = \mathcal{O}(\log{m})$, learning rate is $\eta = \mathcal{O}(\frac{p}{s (1 - \delta_l^2/2)})$, and the regularizer and step size are set according to \Cref{thm:fwdzerrorvariable}, then for any dictionary update $l$ using $\g_T^{\text{dec}}$, with probability of at least $1 - \epsilon^{(l)}_{\text{supp-pres}}$, 
\begin{equation}
    \| \D_j^{(l+1)} - \D_j^{\ast} \|_2^2 \leq (1 - \psi)   \| \D_j^{(l)} - \D_j^{\ast} \|_2^2 
\end{equation}
where $\epsilon^{(l)}_{\text{supp-pres}} \coloneqq \epsilon^{(l)}_{\text{supp-rec}} + \epsilon^{(l)}_{\gamma} = 2 p \exp{(\frac{-C_{\min}^2}{\mathcal{O}^{\ast}(\delta_l^2)})}+ 2 s \exp{(\frac{-1}{\mathcal{O}(\delta_l)})}$.
\end{restatable}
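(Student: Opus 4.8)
The plan is to follow the gradient-descent contraction template used for dictionary learning (as in \citet{arora2015sparsecoding, rambhatla2018noodl}), but driven by the decoder gradient $\g_T^{\text{dec}}$ whose forward-pass code estimate is characterized by \Cref{thm:fwdzerrorvariable}. Working in the infinite-sample limit, I first write the $j$-th column of the expected gradient as
\begin{equation}
\g_{T,j}^{\text{dec}} = \E_{\x}\big[(\D^{(l)}\z_T - \x)\,\z_{T,(j)}\big],
\end{equation}
substitute the generative model $\x = \D^{\ast}\z^{\ast}$, and condition on the event that the support is correctly recovered and preserved, which holds with probability at least $1 - \epsilon^{(l)}_{\text{supp-pres}}$ by \Cref{thm:supprec,thm:supppres}. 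On that event I insert the coordinate expansion $\z_{T,(j)} = \z^{\ast}_{(j)}(1 - \beta_j^{(l)}) + \zeta_{T,j}^{(l)}$ from \Cref{thm:fwdzerrorvariable}, with $\beta_j^{(l)} = \langle \D_j^{\ast} - \D_j^{(l)}, \D_j^{\ast}\rangle \le \delta_l^2/2$ and $|\zeta_{T,j}^{(l)}| = \mathcal{O}(\sqrt{s\delta_l})$, so the residual becomes $\D^{(l)}\z_T - \x = \sum_{k\in S^{\ast}}\big[(1-\beta_k^{(l)})\D_k^{(l)} - \D_k^{\ast}\big]\z^{\ast}_{(k)} + \sum_{k\in S^{\ast}}\D_k^{(l)}\zeta_{T,k}^{(l)}$.

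Second, I evaluate the expectation using the code distribution (\Cref{assum:distz}). Conditioning on the support and using $\E[\z^{\ast}_{(S)}\z^{\ast\text{T}}_{(S)}\mid S^{\ast}] = \eye$ collapses the cross-column sum to its diagonal $k=j$ contribution, while the support probabilities $p_j = \Theta(s/p)$ and $p_{ij} = \Theta(s^2/p^2)$ weight the remaining terms. The leading contribution is $p_j(1-\beta_j^{(l)})\big[(1-\beta_j^{(l)})\D_j^{(l)} - \D_j^{\ast}\big]$; using $1-\beta_j^{(l)} = \langle \D_j^{(l)}, \D_j^{\ast}\rangle$ and (approximate) unit column norm gives the identity $\langle (1-\beta_j^{(l)})\D_j^{(l)} - \D_j^{\ast},\, \D_j^{(l)} - \D_j^{\ast}\rangle = \tfrac12\big(1 + \langle \D_j^{(l)}, \D_j^{\ast}\rangle\big)\|\D_j^{(l)} - \D_j^{\ast}\|_2^2$, so the main term points along the descent direction $\D_j^{(l)} - \D_j^{\ast}$ with coefficient $\Theta(p_j)$. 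The off-diagonal sums, the pairwise $p_{ij}$ terms, and the amplitude-bias terms involving $\zeta^{(l)}$ are collected into an error $\vvec_j$ bounded, via \Cref{lemma:mul} (incoherence factor $\mu_l/\sqrt{m}$) and \Cref{assum:d} ($s = \mathcal{O}(\sqrt m)$), so that its correlation with $\D_j^{(l)} - \D_j^{\ast}$ is $\mathcal{O}(\delta_l)$ smaller than the main term.

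Third, I establish the two ingredients for contraction: a correlation lower bound $\langle \g_{T,j}^{\text{dec}}, \D_j^{(l)} - \D_j^{\ast}\rangle \ge \tfrac{p_j}{2}\big(1 - \mathcal{O}(\delta_l)\big)\|\D_j^{(l)} - \D_j^{\ast}\|_2^2$ and a norm bound $\|\g_{T,j}^{\text{dec}}\|_2 = \mathcal{O}(p_j)\,\|\D_j^{(l)} - \D_j^{\ast}\|_2$. Expanding the update then gives
\begin{equation}
\|\D_j^{(l+1)} - \D_j^{\ast}\|_2^2 = \|\D_j^{(l)} - \D_j^{\ast}\|_2^2 - 2\eta\,\langle \g_{T,j}^{\text{dec}}, \D_j^{(l)} - \D_j^{\ast}\rangle + \eta^2\,\|\g_{T,j}^{\text{dec}}\|_2^2,
\end{equation}
and substituting the bounds with the prescribed $\eta = \mathcal{O}\big(p/(s(1-\delta_l^2/2))\big) = \Theta(1/p_j)$ balances the linear and quadratic terms, yielding $-2\eta\langle\cdot\rangle + \eta^2\|\cdot\|_2^2 \le -\psi$ for a constant $\psi\in(0,1)$, hence the claimed contraction. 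That $\D^{(l+1)}$ remains $\delta_{l+1}$-close with $\delta_{l+1}\le\delta_l$ follows from \Cref{lemma:maintaincloseness}, so \Cref{assum:closeness} stays valid across updates.

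The hard part will be the bookkeeping in the second step: showing every error contribution is uniformly lower-order relative to $p_j\|\D_j^{(l)} - \D_j^{\ast}\|_2^2$. The crux is the $\ell_1$ amplitude bias $\zeta_{T,j}^{(l)}$: naively its magnitude $\mathcal{O}(\sqrt{s\delta_l})$ is not $o(\delta_l)$, so a purely norm-based bound would fail. The resolution is directional — the shrinkage bias is essentially radial (aligned with $\D_j^{(l)}$), so its correlation with $\D_j^{(l)} - \D_j^{\ast}$ is $\langle \D_j^{(l)}, \D_j^{(l)} - \D_j^{\ast}\rangle = \tfrac12\|\D_j^{(l)} - \D_j^{\ast}\|_2^2$, contributing only at second order to the tangential convergence while its radial part is absorbed by the column-norm projection. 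This is precisely where the decaying-$\lambda_t$ schedule of \Cref{thm:fwdzerrorvariable} is essential: it ties the bias to the current dictionary error rather than a fixed floor, so the perturbation to the descent coefficient vanishes with $\delta_l$ and exact contraction to $\D^{\ast}$ holds — in contrast to the fixed-$\lambda$ case of \Cref{thm:dictfixededec}, where the floor yields only neighbourhood convergence.
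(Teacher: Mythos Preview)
Your overall plan matches the paper's proof closely: write $\g_{T,j}^{\text{dec}}=\E[(\D^{(l)}\z_T-\x)\z_{T,(j)}]$, condition on correct support, substitute $\z_{T,(j)}=(1-\beta_j^{(l)})\z^{\ast}_{(j)}+\zeta_{T,j}^{(l)}$ from \Cref{thm:fwdzerrorvariable}, use $\E[\z^{\ast}_{(S)}\z^{\ast\text{T}}_{(S)}\mid S]=\eye$ to extract the leading term $p_j(1-\beta_j^{(l)})\big[(1-\beta_j^{(l)})\D_j^{(l)}-\D_j^{\ast}\big]$, bound the remainder, and invoke the descent lemma (Theorem~6 of \citet{arora2015sparsecoding}) with $\eta=\Theta(1/p_j)$. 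The paper writes the gradient as $\tau(\D_j^{(l)}-\D_j^{\ast})+\theta$ with $\tau=p_j(1-\beta_j^{(l)})$ and shows $\|\theta\|_2\le p_j\|\D_j^{(l)}-\D_j^{\ast}\|_2$, which is exactly your correlation-plus-norm ingredients.

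The one place you diverge is your proposed resolution of the $\zeta$ bias. Your ``radial'' argument (the bias is along $\D_j^{(l)}$, hence its inner product with $\D_j^{(l)}-\D_j^{\ast}$ is second order and the rest is absorbed by column normalization) is not how the paper closes this, and by itself would not be enough: the dangerous contributions to $\theta$ are the cross-terms $\E[\z^{\ast}_{(i)}\zeta_{T,j}^{(l)}\mid S]$ and $\E[\zeta_{T,i}^{(l)}\zeta_{T,j}^{(l)}\mid S]$ appearing inside what the paper calls $U_{T,j}^{(l)}$, and these are neither purely radial nor removed by projection. The paper controls them \emph{probabilistically}: expanding $\zeta_{T,j}^{(l)}$ into its $\gamma_j^{(l)}$, $\eta_{t,j}$, and $\lambda_t$ pieces and using the symmetric, zero-mean, independent structure of the code (\Cref{assum:distz}) to show $\E[\z^{\ast}_{(i)}\zeta_{T,j}^{(l)}\mid S]$ is $\mathcal{O}(\delta_l+\tfrac{\mu_l}{\sqrt m}a_\gamma)$ for $i\neq j$ and essentially vanishing for $i=j$. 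Summing with the $p_{ij}=\Theta(s^2/p^2)$ weights and $s=\mathcal{O}(\sqrt m)$ gives $\|U_{T,j}^{(l)}\|_2=\mathcal{O}(\sqrt p\,p_{ij}\delta_l\|\D^{(l)}\|_2)$, which is what makes $\|\theta\|_2\le p_j\|\D_j^{(l)}-\D_j^{\ast}\|_2$ hold. So your identification of the crux is right, but the mechanism is cancellation in expectation via the code distribution, not a geometric/radial decomposition; you should plan to carry out that expectation computation explicitly.
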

\begin{restatable}[Dictionary learning with fixed $\lambda_t$]{theorem}{dictfixededec}\label{thm:dictfixededec}
Given \Cref{assum:distz,assum:d}, suppose $\D^{(l)}$ is $\mu_l$-incoherent and $(\delta_l,2)$-close to $\D^{\ast}$ with $\delta_l = \mathcal{O}^{\ast}(1 / \log{p})$. If $s = \mathcal{O}(\sqrt{m})$, $\mu = \mathcal{O}(\log{m})$, learning rate is $\eta = \mathcal{O}(\frac{p}{s (1 - \delta_l^2/2)})$, and the regularizer $\lambda^{\text{fixed}}$ and step size are set according to \Cref{thm:fwdzerrorfixed}, then for any dictionary update $l$ using $\g_T^{\text{dec}}$, with probability of at least $1 - \epsilon^{(l)}_{\text{supp-pres}}$, 
\begin{equation}
    \| \D_j^{(l+1)} - \D_j^{\ast} \|_2^2 \leq (1 - \psi)   \| \D_j^{(l)} - \D_j^{\ast} \|_2^2 + \epsilon_{\lambda}^{(l)}
\end{equation}
where $\epsilon_{\lambda}^{(l)} \coloneqq \eta \frac{2p}{s(1 - \beta_j^{(l)})} \lambda^{\text{fixed}2}$, $\epsilon^{(l)}_{\text{supp-pres}} \coloneqq \epsilon^{(l)}_{\text{supp-rec}} + \epsilon^{(l)}_{\gamma} = 2 p \exp{(\frac{-C_{\min}^2}{\mathcal{O}^{\ast}(\delta_l^2)})}+ 2 s \exp{(\frac{-1}{\mathcal{O}(\delta_l)})}$.
\end{restatable}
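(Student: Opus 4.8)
The plan is to mirror the contraction argument behind the variable-$\lambda$ result (\Cref{thm:dictvariabledec}) while carefully retaining the residual terms that a \emph{fixed} regularizer forbids from vanishing. Working in the infinite-sample limit, the update on column $j$ reads $\D_j^{(l+1)} = \D_j^{(l)} - \eta\, g_j$, where $g_j$ is the $j$-th column of $\g_T^{\text{dec}}$, i.e. $g_j = \E_{\x}\big[(\D^{(l)}\z_T - \x)\, \z_{T,(j)}\big]$ and $\nabla_2\Loss_{\x}(\z,\D)=(\D\z-\x)\z^{\text{T}}$. First I would condition on the event that the support is recovered and preserved: by \Cref{thm:supprec} and \Cref{thm:supppres} this holds with probability at least $1-\epsilon^{(l)}_{\text{supp-pres}}$, giving $\text{supp}(\z_T)=S^{\ast}$ together with the coordinate-wise form of \Cref{thm:fwdzerrorfixed}, $\z_{T,(i)}=\z_{(i)}^{\ast}(1-\beta_i^{(l)})+\zeta_{T,i}^{(l)}$ with $|\zeta_{T,i}^{(l)}|=\mathcal{O}(a_{\gamma}+\lambda^{\text{fixed}})$.

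Next I would insert the data model $\x=\D^{\ast}\z^{\ast}$ and this code expression into $g_j$ and take expectation over the code using \Cref{assum:distz}. Conditioning on a support $S^{\ast}\ni j$ and using $\E[\z_{(S)}^{\ast}\z_{(S)}^{\ast\text{T}}\mid S^{\ast}]=\eye$ and $\E[\z_{(i)}^{\ast}\mid S^{\ast}]=0$, the diagonal $i=j$ contribution yields the leading term $p_j(1-\beta_j^{(l)})\big((1-\beta_j^{(l)})\D_j^{(l)}-\D_j^{\ast}\big)$. Writing $u\coloneqq\D_j^{(l)}-\D_j^{\ast}$ and using $\langle\D_j^{\ast},u\rangle=-\beta_j^{(l)}$ (a consequence of $\beta_j^{(l)}=\langle\D_j^{\ast}-\D_j^{(l)},\D_j^{\ast}\rangle$ with $\|\D_j^{\ast}\|_2=1$), this leading term correlates with $u$ as $p_j(1-\beta_j^{(l)})\big(\beta_j^{(l)2}+(1-\beta_j^{(l)})\|u\|_2^2\big)\ge p_j(1-\beta_j^{(l)})^2\|u\|_2^2$, so it points into the descent direction toward $\D_j^{\ast}$. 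The off-diagonal $i\neq j$ terms are interference terms that I would bound through the incoherence $\mu_l/\sqrt{m}$ (\Cref{assum:d}, \Cref{lemma:mul}) and the pairwise activation probabilities $p_{ij}=\Theta(s^2/p^2)$, showing they are dominated by the leading term once $s=\mathcal{O}(\sqrt{m})$; the remaining contributions, driven by $\zeta_{T,i}^{(l)}$, assemble into a bias vector whose norm is controlled by $\lambda^{\text{fixed}}$ (and the lower-order $a_{\gamma}$).

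I would then substitute these estimates into the exact expansion $\|\D_j^{(l+1)}-\D_j^{\ast}\|_2^2=\|u\|_2^2-2\eta\langle g_j,u\rangle+\eta^2\|g_j\|_2^2$. Writing $g_j=\gamma_j u + b_j$ with $\gamma_j\asymp p_j(1-\beta_j^{(l)})^2$ and $\|b_j\|_2=\mathcal{O}(\lambda^{\text{fixed}})$ up to lower-order terms, and choosing $\eta=\mathcal{O}\!\big(\tfrac{p}{s(1-\beta_j^{(l)})}\big)$ so that $\eta\gamma_j=\Theta(1)$ (since $p_j=\Theta(s/p)$), the $-2\eta\langle g_j,u\rangle$ term contributes an order-one negative multiple of $\|u\|_2^2$ while the $\eta^2\gamma_j^2$ piece is a smaller positive perturbation; together they produce the contraction factor $1-\psi$ for a suitable $\psi\in(0,1)$. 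The cross term $-2\eta\langle b_j,u\rangle$ and $\eta^2\|b_j\|_2^2$ carry the only pieces quadratic in $\lambda^{\text{fixed}}$; balancing the cross term against the contraction by Young's inequality (at the scale $\eta\gamma_j$) and collecting what is left produces exactly $\epsilon_\lambda^{(l)}=\eta\frac{2p}{s(1-\beta_j^{(l)})}\lambda^{\text{fixed}2}$.

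The main obstacle, and the genuine difference from \Cref{thm:dictvariabledec}, is the bookkeeping that isolates the persistent $\lambda^{\text{fixed}}$ component hidden inside $\zeta_{T,j}^{(l)}$: in the variable-$\lambda$ case this bias shrinks together with the code error and is absorbed into the contraction, whereas here it must be cleanly separated so that the irreducible residual collapses to $\epsilon_\lambda^{(l)}$ and no larger quantity; getting the constant right requires tracking both the diagonal shrinkage bias and the (incoherence-suppressed) off-diagonal bias through the square completion. A secondary task is to check that the contraction keeps $\D^{(l+1)}$ within the closeness regime, so that \Cref{assum:closeness} (via \Cref{lemma:maintaincloseness}) and hence the support and code guarantees of \Cref{thm:supprec}, \Cref{thm:supppres} and \Cref{thm:fwdzerrorfixed} remain available at the next update.
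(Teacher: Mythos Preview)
Your plan matches the paper's proof almost step for step: condition on support preservation, substitute $\z_{T,(i)}=\z_{(i)}^{\ast}(1-\beta_i^{(l)})+\zeta_{T,i}^{(l)}$ from \Cref{thm:fwdzerrorfixed}, take the expectation over $\z^{\ast}$ using \Cref{assum:distz} to isolate the leading term $p_j(1-\beta_j^{(l)})\big((1-\beta_j^{(l)})\D_j^{(l)}-\D_j^{\ast}\big)$, and bound the residual (the paper's $U_{T,j}^{(l)}$) via incoherence, the pairwise probabilities $p_{ij}$, and the extra $\lambda^{\text{fixed}}$ term that appears in $\E[\z_{(j)}^{\ast}\zeta_{T,j}^{(l)}\mid S]$ on the diagonal. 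The one substantive difference is in the last step: you propose expanding $\|u-\eta g_j\|_2^2$ directly and absorbing the cross term $-2\eta\langle b_j,u\rangle$ via Young's inequality, whereas the paper writes $g_j=\tau u+\theta$ with $\tau=p_j(1-\beta_j^{(l)})$, derives an algebraic lower bound $2\langle g_j,u\rangle\ge c_1\|u\|_2^2+\tfrac{1}{\tau}\|g_j\|_2^2-\tfrac{1}{\tau}\|\theta\|_2^2$ by solving $\|g_j\|_2^2=\tau^2\|u\|_2^2+\|\theta\|_2^2+2\tau\langle\theta,u\rangle$ for $\langle\theta,u\rangle$, and then invokes the descent template of Arora et al.\ (2015, Theorem~6). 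Both routes produce a contraction plus an additive $\mathcal{O}(\eta\lambda^{\text{fixed}2})$ slack; the paper's packaging makes the precise constant $\epsilon_{\lambda}^{(l)}=\eta\tfrac{2p_j}{1-\beta_j^{(l)}}\lambda^{\text{fixed}2}$ fall out of $\|\theta\|_2^2\le 2p_j^2(\|u\|_2^2+\lambda^{\text{fixed}2})$ without tuning a Young parameter, while your direct expansion is more self-contained but requires a touch more care to land on the same constant.
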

%
%%%%%%%%%%%%%%%%%%%%%%%%%%%%%
%
\subsection{Experiments}\label{sec:exp}
\paragraph{Dictionary learning} We focus on the performance of the best-performing gradient estimator $\g_t^{\text{ae-ls}}$, and compare it with NOODL~\citep{rambhatla2018noodl}, a state-of-the-art online dictionary learning algorithm, and SPORCO~\citep{wohlberg2017sporco}, an alternating-minimization dictionary learning algorithm that uses lasso. NOODL, which uses iterative hard-thresholding (HT) for sparse coding and a gradient update employing the code's sign, has linear convergence upon proper initialization~\citep{rambhatla2018noodl}. We note that the results from $\g_t^{\text{ae-lasso}}$ are not shown, as the gradient computation was unstable~\citep{malezieux2022understanding}. We emphasize that our proposed gradient $\g_t^{\text{ae-ls}}$ does not suffer such instability. We train:
\begin{wrapfigure}[14]{r}{0.41\textwidth}
    % \vspace{-5mm}
	\centering
	\includegraphics[width=0.999\linewidth]{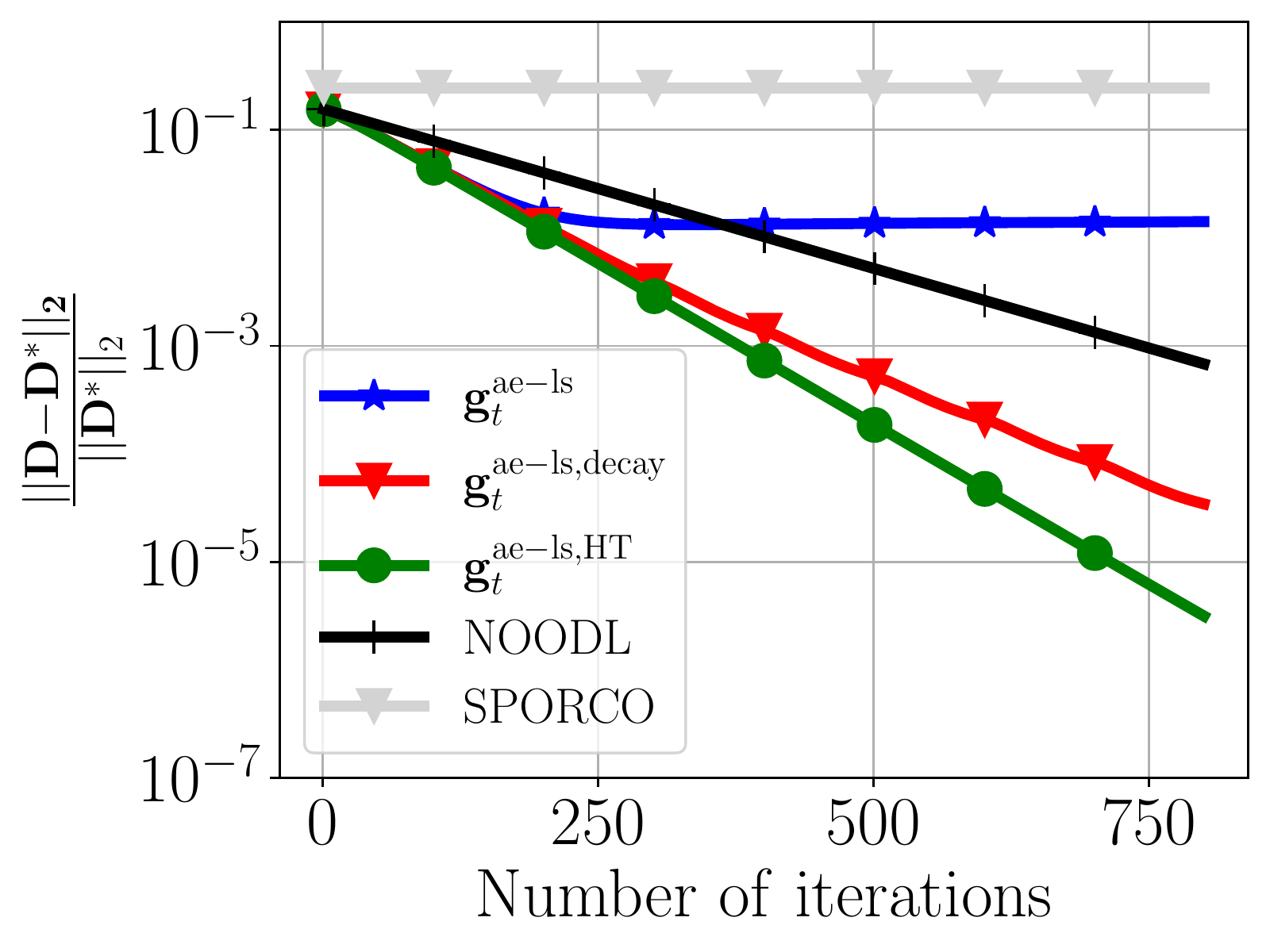}
	\caption{Dictionary convergences.}
% 	\caption{Dictionary convergence of $\g_t^{\text{ae-ls}}$, NOODL~\citep{rambhatla2018noodl} and SPORCO~\citep{wohlberg2017sporco}.}
	\label{fig:baslines}
\end{wrapfigure}
 \begin{itemize}[noitemsep, topsep=0pt, leftmargin=12pt]
 \item $\g_t^{\text{ae-ls}}$: $\lambda$ is fixed across iterations.
 \item $\g_t^{\text{ae-ls, decay}}$ : $\lambda$ decays (i.e., $\lambda_t = \lambda \nu^t$, with $0 <\nu < 1$) where $\nu$ decreases as training progresses.
 \item $\g_t^{\text{ae-ls, HT}}$ : $\prox_{\alpha \lambda}(v)$ is replaced with $\text{HT}_{b}(v) \triangleq v \bm{1}_{|v| \geq b}$.
 \end{itemize}
With $\text{HT}$, the sparse coding step reduces to that from NOODL. In this case, we highlight the difference between the gradient update of our method (backpropagation) with NOODL. We focus on convergence, as $\eta$ across methods is not comparable.

\Cref{fig:baslines} shows the convergence of $\D\!\in\!\R^{1000 \times 1500}$ to $\D^{\ast}$ when the code is $20$-sparse (for other sparsity levels and details see~\Cref{exp_app}). A biased estimate of the code amplitudes results in convergence only to a neighbourhood of the dictionary~\citep{rambhatla2018noodl}. This is observed in the convergence of $\g_t^{\text{ae-ls}}$ and SPORCO (final error is shown). The convergence of $\g_t^{\text{ae-ls}}$ to a closer neighbourhood than SPORCO supports~\Cref{thm:globalgradient}. Moreover, with decaying $\lambda$, the code bias vanishes, hence $\g_t^{\text{ae-ls, decay}}$ and $\g_t^{\text{ae-ls, HT}}$ converges to $\D^{\ast}$ similar to NOODL.

%
%%%
\begin{figure}[h]
	\centering
	%%%%%
	\begin{subfigure}[t]{0.45\linewidth}
	\centering
	\includegraphics[width=0.999\linewidth]{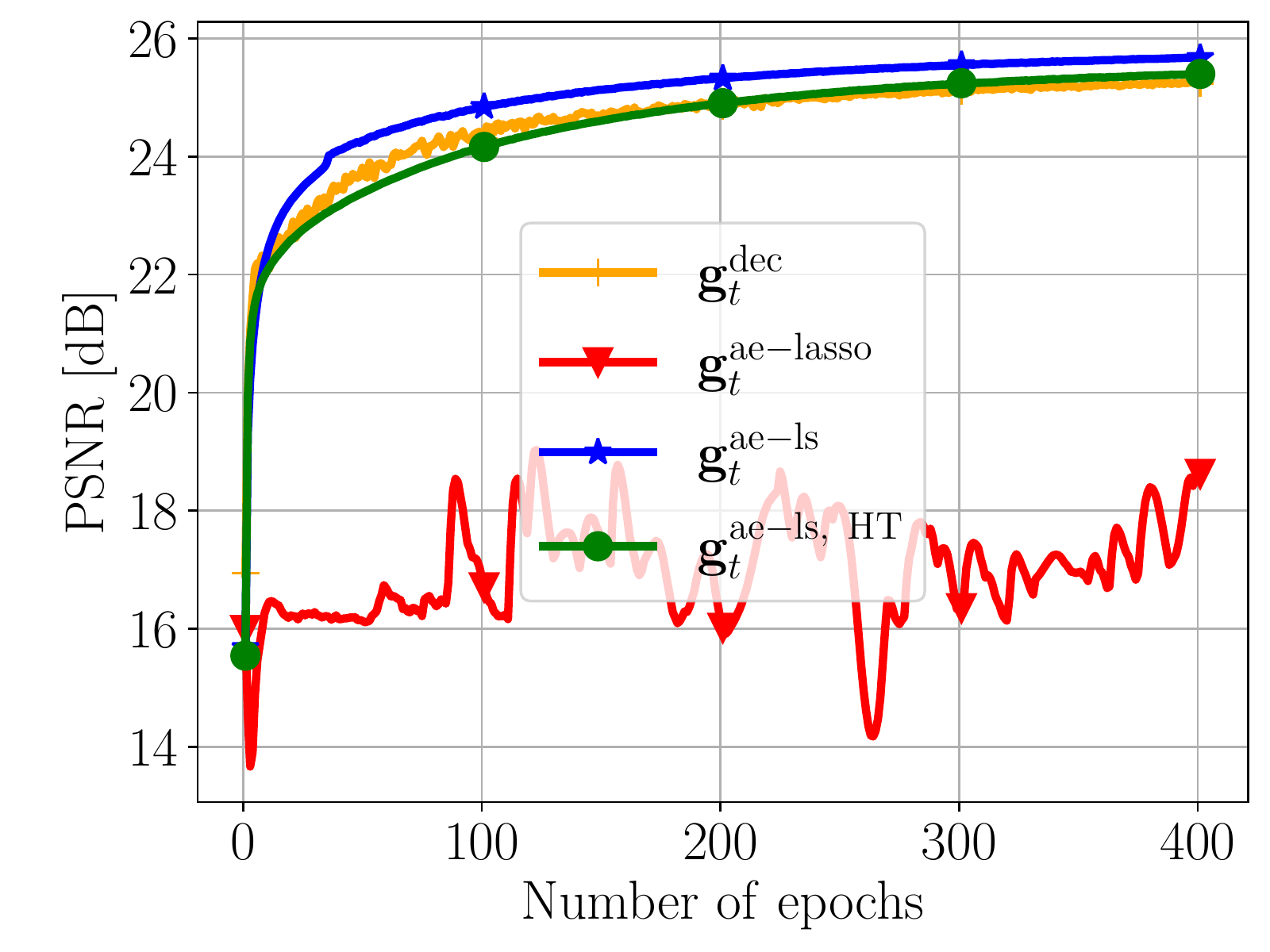}
	  \caption{$\g_t^{\text{ae-lasso}}$ training is not stable.}
  	\label{fig:psnr_1}
	\end{subfigure}
	%%%%%
	\begin{subfigure}[t]{0.45\linewidth}
	\centering
	\includegraphics[width=0.999\linewidth]{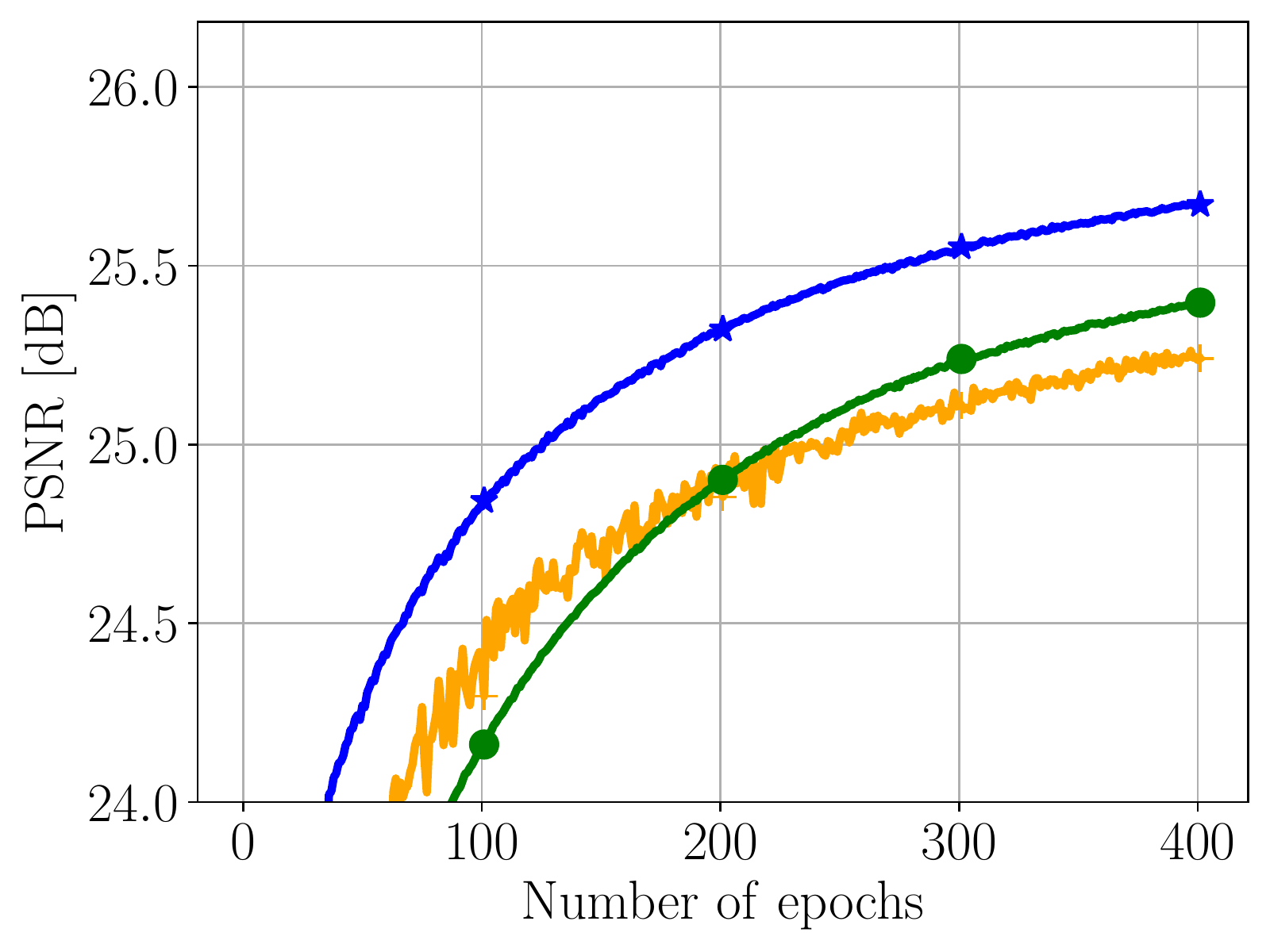}
	  \caption{$\g_t^{\text{ae-ls}}$ improvement is smoother than  $\g_t^{\text{dec}}$.}
  	\label{fig:psnr_2}
	\end{subfigure}
	\caption{Networks behaviour (test PSNR) during training as a function of epochs.}
	\label{fig:psnr}
	\vspace{-2mm}
\end{figure}
%%%
%

\paragraph{Image denoising} To further highlight the advantage of $\g_t^{\text{ae-ls}}$ over the other gradients, we compare them in a supervised task of image denoising. In addition to $\g_t^{\text{ae-ls}}$, $\g_t^{\text{ae-lasso}}$, and $\g_t^{\text{dec}}$, we consider $\g_t^{\text{ae-ls, HT}}$ where the proximal operator is replaced with HT. This is to compare with sparse coding scheme of NOODL. We do not compare against NOODL's dictionary update, as this computation for two-dimensional convolutions is not straightforward. Prior works have shown that variants of PUDLE either rival or outperform state-of-the-art architectures~\citep{simon2019rethinking,tolooshams2020icml}. Thus, we focus on a comparative analysis of the gradients. We trained on $432$ and tested on $68$ images from BSD~\citep{martin2001bsd}. BSD dataset is a popular training dataset for denoising~\citep{zhang2017beyond, simon2019rethinking, mohan2019robust}. We used a convolutional dictionary and corrupted images with zero-mean Gaussian noise of standard deviation of $25$ (see~\Cref{exp_app} for details). We initialized the dictionary filters by standard Normal distribution; this is to follow the norm in the deep learning literature and to demonstrate the practicality and usefulness of PUDLE in the absence of an initialization method. We evaluate the denoising performance of soft-thresholding using $\lambda$ and HT with $b$ in peak signal-to-noise-ratio (PSNR).

First, we highlight the stability of $\g_t^{\text{ae-ls}}$ against $\g_t^{\text{ae-lasso}}$; \Cref{fig:psnr_1} shows the network dynamics in terms of test PSNR as a function of epochs when $\lambda = 0.16$ for $\g_t^{\text{ae-ls}}$, $\g_t^{\text{ae-lasso}}$, $\g_t^{\text{dec}}$ and $b=0.05$ for $\g_t^{\text{ae-ls, HT}}$.  We observed that $\g_t^{\text{ae-ls}}$ uses full backpropagation and stays stable. However, the training with $\g_t^{\text{ae-lasso}}$ is not stable and unstable to perform denoising where the noisy PSNR is approximately $20$ dB~\citep{malezieux2022understanding}. Second, \Cref{fig:psnr_2} shows that compared to $\g_t^{\text{dec}}$, the backpropagated gradients result in a smoother improvement during training. Moreover, \Cref{tab:psnr} shows that the advantage of $\g_t^{\text{ae-ls}}$ over $\g_t^{\text{dec}}$ is not limited to dictionary learning and is seen in denoising. We have excluded the results for $\g_t^{\text{ae-lasso}}$ from \Cref{tab:psnr} as the network failed to denoise (see \Cref{fig:psnr_1}). Additionally, the superior performance of $\g_t^{\text{ae-ls}}$ compared to $\g_t^{\text{ae-ls, HT}}$ highlights the benefits of PUDLE (i.e., $\ell_1$-based unrolling) against HT used in NOODL.
 \begin{table}[!h]
% \begin{wraptable}[11]{r}{0.48\linewidth}
% \vspace{-4mm}
 \caption{Denoising of BSD68. Reported numbers are mean (std) PSNR given three independent trials.}
  \label{tab:psnr}
\begin{center}
\begin{tabular}{llllll}
\multicolumn{1}{c}{\bf METHOD} & &\multicolumn{4}{c}{\bf PSNR [dB]}\\
\midrule
 & $\lambda$ &  0.08 & 0.12 & 0.16 & 0.2   \\
  \cmidrule(r){2-6}
$\g_t^{\text{dec}}$ & &  24.21 (0.12) &  24.93 (0.14) &  25.25 (0.06) &  24.88 (0.00)  \\
 $\g_t^{\text{ae-ls}}$  & & 24.79 (0.03) &  25.43 (0.03) &  {\bf 25.63} (0.04) &  25.46 (0.05) \\
\midrule
  & $b$ &  0.02 & 0.05 & 0.08 & 0.1   \\
  \cmidrule(r){2-6}
  $\g_t^{\text{ae-ls, HT}}$  &  &  22.92 (0.07) &  25.26 (0.1) &  24.76 (0.06) &  23.94 (0.13)  \\
\end{tabular}
\end{center}
\end{table}
% \end{wraptable}
%
% \in \{0.08, 0.12, 0.16, 0.2\}$
% \in \{0.02, 0.05, 0.08, 0.1 \}$

%%%%%%%%%%%%%%%%%%%%%%%%%%%%
%
\section{Interpretable Sparse Codes and Dictionary}
One motivation behind using algorithm unrolling to design deep architectures is interpretability~\citep{monga2019algorithm}; they argue that the designed networks are interpretable as they capture domain knowledge via an optimization model. For example, \citet{tolooshams2020tnnls} takes advantage of the interpretability of learned weights in an unrolled dictionary learning network to solve spike sorting, an unsupervised source separation problem in computational neuroscience. Moreover,~\citet{kim2010intersparsecodingvision} uses sparse coding to learn interpretable representations of human motions. However, none of the existing methods in the literature provide interpretability results that open the black-box network through building a mathematical relation between the learned dictionary, training data, and test representation/reconstruction. This section analyzes the interpretability of the unrolled sparse coding method in this context. We note that such mathematical relation and interpretability results also hold for dictionary learning. However, it is missing in the literature, irrespective of whether one uses an unrolling network. We provide the following theorem.
\begin{wrapfigure}[16]{r}{0.35\linewidth}
%   \vspace{2mm}
  \centering
  \includegraphics[width=0.999\linewidth]{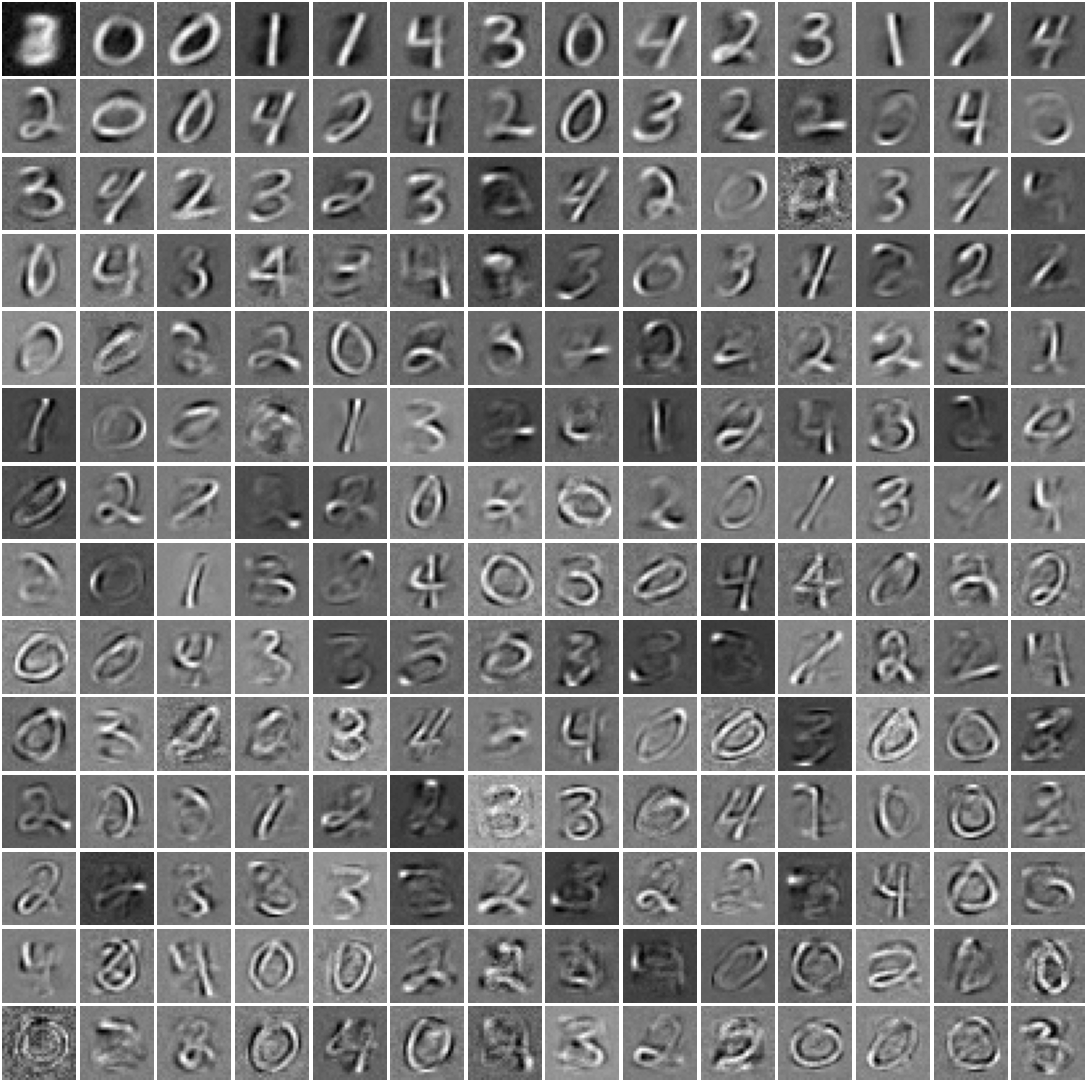}
  \caption{Fraction of dictionary atoms learned from $\{0, 1, 2, 3, 4\}$ MNIST.}
  \label{fig:mnist_dict}
  \end{wrapfigure}
%
% \vspace{-4mm}
\begin{restatable}[Interpretable unrolled network]{theorem}{interp}\label{thm:interp}
Consider the dictionary learning optimization of the form $\min_{\Z, \D}\ \frac{1}{2} \| \Xx - \D \Z \|_{F}^2 + \lambda \| \Z \|_1 + \nicefrac{\omega}{2} \| \D \|_F^2$, where $\Xx = [\x^1, \x^2, \ldots, \x^n] \in \R^{m \times n}$ and $\Z = [\z^1, \z^2, \ldots, \z^n] \in \R^{p \times n}$. Let $\tilde \Z$ be the given converged sparse codes, then stationary points of the problem w.r.t the network weights (dictionary) follows $\tilde \D = \Xx \G^{-1} \tilde \Z^{\text{T}}$, where we denote $\G \coloneqq (\tilde \Z^{\text{T}} \tilde \Z + \omega \eye)$la.
\end{restatable}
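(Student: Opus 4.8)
The plan is to recognize this as a ridge-regression (Tikhonov-regularized least-squares) problem in $\D$ with the codes frozen at the converged value $\tilde\Z$, solve the resulting normal equation in the dictionary space, and then convert that solution into the stated sample-space form via a push-through (matrix-inversion) identity. First I would observe that the penalty $\lambda\|\Z\|_1$ does not depend on $\D$, so a stationary point in $\D$ is determined solely by the smooth part $g(\D)\coloneqq \frac{1}{2}\|\Xx - \D\tilde\Z\|_F^2 + \frac{\omega}{2}\|\D\|_F^2$. For $\omega>0$ this function is strictly convex and coercive in $\D$, hence it has a unique minimizer characterized by $\nabla_\D g = \zero$.

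Next I would set the gradient to zero. Using $\nabla_\D \tfrac{1}{2}\|\Xx - \D\tilde\Z\|_F^2 = -(\Xx - \D\tilde\Z)\tilde\Z^{\text{T}}$ and $\nabla_\D \tfrac{\omega}{2}\|\D\|_F^2 = \omega\D$, the first-order condition is $-(\Xx - \tilde\D\tilde\Z)\tilde\Z^{\text{T}} + \omega\tilde\D = \zero$, i.e. $\tilde\D(\tilde\Z\tilde\Z^{\text{T}} + \omega\eye_p) = \Xx\tilde\Z^{\text{T}}$. Since $\tilde\Z\tilde\Z^{\text{T}}\succeq 0$ and $\omega>0$, the $p\times p$ matrix $\tilde\Z\tilde\Z^{\text{T}} + \omega\eye_p$ is positive definite and therefore invertible, giving the feature-space solution $\tilde\D = \Xx\tilde\Z^{\text{T}}(\tilde\Z\tilde\Z^{\text{T}} + \omega\eye_p)^{-1}$.

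The remaining step, which produces the stated sample-space form with $\G = \tilde\Z^{\text{T}}\tilde\Z + \omega\eye_n$, is the push-through identity $\tilde\Z^{\text{T}}(\tilde\Z\tilde\Z^{\text{T}} + \omega\eye_p)^{-1} = (\tilde\Z^{\text{T}}\tilde\Z + \omega\eye_n)^{-1}\tilde\Z^{\text{T}}$. I would prove it from the elementary equality $\tilde\Z^{\text{T}}(\tilde\Z\tilde\Z^{\text{T}} + \omega\eye_p) = (\tilde\Z^{\text{T}}\tilde\Z + \omega\eye_n)\tilde\Z^{\text{T}}$ (both sides equal $\tilde\Z^{\text{T}}\tilde\Z\tilde\Z^{\text{T}} + \omega\tilde\Z^{\text{T}}$), then left-multiplying by $(\tilde\Z^{\text{T}}\tilde\Z + \omega\eye_n)^{-1}$ and right-multiplying by $(\tilde\Z\tilde\Z^{\text{T}} + \omega\eye_p)^{-1}$, both of which exist for $\omega>0$. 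Substituting into the feature-space solution yields $\tilde\D = \Xx(\tilde\Z^{\text{T}}\tilde\Z + \omega\eye_n)^{-1}\tilde\Z^{\text{T}} = \Xx\G^{-1}\tilde\Z^{\text{T}}$, as claimed.

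The computation itself is routine; the only point requiring care is that both inverses are well-defined, which follows from $\omega>0$ together with positive semidefiniteness of the Gram matrices, so that no rank assumption on $\tilde\Z$ is needed. The conceptual content, and the reason this is the \emph{interpretable} form, is the reindexing from the $p\times p$ dictionary-space normal equations to the $n\times n$ sample-space Gram matrix $\G$; this is exactly what the push-through identity accomplishes, and it is what lets one read $\tilde\D$ as a code-correlation-weighted combination of the training samples $\Xx$.
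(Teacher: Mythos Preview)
Your proof is correct and follows essentially the same route as the paper: set the gradient in $\D$ to zero, solve the normal equation to obtain $\tilde\D = \Xx\tilde\Z^{\text{T}}(\tilde\Z\tilde\Z^{\text{T}}+\omega\eye)^{-1}$, then apply the push-through identity $\A^{\text{T}}(\A\A^{\text{T}}+\omega\eye)^{-1}=(\A^{\text{T}}\A+\omega\eye)^{-1}\A^{\text{T}}$. You supply more detail than the paper (invertibility from $\omega>0$, a proof of the identity, and the observation that the $\ell_1$ term is irrelevant for the $\D$-stationarity condition), but the argument is the same.
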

\paragraph{The dictionary interpolates the training data} Given \Cref{thm:interp}, each learned atom interpolates the training data, i.e.,
%
%% from mnist_01234_p500_layers15_lamp7_step1_lamlossp7_2021_12_01_11_49_41
%
\vspace{-2mm}
\begin{equation}\label{eq:dict_inter}
	\tilde \D_j =  \Xx (\G^{-1}\w_j) = \sum_{k=1}^n (\G^{-1}\w_j)_k \x^k
\end{equation}
where $\w_j = [\tilde \z^{1}_j, \tilde \z^{2}_j, \ldots, \tilde \z^{n}_j]^{\text{T}} \in \R^n$ is a vector containing the training code activity for dictionary atom $j$. Specifically, the importance of training image $\x^k$ in learning dictionary atom $j$ is captured by the term $(\G^{-1}\w_j)_k$. This proves the dictionary lives in the spans of the training set. Given the small number of atoms compared to the training size,~\eqref{eq:dict_inter} shows that the dictionary summarizes the training examples. We trained the network on digits of $\{0, 1, 2, 3, 4\}$ MNIST (\Cref{fig:mnist_dict} shows a fraction of the most used learned atoms). \Cref{fig:mnist_01234_learn_image_cont_to_dict} visualizes dictionary atoms along with training images with the highest contribution (green) and the lowest contribution (red). In addition, we used~\eqref{eq:dict_inter} on the partial training data to reconstruct learned atoms (shown as Estimate). Next, we interpret the relation between a new data to the training data using representer point selection, similar to~\citep{yeh2018representer}.

\begin{figure}[h]
    \centering
  \begin{subfigure}[b]{0.245\textwidth}
  \centering
  \includegraphics[width=0.99\linewidth]{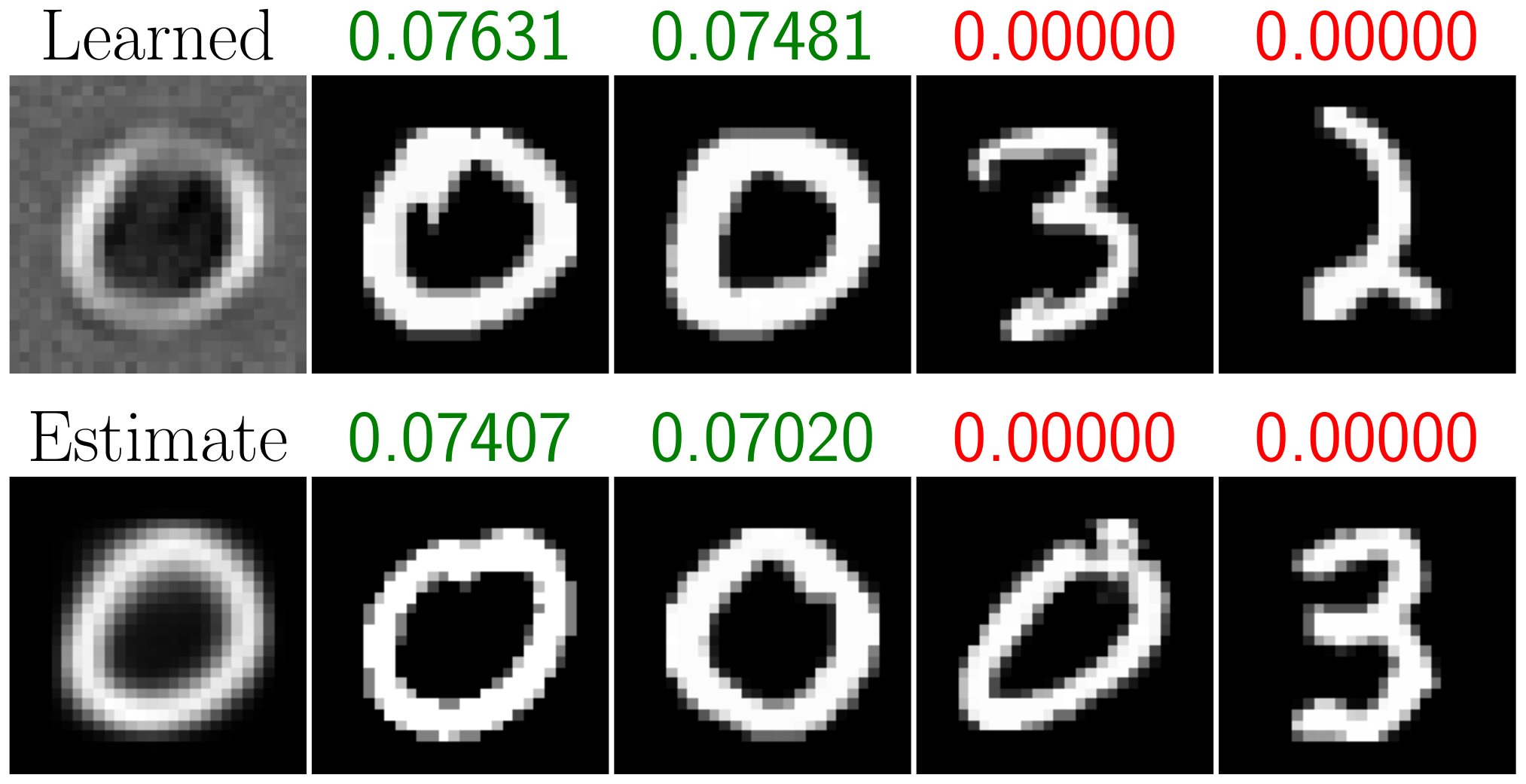}
  \caption{$0$ looking like atom.}
  \end{subfigure}
  \begin{subfigure}[b]{0.245\textwidth}
  \centering
  \includegraphics[width=0.99\linewidth]{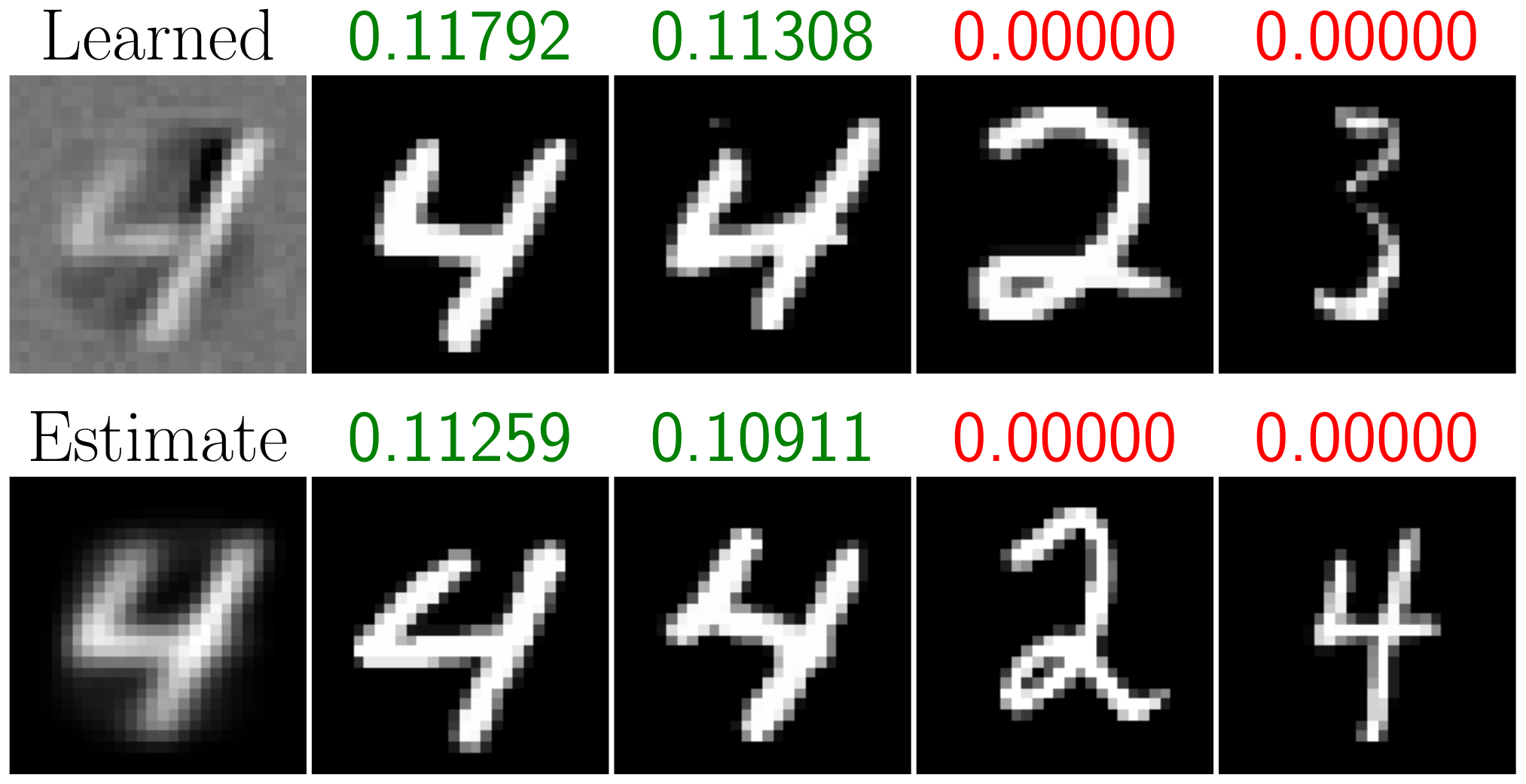}
   \caption{$4$ looking like atom.}
  \end{subfigure}
  \begin{subfigure}[b]{0.245\textwidth}
  \centering
  \includegraphics[width=0.99\linewidth]{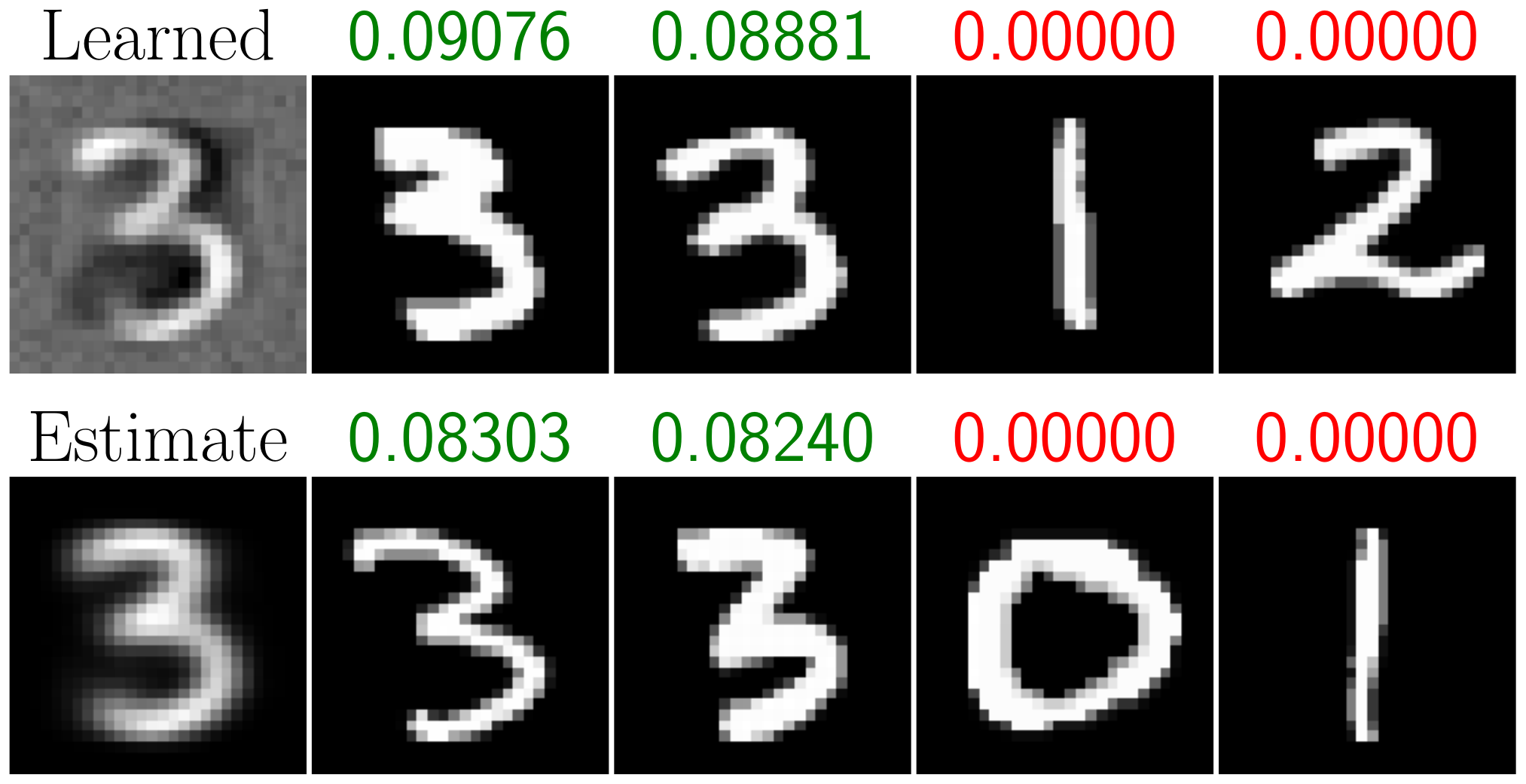}
    \caption{$3$ looking like atom.}
  \end{subfigure}
   \begin{subfigure}[b]{0.245\textwidth}
  \centering
  \includegraphics[width=0.99\linewidth]{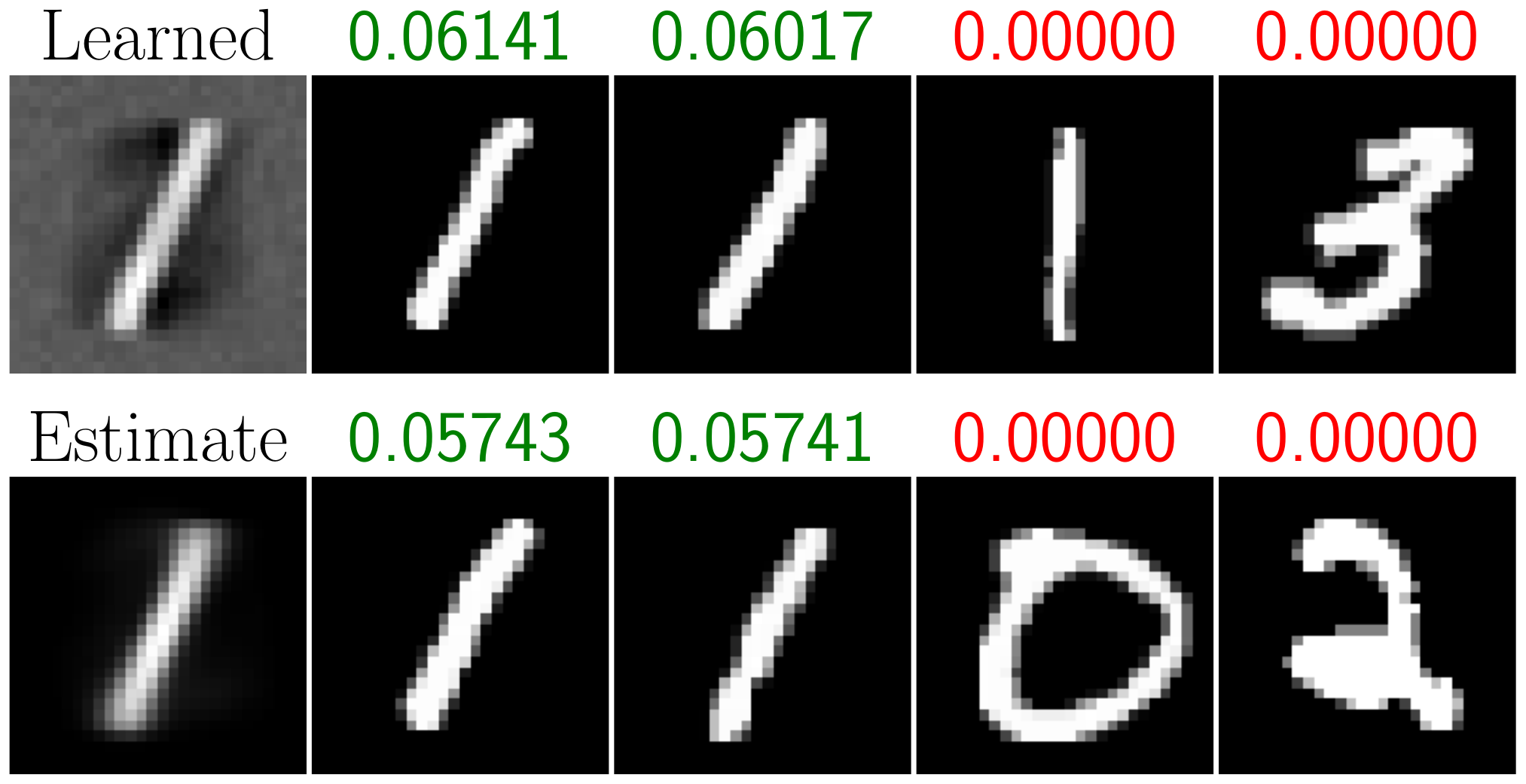}
    \caption{$1$ looking like atom.}
  \end{subfigure}
  \caption{Training image contributions to learning the dictionary.}
  \label{fig:mnist_01234_learn_image_cont_to_dict}
  \vspace{-3mm}
\end{figure}
%
%%%%%%%%%%%%
%
\begin{figure}[h]
  \centering
  \begin{subfigure}[b]{0.245\textwidth}
  \centering
  \includegraphics[width=0.99\linewidth]{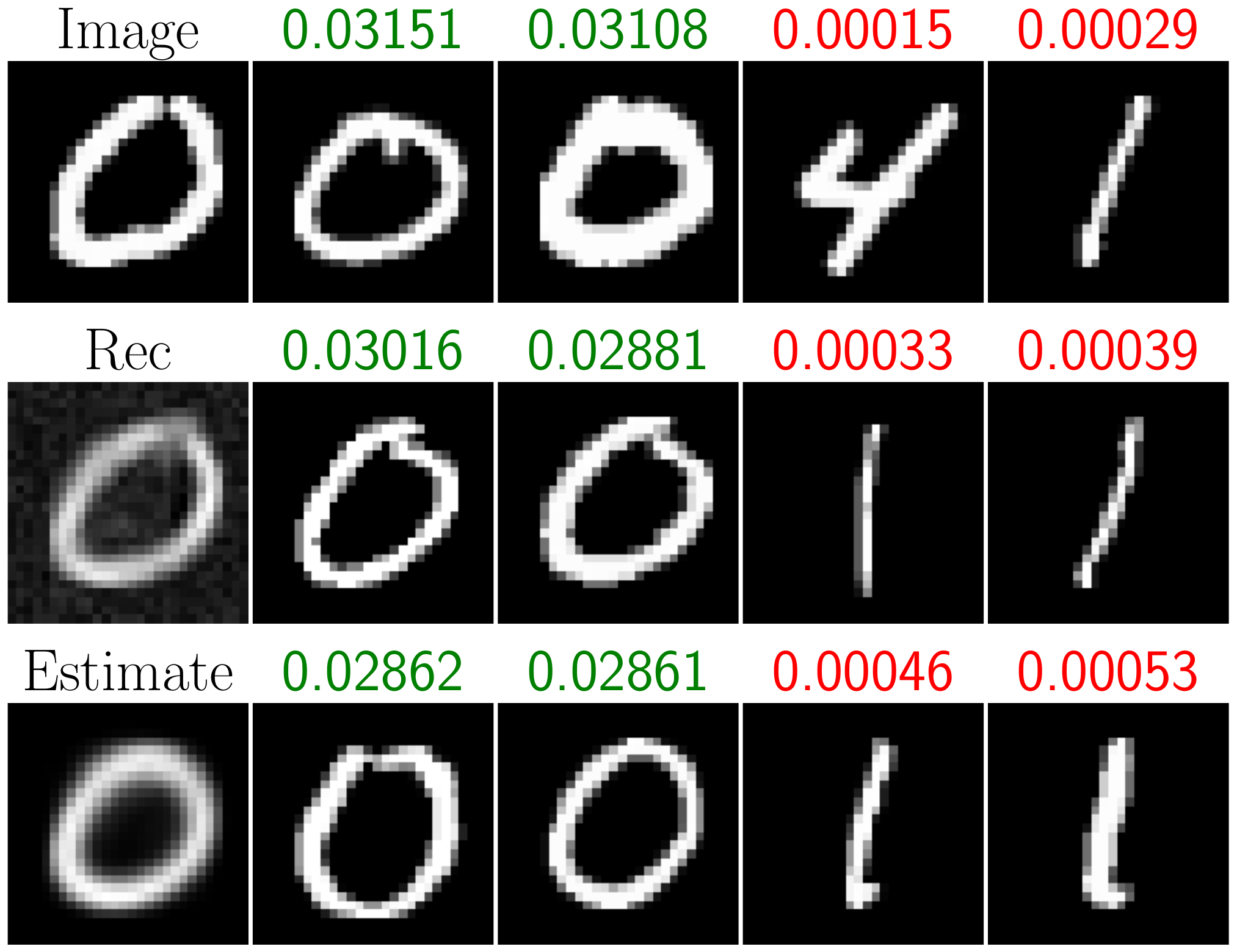}
  \caption{$0$ test image.}
  \end{subfigure}
  \begin{subfigure}[b]{0.245\textwidth}
  \centering
  \includegraphics[width=0.99\linewidth]{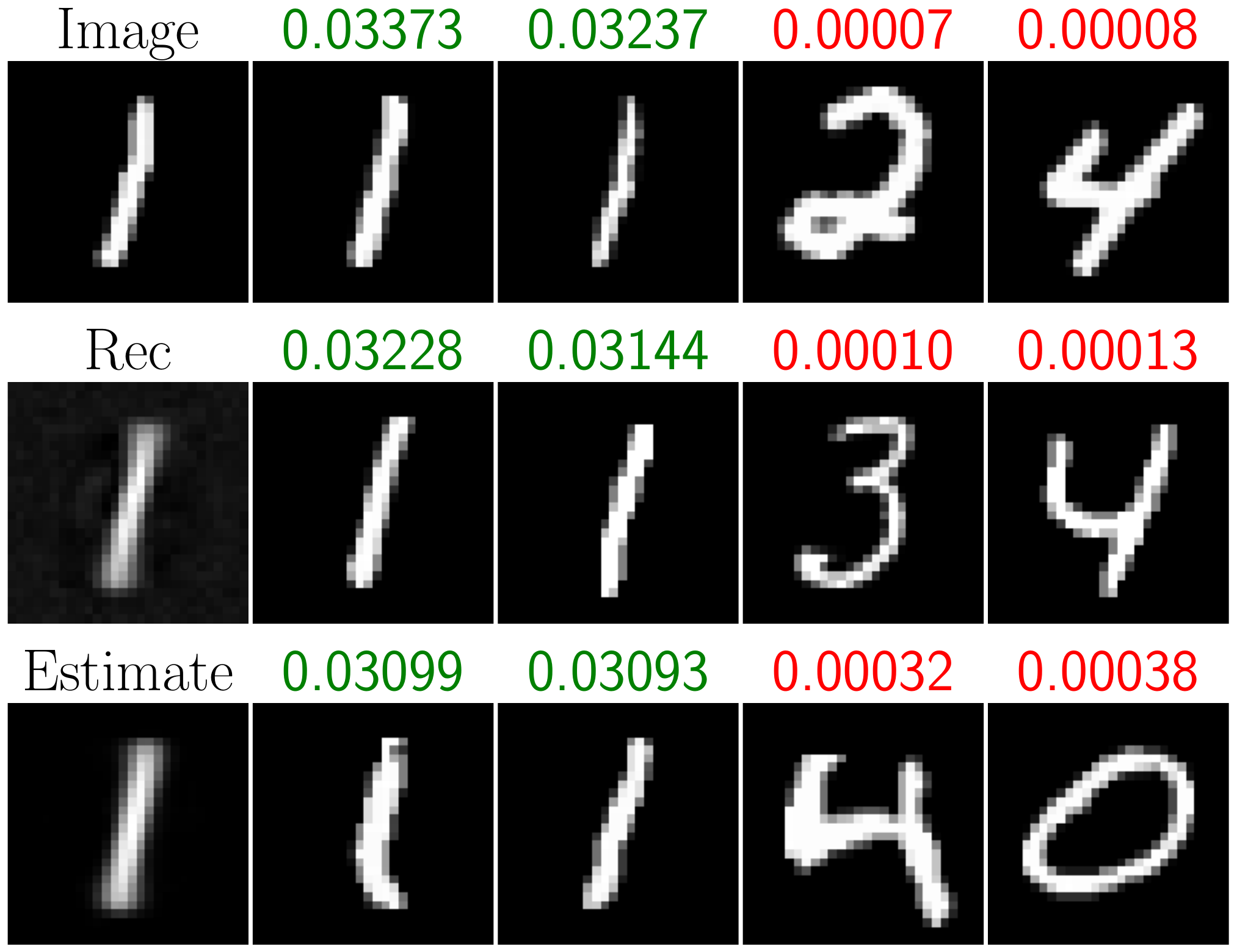}
  \caption{$1$ test image.}
  \end{subfigure}
  \begin{subfigure}[b]{0.245\textwidth}
  \centering
  \includegraphics[width=0.99\linewidth]{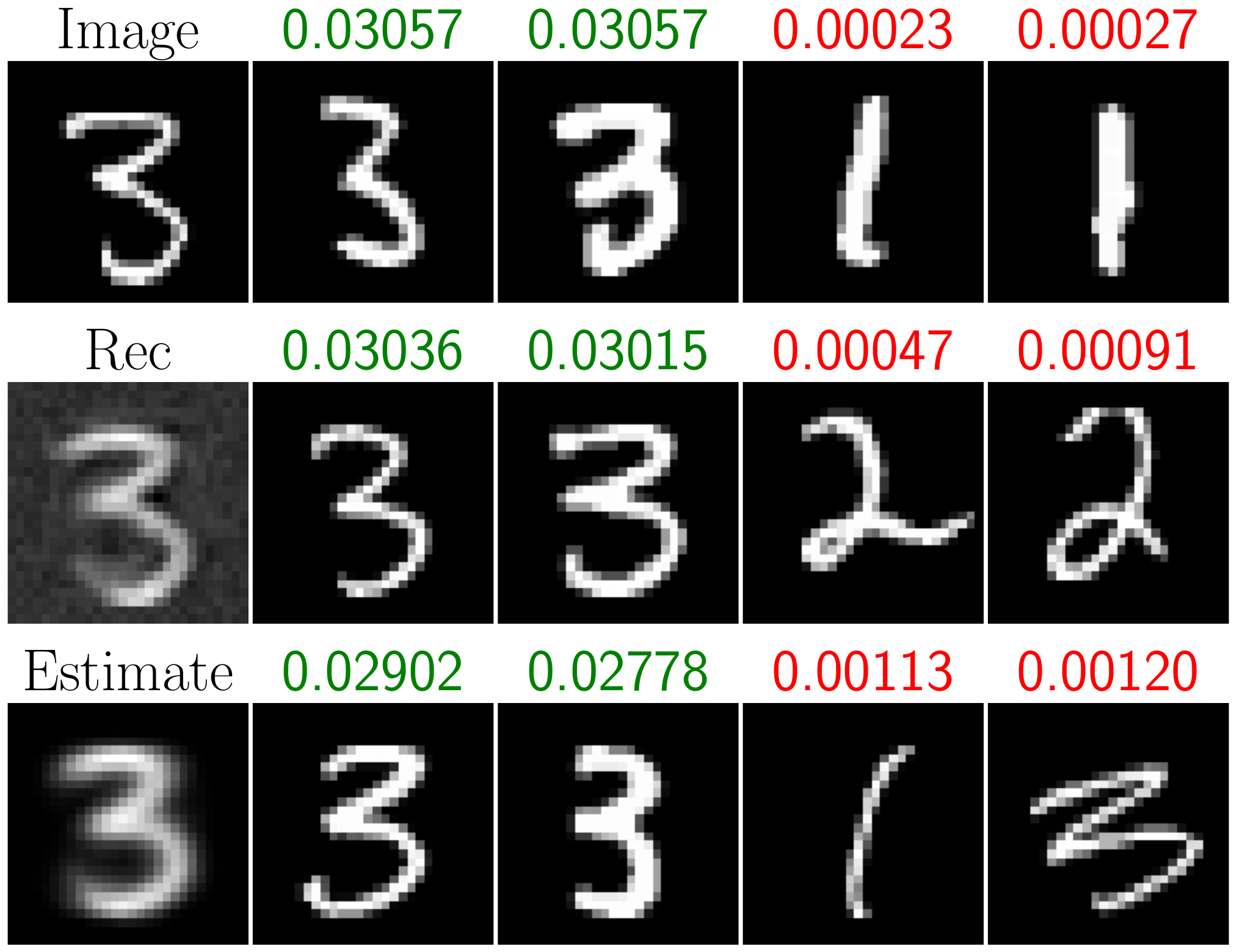}
  \caption{$3$ test image.}
  \end{subfigure}
   \begin{subfigure}[b]{0.245\textwidth}
  \centering
  \includegraphics[width=0.99\linewidth]{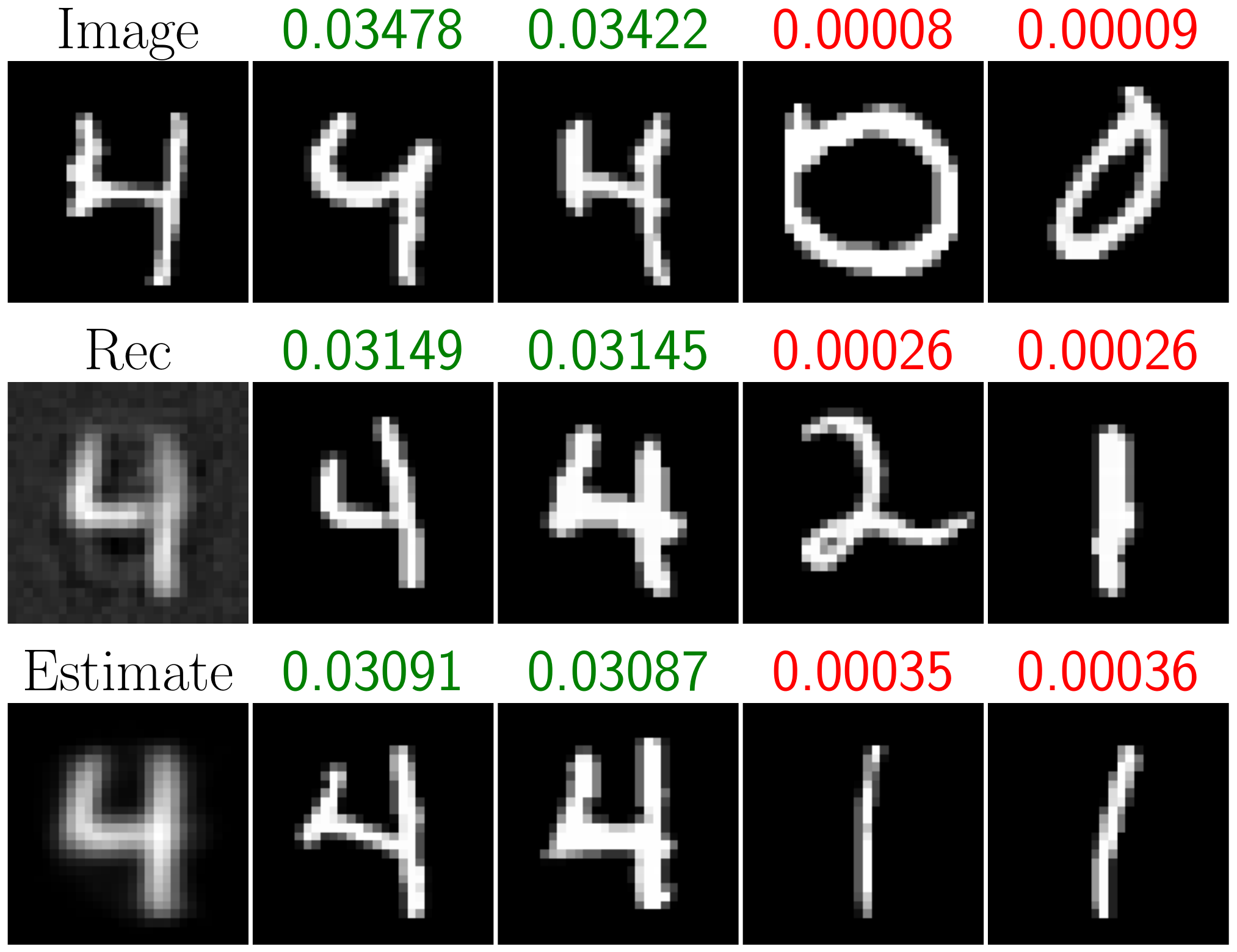}
  \caption{$4$ test image.}
  \end{subfigure}
  \caption{Interpolation of training data to reconstruct a new image. Contribution of training images are shown in green (high contribution) and red (low contribution). $\boldbeta^j$ is normalized over the used examples.}
  \label{fig:mnist_01234_learn_interpolate_gz}
 \vspace{-3mm}
\end{figure}
\paragraph{Relation between new test image and training data} For representation of a new data, we observe that the reconstruction of a new example $\x^j$ is a linear combination of all the training examples, i.e.,
\vspace{-3mm}
\begin{equation}\label{eq:newdata_data}
\hat \x^j = \tilde \D \hat \z^{j} = \Xx \boldbeta^j = \sum_{k=1}^n \boldbeta^j_k \x^k
\end{equation}
where $\hat \x^j$ denotes reconstruction, $\hat \z^j$ is the code estimate, $\boldbeta^j = \G^{-1} \tilde \Z^{\text{T}} \hat \z^{j} \in \R^{n}$, and $\boldbeta^j_k = \sum_{a=1}^n \G^{-1}_{ka} \langle \tilde \z^a, \hat \z^j\rangle$. We observe that the contribution of image $k$ into the reconstruction of the test image is a function of $\boldbeta^j_k$, and the energy of $\boldbeta^j_k$ itself depends on the whole training set, and $\G^{-1}$. \eqref{eq:newdata_data} shows how each image is reconstructed as interpolation of the training images. \Cref{fig:mnist_01234_learn_interpolate_gz} shows this results, where images with high (green) $\boldbeta^j_k$ contribution are similar to the test image and those with low (red) $\boldbeta^j_k$ contribution are different. In addition, we can evaluate the overall quality of the reconstruction by looking into $\boldbeta_k^j$ in~\eqref{eq:newdata_data}. For example, we observed that for test MNIST, unnormalized $\boldbeta_k^j$ corresponding to high contributing training images is above $1$. However, for resized-CIFAR, unnormalized $\boldbeta_k^j$ of high contributing training images are often half or an order of magnitude lower than the MNIST case. This informs us of a bad representation/reconstruction of CIFAR image by the trained network. From another perspective, we can write the new image as
\begin{equation}\label{eq:code_sim_sum}
    \x^j = \tilde \D \hat \z^{j} =  \sum_{k=1}^n (\Xx \G^{-1})_k \langle \tilde \z^{k}, \hat \z^j \rangle
\end{equation}
\begin{figure}[h]
  \centering
  \begin{subfigure}[b]{0.49\textwidth}
  \centering
  \includegraphics[width=0.48\linewidth]{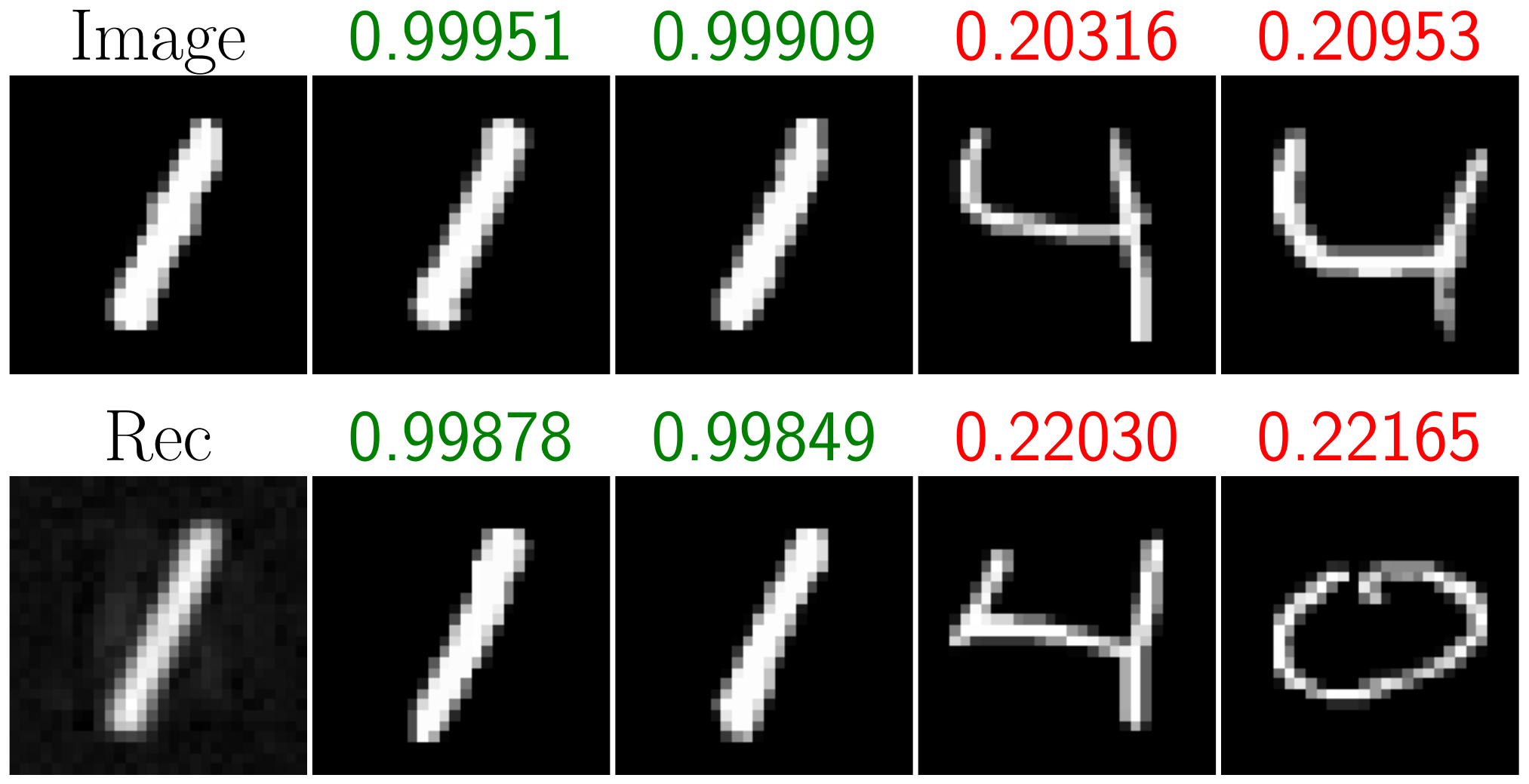}
   \includegraphics[width=0.48\linewidth]{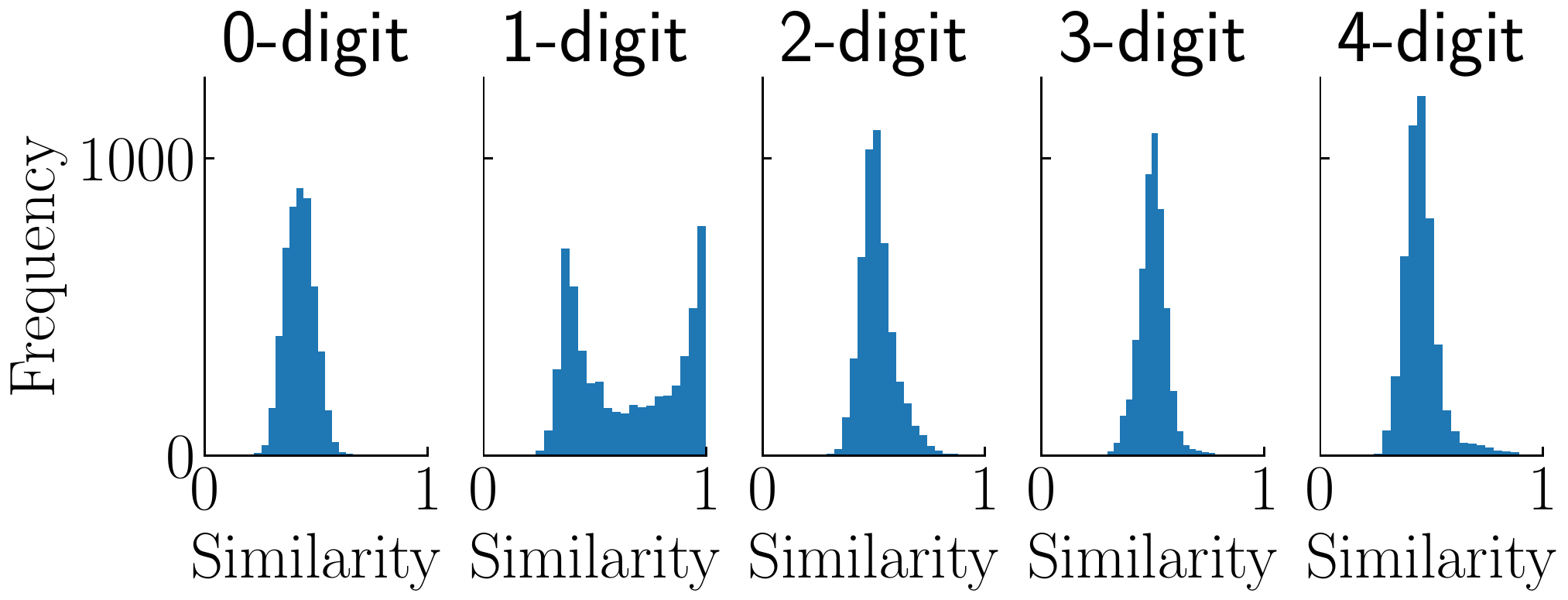}
  \caption{Digit $1$ test image.}
  \vspace*{-2mm}
  \end{subfigure}
  \begin{subfigure}[b]{0.49\textwidth}
  \centering
  \includegraphics[width=0.48\linewidth]{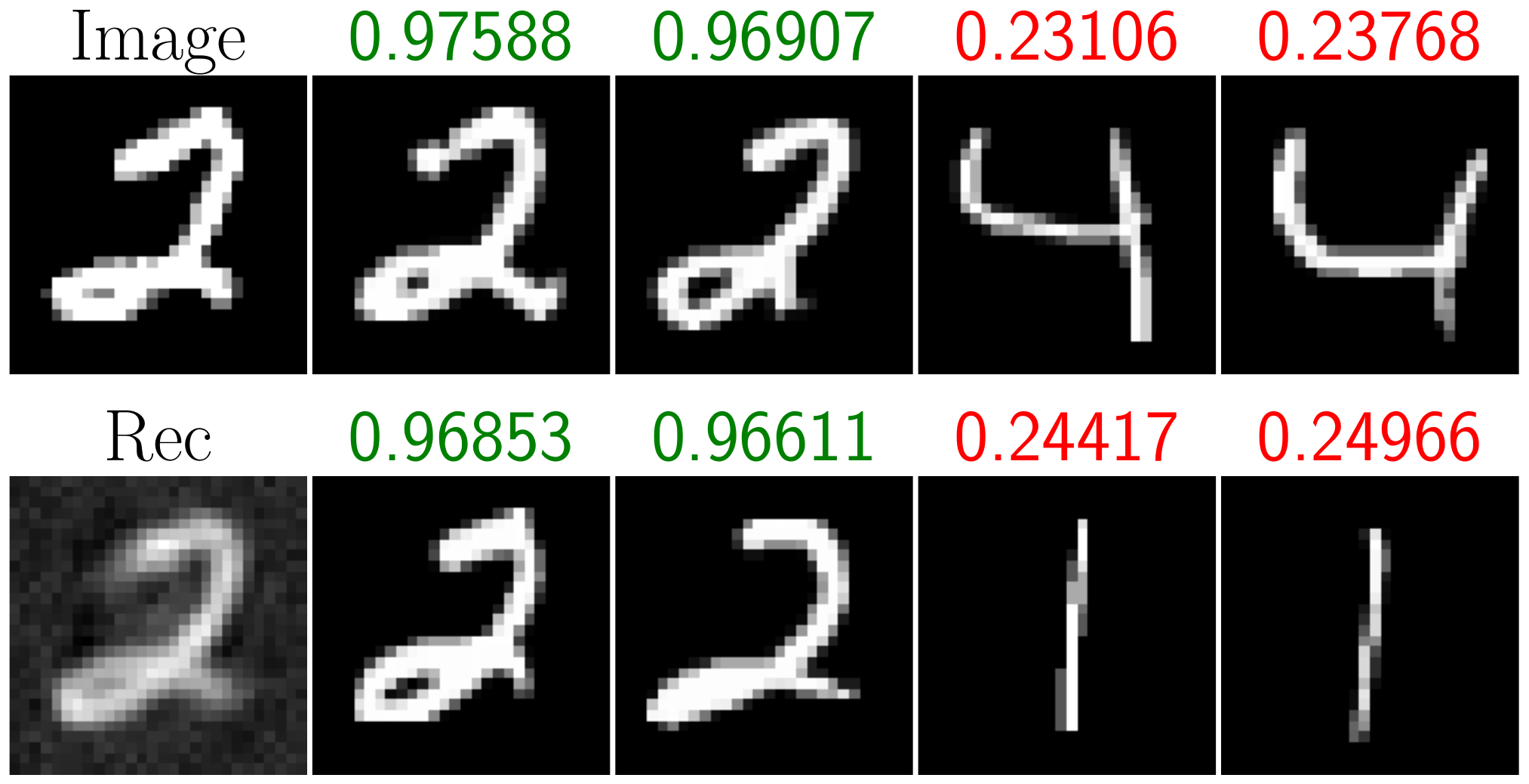}
   \includegraphics[width=0.48\linewidth]{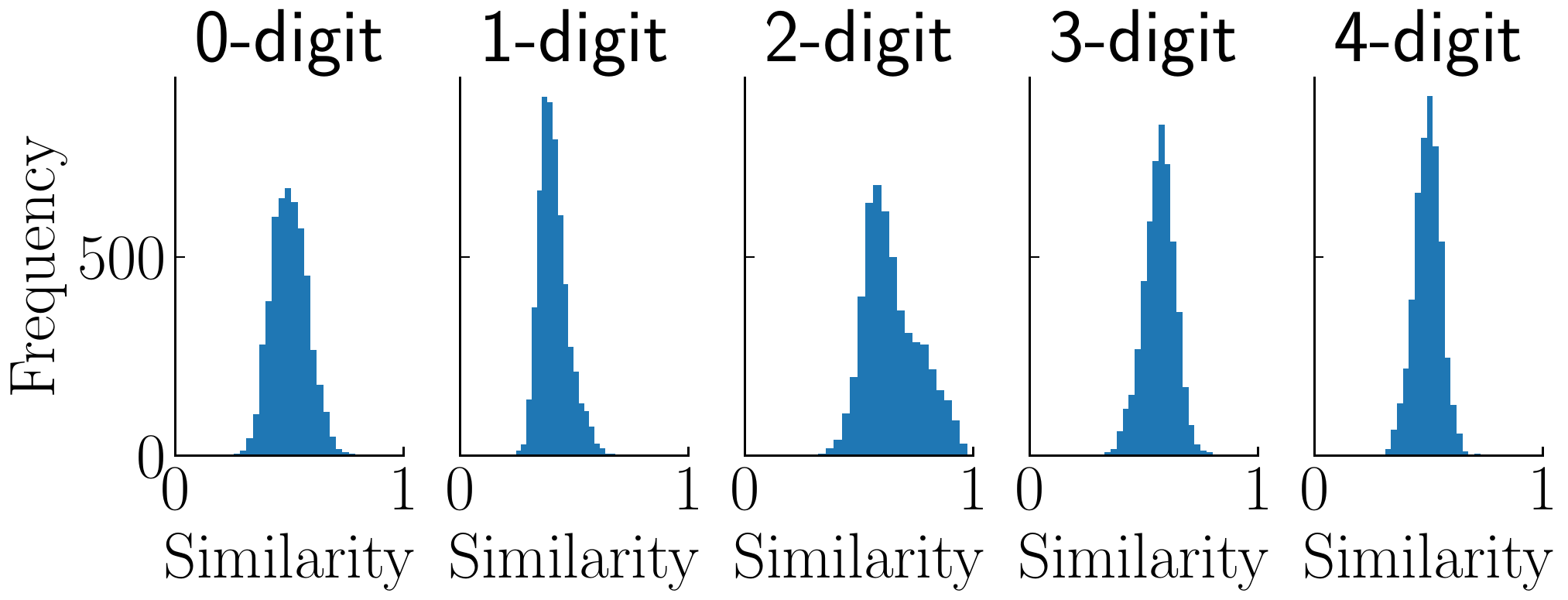}
  \caption{Digit $2$ test image.}
  \vspace*{-2mm}
  \end{subfigure}
  \caption{Contribution of images with code similarity into reconstruction of a new test image along with the histograms of the similarity of the test code to training codes from each class.}
  \label{fig:mnist_01234_learn_interpolate_code_sim}
  \vspace{-4mm}
\end{figure}
i.e., the contribution of each training image for reconstruction is a function of their code similarity to the new image and properties of the Gram matrix of training set code similarities. Specifically, the relation rules the contribution of transformed image $k$ (i.e., $(\Xx \G^{-1})_k$) into reconstruction of the test image as a function of its code similarity $\langle \tilde \z^{k}, \hat \z^j \rangle$. In other words,~\eqref{eq:code_sim_sum} shows that training images with the highest code similarity to the representation of the new image have the highest contribution to its reconstruction. This interpretation is demonstrated in~\Cref{fig:mnist_01234_learn_interpolate_code_sim}. The training images with the highest code similarity (green) and the lowest similarity (red) are shown. In addition, the figure demonstrates the histogram of the code similarity between the test image and the training set, grouped by their class digit. For example, for digit $1$ test image, its code similarity to train images from class $1$ are bimodal. This corresponds to $1$ digits that are tilted to the left (low similarity) and right (high similarity). Moreover, for digit $2$ test image, we observe that the histogram of images corresponding to digit $2$ are shifted the most to the right (highest similarity) than the other classes.
%
%%%%%%%%%%%%%%%%%%%%%%%%%%%%
\section{Conclusions}
This paper studied dictionary learning and analyzed the dynamics of unrolled sparse coding networks through a provable unrolled dictionary learning (PUDLE) framework. First, we provided a theoretical analysis of the forward pass for code recovery and dictionary learning. We discussed the bias introduced by $\ell_1$-based sparse coding in the forward pass, and how this affects the dictionary estimate in the backward pass. Second, we showed strategies to mitigate the propagation of this code bias into the backward pass; this is achieved by modification of the training loss function. We demonstrated that this bias could be further reduced and eliminated by decaying the regularization parameter within the unrolled layers. Additionally, we provided sufficient conditions on the data distribution and network to guarantee stability of backpropagated gradient computations. In the absence of such conditions, we proposed a modification to the loss function that resolves the gradient explosion and allows stable learning. In an image denoising task, we showed PUDLE outperforms the NOODL sparse coding scheme~\citep{rambhatla2018noodl}. Motivated by interpretability as a popular feature for unrolled networks, we derived a mathematical relation between the network weights (dictionary) and the training set. We proved that the network weights live in the span of the training set, and constructed a relation between predictions of new input examples and the training set. The latter allows the user to extract images from the training set that are similar/dissimilar to the input image in representation/reconstruction.

%\subsubsection*{Broader Impact Statement}
%In this optional section, TMLR encourages authors to discuss possible repercussions of their work,
%notably any potential negative impact that a user of this research should be aware of. 
%Authors should consult the TMLR Ethics Guidelines available on the TMLR website
%for guidance on how to approach this subject.

%\subsubsection*{Author Contributions}
%If you'd like to, you may include a section for author contributions as is done
%in many journals. This is optional and at the discretion of the authors. Only add
% of this information once your submission is accepted and deanonymized. 
%
%\subsubsection*{Acknowledgments}
%Use unnumbered third-level headings for the acknowledgments. All
%acknowledgments, including those to funding agencies, go at the end of the paper.
%Only add this information once your submission is accepted and deanonymized. 

\bibliographystyle{tmlr}
\bibliography{tmlr}

% \newpage
\appendix
\section{Appendix - proofs}
\label{appendix}

%%%%%%%%%%%%%%%%%%%%%%%%%
%%%%%%%%
%
\subsection{Notation}
Bold-lower-case and upper-case letters refer to vectors ${\bm d}$ and matrices ${\D}$. We use ${\bm d}_{(j)}$ to denote the $j^{th}$ element of the vector ${\bm d}$, and ${\D}_j$ is the $j^{th}$ column of the matrix ${\D}$. We denote the code estimate at unrolled layer $t$ by $\z_t$. $\lambda > 0$ is the regularization (sparsity-enforcing) parameter. $\sigma_{\text{max}}{({\D})}$ is the maximum singular value of ${\D}$. When taking the derivatives or norms w.r.t the matrix $\D$, we assume that $\D$ is vectorized. $\nabla_1 \Loss_{\x}(\z, \D)$ and $\nabla_2 \Loss_{\x}(\z, \D)$ are the first derivatives of the loss w.r.t $\z$ and $\D$, respectively. $\nabla_{11}^2 \Loss_{\x}(\z, \D)$ is the second derivative of the loss w.r.t $\z$. $\nabla_{21}^2 \Loss_{\x}(\z, \D)$ is the derivative of $\nabla_{1} \Loss_{\x}(\z, \D)$ w.r.t $\D$. The support of $\z$ is $\text{supp}(\z) \triangleq \{j \colon \z_{(j)} \neq 0\}$.
\subsection{Basic definitions and Lemmas}
We list four definitions used throughout the paper below.
\begin{definition}[$\mu$-incoherence]\label{def:mu}
$\D$ is $\mu$-incoherent, i.e., for every pair $(i,j)$ of columns, $| \langle \D_i, \D_j \rangle | \leq \mu / \sqrt{m}$.
\end{definition}
\begin{definition}[$(\delta, \kappa)$-closeness]\label{def:closeness}
Dictionary $\D$ is $\delta$-close to $\D^{\ast}$, i.e., there is a permutation $\pi$ and sign flip operator $u$ such that $\forall i\ \| u(i) \D_{\pi(i)} - \D_i^{\ast} \|_2 \leq \delta$. Additionally, $\| \D - \D^{\ast} \|_2 \leq \kappa \| \D^{\ast} \|_2$.
\end{definition}
\begin{definition}[Lipschitz function]\label{def:lip}
A function $f \colon \R^m \to \R^p$ is L-Lipschitz w.r.t a norm $\| \cdot \|$ if $\ \exists\ L > 0\ \text{s.t.}\ \| f(a) - f(b) \| \leq L \| a - b \|\ \forall a, b \in \R^m$.
\end{definition}
\begin{definition}[Lipschitz differentiable function]\label{def:lipdiff}
A twice differentiable function $f \colon \R^m \to \R^p$ is L-Lipschitz differentiable w.r.t a norm $\| \cdot \|$ iff $\ \exists\ L > 0\ \text{s.t.}\ \| \nabla^2 f(a) \| \leq L\ \forall a \in \R^m$.
\end{definition}
\begin{definition}[Strong convexity]\label{def:strongconvex}
A twice differentiable function $f \colon \R^m \to \R^p$ is strongly convex if $\ \exists\ \mu > 0\ \text{s.t.}\ \nabla^2 f(a) \succeq \mu \eye$.
\end{definition}
\begin{definition}[Norm of subgradient]\label{def:normsubgrad}
For norms involving subgradents, we define $\| \partial h(\z) \| \coloneqq \max_{{\bm v} \in \partial h(\z)}  \| {\bm v} \|$.
\end{definition}
In the proof of the theorems, we use the strong convexity of the reconstruction loss after support selection and the bounded property of the Lipschitz mapping stated below.
\begin{lemma}[Strong convexity of reconstruction loss]\label{lemma:strongconvexloss}
Given the support selection (\Cref{prop:supp}), $\D_{S}^{\text{T}} \D_S$ is full-rank. Thus, $\forall t > B, \Loss_{\x}(\z_{t}, \D) = \Loss_{\x}(\z_{t,S}, \D_S)$ is strongly convex (\Cref{def:strongconvex}) in $\z$.
\end{lemma}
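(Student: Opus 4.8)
The plan is to reduce the statement to the classical fact that a least-squares objective is strongly convex precisely when its design matrix has full column rank, and then to read off full column rank of $\D_S$ from the support-selection result together with the uniqueness assumption.

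First I would invoke \Cref{prop:supp}: for all $t > B$ the iterate $\z_t$ is supported on the fixed index set $S = \text{supp}(\hat \z)$. Because $\z_t$ vanishes off $S$, the matrix-vector product collapses to $\D \z_t = \D_S \z_{t,S}$, so that $\Loss_{\x}(\z_t, \D) = \tfrac{1}{2}\|\x - \D_S \z_{t,S}\|_2^2 = \Loss_{\x}(\z_{t,S}, \D_S)$; this justifies the reduction written in the statement and lets me work on the reduced coordinate space $\R^{|S|}$ rather than on all of $\R^p$.

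Next I would compute the Hessian of the reduced objective $\z_S \mapsto \tfrac{1}{2}\|\x - \D_S \z_S\|_2^2$ with respect to $\z_S$, which is simply the constant matrix $\D_S^{\text{T}} \D_S$. By \Cref{assum:unique} (continuously distributed dictionary entries), $\D_S^{\text{T}} \D_S$ is full-rank; since it is also symmetric positive semidefinite, full rank makes it positive definite, so its smallest eigenvalue $\mu \coloneqq \sigma_{\min}^2(\D_S)$ is strictly positive. Hence $\nabla_{\z_S}^2 \Loss = \D_S^{\text{T}} \D_S \succeq \mu\, \eye$ with $\mu > 0$, which is exactly the strong-convexity condition of \Cref{def:strongconvex}.

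The only conceptual subtlety — and the reason the restriction to $S$ is essential — is that in the overcomplete regime ($p > m$) the full-space loss $\Loss_{\x}(\cdot, \D)$ is never strongly convex, since $\D^{\text{T}} \D \in \R^{p \times p}$ has rank at most $m < p$ and therefore a nontrivial null space. Strong convexity is recovered only \emph{after} support identification, where the effective design matrix $\D_S$ has at most $|S| \le s \le m$ columns and thus can attain full column rank. I would therefore be careful to state the claim as strong convexity of the reduced function on $\R^{|S|}$ (equivalently, strong convexity of the original loss along the subspace of vectors supported on $S$), which is all that the downstream linear-convergence argument of \Cref{thm:fwdz} requires.
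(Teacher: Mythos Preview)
Your proposal is correct. The paper in fact states this lemma without proof, treating it as an immediate consequence of support selection (\Cref{prop:supp}) together with the remark after \Cref{assum:unique} that the uniqueness assumption implies $\D_S^{\text{T}}\D_S$ is full-rank; your argument simply spells out these implicit steps and correctly highlights that strong convexity holds only on the reduced coordinate space $\R^{|S|}$.
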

\begin{lemma}[Lipschitz mapping]\label{lemma:lipmapping}
 Given the recursion $\z_{t+1} = \Phi(\z_t) = \prox_{\alpha \lambda}(\z_t - \alpha \nabla_1 \Loss_{\x}(\z_t, \D))$, from~\Cref{lemma:strongconvexloss}, there exist $B >0$ such that loss $\Loss_{\x}(\z_t, \D)$ is $\mu$-strongly convex $\forall t > B$. Hence, using~\Cref{lemma:lipprox},
\begin{equation}
\| \nabla_1 \Phi(\z_t, \D) \|_2 = \| (\eye - \alpha \nabla_{11}^2 \Loss_{\x}(\z_t, \D)) \partial \prox_{\alpha \lambda}(\z_t) \|_2 \leq \rho
\end{equation}
where $\rho \triangleq  c_{\text{prox}} (1-\alpha\mu) < 1$.
\end{lemma}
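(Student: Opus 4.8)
The plan is to differentiate the fixed-point map $\Phi$ by the chain rule and then control the spectral norm of the resulting Jacobian factor by factor. First I would write $\Phi(\z,\D) = \prox_{\alpha\lambda}(\g(\z))$ with inner map $\g(\z) = \z - \alpha\nabla_1\Loss_{\x}(\z,\D)$, so that the chain rule gives $\nabla_1\Phi(\z,\D) = \partial\prox_{\alpha\lambda}(\g(\z))\,(\eye - \alpha\nabla_{11}^2\Loss_{\x}(\z,\D))$. Because the spectral norm is submultiplicative, the order in which these two factors are written does not affect the bound, which is why the statement may list them in either arrangement; it suffices to bound each factor separately and take the product.

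Next I would bound the two factors individually. For the proximal factor, \Cref{lemma:lipprox} gives directly $\| \partial\prox_{\alpha\lambda}(\cdot) \|_2 \leq c_{\text{prox}}$; indeed, for soft-thresholding this Jacobian is diagonal with entries in $\{0,1\}$, so in fact $c_{\text{prox}} \leq 1$. For the second factor I would invoke \Cref{lemma:strongconvexloss}: for $t > B$ support selection has occurred and $\Loss_{\x}(\z_t,\D) = \Loss_{\x}(\z_{t,S},\D_S)$ is $\mu$-strongly convex in $\z$, so its Hessian $\nabla_{11}^2\Loss_{\x}$ (equal to $\D_S^{\text{T}}\D_S$ on the selected support) has all eigenvalues lying in $[\mu, \sigma_{\max}^2(\D)]$.

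The key calculation is to show $\| \eye - \alpha\nabla_{11}^2\Loss_{\x} \|_2 \leq 1 - \alpha\mu$. Writing a generic eigenvalue of the Hessian as $\sigma \in [\mu, \sigma_{\max}^2(\D)]$, the corresponding eigenvalue of $\eye - \alpha\nabla_{11}^2\Loss_{\x}$ is $1 - \alpha\sigma$. The step-size assumption $\alpha \leq 1/\sigma_{\max}^2(\D)$ from \Cref{sec:prelim} guarantees $1 - \alpha\sigma \geq 1 - \alpha\sigma_{\max}^2(\D) \geq 0$ for every such $\sigma$, so all these eigenvalues are nonnegative and the spectral norm is simply the largest of them, namely $1 - \alpha\mu$. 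Combining with submultiplicativity then yields $\| \nabla_1\Phi \|_2 \leq c_{\text{prox}}(1-\alpha\mu) = \rho$, and since $c_{\text{prox}} \leq 1$ and $\alpha\mu > 0$ we obtain $\rho < 1$.

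The main obstacle is this spectral-norm step: one must use the nonnegativity of $1-\alpha\sigma$ — which is exactly where the step-size bound $\alpha \leq 1/\sigma_{\max}^2(\D)$ enters — in order to conclude that the operator norm is governed by the \emph{smallest} Hessian eigenvalue $\mu$ rather than by $|1-\alpha\sigma_{\max}^2(\D)|$. A secondary subtlety is that the strong convexity of \Cref{lemma:strongconvexloss} holds only on the selected support, so the Hessian and the proximal Jacobian should be read as acting on the support coordinates (where $\partial\prox$ acts as the identity) while the off-support directions are annihilated by $\partial\prox$ and therefore leave the bound unaffected.
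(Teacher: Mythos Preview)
Your proposal is correct and matches the paper's approach. The paper does not give a separate proof for this lemma; it is stated in the ``basic definitions and lemmas'' section as an immediate consequence of \Cref{lemma:strongconvexloss} and \Cref{lemma:lipprox}, and the same chain-rule factorization and bound are simply restated verbatim inside the proof of \Cref{thm:fwdz}. Your write-up actually supplies more detail than the paper does---in particular the eigenvalue argument using the step-size assumption $\alpha \leq 1/\sigma_{\max}^2(\D)$ to ensure $1-\alpha\sigma \geq 0$, and the on-support/off-support decomposition---both of which the paper leaves implicit.
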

One key term, used in the proofs, is that $\zero \in \nabla_1 \Loss_{\x}(\hat \z, \D) + \partial h(\hat \z)$ which is followed by the lasso optimality, i.e.,
\begin{lemma}[Lasso optimality]\label{lemma:kktlasso}
Lasso Karush-Kuhn-Tucker (KKT) optimality conditions are
\begin{equation}\label{eq:kkt}
\hat \z \in \argmin_{\z \in \R^p} f_{\x}(\z, \D) \Leftrightarrow \D^{\text{T}}(\x - \D \hat \z) \in \lambda \partial \| \hat \z \|_1, \text{and}\ \partial | \hat \z_{(j)} | =
\begin{cases} 
\{ \text{sign}(\hat \z_{(j)}\} &\text{if}\ \hat \z_{(j)} \neq 0\\
[-1, 1] &\text{if}\ \hat \z_{(j)} = 0
\end{cases}
,\forall j \in \{1,2,\ldots,p\}.
\end{equation}
\end{lemma}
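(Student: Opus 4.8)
The plan is to invoke the first-order (Fermat) optimality condition for the convex composite objective $f_{\x}(\z,\D) = \Loss_{\x}(\z,\D) + h(\z)$, then apply the subdifferential sum rule and compute each summand explicitly. Every link in the resulting chain is an equivalence, so the overall statement inherits its $\Leftrightarrow$.

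First I would record that $f_{\x}(\cdot,\D)$ is convex: $\Loss_{\x}(\z,\D) = \tfrac{1}{2}\|\x - \D\z\|_2^2$ is a convex quadratic in $\z$ and $h(\z)=\lambda\|\z\|_1$ is convex, hence their sum is convex on $\R^p$. For a convex function, $\hat\z$ is a global minimizer if and only if $\zero \in \partial f_{\x}(\hat\z,\D)$; this single equivalence is what drives the proof. I would then split the subdifferential: since $\Loss_{\x}(\cdot,\D)$ is differentiable everywhere with full domain, the Moreau--Rockafellar sum rule holds with equality and no constraint qualification is needed, giving $\partial f_{\x}(\hat\z,\D) = \nabla_1 \Loss_{\x}(\hat\z,\D) + \partial h(\hat\z)$.

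Next I would evaluate the two pieces. Direct differentiation yields $\nabla_1 \Loss_{\x}(\hat\z,\D) = -\D^{\text{T}}(\x - \D\hat\z)$, and positive homogeneity of the subdifferential ($\partial(\lambda g) = \lambda\,\partial g$ for $\lambda>0$) gives $\partial h(\hat\z) = \lambda\,\partial\|\hat\z\|_1$. Substituting, the condition $\zero \in \partial f_{\x}(\hat\z,\D)$ becomes $\D^{\text{T}}(\x - \D\hat\z) \in \lambda\,\partial\|\hat\z\|_1$, which is precisely the claimed inclusion. It remains to describe $\partial\|\cdot\|_1$ coordinatewise: because the $\ell_1$ norm is separable across coordinates, its subdifferential factorizes as $\partial\|\z\|_1 = \prod_{j} \partial|\z_{(j)}|$, and for the scalar convex map $t\mapsto|t|$ the subgradient is $\{\text{sign}(\z_{(j)})\}$ when $\z_{(j)}\neq 0$ and the interval $[-1,1]$ when $\z_{(j)}=0$, directly from the definition of a subgradient. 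This delivers the stated case distinction.

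There is no genuinely hard step here; the result is a textbook assembly of standard convex-analysis facts. The only points warranting care are confirming that the sum rule applies with equality (it does, since $\Loss_{\x}$ is differentiable with full domain, so the qualification is automatic) and that the separability of $\|\cdot\|_1$ legitimizes the componentwise characterization of its subdifferential.
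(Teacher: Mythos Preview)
Your proof is correct and complete; the argument via Fermat's rule, the Moreau--Rockafellar sum rule (trivially satisfied since $\Loss_{\x}$ is differentiable with full domain), and the separable structure of $\|\cdot\|_1$ is the standard derivation. The paper itself states this lemma without proof, treating it as a well-known characterization of lasso optimality, so your write-up in fact supplies more detail than the paper does.
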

%%%%%%%%
%
\subsection{Forward pass proof details}\label{sec:fwdproof}
Given the $\mu$-incoherence of $\D^{\ast}$, and current dictionary closeness of $\delta_l$, we re-state \Cref{lemma:mul} and proof it below. It shows that the current dictionary is $\mu_l$-close to $\D^{\ast}$.
\mul*
\begin{proof} 
\begin{equation}
\begin{aligned}
\langle \D_i^{(l)}, \D_j^{(l)} \rangle  &= \langle \D_i^{\ast}, \D_j^{\ast} \rangle - \langle \D_i^{\ast} - \D_i^{(l)}, \D_j^{\ast} \rangle - \langle \D_i^{(l)}, \D_j^{\ast} -\D_j^{(l)} \rangle\\
| \langle \D_i^{(l)}, \D_j^{(l)} \rangle |  &\leq \mu / \sqrt{m} +  \| \D_i^{\ast} + \D_i^{(l)}\|_2 \| \D_j^{\ast} \|_2 + \|\D_i^{(l)}\|_2 \| \D_j^{\ast} -\D_j^{(l)} \|_2 \leq \mu / \sqrt{m} + 2 \delta_l
\end{aligned}
\end{equation}
\end{proof}
We re-state and proof the forward pass support recovery (\Cref{thm:supprec}). This shows that given proper initialization and under mild conditions, the support of the true code $\z^{\ast}$ is recovered with high probability in one iteration of the encoder.
\fwdsupprec*
\begin{proof} 
The code estimate after one iteration is $\z_{1} = \prox_{\alpha \lambda}(\alpha \D^{(l)\text{T}} \x) = \text{sign}(\D^{(l)\text{T}} \x) \text{ReLU}(\alpha (| \D^{(l)\text{T}} \D^{\ast} \z^{\ast}| - \lambda_0))$. We focus on the positive entries. The analysis for negative entries is similar. Writting the relation for $i$-th entry,
\begin{equation}
\begin{aligned}
\z_{1,(i)} &=  \text{sign}(\D^{(l)\text{T}} \x) \text{ReLU}(\alpha (\sum_{j \in S^{\ast}} \langle \D^{(l)}_i, \D^{\ast}_j \rangle \z^{\ast}_{(j)} - \lambda_0))\\
&= \text{ReLU}(\alpha (\langle \D^{(l)}_i, \D^{\ast}_i \rangle  \z^{\ast}_{(i)} + \sum_{j \in S^{\ast \backslash \{i\}}} \langle \D^{(l)}_i,  \D^{\ast}_j \rangle  \z^{\ast}_{(j)}  - \lambda_0))\\
\end{aligned}
\end{equation}
We focus on the term inside ReLU and discard $\alpha$, shared by all terms. We shows that under proper choice of $\lambda_0$, $\langle \D^{(l)}_i, \D^{\ast}_i \rangle  \z^{\ast}_{(i)}$ is greater than $\lambda_0$ and $\vvec_i = \sum_{j \in S^{\ast \backslash \{i\}}} \langle \D^{(l)}_i, \D^{\ast}_j \rangle  \z^{\ast}_{(j)}$ is small with respect to $\lambda_0$, hence getting cancelled by ReLU. The small value of $\vvec_i$, compared to $\langle \D^{(l)}_i, \D^{\ast}_i \rangle  \z^{\ast}_{(i)}$, results in $\text{sign}(\D^{(l)\text{T}} \x)$ be equal to the $\text{sign}(\D^{(l)\text{T}} \D^{\ast} \z^{\ast})$ which is equal to the sign of $\z^{\ast}$.

Given the current dictionary distance $\| \D^{(l)}_i - \D_i^{\ast} \|_2 \leq \delta_l$, we can find a lower bound on $\langle \D^{(l)}_i, \D^{\ast}_i \rangle  \z^{\ast}_{(i)}$ as follows
\begin{equation}
\begin{aligned}
\langle \D_i^{(l)}, \D_i^{\ast} \rangle &=  \frac{1}{2} (\| \D_i^{\ast} \|_2^2 + \| \D_i^{(l)} \|_2^2 -\| \D_i^{(l)} - \D_i^{\ast} \|_2^2) = 1 - \frac{1}{2} \| \D_i^{(l)} - \D_i^{\ast} \|_2^2\\
| \langle \D_i^{(l)}, \D_i^{\ast} \rangle | &\geq 1 - \delta_l^2 / 2
\end{aligned} 
\end{equation}
Hence, for $i \in S^{\ast}$
\begin{equation}
| \langle \D^{(l)}_i , \D^{\ast}_i \rangle  \z^{\ast}_{(i)} | \geq (1 - \delta_l^2 / 2) C_{\min}
\end{equation}
otherwise, it is $0$. Given, $var(\z^{\ast}_{(i)}) = 1$ for $i \in S^{\ast}$, we find an upper bound on the variance $\vvec_i$ of as follows
\begin{equation}
\begin{aligned}
var(\vvec_i) &= \sum_{j \in S^{\ast \backslash \{i\}}} \langle \D^{(l)}_i, \D^{\ast}_j \rangle^2
=  \sum_{j \in S^{\ast \backslash \{i\}}} (\langle \D^{\ast}_i, \D^{\ast}_j \rangle + \langle \D^{(l)}_i - \D^{\ast}_i, \D^{\ast}_j \rangle )^2\\
&\leq \sum_{j \in S^{\ast \backslash \{i\}}} 2(\langle \D^{\ast}_i, \D^{\ast}_j \rangle^2 + \langle \D^{(l)}_i - \D^{\ast}_i, \D^{\ast}_j \rangle^2)
\leq \sum_{j \in S^{\ast \backslash \{i\}}} (2\mu^2/m) + 2\| (\D^{(l)}_i - \D^{\ast}_i)^{\text{T}} \D^{\ast}_{S^{\backslash \{i\}}} \|_2^2\\
&\leq (2s\mu^2/m) + 2\| (\D^{(l)}_i - \D^{\ast}_i)\|_2^2  \| \D^{\ast}_{S^{\ast \backslash \{i\}}} \|_2^2
\leq 2(s\mu^2/m + 4\delta_l^2) = \mathcal{O}^{\ast}(\delta_l^2)
\end{aligned}
\end{equation}
where we used the Gershgorin Circle Theorem for the bound $\| \D^{\ast}_{S^{\backslash \{i\}}} \|_2 \leq 2$. With the sub-Gaussian assumption on the coefficients $\z^{\ast}$, we get the following using Chernoff bound concerning $\vvec_i$.
\begin{equation}
P( | \vvec_i | \geq \frac{C_{\min}}{4}) \leq 2 \exp{(- \frac{C_{\min}^2}{4s\mu^2/m + 16\delta_l^2})} = 2 \exp{(- \frac{C_{\min}^2}{\mathcal{O}^{\ast}(\delta_l^2)})}
\end{equation}
Taking a union bound over all indices $i \in [1, p]$ will result in
\begin{equation}
P( \max_{i} | \vvec_i | \geq \frac{C_{\min}}{4}) \leq 2 p\exp{(- \frac{C_{\min}^2}{\mathcal{O}^{\ast}(\delta_l^2)})} \coloneqq \epsilon^{(l)}_{\text{supp-rec}}
\end{equation}
Hence, we can set $\lambda_0 = C_{\min} / 2$.
\end{proof}
We re-state and prove the forward pass support preservation (\Cref{thm:supppres}).
\fwdsupppres*
\begin{proof}
Given current dictionary $\D^{(l)}$, in each iteration of the forward pass, we have $\z_{t+1} = \prox_{\alpha \lambda}(\z_t + \alpha \D^{\text{T}} (\D^{\ast} \z^{\ast} - \D \z_t)$. We focus on the entires that are non-negative. Then procedure for negative code entries is similar. We follow similar steps as in \citep{rambhatla2018noodl}. We get
\begin{equation}\label{eq:code_itr}
\begin{aligned}
\z_{t+1, (j)} &=  \text{ReLU}((\eye - \alpha \D^{(l)\text{T}}\D^{(l)})_{(j,:)} \z_t + \alpha (\D^{(l)\text{T}} \D^{\ast} )_{(j,:)} \z^{\ast} - \alpha \lambda_{t,j}^{(l)})\\
&=  \text{ReLU}((\eye - \alpha \D^{(l)\text{T}}\D^{(l)})_{(j,:)} \z_t + \alpha ((\D^{(l)} - \D^{\ast})^{\text{T}}\D^{\ast})_{(j,:)} \z^{\ast} +  \alpha (\D^{\ast \text{T}} \D^{\ast})_{(j,:)} \z^{\ast} - \alpha \lambda_{t,j}^{(l)})\\
&=  \text{ReLU}((1 - \alpha) \z_{t, (j)} - \alpha \sum_{i \neq j} \langle \D_j^{(l)}, \D_i^{(l)} \rangle \z_{t, (i)} + \alpha \langle(\D_j^{(l)} - \D_j^{\ast}), \D_j^{\ast}\rangle \z^{\ast}_{(j)}\\
&+ \alpha \sum_{i \neq j}\langle\D_j^{(l)} - \D_j^{\ast}, \D_i^{\ast}\rangle \z^{\ast}_{(i)} +  \alpha \z^{\ast}_{(j)} + \alpha \sum_{i \neq j} \langle \D_j^{\ast}, \D_i^{\ast}\rangle \z^{\ast}_{(i)} - \alpha \lambda_{t,j}^{(l)})\\
&=  \text{ReLU}((1 - \alpha) \z_{t, (j)} + \alpha (1 - \beta_j^{(l)}) \z^{\ast}_{(j)} + \alpha \eta_{t,j}^{(l)} - \alpha \lambda_{t,j}^{(l)})\\
\end{aligned}
\end{equation}
where $\beta_j^{(l)} = \langle\D_j^{\ast} - \D_j^{(l)}, \D_j^{\ast}\rangle$, and $\eta_{t,j}^{(l)} = - \sum_{i \neq j} \langle \D_j^{(l)}, \D_i^{(l)} \rangle \z_{t, (i)} + (\langle\D_j^{(l)} - \D_j^{\ast}, \D_i^{\ast}\rangle + \langle \D_j^{\ast}, \D_i^{\ast}\rangle) \z^{\ast}_{(i)}$. With $\| \D_j^{(l)} - \D_j^{\ast} \|_2 \leq \delta_l$, $\beta_j^{(l)}$ can be bounded as follows
\begin{equation}\label{eq:betabound}
\beta_j^{(l)} = \langle\D_j^{\ast} - \D_j^{(l)}, \D_j^{\ast}\rangle \leq \delta_l^2 / 2
\end{equation}
where we used the relation $ \|\D_j^{(l)} - \D_j^{\ast} \|_2^2 = 2 (1 - \langle \D_j^{(l)}, \D_j^{\ast} \rangle)$. We re-write $\eta_{t,j}^{(l)}$ below
\begin{equation}
\begin{aligned}
\eta_{t,j}^{(l)} &= - \sum_{i \neq j} \langle \D_j^{(l)}, \D_i^{(l)} \rangle \z_{t, (i)} + (\langle\D_j^{(l)} - \D_j^{\ast}, \D_i^{\ast}\rangle + \langle \D_j^{\ast}, \D_i^{\ast}\rangle) \z^{\ast}_{(i)}\\
&= - \sum_{i \neq j} \langle \D_j^{(l)}, \D_i^{(l)} \rangle \z_{t, (i)} + \sum_{i \neq j} (\langle\D_j^{(l)} - \D_j^{\ast}, \D_i^{\ast}\rangle + \langle \D_j^{\ast}, \D_i^{\ast}\rangle) \z^{\ast}_{(i)} + \sum_{i \neq j} \langle \D_j^{(l)}, \D_i^{(l)} \rangle \z^{\ast}_{(i)} - \sum_{i \neq j} \langle \D_j^{(l)}, \D_i^{(l)} \rangle \z^{\ast}_{(i)}\\
&= \sum_{i \neq j} \langle \D_j^{(l)}, \D_i^{(l)} \rangle (\z^{\ast}_{(i)} - \z_{t, (i)}) + \sum_{i \neq j} (\langle\D_j^{(l)} - \D_j^{\ast}, \D_i^{\ast}\rangle + \langle \D_j^{\ast}, \D_i^{\ast} \rangle - \langle \D_j^{(l)}, \D_i^{(l)} \rangle) \z^{\ast}_{(i)} \\
&= \sum_{i \neq j} \langle \D_j^{(l)}, \D_i^{(l)} \rangle (\z^{\ast}_{(i)} - \z_{t, (i)}) + \sum_{i \neq j} \langle\D_j^{(l)}, \D_i^{\ast} - \D_i^{(l)}\rangle \z^{\ast}_{(i)} \\
&= \sum_{i \neq j} \langle \D_j^{(l)}, \D_i^{(l)} \rangle (\z^{\ast}_{(i)} - \z_{t, (i)}) + \gamma_j^{(l)} \\
\end{aligned}
\end{equation}
where $\gamma_j^{(l)} = \sum_{i \neq j} \langle\D_j^{(l)} , \D_i^{\ast} - \D_i^{(l)}\rangle \z^{\ast}_{(i)}$. Given the sub-Gaussian entries of the code $\z^{\ast}$, we provide a bound on the variance of $\gamma_j^{(l)}$ below:
\begin{equation}
var(\gamma_j^{(l)}) = \sum_{i \neq j} \langle\D_j^{(l)} , \D_i^{\ast} - \D_i^{(l)}\rangle^2 \leq s \delta_l^2
\end{equation}
Now, using Chernoff bound on the sub-Gaussian code entries, we get
\begin{equation}
P( | \gamma_j^{(l)} | > a) \leq 2 \exp{(\frac{- a^2}{2s \delta_l^2})}
\end{equation}
To bound all the terms in the support, for $j \in S^{\ast}$, we have
\begin{equation}\label{eq:agamma}
P( \max | \gamma_j^{(l)} | > a_{\gamma}) \leq \epsilon_{\gamma}^{(l)}
\end{equation}
where $\epsilon_{\gamma}^{(l)} = 2 s\exp{(\frac{- a_{\gamma}^2}{2s \delta_l^2})}$. Let $a_{\gamma} = \mathcal{O}(\sqrt{s \delta_l})$, then $\epsilon_{\gamma}^{(l)} = 2 s\exp{(\frac{- 1}{\mathcal{O}(\delta_l)})}$. The above analysis states that with probability of at least $1 - \epsilon_{\gamma}^{(l)}$, $| \gamma_j^{(l)}| \leq a_{\gamma} = \mathcal{O}(\sqrt{s \delta_l})$. Next, we write the recursion for when the support is identified (see \Cref{thm:supprec}). For the code at iteration $T$, we have
\begin{equation}\label{eq:code_T}
\begin{aligned}
\z_{T, (j)} &=  (1 - \alpha)^T \z_{0, (j)} + \z^{\ast}_{(j)}  \sum_{t=1}^T \alpha(1 - \beta_j^{(l)})(1 - \alpha)^{T - t}   + \sum_{t=1}^T \alpha (\eta_{t-1,j}^{(l)} - \lambda_{t-1,j}^{(l)}) (1 - \alpha)^{T-t}\\
&= (1 - \alpha)^T \z_{0, (j)} + \z^{\ast}_{(j)} (1 - \beta_j^{(l)})(1 - (1 - \alpha)^T)  + \sum_{t=1}^T \alpha  (\eta_{t-1,j}^{(l)} - \lambda_{t-1,j}^{(l)}) (1 - \alpha)^{T-t}\\
&=  \z^{\ast}_{(j)} (1 - \beta_j^{(l)}) + (1 - \alpha)^T (\z_{0, (j)} - \z^{\ast}_{(j)} (1 - \beta_j^{(l)}))  + \sum_{t=1}^T \alpha  (\eta_{t-1,j}^{(l)} - \lambda_{t-1,j}^{(l)}) (1 - \alpha)^{T-t}\\
&= \z^{\ast}_{(j)} (1 - \beta_j^{(l)}) + \zeta_{T,j}^{(l)}\\
\end{aligned}
\end{equation}
where $\zeta _{T,j}^{(l)}= (1 - \alpha)^T (\z_{0, (j)} - \z^{\ast}_{(j)} (1 - \beta_j^{(l)}))  + \sum_{t=1}^T \alpha  (\eta_{t-1,j}^{(l)} - \lambda_{t-1,j}^{(l)}) (1 - \alpha)^{T-t}$. With the support correctly identified at iteration $t-1$, we show that the support is preserved at iteration $t$. With $\| \z^{\ast} - \z_t \|_1 = \mathcal{O}(s)$, for each $j \in S^{\ast}$, we have
\begin{equation}
\begin{aligned}
\eta_{t,j}^{(l)} &= \sum_{i \neq j} \langle \D_j^{(l)}, \D_i^{(l)} \rangle (\z^{\ast}_{(i)} - \z_{t, (i)}) + \gamma_j^{(l)} 
\leq \frac{\mu_l}{\sqrt{m}} \| \z^{\ast} - \z_t \|_1 + a_{\gamma} = \mathcal{O}(\frac{s\log{m} }{\sqrt{m}})
\end{aligned}
\end{equation}
We make sure the regularizer is chosen such that
\begin{equation}
\lambda_t \geq \frac{\mu_l}{\sqrt{m}} \| \z^{\ast} - \z_t \|_1 + a_{\gamma}
\end{equation}
We see that the larger the code error and coherence between the columns of the current dictionary, the larger $\lambda_t$ should be. This is to suppress the noise component in the code recursion and make sure no false support is introduced. Furthermore, we want $\lambda_t$ to be lower than half of the signal component, i.e.,
\begin{equation}
\begin{aligned}
\alpha \lambda_t &\leq \frac{1 - \alpha}{2} \z_{t, (j)} + \frac{\alpha}{2} (1 - \beta_j^{(l)}) \z^{\ast}_{(j)}, \forall j \in S^{\ast}\\
\alpha \lambda_t &\leq \frac{1 - \alpha}{2} \z_t^{\min} + \frac{\alpha}{2} (1 - \frac{\delta_l^2}{2}) C_{\min}
\end{aligned}
\end{equation}
where $\z_t^{\min} = \min_{j} \z_{t,(j)}$. We further shrink the upper bound, given the code from previous iteration $t-1$ (i.e., $\alpha \lambda_{t-1} \leq \z_t^{\min}$). Hence, we want the regularizer to follow
\begin{equation}
\begin{aligned}
\alpha \lambda_t &\leq \frac{1 - \alpha}{2} \alpha \lambda_{t-1}  + \frac{\alpha}{2} (1 - \frac{\delta_l^2}{2}) C_{\min}\\
\lambda_t &\leq \frac{1 - \alpha}{2} \lambda_{t-1}  + \frac{1}{2} (1 - \frac{\delta_l^2}{2}) C_{\min}
\end{aligned}
\end{equation}
This condition is to make sure the identified supports are not killed in the recursion. We use the condition to set the step size $\alpha$. We get
\begin{equation}
\alpha \leq 1 - \frac{2\lambda_t - (1 - \frac{\delta_l^2}{2}) C_{\min}}{\lambda_{t-1}}
\end{equation}
Hence, $\lambda_t = {\bm \Omega}(\frac{s \log{m}}{\sqrt{m}})$ and $\alpha$ should be chosen sufficiently small such that the condition above is met. We denote $\epsilon^{(l)}_{\text{supp-pres}} \coloneqq \epsilon^{(l)}_{\text{supp-rec}} + \epsilon^{(l)}_{\gamma} = 2 p \exp{(\frac{-C_{\min}^2}{\mathcal{O}^{\ast}(\delta_l^2)})}+ 2 s \exp{(\frac{-1}{\mathcal{O}(\delta_l)})}$. Hence, with probability of at least $1 - \epsilon^{(l)}_{\text{supp-pres}}$, the support, recovered at the first iteration, is preserved through the encoder iterations.
\end{proof}

\Cref{thm:supprec} and \Cref{thm:supppres} allow to achieve linear convergence in the forward pass right after the first encoder iteration, i.e., $B=1$. With support recovery at first iteration and its preservation, we now re-state the forward pass code convergence (\Cref{thm:fwdz}).
\fwdz*
%%%%%
\begin{proof} Given the support selection at iteration $B$, from~\Cref{lemma:strongconvexloss}, we have $\nabla_{11}^2 \Loss_{\x}(\z_t, \D) \succeq \mu \eye$ restricted to the support for $t > B$. Then, from \Cref{lemma:lipmapping}, we get
\begin{equation*}
\| \nabla_1 \Phi(\z_t, \D) \|_2 = \| (\eye - \alpha \nabla_{11}^2 \Loss_{\x}(\z_t, \D)) \partial \prox_{\alpha \lambda}(\z_t) \|_2 \leq \rho
\end{equation*}
where $\rho \triangleq  c_{\text{prox}} (1-\alpha\mu) < 1$. Hence, using fixed-point property (\Cref{lemma:fixedpoint})
\begin{equation*}
\exists\ B > 0,\ \text{s.t.}\ \| \z_{t+1} - \hat \z \|_2 = \| \Phi(\z_t) - \Phi(\hat \z) \|_2 \leq \rho \| \z_t - \hat \z \|_2\ \forall t > B
\end{equation*}
where $\hat \z = \argmin f_{\x}(\z, \D)$. Unrolling the recursion,
\begin{equation*}
\| \z_{t} - \hat \z \|_2 \leq \rho^{t-B} \| \z_B - \hat \z \|_2.
\end{equation*}
\end{proof}
\fwdzerrorvariable*
\begin{proof}
We define $\tilde \eta_{t,j}^{(l)} \coloneqq \sum_{i \neq j} | \langle \D_j^{(l)}, \D_i^{(l)} \rangle | | \z^{\ast}_{(i)} - \z_{t, (i)} | + \gamma_j^{(l)}$ and upper bound it as
\begin{equation}\label{eq:tildeeta}
\tilde \eta_{t,j}^{(l)} \leq \frac{\mu_l}{\sqrt{m}} \sum_{i \neq j} E_{t,i} + \gamma_j^{(l)}
\end{equation}
where $E_{t,i} \coloneqq |\z_{(i)}^{\ast}  - \z_{t,(i)}|$. Given~\eqref{eq:kkt}, we re-write the code recursion
\begin{equation}
\begin{aligned}
\z_{t+1, (j)} &= \prox_{\alpha \lambda_{t,j}^{(l)}}((1 - \alpha) \z_{t, (j)} + \alpha (1 - \beta_j^{(l)}) \z^{\ast}_{(j)} + \alpha \eta_{t,j}^{(l)})\\
&\in (1 - \alpha) \z_{t, (j)} + \alpha (1 - \beta_j^{(l)}) \z^{\ast}_{(j)} + \alpha (\eta_{t,j}^{(l)} - \lambda_{t,j}^{(l)} \partial | \z_{t+1, (j)} |)\\
E_{t+1, j} = | \z_{t+1, (j)} - \z_{(j)}^{\ast} | &\leq (1 - \alpha) E_{t, j} + \alpha \beta_j^{(l)} | \z^{\ast}_{(j)} | + \alpha (\tilde \eta_{t,j}^{(l)} + \lambda_{t,j}^{(l)} )
\end{aligned}
\end{equation}
Opening up the recursion, we get
\begin{equation}
\begin{aligned}
E_{t+1, j} &\leq E_{0,j} \prod_{q=0}^t (1 - \alpha) + \beta_j^{(l)} | \z^{\ast}_{(j)} | \sum_{a=1}^{t+1} \alpha \prod_{q=a}^{t+1} (1 - \alpha) + \sum_{a=1}^{t+1} \alpha (\tilde \eta_{a-1,j}^{(l)} + \lambda_{a-1,j}^{(l)} ) \prod_{q=a}^{t+1} (1 - \alpha)
\end{aligned}
\end{equation}
where we define $ \prod_{q=a}^a (1 - \alpha) = 1$. Using the upper bound from~\eqref{eq:tildeeta} and $\lambda_{t,j}^{(l)}$ from \Cref{thm:supppres}, we get
\begin{equation}
\begin{aligned}
E_{t+1, j} &\leq v_{t+1, j} + 2 \frac{\mu_l}{\sqrt{m}} \alpha \sum_{a=1}^{t+1} \sum_{i \neq j} E_{a-1, i} \prod_{q=a}^{t+1} (1 - \alpha)
\end{aligned}
\end{equation}
where $v_{t+1, j} \coloneqq E_{0, j} \prod_{q=0}^t (1 - \alpha) + (\beta_j^{(l)} | \z^{\ast}_{(j)} | + 2 \gamma_j^{(l)}) \sum_{a=1}^{t+1} \alpha \prod_{q=a}^{t+1} (1 - \alpha)$. We now derive the general upper bound on $E_{t+1, j}$ as follows
\begin{equation}
\begin{aligned}
E_{1, i_1} &\leq v_{1, i_1} + 2\alpha \frac{\mu_l}{\sqrt{m}} \sum_{i_2 \neq i_1} E_{0, i_2}
\end{aligned}
\end{equation}
For $E_{2, i_1}$, we have
\begin{equation}
\begin{aligned}
E_{2, i_1} &\leq v_{2, i_1} + 2 \alpha \frac{\mu_l}{\sqrt{m}} (\sum_{i_2 \neq i_1} E_{1, i_2} + \sum_{i_2 \neq i_1} E_{0, i_2} (1 - \alpha))
\end{aligned}
\end{equation}
Substituting $E_{1, i_1}$,
\begin{equation}
\begin{aligned}
E_{2, i_1} &\leq v_{2, i_1} + 2 \alpha \frac{\mu_l}{\sqrt{m}} (\sum_{i_2 \neq i_1}  (v_{1, i_2} + 2\alpha \frac{\mu_l}{\sqrt{m}} \sum_{i_3 \neq i_2} E_{0, i_3}) + \sum_{i_2 \neq i_1} E_{0, i_2} (1 - \alpha))
\end{aligned}
\end{equation}
For $E_{3, i_1}$, we have
\begin{equation}
\begin{aligned}
E_{3, i_1} &\leq v_{3, i_1} + 2 \alpha \frac{\mu_l}{\sqrt{m}} \sum_{a=1}^3 \sum_{i_2 \neq i_1} E_{a-1,i_2} (1-\alpha)^{3-a}
\end{aligned}
\end{equation}
Unrolling the recursion,
\begin{equation}
\begin{aligned}
E_{3, i_1} &\leq v_{3, i_1} + 2 \alpha \frac{\mu_l}{\sqrt{m}} \sum_{i_2 \neq i_1} v_{2,i_2} + 2 \alpha \frac{\mu_l}{\sqrt{m}} \left((1-\alpha) \sum_{i_2 \neq i_1}  v_{1,i_2} + 2 \alpha \frac{\mu_l}{\sqrt{m}} \sum_{i_3 \neq i_2} v_{1,i_3}\right)\\
&+ 2 \alpha \frac{\mu_l}{\sqrt{m}} \left( (1 - \alpha)^2 \sum_{i_2 \neq i_1} E_{0,i_2} + 2 (1-\alpha) (2\alpha \frac{\mu_l}{\sqrt{m}}) \sum_{i_3 \neq i_2, i_1}  E_{0,i_3} + (2\alpha \frac{\mu_l}{\sqrt{m}})^2 \sum_{i_4 \neq i_3, i_2, i_1} E_{0,i_4} \right)
\end{aligned}
\end{equation}
Given above, we can write up the relation as
\begin{equation}
\begin{aligned}
E_{3, i_1} &\leq v_{3, i_1} + 2 \alpha \frac{\mu_l}{\sqrt{m}} \left( (s-1) v_{2,i} + v_{1,i} ((1-\alpha) (s-1)  + 2 \alpha \frac{\mu_l}{\sqrt{m}} (s-2)) \right)\\
&+ 2 \alpha \frac{\mu_l}{\sqrt{m}}  E_{0,i} \left( (s-1)(1 - \alpha)^2 + 2 (1-\alpha) (2\alpha \frac{\mu_l}{\sqrt{m}}) (s-2) + (2\alpha \frac{\mu_l}{\sqrt{m}})^2 (s-3)\right)\\
&\leq v_{3, i_1} + (s-1) 2 \alpha \frac{\mu_l}{\sqrt{m}} \left(  v_{2,i} + v_{1,i} ((1-\alpha) + 2 \alpha \frac{\mu_l}{\sqrt{m}}) \right)\\
&+ 2 (s-1) \alpha \frac{\mu_l}{\sqrt{m}} E_{0,i} \left( (1 - \alpha)^2 + 2 (1-\alpha) (2\alpha \frac{\mu_l}{\sqrt{m}}) + (2\alpha \frac{\mu_l}{\sqrt{m}})^2 \right)\\
\end{aligned}
\end{equation}
Following similar steps for $E_{4, i_1}$, we get, 
\begin{equation}
\begin{aligned}
E_{4, i_1} &\leq v_{4, i_1} + (s-1) 2 \alpha \frac{\mu_l}{\sqrt{m}} \left( v_{3,i} + v_{2,i} (1-\alpha + 2 \alpha \frac{\mu_l}{\sqrt{m}}) + v_{1,i} ((1-\alpha)^2 + 2 (1-\alpha)(2 \alpha \frac{\mu_l}{\sqrt{m}}) + (2 \alpha \frac{\mu_l}{\sqrt{m}})^2) \right)\\
&+ 2 (s-1) \alpha \frac{\mu_l}{\sqrt{m}} E_{0,i} \left( (1 - \alpha)^3 + 3 (1-\alpha) (2\alpha \frac{\mu_l}{\sqrt{m}})^2 + 3 (1-\alpha)^2 (2\alpha \frac{\mu_l}{\sqrt{m}}) + (2\alpha \frac{\mu_l}{\sqrt{m}})^3 \right)\\
\end{aligned}
\end{equation}
This leads to the term
\begin{equation}
\begin{aligned}
E_{4, i_1} &\leq v_{4, i_1} + (s-1) 2 \alpha \frac{\mu_l}{\sqrt{m}} \left( v_{3,i} + v_{2,i} (1-\alpha + 2 \alpha \frac{\mu_l}{\sqrt{m}})^1 + v_{1,i} (1-\alpha + 2 \alpha \frac{\mu_l}{\sqrt{m}})^2\right)\\
&+ 2 (s-1) \alpha \frac{\mu_l}{\sqrt{m}} E_{0,i} (1 - \alpha + 2\alpha \frac{\mu_l}{\sqrt{m}})^3\\
\end{aligned}
\end{equation}
Hence, the general term for code error at $t$ layer is
\begin{equation}
\begin{aligned}
E_{t+1, j} &\leq v_{t+1, j} + 2 (s-1) \alpha \frac{\mu_l}{\sqrt{m}} \sum_{a=1}^{t} v_{a, \text{max}} (1 - \alpha + 2 \alpha \frac{\mu_l}{\sqrt{m}})^{t - a} + 2 (s-1) \alpha \frac{\mu_l}{\sqrt{m}} E_{0, \text{max}} (1 - \alpha + 2 \alpha \frac{\mu_l}{\sqrt{m}})^t
\end{aligned}
\end{equation}
where for $j$ in the support, we define the upper bounds $v_{a, j} \leq v_{a, \text{max}}$ and $E_{0, j} \leq E_{0, \text{max}}$. Next, we define $(1 - \alpha)^t \leq \delta_{\alpha, t} \coloneqq (1 -\alpha + 2\alpha \frac{\mu_l}{\sqrt{m}})^t$, and use it to find an upper bound on the expression $\sum_{a=1}^{t} v_{a, \text{max}} (1 - \alpha + 2 \alpha \frac{\mu_l}{\sqrt{m}})^{t - a}$. We have
\begin{equation}
v_{t, j} = E_{0, j} (1 - \alpha)^t + (\beta_j^{(l)} | \z^{\ast}_{(j)} | + 2 \gamma_j^{(l)}) \sum_{k=1}^{t} \alpha (1 - \alpha)^{t-k+1}
\end{equation}
We bound the expression
\begin{equation}
\begin{aligned}
\sum_{a=1}^{t} v_{a, j} (1 - \alpha + 2 \alpha \frac{\mu_l}{\sqrt{m}})^{t - a} &\leq \sum_{a=1}^t  (E_{0, j} (1 - \alpha)^a + (\beta_j^{(l)} | \z^{\ast}_{(j)} | + 2 \gamma_j^{(l)}) \sum_{k=1}^{a} \alpha (1 - \alpha)^{a-k+1}) (1 - \alpha + 2 \alpha \frac{\mu_l}{\sqrt{m}})^{t - a}\\
&\leq E_{0, j} t \delta_{\alpha, t} + (\beta_j^{(l)} | \z^{\ast}_{(j)} | + 2 \gamma_j^{(l)}) \sum_{a=1}^t (1 - \alpha + 2 \alpha \frac{\mu_l}{\sqrt{m}})^{t - a} \sum_{k=1}^{a} \alpha (1 - \alpha)^{a-k+1}
\end{aligned}
\end{equation} 
Using sum of geometric series, we write $\sum_{a=1}^t (1 - \alpha + 2 \alpha \frac{\mu_l}{\sqrt{m}})^{t - a} = \frac{1 - (1 - \alpha + 2 \alpha \frac{\mu_l}{\sqrt{m}})^t}{\alpha - 2 \alpha \frac{\mu_l}{\sqrt{m}}} \leq \frac{1}{\alpha  (1 - 2 \frac{\mu_l}{\sqrt{m}})}$. Hence, using \eqref{eq:betabound} and \eqref{eq:agamma}, with probability of at least $1 - \epsilon_{\gamma}^{(l)}$, we have
\begin{equation}
\sum_{a=1}^{t} v_{a, \text{max}} (1 - \alpha + 2 \alpha \frac{\mu_l}{\sqrt{m}})^{t - a} \leq E_{0,\text{max}} t \delta_{\alpha,t} + \frac{1}{\alpha  (1 - 2 \frac{\mu_l}{\sqrt{m}})} (\frac{\delta_l^2}{2} | \z_{\text{max}}^{\ast}| + 2 a_{\gamma})
\end{equation}
Hence, we bound the code error on the coefficients as following
\begin{equation}
\begin{aligned}
E_{t+1, j} &\leq v_{t+1, j} + 2 (s-1) \frac{\mu_l}{\sqrt{m}} (\frac{1}{ (1 - 2 \frac{\mu_l}{\sqrt{m}})} (\frac{\delta_l^2}{2} | \z_{\text{max}}^{\ast}| + 2 a_{\gamma}))+ 2 (t+1)(s-1) \alpha \frac{\mu_l}{\sqrt{m}} E_{0, \text{max}} \delta_{\alpha,t}
\end{aligned}
\end{equation}
Next, we further simplify the first term
\begin{equation}
v_{t+1, j} = E_{0, j} (1 - \alpha)^{t+1} + (\beta_j^{(l)} | \z^{\ast}_{(j)} | + 2 \gamma_j^{(l)}) \sum_{k=1}^{t+1} \alpha (1 - \alpha)^{t-k+1}
\leq E_{0, j} \delta_{\alpha,t+1} + \frac{\delta_l^2}{2} | \z^{\ast}_{\text{max}} | + 2 a_{\gamma}
\end{equation}
Substituting the above upper bound into the upper bound for $E_{t+1,j}$, we get
\begin{equation}
E_{t+1, j} \leq E_{0, j} \delta_{\alpha,t+1} + (1 + 2 (s-1) \kappa_l) (\frac{\delta_l^2}{2} | \z_{\text{max}}^{\ast}| + 2 a_{\gamma}) + 2 (t+1)(s-1) \alpha \frac{\mu_l}{\sqrt{m}} E_{0, \text{max}} \delta_{\alpha,t}
\end{equation}
where $\kappa_l \coloneqq \frac{\mu_l}{\sqrt{m}} (\frac{1}{ (1 - 2 \frac{\mu_l}{\sqrt{m}})}$. Given $s = \mathcal{O}^{\ast}(\sqrt{m}/\mu \log{m})$, we have $(s-1) \kappa_l < 1$. Hence, with probability of at least $1 - \epsilon_{\gamma}^{(l)}$, we have
\begin{equation}
\begin{aligned}
E_{t, j} &\leq  \mathcal{O}(a_{\gamma}) + 2(s-1)t \alpha \frac{\mu_l}{\sqrt{m}} E_{0, \text{max}} \delta_{\alpha,t-1} + E_{0, j} \delta_{\alpha,t}\\
|\z_{t,(j)}^{(l)} - \z_{(j)}^{\ast} | &\leq  \mathcal{O}(\sqrt{s \| \D_j^{(l)} - \D_j^{\ast} \|_2} + e_{t,j}^{(l)\text{unroll}})
\end{aligned}
\end{equation}
where $e_{t,j}^{(l)\text{unroll}} \coloneqq 2(s-1)t \alpha \frac{\mu_l}{\sqrt{m}} \max_i | \z_{0,(i)}^{(l)} - \z_{(i)}^{\ast} | \delta_{\alpha,t-1} + | \z_{0,(j)}^{(l)} - \z_{(j)}^{\ast} | \delta_{\alpha,t}$. With appropriately large unrolled layer $t$, $e_{t,j}^{(l)\text{unroll}} \approx 0$. Hence, for the code error on non-zero coefficients, we get
\begin{equation}
|\z_{t,(j)}^{(l)} - \z_{(j)}^{\ast} | = \mathcal{O}(\sqrt{s \| \D_j^{(l)} - \D_j^{\ast} \|_2})
\end{equation}
for large enough $t$. Now, we try to prove the relation for $\z_{T, (j)}$. For shrinkage, we re-write \eqref{eq:code_T}
\begin{equation}
\begin{aligned}
\z_{T, (j)} \in \z^{\ast}_{(j)} (1 - \beta_j^{(l)}) + \zeta_{T,j}^{(l)}
\end{aligned}
\end{equation}
where $\zeta _{T,j}^{(l)} = \kappa_{T,j}^{(l)} + \sum_{t=1}^T \alpha  (\eta_{t-1,j}^{(l)} - \lambda_{t-1,j}^{(l)}  \partial | \z_{t, (j)} |) (1 - \alpha)^{T-t}$ and $\kappa_{T,j}^{(l)} \coloneqq (1 - \alpha)^T (\z_{0, (j)} - \z^{\ast}_{(j)} (1 - \beta_j^{(l)}))$. $\kappa_{T,j}^{(l)}$ decays very fast as $T$ increases. Hence, we bound the second term. We substitute $\eta_{t,j}^{(l)} = \sum_{i \neq j} \langle \D_j^{(l)}, \D_i^{(l)} \rangle (\z^{\ast}_{(i)} - \z_{t, (i)}) + \gamma_j^{(l)}$ in $\zeta_{T,j}^{(l)}$.
\begin{equation}
\begin{aligned}
\zeta _{T,j}^{(l)} &\in \kappa_{T,j}^{(l)} + \sum_{t=1}^T \alpha  (\eta_{t-1,j}^{(l)} - \lambda_{t-1,j}^{(l)} \partial | \z_{t, (j)} |) (1 - \alpha)^{T-t}\\
&\in \kappa_{T,j}^{(l)} + \sum_{t=1}^T \alpha  (\sum_{i \neq j} \langle \D_j^{(l)}, \D_i^{(l)} \rangle (\z^{\ast}_{(i)} - \z_{t-1, (i)}) + 2\gamma_j^{(l)} - \lambda_{t-1,j}^{(l)} \partial | \z_{t, (j)} |) (1 - \alpha)^{T-t}\\
&\leq \kappa_{T,j}^{(l)} + 2\gamma_j^{(l)} \sum_{t=1}^T \alpha (1 - \alpha)^{T-t}  + 2 \alpha \frac{\mu_l}{\sqrt{m}}\sum_{t=1}^T \sum_{i \neq j} E_{t-1,j} (1 - \alpha)^{T-t}\\
&\leq \kappa_{T,j}^{(l)} + 2\gamma_j^{(l)} + 2 (s-1) \alpha \frac{\mu_l}{\sqrt{m}}\sum_{t=1}^T  E_{t-1,j} (1 - \alpha)^{T-t}\\
\end{aligned}
\end{equation}
Given above, we find an upper bound on $E_{t-1,j} (1 - \alpha)^{T-t}$ below. From analysis in \Cref{thm:fwdzerrorvariable}, we have
\begin{equation}
\begin{aligned}
E_{t-1, j} &\leq v_{t-1, j} + 2 (s-1) \alpha \frac{\mu_l}{\sqrt{m}} \sum_{a=1}^{t-2} v_{a, \text{max}} (1 - \alpha + 2 \alpha \frac{\mu_l}{\sqrt{m}})^{t - a - 2}\\
&+ 2 (s-1) \alpha \frac{\mu_l}{\sqrt{m}} E_{0, \text{max}} (1 - \alpha + 2 \alpha \frac{\mu_l}{\sqrt{m}})^{t-2}\\
E_{t-1, j} (1 - \alpha)^{T-t} &\leq v_{t-1, j} (1 - \alpha)^{T-t} + 2 (s-1) \alpha \frac{\mu_l}{\sqrt{m}} \sum_{a=1}^{t-2} v_{a, \text{max}} (1 - \alpha + 2 \alpha \frac{\mu_l}{\sqrt{m}})^{t - a - 2} (1 - \alpha)^{T-t}\\
&+ 2 (s-1) \alpha \frac{\mu_l}{\sqrt{m}} E_{0, \text{max}} (1 - \alpha + 2 \alpha \frac{\mu_l}{\sqrt{m}})^{t-2} (1 - \alpha)^{T-t}\\
\end{aligned}
\end{equation}
Re-write the first term,
\begin{equation}
\begin{aligned}
v_{t-1, j} (1 - \alpha)^{T-t} &= (E_{0, j} (1 - \alpha)^{t-1} + (\beta_j^{(l)} | \z^{\ast}_{(j)} | + 2 \gamma_j^{(l)}) \sum_{k=1}^{t-1} \alpha (1 - \alpha)^{t-k-1}) (1 - \alpha)^{T-t}\\
&= E_{0, j} (1 - \alpha)^{T-1} + (\beta_j^{(l)} | \z^{\ast}_{(j)} | + 2 \gamma_j^{(l)}) \sum_{k=1}^{t-1} \alpha (1 - \alpha)^{T-k-1}
\end{aligned}
\end{equation}
Similarly,
\begin{equation}
\begin{aligned}
v_{a, \text{max}} &= E_{0, \text{max}} (1 - \alpha)^{a} + (\beta_{\text{max}}^{(l)} | \z^{\ast}_{\text{max}} | + 2 \gamma_{\text{max}}^{(l)}) \sum_{k=1}^{a} \alpha (1 - \alpha)^{a-k}
\end{aligned}
\end{equation}
We write
\begin{equation}
\begin{aligned}
\sum_{a=1}^{t-2} v_{a, \text{max}} (1 - \alpha + 2 \alpha \frac{\mu_l}{\sqrt{m}})^{t - a - 2} &= \sum_{a=1}^{t-2} E_{0, \text{max}} (1 - \alpha)^{a} (1 - \alpha + 2 \alpha \frac{\mu_l}{\sqrt{m}})^{t - a - 2}\\
&+ \sum_{a=1}^{t-2} (\beta_{\text{max}}^{(l)} | \z^{\ast}_{\text{max}} | + 2 \gamma_{\text{max}}^{(l)}) \sum_{k=1}^{a} \alpha (1 - \alpha)^{a-k} (1 - \alpha + 2 \alpha \frac{\mu_l}{\sqrt{m}})^{t - a - 2}\\
&\leq \sum_{a=1}^{t-2} E_{0, \text{max}} (1 - \alpha + 2 \alpha \frac{\mu_l}{\sqrt{m}})^{t - 2}\\
&+ (\beta_{\text{max}}^{(l)} | \z^{\ast}_{\text{max}} | + 2 \gamma_{\text{max}}^{(l)}) \sum_{a=1}^{t-2}  (1 - \alpha + 2 \alpha \frac{\mu_l}{\sqrt{m}})^{t - a - 2}\\
\end{aligned}
\end{equation}
Hence,
\begin{equation}
\begin{aligned}
(1 - \alpha)^{T-t} \sum_{a=1}^{t-2} v_{a, \text{max}} (1 - \alpha + 2 \alpha \frac{\mu_l}{\sqrt{m}})^{t - a - 2} &\leq \sum_{a=1}^{t-2} E_{0, \text{max}} (1 - \alpha + 2 \alpha \frac{\mu_l}{\sqrt{m}})^{t - 2} (1 - \alpha)^{T-t}\\
&+ (\beta_{\text{max}}^{(l)} | \z^{\ast}_{\text{max}} | + 2 \gamma_{\text{max}}^{(l)}) \sum_{a=1}^{t-2}  (1 - \alpha + 2 \alpha \frac{\mu_l}{\sqrt{m}})^{t - a - 2} (1 - \alpha)^{T-t}\\
&\leq (t-2) E_{0, \text{max}} (1 - \alpha + 2 \alpha \frac{\mu_l}{\sqrt{m}})^{T - 2}\\
&+ (\beta_{\text{max}}^{(l)} | \z^{\ast}_{\text{max}} | + 2 \gamma_{\text{max}}^{(l)}) \frac{(1 - \alpha)^{T-t}}{\alpha (1 - 2\frac{\mu_l}{\sqrt{m}})}
\end{aligned}
\end{equation}
Combining all terms, we get
\begin{equation}\label{eq:Ealpha}
\begin{aligned}
E_{t-1, j} (1 - \alpha)^{T-t} &\leq v_{t-1, j} (1 - \alpha)^{T-t} + 2 (s-1) \alpha \frac{\mu_l}{\sqrt{m}} \sum_{a=1}^{t-2} v_{a, \text{max}} (1 - \alpha + 2 \alpha \frac{\mu_l}{\sqrt{m}})^{t - a - 2} (1 - \alpha)^{T-t}\\
&+ 2 (s-1) \alpha \frac{\mu_l}{\sqrt{m}} E_{0, \text{max}} (1 - \alpha + 2 \alpha \frac{\mu_l}{\sqrt{m}})^{t-2} (1 - \alpha)^{T-t}\\
&\leq E_{0, j} (1 - \alpha)^{T-1} + 2 (s-1) \alpha \frac{\mu_l}{\sqrt{m}} (t-1) E_{0, \text{max}} \delta_{\alpha, T-2}\\
&+ (\beta_{\text{max}}^{(l)} | \z^{\ast}_{\text{max}} | + 2 \gamma_{\text{max}}^{(l)}) \left( \sum_{k=1}^{t-1} \alpha (1 - \alpha)^{T-k-1} + 2 (s-1) \kappa_l (1 - \alpha)^{T-t} \right)\\
&\leq (1 + 2 (s-1) \alpha \frac{\mu_l}{\sqrt{m}} (t-1)) E_{0, \text{max}} \delta_{\alpha, T-2}\\
&+ (\beta_{\text{max}}^{(l)} | \z^{\ast}_{\text{max}} | + 2 \gamma_{\text{max}}^{(l)}) \left( \sum_{k=1}^{t-1} \alpha (1 - \alpha)^{T-k-1} + 2 (s-1) \kappa_l (1 - \alpha)^{T-t} \right)\\
\end{aligned}
\end{equation}
where $\kappa_l = \frac{\frac{\mu_l}{\sqrt{m}}}{ (1 - 2\frac{\mu_l}{\sqrt{m}})}$. Moreover, we bound
\begin{equation}
\sum_{t=1}^T \sum_{k=1}^{t-1} \alpha (1 - \alpha)^{T-k-1} = \sum_{t=1}^T \alpha (1 - \alpha)^{T-t} \frac{1 - (1-\alpha)^{t-1}}{\alpha}
\leq \sum_{t=1}^T (1 - \alpha)^{T-t} \leq \frac{1}{\alpha}
\end{equation}
Finally, we are ready to write the bound for $\zeta_{T,j}^{(l)}$
\begin{equation}
\begin{aligned}
| \zeta _{T,j}^{(l)} | &\leq \kappa_{T,j}^{(l)} + 2| \gamma_j^{(l)} | + 2 (s-1) \frac{\mu_l}{\sqrt{m}} (\beta_{\text{max}}^{(l)} | \z^{\ast}_{\text{max}} | + 2 \gamma_{\text{max}}^{(l)}) ( 1 + 2 (s-1) \kappa_l)\\
&+ \sum_{t=1}^T 2 (s-1) \alpha \frac{\mu_l}{\sqrt{m}} ((1 + 2 (s-1) \alpha \frac{\mu_l}{\sqrt{m}} (t-1)) E_{0, \text{max}} \delta_{\alpha, T-2})\\
\end{aligned}
\end{equation}
Given $| \gamma_j^{(l)} | = a_{\gamma}$ with probability of $1 - \epsilon_{\gamma}^{(l)}$ where $a_{\gamma} = \sqrt{s \delta_l}$ and $s = \mathcal{O}^{\ast}(\sqrt{m} / \mu \log{m})$, we will have
\begin{equation}
    | \zeta_{T,j}^{(l)} | \leq \mathcal{O}(\sqrt{s \| \D_j^{(l)} - \D_j^{\ast} \|_2})
\end{equation}

\end{proof}
\fwdzerrorfixed*
\begin{proof}
We denote the regularization used in all layers $\lambda_1^j = \lambda_1^j = \cdots = \lambda_1^p = \lambda^{\text{fixed}}$. We assume that there exists such $\lambda^{\text{fixed}}$ that meets the lower bounds of regularization and also allow to pick an $\alpha >0$ according to \Cref{thm:supppres}. We define $\tilde \eta_{t,j}^{(l)} \coloneqq \sum_{i \neq j} | \langle \D_j^{(l)}, \D_i^{(l)} \rangle | | \z^{\ast}_{(i)} - \z_{t, (i)} | + \gamma_j^{(l)}$ and upper bound it as
\begin{equation}\label{eq:tildeeta_2}
\tilde \eta_{t,j}^{(l)} \leq \frac{\mu_l}{\sqrt{m}} \sum_{i \neq j} E_{t,i} + \gamma_j^{(l)}
\end{equation}
where $E_{t,i} \coloneqq |\z_{(i)}^{\ast}  - \z_{t,(i)}|$. Given~\eqref{eq:kkt}, we re-write the code recursion
\begin{equation}
\begin{aligned}
\z_{t+1, (j)} &= \prox_{\alpha \lambda_{t,j}^{(l)}}((1 - \alpha) \z_{t, (j)} + \alpha (1 - \beta_j^{(l)}) \z^{\ast}_{(j)} + \alpha \eta_{t,j}^{(l)})\\
&\in (1 - \alpha) \z_{t, (j)} + \alpha (1 - \beta_j^{(l)}) \z^{\ast}_{(j)} + \alpha (\eta_{t,j}^{(l)} - \lambda_{t,j}^{(l)} \partial | \z_{t+1, (j)} |)\\
E_{t+1, j} = | \z_{t+1, (j)} - \z_{(j)}^{\ast} | &\leq (1 - \alpha) E_{t, j} + \alpha \beta_j^{(l)} | \z^{\ast}_{(j)} | + \alpha (\tilde \eta_{t,j}^{(l)} + \lambda_{t,j}^{(l)} )
\end{aligned}
\end{equation}
Opening up the recursion, we get
\begin{equation}
\begin{aligned}
E_{t+1, j} &\leq E_{0,j} \prod_{q=0}^t (1 - \alpha) + \beta_j^{(l)} | \z^{\ast}_{(j)} | \sum_{a=1}^{t+1} \alpha \prod_{q=a}^{t+1} (1 - \alpha) + \sum_{a=1}^{t+1} \alpha (\tilde \eta_{a-1,j}^{(l)} + \lambda_{a-1,j}^{(l)} ) \prod_{q=a}^{t+1} (1 - \alpha)
\end{aligned}
\end{equation}
where we define $ \prod_{q=a}^a (1 - \alpha) = 1$. Using the upper bounds from~\eqref{eq:tildeeta_2} and $\lambda_{t,j}^{(l)}$ from \Cref{thm:supppres}, we get
\begin{equation}
\begin{aligned}
E_{t+1, j} &\leq v_{t+1, j} + r_{t+1, j} + \sum_{a=1}^{t+1} \alpha \frac{\mu_l}{\sqrt{m}} \sum_{i \neq j} E_{a-1, i} \prod_{q=a}^{t+1} (1 - \alpha)
\end{aligned}
\end{equation}
where $v_{t+1, j} \coloneqq E_{0, j} \prod_{q=0}^t (1 - \alpha) + (\beta_j^{(l)} | \z^{\ast}_{(j)} | + \gamma_j^{(l)}) \sum_{a=1}^{t+1} \alpha \prod_{q=a}^{t+1} (1 - \alpha)$ and $r_{t+1, j} \coloneqq \sum_{a=1}^{t+1} \alpha \lambda_{a-1, j}^{(l)} \prod_{q=a}^{t+1} (1 - \alpha)$. Following similar steps in~\Cref{thm:fwdzerrorvariable}, we now derive the general upper bound on $E_{t+1, j}$ as follows
\begin{equation}
\begin{aligned}
E_{1, i_1} &\leq v_{1, i_1} + r_{1, i_1} + \alpha \frac{\mu_l}{\sqrt{m}} \sum_{i_2 \neq i_1} E_{0, i_2}
\end{aligned}
\end{equation}
For $E_{2, i_1}$, we have
\begin{equation}
\begin{aligned}
E_{2, i_1} &\leq v_{2, i_1} + r_{2, i_1} + \alpha \frac{\mu_l}{\sqrt{m}} (\sum_{i_2 \neq i_1} E_{1, i_2} + \sum_{i_2 \neq i_1} E_{0, i_2} (1 - \alpha))
\end{aligned}
\end{equation}
Substituting $E_{1, i_1}$,
\begin{equation}
\begin{aligned}
E_{2, i_1} &\leq v_{2, i_1} +  r_{2, i_1} + \alpha \frac{\mu_l}{\sqrt{m}} (\sum_{i_2 \neq i_1}  (v_{1, i_2} +  r_{1, i_2} + \alpha \frac{\mu_l}{\sqrt{m}} \sum_{i_3 \neq i_2} E_{0, i_3}) + \sum_{i_2 \neq i_1} E_{0, i_2} (1 - \alpha))
\end{aligned}
\end{equation}
For $E_{3, i_1}$, we have
\begin{equation}
\begin{aligned}
E_{3, i_1} &\leq v_{3, i_1} + r_{3, i_1} + \alpha \frac{\mu_l}{\sqrt{m}} \sum_{a=1}^3 \sum_{i_2 \neq i_1} E_{a-1,i_2} (1-\alpha)^{3-a}
\end{aligned}
\end{equation}
Unrolling the recursion,
\begin{equation}
\begin{aligned}
E_{3, i_1} &\leq v_{3, i_1} + r_{3, i_1} + \alpha \frac{\mu_l}{\sqrt{m}} \sum_{i_2 \neq i_1} (v_{2,i_2} + r_{2, i_2}) + \alpha \frac{\mu_l}{\sqrt{m}} \left((1-\alpha) \sum_{i_2 \neq i_1}  (v_{1,i_2} + r_{1, i_2}) + \alpha \frac{\mu_l}{\sqrt{m}} \sum_{i_3 \neq i_2} (v_{1,i_3} + r_{1, i_3}) \right)\\
&+ \alpha \frac{\mu_l}{\sqrt{m}} \left( (1 - \alpha)^2 \sum_{i_2 \neq i_1} E_{0,i_2} + 2 (1-\alpha) (\alpha \frac{\mu_l}{\sqrt{m}}) \sum_{i_3 \neq i_2, i_1}  E_{0,i_3} + (\alpha \frac{\mu_l}{\sqrt{m}})^2 \sum_{i_4 \neq i_3, i_2, i_1} E_{0,i_4} \right)
\end{aligned}
\end{equation}
Given above, we can write up the relation as
\begin{equation}
\begin{aligned}
E_{3, i_1} &\leq v_{3, i_1} + r_{3, i_1} + \alpha \frac{\mu_l}{\sqrt{m}} \left( (s-1) (v_{2,i} + r_{2, i}) + (v_{1,i} + r_{1, i}) ((1-\alpha) (s-1)  + \alpha \frac{\mu_l}{\sqrt{m}} (s-2)) \right)\\
&+ \alpha \frac{\mu_l}{\sqrt{m}} E_{0,i} \left( (s-1)(1 - \alpha)^2 + 2 (1-\alpha) (\alpha \frac{\mu_l}{\sqrt{m}}) (s-2) + (\alpha \frac{\mu_l}{\sqrt{m}})^2 (s-3)\right)\\
&\leq v_{3, i_1} + r_{3, i_1} + (s-1) \alpha \frac{\mu_l}{\sqrt{m}} \left( v_{2,i} + r_{2, i} + (v_{1,i} + r_{1, i}) ((1-\alpha) + \alpha \frac{\mu_l}{\sqrt{m}}) \right)\\
&+ (s-1) \alpha \frac{\mu_l}{\sqrt{m}} E_{0,i} \left( (1 - \alpha)^2 + 2 (1-\alpha) (\alpha \frac{\mu_l}{\sqrt{m}}) + (\alpha \frac{\mu_l}{\sqrt{m}})^2 \right)\\
\end{aligned}
\end{equation}
We denote $u_{t+1, i} \coloneqq v_{t+1, i} + r_{t+1, i}$ and following similar steps for $E_{4, i_1}$, we get, 
\begin{equation}
\begin{aligned}
E_{4, i_1} &\leq u_{4, i_1} + (s-1) \alpha \frac{\mu_l}{\sqrt{m}} \left( u_{3,i} + u_{2,i} (1-\alpha + \alpha \frac{\mu_l}{\sqrt{m}}) + u_{1,i} ((1-\alpha)^2 + 2 (1-\alpha)(\alpha \frac{\mu_l}{\sqrt{m}}) + (\alpha \frac{\mu_l}{\sqrt{m}})^2) \right)\\
&+ (s-1) \alpha \frac{\mu_l}{\sqrt{m}} E_{0,i} \left( (1 - \alpha)^3 + 3 (1-\alpha) (\alpha \frac{\mu_l}{\sqrt{m}})^2 + 3 (1-\alpha)^2 (\alpha \frac{\mu_l}{\sqrt{m}}) + (\alpha \frac{\mu_l}{\sqrt{m}})^3 \right)\\
\end{aligned}
\end{equation}
This leads to the term
\begin{equation}
\begin{aligned}
E_{4, i_1} &\leq u_{4, i_1} + (s-1) \alpha \frac{\mu_l}{\sqrt{m}} \left( u_{3,i} + u_{2,i} (1-\alpha + \alpha \frac{\mu_l}{\sqrt{m}})^1 + u_{1,i} (1-\alpha + \alpha \frac{\mu_l}{\sqrt{m}})^2\right)\\
&+ (s-1) \alpha \frac{\mu_l}{\sqrt{m}} E_{0,i} (1 - \alpha + \alpha \frac{\mu_l}{\sqrt{m}})^3\\
\end{aligned}
\end{equation}
Hence, the general term for code error at $t$ layer is
\begin{equation}
\begin{aligned}
E_{t+1, j} &\leq u_{t+1, j} + (s-1) \alpha \frac{\mu_l}{\sqrt{m}} \sum_{a=1}^{t} (v_{a, \text{max}}+r_{a, \text{max}}) (1 - \alpha + \alpha \frac{\mu_l}{\sqrt{m}})^{t - a} + (s-1) \alpha \frac{\mu_l}{\sqrt{m}} E_{0, \text{max}} (1 - \alpha + \alpha \frac{\mu_l}{\sqrt{m}})^t
\end{aligned}
\end{equation}
where for $j$ in the support, we define the upper bounds $v_{a, j} \leq v_{a, \text{max}}$, $r_{a, j} \leq r_{a, \text{max}}$, and $E_{0, j} \leq E_{0, \text{max}}$. Next, we define $(1 - \alpha)^t \leq \delta_{\alpha, t}^{\text{fixed}} \coloneqq (1 -\alpha + \alpha \frac{\mu_l}{\sqrt{m}})^t$, and use it to find an upper bound on the two expressions $\sum_{a=1}^{t} v_{a, \text{max}} (1 - \alpha + \alpha \frac{\mu_l}{\sqrt{m}})^{t - a}$ and $\sum_{a=1}^{t} r_{a, \text{max}} (1 - \alpha + \alpha \frac{\mu_l}{\sqrt{m}})^{t - a}$. The following bound can be achieved similar to the steps in~\Cref{thm:fwdzerrorvariable}
\begin{equation}
\sum_{a=1}^{t} v_{a, \text{max}} (1 - \alpha + \alpha \frac{\mu_l}{\sqrt{m}})^{t - a} \leq E_{0,\text{max}} t \delta_{\alpha,t}^{\text{fixed}} + \frac{1}{\alpha  (1 - \frac{\mu_l}{\sqrt{m}})} (\frac{\delta_l^2}{2} | \z_{\text{max}}^{\ast}| + a_{\gamma})
\end{equation}
Hence, we focus on $\sum_{a=1}^{t} r_{a, \text{max}} (1 - \alpha + \alpha \frac{\mu_l}{\sqrt{m}})^{t - a}$ next. First, we rewrite
\begin{equation}
r_{t+1, j} = \sum_{a=1}^{t+1} \alpha \lambda_{a-1, j}^{(l)} \prod_{q=a}^{t+1} (1 - \alpha)
\end{equation} 
We replace all $\lambda_{t,j}^{(l)}$ with a fixed one $\lambda^{\text{fixed}}$, and write
\begin{equation}
\begin{aligned}
\sum_{a=1}^{t} r_{a, j} (1 - \alpha + \alpha \frac{\mu_l}{\sqrt{m}})^{t - a} &\leq \sum_{a=1}^t \lambda^{\text{fixed}} \sum_{k=1}^{a} \alpha (1 - \alpha)^{a-k+1}) (1 - \alpha + \alpha \frac{\mu_l}{\sqrt{m}})^{t - a}\\
&\leq \lambda^{\text{fixed}} \sum_{a=1}^t (1 - \alpha + \alpha \frac{\mu_l}{\sqrt{m}})^{t - a} \sum_{k=1}^{a} \alpha (1 - \alpha)^{a-k+1}
\end{aligned}
\end{equation} 
Using sum of geometric series, we write $\sum_{a=1}^t (1 - \alpha + \alpha \frac{\mu_l}{\sqrt{m}})^{t - a} = \frac{1 - (1 - \alpha + \alpha \frac{\mu_l}{\sqrt{m}})^t}{\alpha - \alpha \frac{\mu_l}{\sqrt{m}}} \leq \frac{1}{\alpha  (1 - \frac{\mu_l}{\sqrt{m}})}$. Hence, we get
\begin{equation}
\sum_{a=1}^{t} r_{a, \text{max}} (1 - \alpha + \alpha \frac{\mu_l}{\sqrt{m}})^{t - a} \leq \lambda^{\text{fixed}} \frac{1}{\alpha  (1 - \frac{\mu_l}{\sqrt{m}})} 
\end{equation}
Hence, we bound the code error on the coefficients as following
\begin{equation}
\begin{aligned}
E_{t+1, j} &\leq u_{t+1, j} + (s-1) \frac{\mu_l}{\sqrt{m}} ( \frac{1}{(1 - \frac{\mu_l}{\sqrt{m}})} (\frac{\delta_l^2}{2} | \z_{\text{max}}^{\ast}| + a_{\gamma} + \lambda^{\text{fixed}})) + (t+1) (s-1) \alpha \frac{\mu_l}{\sqrt{m}} E_{0, \text{max}} \delta_{\alpha,t}^{\text{fixed}}
\end{aligned}
\end{equation}
Next, we further simplify the first term. From before, we have
\begin{equation}
v_{t+1, j} = E_{0, j} (1 - \alpha)^{t+1} + (\beta_j^{(l)} | \z^{\ast}_{(j)} | + \gamma_j^{(l)}) \sum_{k=1}^{t+1} \alpha (1 - \alpha)^{t-k+1}
\leq E_{0, j} \delta_{\alpha,t+1}^{\text{fixed}} + \frac{\delta_l^2}{2} | \z^{\ast}_{\text{max}} | + a_{\gamma}
\end{equation}
and
\begin{equation}
\begin{aligned}
r_{t+1, j} = \alpha \lambda^{\text{fixed}} \sum_{k=1}^{t+1} (1 - \alpha)^{t-k+1} \leq \lambda^{\text{fixed}}
\end{aligned}
\end{equation}
Substituting the above upper bound into the upper bound for $E_{t+1,j}$, we get
\begin{equation}
E_{t+1, j} \leq E_{0, j} \delta_{\alpha,t+1}^{\text{fixed}} + (1 + (s-1) \kappa_l^{\text{fixed}}) (\frac{\delta_l^2}{2} | \z_{\text{max}}^{\ast}| + a_{\gamma} + \lambda^{\text{fixed}}) + (t+1)(s-1) \alpha \frac{\mu_l}{\sqrt{m}} E_{0, \text{max}} \delta_{\alpha,t}^{\text{fixed}}
\end{equation}
where $\kappa_l^{\text{fixed}} \coloneqq \frac{\mu_l}{\sqrt{m}} (\frac{1}{ (1 - \frac{\mu_l}{\sqrt{m}})}$. Given $s = \mathcal{O}^{\ast}(\sqrt{m}/\mu \log{m})$, we have $(s-1) \kappa_l^{\text{fixed}} < 1$. Hence, with probability of at least $1 - \epsilon_{\gamma}^{(l)}$, we have
\begin{equation}
\begin{aligned}
E_{t, j} &\leq  \mathcal{O}(a_{\gamma} + \lambda^{\text{fixed}}) + (s-1)t \alpha \frac{\mu_l}{\sqrt{m}} E_{0, \text{max}} \delta_{\alpha,t-1}^{\text{fixed}} + E_{0, j} \delta_{\alpha,t}^{\text{fixed}}\\
|\z_{t,(j)}^{(l)} - \z_{(j)}^{\ast} | &\leq  \mathcal{O}(\sqrt{s \| \D_j^{(l)} - \D_j^{\ast} \|_2} + \lambda^{\text{fixed}} + e_{t,j}^{(l)\text{unroll, fixed}})
\end{aligned}
\end{equation}
where $e_{t,j}^{(l)\text{unroll, fixed}} \coloneqq (s-1)t \alpha \frac{\mu_l}{\sqrt{m}} \max_i | \z_{0,(i)}^{(l)} - \z_{(i)}^{\ast} | \delta_{\alpha,t-1}^{\text{fixed}} + | \z_{0,(j)}^{(l)} - \z_{(j)}^{\ast} | \delta_{\alpha,t}^{\text{fixed}}$. With appropriately large unrolled layer $t$, $e_{t,j}^{(l)\text{unroll, fixed}} \approx 0$. Hence, for the code error on non-zero coefficients, we get
\begin{equation}
|\z_{t,(j)}^{(l)} - \z_{(j)}^{\ast} | = \mathcal{O}(\sqrt{s \| \D_j^{(l)} - \D_j^{\ast} \|_2} +  \lambda^{\text{fixed}})
\end{equation}
for large enough $t$. Now, we provide the relation for $\z_{T, (j)}$. We re-write~\eqref{eq:code_T}
\begin{equation}
\begin{aligned}
\z_{T, (j)} \in \z^{\ast}_{(j)} (1 - \beta_j^{(l)}) + \zeta_{T,j}^{(l)}
\end{aligned}
\end{equation}
where $\zeta _{T,j}^{(l)} = \kappa_{T,j}^{(l)} + \sum_{t=1}^T \alpha  (\eta_{t-1,j}^{(l)} - \lambda_{t-1,j}^{(l)}  \partial | \z_{t, (j)} |) (1 - \alpha)^{T-t}$ and $\kappa_{T,j}^{(l)} \coloneqq (1 - \alpha)^T (\z_{0, (j)} - \z^{\ast}_{(j)} (1 - \beta_j^{(l)}))$. $\kappa_{T,j}^{(l)}$ decays very fast as $T$ increases. Hence, we bound the second term. We substitute $\eta_{t,j}^{(l)} = \sum_{i \neq j} \langle \D_j^{(l)}, \D_i^{(l)} \rangle (\z^{\ast}_{(i)} - \z_{t, (i)}) + \gamma_j^{(l)}$ in $\zeta_{T,j}^{(l)}$.
\begin{equation}
\begin{aligned}
\zeta _{T,j}^{(l)} &\in \kappa_{T,j}^{(l)} + \sum_{t=1}^T \alpha  (\eta_{t-1,j}^{(l)} - \lambda_{t-1,j}^{(l)} \partial | \z_{t, (j)} |) (1 - \alpha)^{T-t}\\
&\in \kappa_{T,j}^{(l)} + \sum_{t=1}^T \alpha  (\sum_{i \neq j} \langle \D_j^{(l)}, \D_i^{(l)} \rangle (\z^{\ast}_{(i)} - \z_{t-1, (i)}) + \gamma_j^{(l)} - \lambda_{t-1,j}^{(l)} \partial | \z_{t, (j)} |) (1 - \alpha)^{T-t}\\
&\leq \kappa_{T,j}^{(l)} + (\gamma_j^{(l)} + \lambda^{\text{fixed}}) \sum_{t=1}^T \alpha (1 - \alpha)^{T-t}  + \alpha \frac{\mu_l}{\sqrt{m}}\sum_{t=1}^T \sum_{i \neq j} E_{t-1,j} (1 - \alpha)^{T-t}\\
&\leq \kappa_{T,j}^{(l)} + \gamma_j^{(l)} + \lambda^{\text{fixed}} + (s-1) \alpha \frac{\mu_l}{\sqrt{m}}\sum_{t=1}^T  E_{t-1,j} (1 - \alpha)^{T-t}\\
\end{aligned}
\end{equation}
Given above, we find an upper bound on $E_{t-1,j} (1 - \alpha)^{T-t}$ below. From analysis in \Cref{thm:fwdzerrorvariable}, we have
\begin{equation}
\begin{aligned}
E_{t-1, j} &\leq v_{t-1, j} + (s-1) \alpha \frac{\mu_l}{\sqrt{m}} \sum_{a=1}^{t-2} v_{a, \text{max}} (1 - \alpha + \alpha \frac{\mu_l}{\sqrt{m}})^{t - a - 2}\\
&+ (s-1) \alpha \frac{\mu_l}{\sqrt{m}} E_{0, \text{max}} (1 - \alpha + \alpha \frac{\mu_l}{\sqrt{m}})^{t-2}\\
E_{t-1, j} (1 - \alpha)^{T-t} &\leq v_{t-1, j} (1 - \alpha)^{T-t} + 2 (s-1) \alpha \frac{\mu_l}{\sqrt{m}} \sum_{a=1}^{t-2} v_{a, \text{max}} (1 - \alpha + \alpha \frac{\mu_l}{\sqrt{m}})^{t - a - 2} (1 - \alpha)^{T-t}\\
&+ (s-1) \alpha \frac{\mu_l}{\sqrt{m}} E_{0, \text{max}} (1 - \alpha + \alpha \frac{\mu_l}{\sqrt{m}})^{t-2} (1 - \alpha)^{T-t}\\
\end{aligned}
\end{equation}
Re-write the first term,
\begin{equation}
\begin{aligned}
v_{t-1, j} (1 - \alpha)^{T-t} &= (E_{0, j} (1 - \alpha)^{t-1} + (\beta_j^{(l)} | \z^{\ast}_{(j)} | + \gamma_j^{(l)} + \lambda^{\text{fixed}}) \sum_{k=1}^{t-1} \alpha (1 - \alpha)^{t-k-1}) (1 - \alpha)^{T-t}\\
&= E_{0, j} (1 - \alpha)^{T-1} + (\beta_j^{(l)} | \z^{\ast}_{(j)} | + \gamma_j^{(l)} + \lambda^{\text{fixed}}) \sum_{k=1}^{t-1} \alpha (1 - \alpha)^{T-k-1}
\end{aligned}
\end{equation}
Similarly,
\begin{equation}
\begin{aligned}
v_{a, \text{max}} &= E_{0, \text{max}} (1 - \alpha)^{a} + (\beta_{\text{max}}^{(l)} | \z^{\ast}_{\text{max}} | + \gamma_{\text{max}}^{(l)} + \lambda^{\text{fixed}}) \sum_{k=1}^{a} \alpha (1 - \alpha)^{a-k}
\end{aligned}
\end{equation}
We write
\begin{equation}
\begin{aligned}
\sum_{a=1}^{t-2} v_{a, \text{max}} (1 - \alpha + \alpha \frac{\mu_l}{\sqrt{m}})^{t - a - 2} &= \sum_{a=1}^{t-2} E_{0, \text{max}} (1 - \alpha)^{a} (1 - \alpha + \alpha \frac{\mu_l}{\sqrt{m}})^{t - a - 2}\\
&+ \sum_{a=1}^{t-2} (\beta_{\text{max}}^{(l)} | \z^{\ast}_{\text{max}} | + \gamma_{\text{max}}^{(l)} + \lambda^{\text{fixed}}) \sum_{k=1}^{a} \alpha (1 - \alpha)^{a-k} (1 - \alpha + \alpha \frac{\mu_l}{\sqrt{m}})^{t - a - 2}\\
&\leq \sum_{a=1}^{t-2} E_{0, \text{max}} (1 - \alpha + \alpha \frac{\mu_l}{\sqrt{m}})^{t - 2}\\
&+ (\beta_{\text{max}}^{(l)} | \z^{\ast}_{\text{max}} | + \gamma_{\text{max}}^{(l)} + \lambda^{\text{fixed}}) \sum_{a=1}^{t-2}  (1 - \alpha + \alpha \frac{\mu_l}{\sqrt{m}})^{t - a - 2}\\
\end{aligned}
\end{equation}
Hence,
\begin{equation}
\begin{aligned}
(1 - \alpha)^{T-t} \sum_{a=1}^{t-2} v_{a, \text{max}} (1 - \alpha + \alpha \frac{\mu_l}{\sqrt{m}})^{t - a - 2} &\leq \sum_{a=1}^{t-2} E_{0, \text{max}} (1 - \alpha + \alpha \frac{\mu_l}{\sqrt{m}})^{t - 2} (1 - \alpha)^{T-t}\\
&+ (\beta_{\text{max}}^{(l)} | \z^{\ast}_{\text{max}} | + \gamma_{\text{max}}^{(l)} + \lambda^{\text{fixed}}) \sum_{a=1}^{t-2}  (1 - \alpha + \alpha \frac{\mu_l}{\sqrt{m}})^{t - a - 2} (1 - \alpha)^{T-t}\\
&\leq (t-2) E_{0, \text{max}} (1 - \alpha + \alpha \frac{\mu_l}{\sqrt{m}})^{T - 2}\\
&+ (\beta_{\text{max}}^{(l)} | \z^{\ast}_{\text{max}} | + \gamma_{\text{max}}^{(l)} + \lambda^{\text{fixed}}) \frac{(1 - \alpha)^{T-t}}{\alpha (1 - \frac{\mu_l}{\sqrt{m}})}
\end{aligned}
\end{equation}
Combining all terms, we get
\begin{equation}\label{eq:Ealpha_2}
\begin{aligned}
E_{t-1, j} (1 - \alpha)^{T-t} &\leq v_{t-1, j} (1 - \alpha)^{T-t} + (s-1) \alpha \frac{\mu_l}{\sqrt{m}} \sum_{a=1}^{t-2} v_{a, \text{max}} (1 - \alpha + \alpha \frac{\mu_l}{\sqrt{m}})^{t - a - 2} (1 - \alpha)^{T-t}\\
&+ (s-1) \alpha \frac{\mu_l}{\sqrt{m}} E_{0, \text{max}} (1 - \alpha + \alpha \frac{\mu_l}{\sqrt{m}})^{t-2} (1 - \alpha)^{T-t}\\
&\leq E_{0, j} (1 - \alpha)^{T-1} + (s-1) \alpha \frac{\mu_l}{\sqrt{m}} (t-1) E_{0, \text{max}} \delta_{\alpha, T-2}\\
&+ (\beta_{\text{max}}^{(l)} | \z^{\ast}_{\text{max}} | + \gamma_{\text{max}}^{(l)} + \lambda^{\text{fixed}}) \left( \sum_{k=1}^{t-1} \alpha (1 - \alpha)^{T-k-1} + (s-1) \kappa_l (1 - \alpha)^{T-t} \right)\\
&\leq (1 + (s-1) \alpha \frac{\mu_l}{\sqrt{m}} (t-1)) E_{0, \text{max}} \delta_{\alpha, T-2}\\
&+ (\beta_{\text{max}}^{(l)} | \z^{\ast}_{\text{max}} | + \gamma_{\text{max}}^{(l)} + \lambda^{\text{fixed}}) \left( \sum_{k=1}^{t-1} \alpha (1 - \alpha)^{T-k-1} + (s-1) \kappa_l (1 - \alpha)^{T-t} \right)\\
\end{aligned}
\end{equation}
where $\kappa_l^{\text{fixed}} = \frac{\frac{\mu_l}{\sqrt{m}}}{ (1 - \frac{\mu_l}{\sqrt{m}})}$. Moreover, we bound
\begin{equation}
\sum_{t=1}^T \sum_{k=1}^{t-1} \alpha (1 - \alpha)^{T-k-1} = \sum_{t=1}^T \alpha (1 - \alpha)^{T-t} \frac{1 - (1-\alpha)^{t-1}}{\alpha}
\leq \sum_{t=1}^T (1 - \alpha)^{T-t} \leq \frac{1}{\alpha}
\end{equation}
Finally, we are ready to write the bound for $\zeta_{T,j}^{(l)}$
\begin{equation}
\begin{aligned}
| \zeta _{T,j}^{(l)} | &\leq \kappa_{T,j}^{(l)} + | \gamma_j^{(l)} | + \lambda^{\text{fixed}} + (s-1) \frac{\mu_l}{\sqrt{m}} (\beta_{\text{max}}^{(l)} | \z^{\ast}_{\text{max}} | + \gamma_{\text{max}}^{(l)} + \lambda^{\text{fixed}}) ( 1 + (s-1) \kappa_l^{\text{fixed}})\\
&+ \sum_{t=1}^T (s-1) \alpha \frac{\mu_l}{\sqrt{m}} ((1 + (s-1) \alpha \frac{\mu_l}{\sqrt{m}} (t-1)) E_{0, \text{max}} \delta_{\alpha, T-2})\\
\end{aligned}
\end{equation}
Given $| \gamma_j^{(l)} | = a_{\gamma}$ with probability of $1 - \epsilon_{\gamma}^{(l)}$ where $a_{\gamma} = \sqrt{s \delta_l}$ and $s = \mathcal{O}^{\ast}(\sqrt{m} / \mu \log{m})$, we will have
\begin{equation}
    | \zeta_{T,j}^{(l)} | \leq \mathcal{O}(\sqrt{s \| \D_j^{(l)} - \D_j^{\ast} \|_2} + \lambda^{\text{fixed}})
\end{equation}
\end{proof}
%
%%%%%%%%%%%%%%%%%%%%%%%%%
%%%%%%%%%%%%%%%%%%%%%%%%%
%%%%%%%%%%%%%%%%%%%%%%%%%
%

We now re-state the forward pass Jacobian (\Cref{thm:fwdj}) convergence.
\fwdj*
\begin{proof}
Differentiating the recursion,
\begin{equation*}
\J_{t+1} = \nabla_1 \Phi(\z_t, \D)^{\text{T}} \J_t + \nabla_2 \Phi(\z_t, \D)^{\text{T}}.
\end{equation*}
Similarly, 
\begin{equation*}
\hat \J = \nabla_1 \Phi(\hat \z, \D)^{\text{T}} \hat \J + \nabla_2 \Phi(\hat \z, \D)^{\text{T}}
\end{equation*}
where $\hat \z$ is a minimizer of lasso and fixed-point of the mapping (see \Cref{lemma:fixedpoint}). Subtract the terms
\begin{equation*}
\J_{t+1} - \hat \J =  \nabla_1 \Phi(\z_t, \D)^{\text{T}} (\J_t - \hat \J) + (\nabla_1 \Phi(\z_t, \D) - \nabla_1 \Phi(\hat \z, \D))^{\text{T}} \hat \J +  (\nabla_2 \Phi(\z_t, \D) - \nabla_2 \Phi(\hat \z, \D))^{\text{T}}
\end{equation*}
Given the Lipschitz properties of $\Loss$ and $h$, we can further get the upper bounds on $\| \nabla_1 \Phi({\bm a}, \D) - \nabla_1 \Phi({\bm b}, \D) \|_2 \leq L_{\Phi_1} \| {\bm b} - {\bm a} \|_2$ and $\| \nabla_2 \Phi({\bm a}, \D) - \nabla_2 \Phi({\bm b}, \D) \|_2 \leq L_{\Phi_2} \| {\bm b} - {\bm a} \|_2$. Hence, with upper bound on the norm of Jacobian (\Cref{assum:boundj}), there exists $B>0$ such that $\forall t > B$
\begin{equation*}
\begin{aligned}
\| \J_{t+1} - \hat \J \|_2 &\leq  \| \nabla_1 \Phi(\z_t, \D) \|_2 \| \J_t - \hat \J \|_2 + \| \nabla_1 \Phi(\z_t, \D) - \nabla_1 \Phi(\hat \z, \D) \|_2 \| \hat \J \|_2\\
&+  \| \nabla_2 \Phi(\z_t, \D) - \nabla_2 \Phi(\hat \z, \D) \|_2\\
& \leq \rho \| \J_t - \hat \J \|_2 + c \| \z_t - \hat \z \|_2
\end{aligned}
\end{equation*}
where $c \triangleq M_J L_{\Phi_1} + L_{\Phi_2}$. Hence,
\begin{equation*}
\| \J_{t+1} - \hat \J \|_2  \leq \rho \| \J_{t} - \hat \J \|_2 + \mathcal{O}(\rho^t).
\end{equation*}
Unrolling the recursion,
\begin{equation*}
\| \J_{t+1} - \hat \J \|_2  \leq \mathcal{O}((t+1)\rho^t).
\end{equation*}
\end{proof}
\fwdzglobal*
\begin{proof}
We first find the error between $\hat \z$ and $\hat \z^{\ast}$ which is the unique minimizer of lasso~\eqref{eq:lasso} given the true dictionary $\D^{\ast}$. Using fixed-point property (\Cref{lemma:fixedpoint}), we get
\begin{equation}
\| \hat \z - \hat \z^{\ast} \|_2 = \| \Phi(\hat \z, \D) - \Phi(\hat \z^{\ast}, \D^{\ast}) \|_2 \leq  \| \Phi(\hat \z, \D) - \Phi(\hat \z^{\ast}, \D) \|_2 +  \| \Phi(\hat \z^{\ast}, \D) - \Phi(\hat \z^{\ast}, \D^{\ast}) \|_2
\end{equation}
Using the $\mu$-strongly convexity of $\Loss_{\x}(\z_t, \D)$ on the support, and $L_{21}$ Lipschitz constants of $\nabla_{21}^2\Loss_{\x}(\z, \D)$, we upper bound the term as follows:
\begin{equation}
\| \hat \z - \hat \z^{\ast} \|_2 \leq \rho \| \hat \z - \hat \z^{\ast} \|_2 + \alpha L_{21} c_{\text{prox}} \| \D - \D^{\ast} \|_2
\end{equation}
Where $\rho \triangleq  c_{\text{prox}} (1-\alpha\mu) < 1$. Denote $q \triangleq \frac{\alpha c_{\text{prox}} L_{21}}{1 - \rho}$ which can be made to be small with proper choice of step size $\alpha$.
\begin{equation}
\| \hat \z - \hat \z^{\ast} \|_2 \leq q \| \D - \D^{\ast} \|_2
\end{equation}
Hence, we get the following code error
\begin{equation}
\| \hat \z - \z^{\ast} \|_2 \leq \| \hat \z - \hat \z^{\ast} \|_2 + \| \hat \z^{\ast} - \z^{\ast} \|_2 \leq q \| \D - \D^{\ast} \|_2 + \hat \delta^{\ast}  \leq \mathcal{O}(\| \D - \D^{\ast} \|_2 + \hat \delta^{\ast})
\end{equation}
\end{proof}
\fwdjglobal*
\begin{proof}
First, we define $\J^{\ast}$. For $\z^{\ast}$, we define the mapping function $\z \rightarrow \Loss_{\x}(\z, \D)$, where $\z^{\ast}(\D)$ is its minimizer evaluated at $\D^{\ast}$, i.e., $\nabla_{1} \Loss_{\x}(\z^{\ast}, \D^{\ast}) = \D^{\ast\text{T}} ({\bm D}^{\ast} \z^{\ast} - \x) = 0$ given the generative model ($\x = \D^{\ast} \z^{\ast}$). Hence, we define the Jacobian $\J^{\ast} = \frac{\partial \z^{\ast}(\D)}{\partial \D}\rvert_{\D = \D^{\ast}}$. From implicit function theorem, we get
$$\J^{\ast+}  \nabla_{11}^2 \Loss_{\x}(\z^{\ast}, \D^{\ast}) + \nabla_{21}^2 \Loss_{\x}(\z^{\ast}, \D^{\ast}) = \zero$$
which is later used in the global backward pass analysis. Alternatively, if $\nabla_{11}^2 \Loss_{\x}(\z^{\ast}, \D^{\ast})$ is invertible, then we can compute $\J^{\ast+}$ as follows:
$$\J^{\ast+} = - \nabla_{21}^2 \Loss_{\x}(\z^{\ast}, \D^{\ast}) \nabla_{11}^2 \Loss_{\x}(\z^{\ast}, \D^{\ast})^{-1}$$
The Jacobian w.r.t row $i$ of the dictionary is
$$
\J^{\ast}_{(i,:)} = - (\D^{\ast\text{T}}_{S^{\ast}} \D^{\ast }_{S^{\ast}})^{-1} (\D^{\ast}_{i, :} \z^{\ast\text{T}} + (\D^{\ast \text{T}}_{i, :} \z^{\ast} - \x_i) {\bm I}_p)_{S^{\ast}}
$$
on the support $S^{\ast}$ of $\z^{\ast}$. Outside of the support, it is zero. Now, given the recursion $\z_{t+1} = \Phi(\z_t, \D)$, we differentiate the recursion,
\begin{equation*}
\J_{t+1} = \nabla_1 \Phi(\z_t, \D)^{\text{T}} \J_t + \nabla_2 \Phi(\z_t, \D)^{\text{T}}.
\end{equation*}
Hence, we have
\begin{equation*}
\hat \J = \nabla_1 \Phi(\hat \z, \D)^{\text{T}} \hat \J + \nabla_2 \Phi(\hat \z, \D)^{\text{T}}
\end{equation*}
\begin{equation*}
\hat \J^{\ast} = \nabla_1 \Phi(\hat \z^{\ast}, \D^{\ast})^{\text{T}} \hat \J^{\ast} + \nabla_2 \Phi(\hat \z^{\ast}, \D^{\ast})^{\text{T}}
\end{equation*}
where ${\hat \J^{\ast}}$ is the Jacobian of $\hat \z^{\ast}$. Then, following similar step to~\Cref{thm:fwdj}, we can write
\begin{equation*}
\hat \J - \hat \J^{\ast} =  \nabla_1 \Phi(\hat \z, \D)^{\text{T}} (\hat \J - \hat \J^{\ast}) + (\nabla_1 \Phi(\hat \z, \D) - \nabla_1 \Phi(\hat \z^{\ast}, \D^{\ast}))^{\text{T}} \hat \J^{\ast} +  (\nabla_2 \Phi(\hat \z, \D) - \nabla_2 \Phi(\hat \z^{\ast}, \D^{\ast}))^{\text{T}}
\end{equation*}
With respect to $\D$, we denote the Lipschitz constants of $\nabla_1 \Phi(\hat \z, \D)$ and $\nabla_2 \Phi(\hat \z, \D)$ with $L_{\Phi_{1D}}$ and $L_{\Phi_{2D}}$, respectively. Then,
\begin{equation*}
\| \hat \J - \hat \J^{\ast} \|_2 \leq \rho \| \hat \J - \hat \J^{\ast} \|_2 + c \| \hat \z - \hat \z^{\ast} \|_2 + c_{D} \| \hat \D - \D^{\ast} \|_2
\end{equation*}
where $c \triangleq M_J L_{\Phi_1} + L_{\Phi_2}$ and $c_D \triangleq M_J L_{\Phi_{1D}} + L_{\Phi_{2D}}$. Given the global forward pass code error, we get
\begin{equation}
\| \hat \J - \hat \J^{\ast} \|_2 \leq q_z \| \hat \z - \hat \z^{\ast} \|_2 + q_D \| \D - \D^{\ast} \|_2 \leq (q_D + q_z q) \| \D - \D^{\ast} \|_2
\end{equation}
where $q_z \triangleq \frac{c}{1 - \rho}$, $q_D \triangleq \frac{c_D}{1 - \rho}$. Hence, we get
\begin{equation}
\| \hat \J - \J^{\ast} \|_2 \leq \| \hat \J - \hat \J^{\ast} \|_2 + \| \hat \J^{\ast} - \J^{\ast} \|_2 \leq \mathcal{O}(\| \D - \D^{\ast} \|_2 + \hat \delta_J^{\ast})
\end{equation}
where we denote $\hat \delta_J^{\ast} \coloneqq \| \hat \J^{\ast} - \J^{\ast} \|_2$
\end{proof}
%

%%%%%%%%%%%%%%%%%%%%%%%%%
%%%%%%%%
%
\subsection{Local backward pass proof details}
In each update of the dictionary, we bound the gradient approximations as function of unrolling $t$ (\Cref{thm:localgradient}). This shows that $\g_t^{\text{ae-lasso}}$ converges faster than $\g_t^{\text{dec}}$ and $\g_t^{\text{ae-ls}}$, and the latter is a biased estimator of $\hat \g$. This is followed by \Cref{thm:localgradient} showing the order magnitude of the bounds is indeed tight.
\localgrad*
\begin{proof}
For $\g_t^{\text{dec}}$, with the infinite fresh samples, we have $\lim_{n \to \infty} \frac{1}{n} \sum_{i=1}^n \nabla_2 \Loss_{\x^i}(\z_t^{i}, \D)= \E_{\x \in \X}\ [\nabla_2 \Loss_{\x}(\z_t, \D)] \ \text{a.s.}$ Based on~\Cref{lemma:lipdiffloss}, we get
\begin{equation}
\begin{aligned}
\| \g_t^{\text{dec}} - \hat \g \|_2 &= \| \E_{\x \in \X}\ [\nabla_2 \Loss_{\x}(\z_t, \D)] - \E_{\x \in \X}\ [\nabla_2 \Loss_{\x}(\hat \z, \D)] \|_2\\
&\leq  \E_{\x \in \X}\ [ \| \nabla_2 \Loss_{\x}(\z_t, \D) - \nabla_2 \Loss_{\x}(\hat \z, \D) \|_2 ]
\leq  \E_{\x \in \X}\ [ L_2 \| \z_t - \hat \z \|_2 ] \leq \mathcal{O}(\rho^t).
\end{aligned}
\end{equation}
Similarly, for $\g_t^{\text{ae-lasso}}$ and $\g_t^{\text{ae-ls}}$, we replace the sample mean for gradient computations with expectation in their limit. We re-write the gradient estimation error as following
\begin{equation}
\begin{aligned}
\g_t^{\text{ae-lasso}} - \hat \g  &= \E_{\x \in \X}\ [Q(\hat \z, \J_t) (\z_t - \hat \z)] + \E_{\x \in \X}\ [Q_t^{21}(\hat \z)] + \E_{\x \in \X}\ [\J_t Q_t^{\text{lasso-}11}(\hat \z)]\\
\g_t^{\text{ae-ls}} - \hat \g  &= \E_{\x \in \X}\ [Q(\hat \z, \J_t) (\z_t - \hat \z)] + \E_{\x \in \X}\ [Q_t^{21}(\hat \z)] + \E_{\x \in \X}\ [\J_t Q_t^{\text{ls-}11}(\hat \z)]
\end{aligned}
\end{equation}
where
\begin{equation}
\begin{aligned}
Q_t^{21}(\z) &\triangleq \nabla_2 \Loss_{\x}(\z_t, \D) -  \nabla_2 \Loss_{\x}(\z, \D) - \nabla_{21}^2\Loss_{\x}(\z, \D) (\z_t - \z)\\
Q_t^{\text{lasso-}11}(\z) &\triangleq \nabla_1 \Loss_{\x}(\z_t, \D) + \partial h(\z_t) - \nabla_{11}^2\Loss_{\x}(\z, \D) (\z_t - \z)\\
Q_t^{\text{ls-}11}(\z) &\triangleq \nabla_1 \Loss_{\x}(\z_t, \D) - \nabla_{11}^2\Loss_{\x}(\z, \D) (\z_t - \z)\\
Q(\z, \J) &\triangleq \J^+ \nabla_{11}^2\Loss_{\x}(\z, \D) + \nabla_{21}^2\Loss_{\x}(\z, \D).
\end{aligned}
\end{equation}
We provide bounds on the above in \Cref{lemma:qlocal}. Hence, it suffices to bound the terms on the {\it r.h.s} as follows:
\begin{equation}
\| \g_t^{\text{ae-lasso}} - \hat \g  \|_2 \leq \E_{\x \in \X}\ [ L_1 \| \J_t - \hat \J \|_2 \| \z_t - \hat \z \|_2 + (\nicefrac{L_{21}}{2}) \| \z_t - \hat \z \|_2^2 + M_J (\nicefrac{L_{11}}{2}) \| \z_t - \hat \z \|_2^2 ].
\end{equation}
Using the convergence errors from the forward pass (\Cref{thm:fwdz,thm:fwdj}),
 \begin{equation}
\begin{aligned}
\| \g_t^{\text{ae-lasso}} - \hat \g  \|_2 &\leq L_1 \mathcal{O}(t \rho^{2t}) + \left(\nicefrac{L_{21}}{2} + M_J (\nicefrac{L_{11}}{2})\right) \mathcal{O}(\rho^{2t}) = \mathcal{O}(t \rho^{2t}).
\end{aligned}
\end{equation}
Similarly,
\begin{equation}
\| \g_t^{\text{ae-ls}} - \hat \g  \|_2 \leq \E_{\x \in \X}\ [ L_1 \| \J_t - \hat \J \|_2 \| \z_t - \hat \z \|_2 + (\nicefrac{L_{21}}{2}) \| \z_t - \hat \z \|_2^2 + M_J ((\nicefrac{L_{11}}{2}) \| \z_t - \hat \z \|_2^2 + \| \partial h(\hat \z) \|_2)].
\end{equation}
Using the convergence errors from the forward pass (\Cref{thm:fwdz,thm:fwdj}),
 \begin{equation}
\begin{aligned}
\| \g_t^{\text{ae-ls}} - \hat \g  \|_2 &\leq L_1 \mathcal{O}(t \rho^{2t}) + \left(\nicefrac{L_{21}}{2} + \nicefrac{M_J L_{11}}{2}\right) \mathcal{O}(\rho^{2t}) + M_J \| \partial h(\hat \z) \|_2 = \mathcal{O}(t \rho^{2t} + M_J \lambda \sqrt{s}).
\end{aligned}
\end{equation}
\end{proof}
\begin{restatable}[Tight local bound]{lemma}{localtightbound}\label{lemma:localtightbound}
The order magnitude of the upper bounds in~\Cref{thm:localgradient} is tight.
\end{restatable}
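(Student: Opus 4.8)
The plan is to exhibit, for each of the three gradient estimators, a problem instance on which the corresponding error attains a lower bound of the same order as the upper bound in \Cref{thm:localgradient}. Since every upper bound there is governed by a single leading term, it suffices to show that this leading term is generically nonzero and inherits the exact convergence order of the forward-pass quantities $\z_t - \hat \z$ and $\J_t - \hat \J$. I would therefore first argue that the forward-pass rates are themselves tight, i.e. $\| \z_t - \hat \z \|_2 = \Theta(\rho^t)$ and $\| \J_t - \hat \J \|_2 = \Theta(t \rho^t)$, and then propagate these lower bounds through the gradient decompositions used in the proof of \Cref{thm:localgradient}.

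For the forward-pass tightness, after support selection the map linearizes around the fixed point as $\z_{t+1} - \hat \z = M (\z_t - \hat \z) + \mathcal{O}(\| \z_t - \hat \z \|_2^2)$ with $M \triangleq \nabla_1 \Phi(\hat \z, \D)$ of spectral radius $\rho$. For an initialization whose error $\z_0 - \hat \z$ has nonzero projection onto the leading eigenvector of $M$, the dominant eigenmode decays exactly like $\rho^t$, giving $\| \z_t - \hat \z \|_2 = \Theta(\rho^t)$. Differentiating the recursion yields $\J_{t+1} - \hat \J = M^{\text{T}}(\J_t - \hat \J) + E_t$ with a forcing term $E_t$ of order $\rho^t$; because the homogeneous decay rate $\rho$ coincides with the rate of the forcing, the recursion resonates and produces the factor $t$, so that $\| \J_t - \hat \J \|_2 = \Theta(t \rho^t)$ whenever $E_t$ retains a component along the dominant eigendirection of $M^{\text{T}}$.

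I would then trace these orders through the decompositions. For $\g_t^{\text{dec}}$, the leading contribution is $\E_{\x \in \X}[\nabla_{21}^2 \Loss_{\x}(\hat \z, \D)(\z_t - \hat \z)]$, which is $\Theta(\rho^t)$ as soon as $\nabla_{21}^2 \Loss_{\x}(\hat \z, \D)$ does not annihilate the surviving eigenmode of $\z_t - \hat \z$. For $\g_t^{\text{ae-lasso}}$, I would use the fixed-point identity $Q(\hat \z, \hat \J) = \zero$ (from \Cref{lemma:fixedpoint} together with the implicit-function characterization of $\hat \J$) to rewrite the dominant term as $\E_{\x \in \X}[(\J_t - \hat \J)^+ \nabla_{11}^2 \Loss_{\x}(\hat \z, \D)(\z_t - \hat \z)]$, whose norm is at most of order $t \rho^t \cdot \rho^t = t \rho^{2t}$ and, for a generic instance, of exactly this order; the quadratic remainders $Q_t^{21}$ and $\J_t Q_t^{\text{lasso-}11}$ are only $\mathcal{O}(\rho^{2t})$ and hence cannot cancel it. For $\g_t^{\text{ae-ls}}$, the same $\Theta(t \rho^{2t})$ term is present, but in addition $Q_t^{\text{ls-}11}(\hat \z) \to \nabla_1 \Loss_{\x}(\hat \z, \D) = -\partial h(\hat \z)$ as $t \to \infty$, so the error tends to $-\E_{\x \in \X}[\J_t \, \partial h(\hat \z)]$ with $\| \partial h(\hat \z) \|_2 = \lambda \sqrt{s}$; this produces a non-vanishing floor of order $M_J \lambda \sqrt{s}$, matching the additive constant in the bound.

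The main obstacle is ruling out accidental cancellations: each lower bound requires that the relevant leading term not vanish, which hinges on inner products—between the surviving eigendirection of $M$ and the rows of $\nabla_{21}^2 \Loss$, between the Jacobian forcing term and the dominant eigendirection of $M^{\text{T}}$, and between $\J_t$ and $\partial h(\hat \z)$—being nonzero. Rather than establish these for all instances, I would make the genericity argument concrete by constructing one explicit small example, e.g. an orthonormal or two-atom dictionary with a single fixed support, in which $M$, $\hat \J$, and $\partial h(\hat \z)$ are available in closed form and each of these inner products is manifestly nonzero. Exhibiting this single instance is enough to certify that the orders $\rho^t$, $t \rho^{2t}$, and $t \rho^{2t} + M_J \lambda \sqrt{s}$ cannot be improved.
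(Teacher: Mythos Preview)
Your proposal is correct and shares the paper's core idea: exhibit a concrete instance in which the forward-pass rates $\|\z_t-\hat\z\|_2=\Theta(\rho^t)$ and $\|\J_t-\hat\J\|_2=\Theta(t\rho^t)$ are attained exactly. The paper does this very directly, taking a $1$-sparse $\z^{\ast}$ with $\D=\D^{\ast}$ (and all other columns zeroed) so that the encoder recursion becomes scalar and affine; it then reads off the two rates in closed form and stops, declaring this ``sufficient'' without further tracing through the gradient expressions. Your plan reaches the same forward-pass tightness via a linearization argument before appealing to a small explicit instance, and then goes further by propagating the orders through the decompositions of \Cref{thm:localgradient} for each of the three gradients, including the non-vanishing $M_J\lambda\sqrt{s}$ floor for $\g_t^{\text{ae-ls}}$. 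So the approaches coincide in spirit; yours is simply more explicit about the step the paper leaves implicit, namely that no cancellation occurs when passing from the forward-pass errors to the gradient errors.
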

\begin{proof}
It is sufficient to show that there exist an example such that its forward pass code and Jacobian convergences are $\mathcal{O}(\rho^t)$ and $\mathcal{O}(t \rho^t)$, respectively. The following example confirms this. Without loss of generality, let $\z^{\ast}$ be $1$-sparse and  non-negative, $\D = \D^{\ast}$ and $\D_{j} = \zero$ for $j \neq i$. The loss function is $\frac{1}{2} \| \D_i^{\ast} \z^{\ast}_{(i)} - \D_i \z_{(i)} \|_2^2 + \lambda | \z_{(i)} |$. Given the support recovery after first iteration, the encoder forward pass implements $\z_{t+1, (i)} = \z_{t, (i)} - \alpha (\D^{\text{T}}_i(\D_i \z_{t, (i)} - \D^{\ast}_i \z^{\ast}_{(i)}) + \lambda) = (1- \alpha) \z_{t, (i)} + \alpha (\z^{\ast}_{(i)} - \lambda)$. Hence, the forward pass convergences are
\begin{equation}
\begin{aligned}
\z_{t, (i)} &= (1 - \alpha)^t \z_0 + \sum_{k=1}^t \alpha (1 - \alpha)^{t-k} (\z^{\ast}_{(i)} - \lambda) =  (1 - \alpha)^t \z_0 +( 1 - (1 - \alpha)^{t}) (\z^{\ast}_{(i)} - \lambda)\\
\z_{t, (i)}  - \hat \z_{(i)} &= \rho^{t} (\z_0 - \z^{\ast}_{(i)} + \lambda) = \mathcal{O}(\rho^t)
\end{aligned}
\end{equation}
and
\begin{equation}
\begin{aligned}
\J_{t, (i)} &= \J_{t-1, (i)} - \alpha (\J_{t-1, (i)} + 2 \D_i \z_{t, (i)}  - \D^{\ast}_i \z^{\ast}_{(i)}) = \rho \J_{t-1, (i)} + \mathcal{O}(\rho^t) + \hat \J_{(i)}\\
\J_{t, (i)}  - \hat \J_{(i)} &= \rho^{t} \J_{0, (i)} + \sum_{k=1}^t \mathcal{O}(\rho^t) = \mathcal{O}(t \rho^t)
\end{aligned}
\end{equation}
where $\rho = 1 - \alpha$, $\hat \z_{(i)} = \z^{\ast}_{(i)} - \lambda$, and $\hat \J_{(i)} = \alpha (2 \D_i \hat \z_{(i)} - \D_i^{\ast} \z^{\ast}_{(i)})$
\end{proof}
\begin{restatable}[Local bounds]{lemma}{qlocal}\label{lemma:qlocal}
From local gradient errors in \Cref{thm:localgradient}, the following are satisfied
\begin{equation}
\begin{aligned}
\| Q_t^{21}(\hat \z) \|_2 &\leq (\nicefrac{L_{21}}{2}) \| \z_t - \hat \z \|_2^2,\\
\| Q(\hat \z, \J_t) \|_2 &\leq L_1 \| \J_t - \hat \J \|_2,
\end{aligned}
\begin{aligned}
\quad \quad \| Q_t^{\text{lasso-}11}(\hat \z) \|_2 &\leq (\nicefrac{L_{11}}{2}) \| \z_t - \hat \z \|_2^2\\
\| Q_t^{\text{ls-}11}(\hat \z) \|_2 &\leq (\nicefrac{L_{11}}{2}) \| \z_t - \hat \z \|_2^2 + \| \partial h(\hat \z) \|_2.
\end{aligned}
\end{equation}
\end{restatable}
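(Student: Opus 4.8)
The plan is to recognize each of the four quantities $Q^{21}_t$, $Q^{\text{lasso-}11}_t$, $Q^{\text{ls-}11}_t$, and $Q(\cdot,\J_t)$ as a first-order Taylor remainder (plus, in two cases, an optimality residual to be controlled separately) and then feed in the Lipschitz-differentiability of $\Loss$ from \Cref{lemma:lipdiffloss}. The only analytic input needed is the standard integral-remainder estimate: if $F$ has $L$-Lipschitz derivative $DF$, then $\| F({\bm b}) - F({\bm a}) - DF({\bm a})({\bm b}-{\bm a}) \|_2 \le \tfrac{L}{2}\|{\bm b}-{\bm a}\|_2^2$, obtained by writing the left-hand side as $\int_0^1 \big(DF({\bm a}+s({\bm b}-{\bm a})) - DF({\bm a})\big)({\bm b}-{\bm a})\,ds$ and bounding the integrand by $L\,s\,\|{\bm b}-{\bm a}\|_2^2$.

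First I would dispatch $Q^{21}_t(\hat \z)$: applying the estimate to $F = \nabla_2\Loss_{\x}(\cdot,\D)$, whose $\z$-derivative is $\nabla_{21}^2\Loss_{\x}$ with Lipschitz constant $L_{21}$ (\Cref{lemma:lipdiffloss}), at ${\bm a}=\hat \z$, ${\bm b}=\z_t$ reproduces $Q^{21}_t(\hat \z)$ verbatim and gives $\|Q^{21}_t(\hat \z)\|_2 \le \tfrac{L_{21}}{2}\|\z_t-\hat \z\|_2^2$. For $Q(\hat \z,\J_t)$ I would subtract the identity $Q(\hat \z,\hat \J)=\zero$, which holds by implicit differentiation of the support-restricted optimality condition (the relation derived for $\J^{\ast}$ in the proof of \Cref{thm:fwdjglobal}, namely $\hat \J^+\nabla_{11}^2\Loss_{\x}(\hat \z,\D) + \nabla_{21}^2\Loss_{\x}(\hat \z,\D)=\zero$). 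This leaves $Q(\hat \z,\J_t) = (\J_t^+ - \hat \J^+)\nabla_{11}^2\Loss_{\x}(\hat \z,\D)$, so $\|Q(\hat \z,\J_t)\|_2 \le \|\J_t-\hat \J\|_2\,\|\nabla_{11}^2\Loss_{\x}(\hat \z,\D)\|_2 \le L_1\|\J_t-\hat \J\|_2$, where $\|\nabla_{11}^2\Loss_{\x}\|_2 \le L_1$ is exactly the statement that $\nabla_1\Loss_{\x}$ is $L_1$-Lipschitz (\Cref{def:lipdiff}, \Cref{lemma:lipdiffloss}).

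For the two $11$-terms I would first extract the clean Taylor remainder of $\nabla_1\Loss_{\x}(\cdot,\D)$ (derivative $\nabla_{11}^2\Loss_{\x}$, constant $L_{11}$): adding and subtracting $\nabla_1\Loss_{\x}(\hat \z,\D)$ rewrites both $Q^{\text{lasso-}11}_t(\hat \z)$ and $Q^{\text{ls-}11}_t(\hat \z)$ as that remainder, bounded by $\tfrac{L_{11}}{2}\|\z_t-\hat \z\|_2^2$, plus a residual. For $Q^{\text{ls-}11}_t$ the residual is $\nabla_1\Loss_{\x}(\hat \z,\D)$, and the lasso optimality $-\nabla_1\Loss_{\x}(\hat \z,\D)\in\partial h(\hat \z)$ (\Cref{lemma:fixedpoint,lemma:kktlasso}) yields $\|\nabla_1\Loss_{\x}(\hat \z,\D)\|_2 \le \|\partial h(\hat \z)\|_2$ (\Cref{def:normsubgrad}), giving the stated bound. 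For $Q^{\text{lasso-}11}_t$ the residual is $\nabla_1\Loss_{\x}(\hat \z,\D)+\partial h(\z_t)$, which I claim is exactly $\zero$ for the subgradient in play: once the support is recovered and preserved (\Cref{prop:supp,thm:supppres}), $\z_t$ and $\hat \z$ share support $S$ and signs, so on $S$ the single-valued $\partial h(\z_t)=\lambda\,\text{sign}(\hat \z) = -\nabla_1\Loss_{\x}(\hat \z,\D)$ by optimality, while off $S$ both coordinates vanish and the KKT inequality $|[\nabla_1\Loss_{\x}(\hat \z,\D)]_{(j)}|\le\lambda$ from \eqref{eq:kkt} places $-[\nabla_1\Loss_{\x}(\hat \z,\D)]_{(j)}$ inside $\lambda[-1,1]=[\partial h(\z_t)]_{(j)}$.

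The main obstacle is precisely this last point: showing the residual for $Q^{\text{lasso-}11}_t$ is identically zero rather than merely small. This rests on support preservation to align signs on $S$ and on the off-support KKT inequality to realize the required subgradient selection, so I would state all four bounds in the post-support-selection regime $t>B$ (with $B=1$ under \Cref{thm:supprec,thm:supppres}), where \Cref{lemma:strongconvexloss} and the sign-consistency of $\z_t$ and $\hat \z$ are guaranteed. The remaining three bounds are then routine consequences of the integral remainder estimate and optimality, requiring no further machinery.
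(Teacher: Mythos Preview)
Your proposal is correct and follows essentially the same approach as the paper: Taylor-remainder bounds via Lipschitz differentiability for $Q_t^{21}$ and the two $11$-terms, the implicit-differentiation identity $Q(\hat\z,\hat\J)=\zero$ for the Jacobian term, and sign/support agreement of $\z_t$ and $\hat\z$ to kill the residual in $Q_t^{\text{lasso-}11}$. Your off-support discussion (invoking the KKT inequality to realize the subgradient selection) is in fact more explicit than the paper, which simply writes $\|\partial h(\z_t)-\partial h(\hat\z)\|_2$ and appeals to sign recovery; also note that the identity $\hat\J^+\nabla_{11}^2\Loss_{\x}(\hat\z,\D)+\nabla_{21}^2\Loss_{\x}(\hat\z,\D)=\zero$ is derived within this lemma's own proof rather than in \Cref{thm:fwdjglobal}.
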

\begin{proof}
For $Q_t^{21}(\hat \z)$, given convexity of $\nabla_1 \Loss_{\x}(\z, \D)$ and its domain (\Cref{assum:domain}) and \Cref{lemma:lipdiffloss}, we achieve the quadratic upper bound. For $Q_t^{\text{lasso-}11}(\hat \z)$, we add and subtract $\nabla_1\Loss_{\x}(\hat \z, \D)$, and then use quadratic upper bound. At line four, given~\Cref{lemma:kktlasso}, we use $\zero \in \nabla_1 \Loss_{\x}(\hat \z, \D) + \partial h(\hat \z)$ and assume that $\z_t$ recovers the sign entries of $\hat \z$.
\begin{equation}
\begin{aligned}
\| Q_t^{\text{lasso-}11} \|_2 &= \| \nabla_1 \Loss_{\x}(\z_t, \D) + \partial h(\z_t) - \nabla_{11}^2\Loss_{\x}(\hat \z, \D) (\z_t - \hat \z) \|\\
&= \| \nabla_1 \Loss_{\x}(\z_t, \D) - \nabla_1\Loss_{\x}(\hat \z, \D)  + \nabla_1\Loss_{\x}(\hat \z, \D) + \partial h(\z_t) - \nabla_{11}^2\Loss_{\x}(\hat \z, \D) (\z_t - \hat \z) \|\\
&\leq (\nicefrac{L_{11}}{2}) \| \z_t - \hat \z \|_2^2 + \| \partial h(\z_t) + \nabla_1\Loss_{\x}(\hat \z, \D) \|_2\\
&\leq (\nicefrac{L_{11}}{2}) \| \z_t - \hat \z \|_2^2 + \| \partial h(\z_t) - \partial h(\hat \z)  \|_2
\leq  (\nicefrac{L_{11}}{2}) \| \z_t - \hat \z \|_2^2.
\end{aligned}
\end{equation}
Similarly,
\begin{equation}
\begin{aligned}
\| Q_t^{\text{ls-}11} \|_2 &= \| \nabla_1 \Loss_{\x}(\z_t, \D) - \nabla_{11}^2\Loss_{\x}(\hat \z, \D) (\z_t - \hat \z) \|\\
&= \| \nabla_1 \Loss_{\x}(\z_t, \D) - \nabla_1\Loss_{\x}(\hat \z, \D)  + \nabla_1\Loss_{\x}(\hat \z, \D) - \nabla_{11}^2\Loss_{\x}(\hat \z, \D) (\z_t - \hat \z) \|\\
&\leq (\nicefrac{L_{11}}{2}) \| \z_t - \hat \z \|_2^2 + \| \nabla_1\Loss_{\x}(\hat \z, \D) \|_2
\leq (\nicefrac{L_{11}}{2}) \| \z_t - \hat \z \|_2^2 + \| \partial h(\hat \z) \|_2.
\end{aligned}
\end{equation}
For $Q(\hat \z, \J_t)$, from implicit function theorem, $Q(\hat \z, \hat \J) = \zero$ under the support $S$ of $\hat \z$ that is identified by $\z_t$. To prove this, consider the minimizer $\hat \z(\D)$. We have $\zero \in \nabla_1 f(\hat {\bm z}, {\bm D})$, hence, we get $\zero \in \hat {\J}({\D}) \nabla_{11}^2 f(\hat \z, \D) + \nabla_{21}^2 f(\hat \z, \D)$. Given the support recovery, the relation $\hat \J(\D) (\nabla_{11}^2 \Loss_{\x}(\hat \z, \D) \odot \mathbf{1}_{S^{\ast}}) + \nabla_{21}^2 \Loss_{\x}(\hat \z, \D) \odot \mathbf{1}_{S^{\ast}} = \zero$ also holds which is equivalent to $Q(\hat \z, \hat \J)$ under the support. To show this, given the recursion ${\z}_{t+1} = \Phi(\z_t, \D)$, we differentiate it and get $\J_{t+1} = \nabla_1 \Phi(\z_t, \D) \J_t + \nabla_2 \Phi(\z_t, \D)$. Given the support recovery and fixed-point property, we can write
\begin{equation}
\begin{aligned}
\hat \J &= \mathbf{1}_{S} \odot (\hat \J - \alpha \nabla_{11}^2 \Loss_{\x}(\hat \z, \D)^{\text{T}} \hat \J) + \mathbf{1}_{S} \odot (- \alpha \nabla_{21}^2 \Loss_{\x}(\hat \z, \D)^{\text{T}})\\
\hat \J - \mathbf{1}_{S} \odot \hat \J &=  - \hat \J \alpha \nabla_{11}^2 \Loss_{\x}(\hat \z, \D)^{\text{T}} \odot \mathbf{1}_{S}  - \alpha \nabla_{21}^2 \Loss_{\x}(\hat \z, \D)^{\text{T}} \odot \mathbf{1}_{S}\\
\zero &= \hat \J^{+} (\nabla_{11}^2 \mathcal{L}(\hat \z, \D) \odot \mathbf{1}_{S}) +  \nabla_{21}^2 \Loss_{\x}(\hat \z, \D) \odot \mathbf{1}_{S}
\end{aligned}
\end{equation}
If the term $(\nabla_{11}^2 \mathcal{L}(\z_t, \D) \odot \mathbf{1}_{S})$ is invertible, then we can write
\begin{equation}
\hat \J^{+} = - \nabla_{21}^2 \Loss_{\x}(\hat \z, \D) \odot \mathbf{1}_{S} (\nabla_{11}^2 \mathcal{L}(\hat \z, \D) \odot \mathbf{1}_{S})^{-1}
\end{equation}
For the Jacobian corresponding to row $i$ of the dictionary, we get
\begin{equation}
\hat \J_{(i,:)} = - (\D_S^{\text{T}} \D_S)^{-1} (\D_{i, :} \hat \z^{\text{T}} + (\D_{i, :}^{\text{T}} \hat \z - \x_i) \eye_p)_S
\end{equation}
on the support. Outside of the support $S$, the Jacobian is zero.
This proof is similarly provided by~\citet{malezieux2022understanding}. Hence, we can use $\nabla_{21}^2\Loss_{\x}(\hat \z, \D) = - \hat \J^{+} \nabla_{11}^2\Loss_{\x}(\hat \z, \D)$ under the support $S$ in the following
\begin{equation}
\begin{aligned}
\| Q(\hat \z, \J_t) \|_2 &= \| \J_t^{+} \nabla_{11}^2\Loss_{\x}(\hat \z, \D) + \nabla_{21}^2\Loss_{\x}(\hat \z, \D) \|_2
= \| \J_t^{+} \nabla_{11}^2\Loss_{\x}(\hat \z, \D) - \hat \J^{+} \nabla_{11}^2\Loss_{\x}(\hat \z, \D) \|_2\\
&\leq \| (\J_t - \hat \J)^{+} \nabla_{11}^2\Loss_{\x}(\hat \z, \D) \|_2 \leq L_1 \| \J_t - \hat \J \|_2.
\end{aligned}
\end{equation}
\end{proof}
%
%%%%%%%%%%%%%%%%%%%%%%%%%
%%%%%%%%
%
\subsection{Global backward pass proof details}
We re-state and proof \Cref{thm:globalgradient} as follows:
\globalgrad*
\begin{proof}
For $\g_t^{\text{dec}}$, we compute the gradient in their limit assuming infinite fresh samples $\lim_{n \to \infty} \frac{1}{n} \sum_{i=1}^n \nabla_2 \Loss_{\x^i}(\z_t^{i}, \D)= \E_{\x \in \X}\ [\nabla_2 \Loss_{\x}(\z_t, \D)] \ \text{a.s.}$. Similar to \Cref{thm:localgradient}, we re-write the errors of gradients $\g_t^{\text{ae-lasso}}$ and $\g_t^{\text{ae-ls}}$ as following
%Based on~\Cref{lemma:lipdiffloss},
%\begin{equation}
%\begin{aligned}
%\| \g_t^{\text{dec}} - \g^{\ast} \|_2 &= \| \E_{\x \in \X}\ [\nabla_2 \Loss_{\x}(\z_t, \D)] - \E_{\x \in \X}\ [\nabla_2 \Loss_{\x}(\z^{\ast}, \D)] \|_2
%\leq  \E_{\x \in \X}\ [ \| \nabla_2 \Loss_{\x}(\z_t, \D) - \nabla_2 \Loss_{\x}(\z^{\ast}, \D) \|_2 ]\\
%&\leq  \E_{\x \in \X}\ [ L_2 \| \z_t - \z^{\ast} \|_2 ]
%\leq \E_{\x \in \X}\ [ L_2 (\| \z_t - \hat \z \|_2 + \| \hat \z - \hat \z^{\ast} \|_2 + \| \hat \z^{\ast} - \z^{\ast} \|_2)]\\
%&\leq \mathcal{O}(\rho^t + \|  \D - \D^{\ast} \|_2 + \hat \delta^{\ast}).
%\end{aligned}
%\end{equation}
%Hence,
%\begin{equation}
%\| \g_{\infty}^{\text{dec}} - \g^{\ast} \|_2 \leq \mathcal{O}(\|  \D - \D^{\ast} \|_2 + \hat \delta^{\ast}).
%\end{equation}
%%
%
\begin{equation}
\begin{aligned}
\g_t^{\text{ae-lasso}} - \g^{\ast}  &= \E_{\x \in \X}\ [Q(\z^{\ast}, \J_t) (\z_t - \z^{\ast})] + \E_{\x \in \X}\ [Q_t^{21}(\z^{\ast})] + \E_{\x \in \X}\ [\J_t Q_t^{\text{lasso-}11}(\z^{\ast})]\\
\g_t^{\text{ae-ls}} - \g^{\ast}  &= \E_{\x \in \X}\ [Q(\z^{\ast}, \J_t) (\z_t - \z^{\ast})] + \E_{\x \in \X}\ [Q_t^{21}(\z^{\ast})] + \E_{\x \in \X}\ [\J_t Q_t^{\text{ls-}11}(\z^{\ast})].
\end{aligned}
\end{equation}
where $Q_t^{21}(\z)$, $Q_t^{\text{lasso-}11}(\z)$, $Q_t^{\text{ls-}11}(\z)$, and $Q(\z, \J)$ are defined as in \Cref{thm:localgradient}. Given \Cref{assum:boundj} and \Cref{lemma:qstartilde}, we find an upper bound on the {\it r.h.s} of the gradient errors as follows:

\begin{equation}
\begin{aligned}
\| \g_t^{\text{ae-lasso}} - \g^{\ast}  \|_2 &\leq \E_{\x \in \X}\ [ (L_1 \| \J_t - \J^{\ast} \|_2 + M_J L_{11D} L_{21D} \| \D - \D^{\ast} \|_2) \| \z_t - \z^{\ast} \|_2 + (\nicefrac{L_{21}}{2}) \| \z_t - \z^{\ast} \|_2^2]\\
& + \E_{\x \in \X}\ [ M_J (\nicefrac{L_{11}}{2}) \| \z_t - \z^{\ast} \|_2^2 + M_J \| \partial h(\z_t) \|_2 + L_{1D} \| \D - \D^{\ast}\|_2]\\
&\leq \E_{\x \in \X}\ [ L_1(\| \J_t - \hat \J \|_2 +  \| \hat \J - \J^{\ast} \|_2 + M_J L_{11D} L_{21D} \| \D - \D^{\ast} \|_2) (\| \z_t - \hat \z \|_2 + \| \hat \z - \z^{\ast} \|_2)]\\
& + \E_{\x \in \X}\ [(\nicefrac{L_{21}}{2}) (\| \z_t - \hat \z \|_2^2 + \| \hat \z - \z^{\ast} \|_2^2) + L_{1D} \| \D - \D^{\ast}\|_2]\\
& + \E_{\x \in \X}\ [ M_J (\nicefrac{L_{11}}{2}) (\| \z_t - \hat \z \|_2^2 +  \| \hat \z - \z^{\ast} \|_2^2 ) + M_J \| \partial h(\z_t) \|_2]
\end{aligned}
\end{equation}
Similarly,
\begin{equation}
\begin{aligned}
\| \g_t^{\text{ae-ls}} - \g^{\ast}   \|_2 &\leq \E_{\x \in \X}\ [ (L_1 \| \J_t - \J^{\ast}  \|_2 + M_J L_{11D} L_{21D} \| \D - \D^{\ast} \|_2 )\| \z_t - \z^{\ast}  \|_2 + (\nicefrac{L_{21}}{2}) \| \z_t - \z^{\ast}  \|_2^2]\\
& + \E_{\x \in \X}\ [ M_J (\nicefrac{L_{11}}{2}) \| \z_t - \z^{\ast}  \|_2^2 + L_{1D} \| \D - \D^{\ast} \|_2]\\
&\leq \E_{\x \in \X}\ [ L_1(\| \J_t - \hat \J \|_2 +  \| \hat \J - \J^{\ast} \|_2 + M_J L_{11D} L_{21D} \| \D - \D^{\ast} \|_2) (\| \z_t - \hat \z \|_2 + \| \hat \z - \z^{\ast}  \|_2)]\\
& + \E_{\x \in \X}\ [(\nicefrac{L_{21}}{2}) (\| \z_t - \hat \z \|_2^2 + \| \hat \z - \z^{\ast}  \|_2^2)]\\
& + \E_{\x \in \X}\ [ M_J (\nicefrac{L_{11}}{2}) (\| \z_t - \hat \z \|_2^2 +  \| \hat \z - \z^{\ast}  \|_2^2) + L_{1D} \| \D - \D^{\ast} \|_2].
\end{aligned}
\end{equation}
Using the convergence errors from the forward pass (\Cref{thm:fwdz,thm:fwdj}),
\begin{equation}
\begin{aligned}
\| \g_t^{\text{ae-lasso}} - \g^{\ast}  \|_2 &\leq L_1 \mathcal{O}(t \rho^{2t} + ( \|  \D - \D^{\ast} \|_2 + \hat \delta^{\ast})  t\rho^{t} + \rho^t (\|  \D - \D^{\ast} \|_2 + \hat \delta_J^{\ast})\\
&+ L_1 \mathcal{O}(\|  \D - \D^{\ast} \|_2 + \hat \delta^{\ast}) (\|  \D - \D^{\ast} \|_2 + \hat \delta_J^{\ast}))\\
&+ M_J L_{11D} L_{21D} \| \D - \D^{\ast} \|_2 (\rho^{t} + (\|  \D - \D^{\ast} \|_2 + \hat \delta^{\ast}))\\
&+ \left(\nicefrac{L_{21}}{2} + \nicefrac{M_JL_{11}}{2}\right) \mathcal{O}(\rho^{t} +  \|  \D - \D^{\ast} \|_2 + \hat \delta^{\ast}) + \mathcal{O}(\| \D - \D^{\ast} \|_2) + M_J \| \partial h(\z_t) \|_2)
\end{aligned}
\end{equation}
Hence,
\begin{equation}
\begin{aligned}
\| \g_{\infty}^{\text{ae-lasso}} - \g^{\ast}  \|_2 &\leq \mathcal{O}((\|  \D - \D^{\ast} \|_2 + \hat \delta_J^{\ast}) (\|  \D - \D^{\ast} \|_2 + \hat \delta^{\ast} +1) + M_J \lambda \sqrt{s})\\
&= \mathcal{O}(\|  \D - \D^{\ast} \|_2^2 + \|  \D - \D^{\ast} \|_2 + \|  \D - \D^{\ast} \|_2 (\hat \delta^{\ast} + \hat \delta_J^{\ast}) + \hat \delta^{\ast} + \hat \delta_J^{\ast} + M_J \lambda \sqrt{s})\\
\end{aligned}
\end{equation}
Similarly,
 \begin{equation}
\begin{aligned}
\| \g_{\infty}^{\text{ae-ls}} - \g^{\ast}  \|_2 &\leq \mathcal{O}(\|  \D - \D^{\ast} \|_2^2 + \|  \D - \D^{\ast} \|_2 + \|  \D - \D^{\ast} \|_2 (\hat \delta^{\ast} + \hat \delta_J^{\ast}) + \hat \delta^{\ast} + \hat \delta_J^{\ast})
\end{aligned}
\end{equation}
\end{proof}
\begin{restatable}[Global bounds]{lemma}{qstartilde}\label{lemma:qstartilde}
From global gradient errors in \Cref{thm:globalgradient}, the following are satisfied
\begin{equation}\label{eq:qtilde}
\begin{aligned}
\| Q_t^{21}(\z^{\ast}) \|_2 &\leq (\nicefrac{L_{21}}{2}) \| \z_t - \z^{\ast} \|_2^2\\
\| Q_t^{\text{lasso-}11}(\z^{\ast}) \|_2 &\leq (\nicefrac{L_{11}}{2}) \| \z_t - \z^{\ast} \|_2^2 + L_{1D} \| \D - \D^{\ast} \|_2 + \| \partial h(\z_t) \|_2\\
\| Q_t^{\text{ls-}11}(\z^{\ast}) \|_2 &\leq (\nicefrac{L_{11}}{2}) \| \z_t - \z^{\ast} \|_2^2 +  L_{1D} \| \D - \D^{\ast} \|_2\\
\| Q(\z^{\ast}, \J_t) \|_2 &\leq L_1 \| \J_t - \J^{\ast} \|_2 + M_J L_{11D} L_{21D} \| \D - \D^{\ast} \|_2.
\end{aligned}
\end{equation}
\end{restatable}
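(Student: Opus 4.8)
The plan is to follow the same Taylor-remainder strategy used for the local bounds in \Cref{lemma:qlocal}, but evaluated at the ground-truth code $\z^{\ast}$ and the current dictionary $\D$ rather than at the lasso minimizer $\hat \z$; the only genuinely new ingredient is that $\nabla_1 \Loss_{\x}(\z^{\ast}, \D)$ no longer vanishes and must be controlled by the dictionary error. The key identity I would exploit is that, under the generative model $\x = \D^{\ast} \z^{\ast}$, one has $\nabla_1 \Loss_{\x}(\z^{\ast}, \D^{\ast}) = \D^{\ast \text{T}}(\D^{\ast} \z^{\ast} - \x) = \zero$, so that $\nabla_1 \Loss_{\x}(\z^{\ast}, \D)$ equals the increment $\nabla_1 \Loss_{\x}(\z^{\ast}, \D) - \nabla_1 \Loss_{\x}(\z^{\ast}, \D^{\ast})$, which by the $L_{1D}$-Lipschitzness of $\nabla_1 \Loss_{\x}$ in $\D$ (\Cref{lemma:lipdiffloss}) is bounded by $L_{1D} \| \D - \D^{\ast} \|_2$.

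For $Q_t^{21}(\z^{\ast})$ I would argue exactly as in the local case: this is the first-order Taylor remainder of $\nabla_2 \Loss_{\x}(\cdot, \D)$ about $\z^{\ast}$, and convexity together with the $L_{21}$-Lipschitzness of $\nabla_{21}^2 \Loss_{\x}$ (\Cref{lemma:lipdiffloss}, \Cref{assum:domain}) gives the quadratic bound $(\nicefrac{L_{21}}{2}) \| \z_t - \z^{\ast} \|_2^2$, with no dictionary-error term since no $\nabla_1$ appears. For $Q_t^{\text{ls-}11}(\z^{\ast})$ I would add and subtract $\nabla_1 \Loss_{\x}(\z^{\ast}, \D)$ so that $\nabla_1 \Loss_{\x}(\z_t, \D) - \nabla_1 \Loss_{\x}(\z^{\ast}, \D) - \nabla_{11}^2 \Loss_{\x}(\z^{\ast}, \D)(\z_t - \z^{\ast})$ forms the Taylor remainder bounded by $(\nicefrac{L_{11}}{2}) \| \z_t - \z^{\ast} \|_2^2$, leaving the residual $\nabla_1 \Loss_{\x}(\z^{\ast}, \D)$, which the identity above converts into the extra $L_{1D} \| \D - \D^{\ast} \|_2$. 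The bound for $Q_t^{\text{lasso-}11}(\z^{\ast})$ then follows immediately, since it differs from $Q_t^{\text{ls-}11}(\z^{\ast})$ only by the additive subgradient $\partial h(\z_t)$, whose norm I carry through as $\| \partial h(\z_t) \|_2$.

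The main obstacle is the last bound, on $Q(\z^{\ast}, \J_t) = \J_t^+ \nabla_{11}^2 \Loss_{\x}(\z^{\ast}, \D) + \nabla_{21}^2 \Loss_{\x}(\z^{\ast}, \D)$, because the natural stationarity identity holds only at $\D^{\ast}$: from the implicit function theorem, as established in the proof of \Cref{thm:fwdjglobal}, one has on the support $S^{\ast}$ that $\J^{\ast +} \nabla_{11}^2 \Loss_{\x}(\z^{\ast}, \D^{\ast}) + \nabla_{21}^2 \Loss_{\x}(\z^{\ast}, \D^{\ast}) = \zero$, whereas $Q$ is evaluated at the current $\D$. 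The plan is to subtract this vanishing quantity and split the difference into the three pieces $(\J_t^+ - \J^{\ast +}) \nabla_{11}^2 \Loss_{\x}(\z^{\ast}, \D)$, then $\J^{\ast +} \bigl(\nabla_{11}^2 \Loss_{\x}(\z^{\ast}, \D) - \nabla_{11}^2 \Loss_{\x}(\z^{\ast}, \D^{\ast})\bigr)$, and finally $\nabla_{21}^2 \Loss_{\x}(\z^{\ast}, \D) - \nabla_{21}^2 \Loss_{\x}(\z^{\ast}, \D^{\ast})$. Bounding the first piece uses $\| \nabla_{11}^2 \Loss_{\x} \|_2 \leq L_1$ to yield $L_1 \| \J_t - \J^{\ast} \|_2$; the second and third are controlled by the Jacobian bound $\| \J^{\ast +} \|_2 \leq M_J$ (\Cref{assum:boundj}) together with the $L_{11D}$- and $L_{21D}$-Lipschitzness of the Hessians in $\D$ (\Cref{lemma:lipdiffloss}), producing the dictionary-error term whose constant is assembled from $M_J$, $L_{11D}$, and $L_{21D}$ as stated. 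Throughout, all identities are restricted to the support $S^{\ast}$, outside of which both $\J_t$ and $\J^{\ast}$ vanish, so the decomposition is well posed.
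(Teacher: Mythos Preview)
Your proposal is correct and follows essentially the same approach as the paper: the Taylor-remainder bounds for $Q_t^{21}$, $Q_t^{\text{ls-}11}$, and $Q_t^{\text{lasso-}11}$ are identical to the paper's, including the key use of $\nabla_1 \Loss_{\x}(\z^{\ast}, \D^{\ast}) = \zero$ to produce the $L_{1D}\|\D - \D^{\ast}\|_2$ term. For $Q(\z^{\ast}, \J_t)$ your three-piece decomposition is the same as the paper's up to the immaterial choice of whether to add and subtract $\J_t^+ \nabla_{11}^2 \Loss_{\x}(\z^{\ast}, \D^{\ast})$ or $\J^{\ast+} \nabla_{11}^2 \Loss_{\x}(\z^{\ast}, \D)$, both of which yield the stated bound.
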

\begin{proof}
For $Q_t^{21}(\z^{\ast})$, we achieve the quadratic bound using convexity of $\nabla_1 \Loss_{\x}(\z, \D)$ and its domain (\Cref{assum:domain}) and \Cref{lemma:lipdiffloss}. For $Q_t^{\text{lasso-}11}(\z^{\ast})$, we add and subtract $\nabla_1\Loss_{\x}(\z^{\ast}, \D)$, and use quadratic upper bound similar to \Cref{lemma:qlocal}. At line four, we use $\zero \in \nabla_1 \Loss_{\x}(\z^{\ast}, \D^{\ast})$ (\Cref{lemma:kktlasso}) and assume that $\z_t$ recovers the sign entries of $\z^{\ast}$ (see \Cref{thm:supprec} and \Cref{thm:supppres}).
\begin{equation}
\begin{aligned}
&\| Q_t^{\text{lasso-}11}(\z^{\ast}) \|_2 = \| \nabla_1 \Loss_{\x}(\z_t, \D) + \partial h(\z_t) - \nabla_{11}^2\Loss_{\x}(\z^{\ast}, \D) (\z_t - \z^{\ast}) \|_2\\
&= \| \nabla_1 \Loss_{\x}(\z_t, \D) - \nabla_1\Loss_{\x}(\z^{\ast}, \D)  + \nabla_1\Loss_{\x}(\z^{\ast}, \D) + \partial h(\z_t) - \nabla_{11}^2\Loss_{\x}(\z^{\ast}, \D) (\z_t - \z^{\ast}) \|_2\\
&\leq (\nicefrac{L_{11}}{2}) \| \z_t - \z^{\ast} \|_2^2 + \| \partial h(\z_t) + \nabla_1\Loss_{\x}(\z^{\ast}, \D) \|_2\\
&\leq (\nicefrac{L_{11}}{2}) \| \z_t - \z^{\ast} \|_2^2 + \| \partial h(\z_t) + \nabla_1\Loss_{\x}(\z^{\ast}, \D) - \nabla_1\Loss_{\x}(\z^{\ast}, \D^{\ast})\|_2\\
&\leq (\nicefrac{L_{11}}{2}) \| \z_t - \z^{\ast} \|_2^2 + L_{1D} \| \D - \D^{\ast} \|_2 + \| \partial h(\z_t) \|_2.
\end{aligned}
\end{equation}
Similarly,
\begin{equation}
\begin{aligned}
\| Q_t^{\text{ls-}11}(\z^{\ast}) \|_2 &= \| \nabla_1 \Loss_{\x}(\z_t, \D) - \nabla_{11}^2\Loss_{\x}(\z^{\ast}, \D) (\z_t - \z^{\ast}) \|_2\\
&= \| \nabla_1 \Loss_{\x}(\z_t, \D) - \nabla_1\Loss_{\x}(\z^{\ast}, \D)  + \nabla_1\Loss_{\x}(\z^{\ast}, \D) - \nabla_{11}^2\Loss_{\x}(\z^{\ast}, \D) (\z_t - \z^{\ast}) \|_2\\
&\leq (\nicefrac{L_{11}}{2}) \| \z_t - \z^{\ast} \|_2^2 + \| \nabla_1\Loss_{\x}(\z^{\ast}, \D) \|_2
\leq (\nicefrac{L_{11}}{2}) \| \z_t - \z^{\ast} \|_2^2 + \| \nabla_1\Loss_{\x}(\z^{\ast}, \D) - \nabla_1\Loss_{\x}(\z^{\ast}, \D^{\ast})\|_2\\
&\leq (\nicefrac{L_{11}}{2}) \| \z_t - \z^{\ast} \|_2^2 + L_{1D} \| \D - \D^{\ast} \|_2.
\end{aligned}
\end{equation}
For $Q(\z^{\ast}, \J_t)$, from implicit function theorem, $Q(\z^{\ast}, \J^{\ast}) = 0$ for $\D$ evaluated at $\D^{\ast}$. Hence, we can use $\nabla_{21}^2\Loss_{\x}(\z^{\ast}, \D^{\ast}) = - \J^{\ast+} \nabla_{11}^2\Loss_{\x}(\z^{\ast}, \D^{\ast})$ in the following
\begin{equation}
\begin{aligned}
\| Q(\z^{\ast}, \J_t) \|_2 &= \| \J_t^{+} \nabla_{11}^2\Loss_{\x}(\z^{\ast}, \D) + \nabla_{21}^2\Loss_{\x}(\z^{\ast}, \D) - \nabla_{21}^2\Loss_{\x}(\z^{\ast}, \D^{\ast}) + \nabla_{21}^2\Loss_{\x}(\z^{\ast}, \D^{\ast})\|_2\\
&= \| \J_t^{+} \nabla_{11}^2\Loss_{\x}(\z^{\ast}, \D) - \J^{\ast +} \nabla_{11}^2\Loss_{\x}(\z^{\ast}, \D^{\ast}) \|_2 + \| \nabla_{21}^2\Loss_{\x}(\z^{\ast}, \D) - \nabla_{21}^2\Loss_{\x}(\z^{\ast}, \D^{\ast})  \|_2\\
&= \| \J_t^{+} \nabla_{11}^2\Loss_{\x}(\z^{\ast}, \D) - \J_t^{+} \nabla_{11}^2\Loss_{\x}(\z^{\ast}, \D^{\ast}) \|_2\\
&+ \| \J_t^{+} \nabla_{11}^2\Loss_{\x}(\z^{\ast}, \D^{\ast}) - \J^{\ast+} \nabla_{11}^2\Loss_{\x}(\z^{\ast}, \D^{\ast}) \|_2 + L_{21D} \| \D - \D^{\ast}  \|_2\\
&= M_J L_{11D} \| \D - \D^{\ast}  \|_2 + \| \J_t^{+} \nabla_{11}^2\Loss_{\x}(\z^{\ast}, \D^{\ast}) - \J^{\ast+} \nabla_{11}^2\Loss_{\x}(\z^{\ast}, \D^{\ast}) \|_2 + L_{21D} \| \D - \D^{\ast} \|_2\\
&\leq \| (\J_t^{+} - \J^{\ast+}) \nabla_{11}^2\Loss_{\x}(\z^{\ast}, \D) \|_2 + M_J L_{11D} L_{21D} \| \D - \D^{\ast} \|_2\\
&\leq L_1 \| \J_t - \J^{\ast} \|_2 + M_J L_{11D} L_{21D} \| \D - \D^{\ast} \|_2.
\end{aligned}
\end{equation}
\end{proof}
\dictvariabledec*
\begin{proof}
In this proof, we study $g_{T,j}^{\text{dec}}$, and for ease of notation we drop the superscript.
\begin{equation}
\begin{aligned}
    \g_{T, j}^{(l)} = \E[\mathbf{1}_{\z_{T,(j)}\neq 0} \z_{T,(j)} (\D^{(l)} \z_{T} - \x)] =  \E[\mathbf{1}_{\z_{(j)}^{\ast}\neq 0} \z_{T,(j)} (\D^{(l)} \z_{T} - \x)] + \gamma
\end{aligned}
\end{equation}
where $\gamma =  \E[(\mathbf{1}_{\z_{T,(j)}\neq 0} - \mathbf{1}_{\z_{(j)}^{\ast}\neq 0}) \z_{T,(j)} (\D^{(l)} \z_{T} - \x)]$. We have the event $\mathbf{1}_{\z_{T,(j)}\neq 0} - \mathbf{1}_{\z_{(j)}^{\ast}\neq 0} = 0$ happening with probability of $1 - \epsilon^{(l)}_{\text{supp-pres}}$, and $\epsilon^{(l)}_{\text{supp-pres}}$ decreases with decrease in $\delta_l$. Hence, $\gamma$ gets smaller. We write
\begin{equation}
\begin{aligned}
    \g_{T, j}^{(l)} &= \E[\mathbf{1}_{\z_{(j)}^{\ast}\neq 0} \z_{T,(j)} (\D^{(l)} \z_{T} - \x)] + \gamma
\end{aligned}
\end{equation}
where $B_S^{(l)}$ is an diagonal matrix with $\beta_j^{(l)}$ for $j \in S$ as entries. For $j \notin S$, $\mathbf{1}_{\z_{(j)}^{\ast}\neq 0}=0$, which results in $ \g_{T, j}^{(l)}=0$. Hence, we only focus on $j \in S$ where $\mathbf{1}_{\z_{(j)}^{\ast}\neq 0}=1$. We condition on the support and re-write the gradient as
\begin{equation}
\begin{aligned}
    \g_{T, j}^{(l)} &= \E[\z_{T,(j)} (\D^{(l)} \z_{T} - \x)] + \gamma\\
    &= \E[ \E[ \z_{T,(j)} [\D_S^{(l)} (\eye - B_S^{(l)}) \z_{(S)}^{\ast} + \D_S^{(l)} \zeta_{T,S}^{(l)} - \D_S^{\ast} \z_{(S)}^{\ast}] \mid S] + \gamma\\
    &= \E[\D_S^{(l)} (\eye - B_S^{(l)}) \E [\z_{T,(j)}  \z_{(S)}^{\ast} \mid S]] - \E[\D_S^{\ast} E[\z_{T,(j)} \z_{(S)}^{\ast} \mid S]] + \E[\D_S^{(l)} \E[\z_{T,(j)} \zeta_{T,S}^{(l)} ] \mid S] + \gamma\\
    &= \E[\D_S^{(l)} (\eye - B_S^{(l)}) \E [( \z^{\ast}_{(j)} (1 - \beta_j^{(l)}) + \zeta_{T,j}^{(l)})  \z_{(S)}^{\ast} \mid S]] - \E[\D_S^{\ast} E[( \z^{\ast}_{(j)} (1 - \beta_j^{(l)}) + \zeta_{T,j}^{(l)}) \z_{(S)}^{\ast} \mid S]]\\
    &+ \E[\D_S^{(l)} \E[( \z^{\ast}_{(j)} (1 - \beta_j^{(l)}) + \zeta_{T,j}^{(l)}) \zeta_{T,S}^{(l)} ] \mid S] + \gamma\\
    &= \E[\D_j^{(l)} (1 - \beta_j^{(l)})^2]
    - \E[\D_j^{\ast} (1 - \beta_j^{(l)})]+ \gamma\\
    &+ \E[\D_S^{(l)} (\eye - B_S^{(l)}) E[\z_{(S)}^{\ast} \zeta_{T,j}^{(l)} \mid S]] - \E[\D_S^{\ast} E[\z_{(S)}^{\ast} \zeta_{T,j}^{(l)} \mid S]] + \E[\D_S^{(l)} \E[( \z^{\ast}_{(j)} (1 - \beta_j^{(l)}) + \zeta_{T,j}^{(l)}) \zeta_{T,S}^{(l)} \mid S]]
\end{aligned}
\end{equation}
where in the last line, we use the fact that $\E[\z_{(j)}^{\ast} \mid j \in S] = 0$ and $\E[\z_{(S)}^{\ast} \z_{(S)}^{\ast \text{T}} \mid S] = \eye$. Computing the expectation, we get
\begin{equation}
\begin{aligned}
    \g_{T, j}^{(l)} &= p_j \D_j^{(l)} (1 - \beta_j^{(l)})^2
    - p_j \D_j^{\ast} (1 - \beta_j^{(l)})
    + U_{T, j}^{(l)} + \gamma = p_j (1 - \beta_j^{(l)}) \left((1 - \beta_j^{(l)}) \D_j^{(l)} 
    - \D_j^{\ast}\right) + U_{T, j}^{(l)} + \gamma
\end{aligned}
\end{equation}
where $U_{T, j}^{(l)} = \E[\D_S^{(l)} (\eye - B_S^{(l)}) E[\z_{(S)}^{\ast} \zeta_{T,j}^{(l)} \mid S]] - \E[\D_S^{\ast} E[\z_{(S)}^{\ast} \zeta_{T,j}^{(l)} \mid S]] + \E[\D_S^{(l)} \E[( \z^{\ast}_{(j)} (1 - \beta_j^{(l)}) + \zeta_{T,j}^{(l)}) \zeta_{T,S}^{(l)} \mid S]]$. Given this gradient, we now find a bound on $U_{T, j}^{(l)}$.
\begin{equation}
\begin{aligned}
    U_{T, j}^{(l)} = \E[\D_S^{(l)} (\eye - B_S^{(l)}) E[\z_{(S)}^{\ast} \zeta_{T,j}^{(l)} \mid S]] - \E[\D_S^{\ast} \E[\z_{(S)}^{\ast} \zeta_{T,j}^{(l)} \mid S]] + \E[\D_S^{(l)} \E[( \z^{\ast}_{(j)} (1 - \beta_j^{(l)}) + \zeta_{T,j}^{(l)}) \zeta_{T,S}^{(l)} \mid S]]
\end{aligned}
\end{equation}
First, we bound $\E[\z_{(i)}^{\ast} \zeta_{T,j}^{(l)} \mid S]]$ as following.
\begin{equation}
\begin{aligned}
\E[\z_{(i)}^{\ast} \zeta_{T,j}^{(l)} \mid S]] &\leq \sum_{t=1}^T \alpha (1 - \alpha)^{T-t} \E[ 2\gamma_j^{(l)} \z_{(i)}^{\ast}\mid S] + \frac{\mu_l}{\sqrt{m}} \sum_{t=1}^T \alpha (1 - \alpha)^{T-t} \sum_{k \neq j} \E[ E_{t-1,k} \text{sign}(\z_{(k)}^{\ast} - \z_{t-1,(k)}) \z_{(i)}^{\ast} \mid S ]\\
&+ \frac{\mu_l}{\sqrt{m}} \sum_{t=1}^T \alpha (1 - \alpha)^{T-t} \sum_{k \neq j} \E[ E_{t-1,k} \text{sign}(\z_{(k)}^{\ast} - \z_{t-1,(k)}) \text{sign}(\z_{t, (i)}) \z_{(i)}^{\ast} \mid S ] + \tilde \kappa_{T,j}^{(l)}
\end{aligned}
\end{equation}
where $\tilde \kappa_{T,j}^{(l)} = \E[\z_{(i)}^{\ast} \kappa_{T,j}^{(l)}]$. Similar to $\kappa_{T,j}^{(l)}$, $\tilde \kappa_{T,j}^{(l)}$ decay very fast as $T$ increases. Hence, we bound the other terms. We have
\begin{equation}
\E[ \gamma_j^{(l)} \z_{(i)}^{\ast}\mid S] \begin{cases} 
\leq \delta_l &\text{if}\ j \neq i\\
= 0 &\text{if}\ j = i
\end{cases}
,
\end{equation}
\begin{equation}
\E[ E_{t-1,k} \text{sign}(\z_{(k)}^{\ast} - \z_{t-1,(k)}) \z_{(i)}^{\ast} \mid S ]
\begin{cases} 
\leq E_{t-1,k} &\text{if}\ k = i\\
= 0 &\text{if}\ k \neq i
\end{cases}
,
\end{equation}
and
\begin{equation}
\E[ E_{t-1,k} \text{sign}(\z_{(k)}^{\ast} - \z_{t-1,(k)}) \text{sign}(\z_{t, (k)}) \z_{(i)}^{\ast} \mid S ] \begin{cases} 
\leq E_{t-1,i} &\text{if}\ k = i\\
= 0 &\text{if}\ k \neq i
\end{cases}
\end{equation}
Hence,
\begin{equation}
\sum_{k\neq j} \E[ E_{t-1,k} \text{sign}(\z_{(k)}^{\ast} - \z_{t-1,(k)}) \z_{(i)}^{\ast} \mid S ] \begin{cases} 
\leq E_{t-1,i} &\text{if}\ j \neq i\\
= 0 &\text{if}\ j = i
\end{cases}
\end{equation}
and
\begin{equation}
\sum_{k\neq j} \E[ E_{t-1,k} \text{sign}(\z_{(k)}^{\ast} - \z_{t-1,(k)}) \text{sign}(\z_{t, (k)}) \z_{(i)}^{\ast} \mid S ] \leq \begin{cases} 
E_{t-1,i} &\text{if}\ j \neq i\\
0 &\text{if}\ j = i
\end{cases}
\end{equation}
Hence, for $j \neq i$, we can write
\begin{equation}
\begin{aligned}
\E[\z_{(i)}^{\ast} \zeta_{T,j}^{(l)} \mid S]] &\leq 2\delta_l + 2\frac{\mu_l}{\sqrt{m}} \sum_{t=1}^T \alpha (1 - \alpha)^{T-t} E_{t-1,i} + \tilde \kappa_{T,j}^{(l)}
\end{aligned}
\end{equation}
where from \eqref{eq:Ealpha_2}, we have
\begin{equation}
\begin{aligned}
E_{t-1, i} (1 - \alpha)^{T-t} &\leq (1 + 2 (s-1) \alpha \frac{\mu_l}{\sqrt{m}} (t-1)) E_{0, \text{max}} \delta_{\alpha, T-2}\\
&+ (\beta_{\text{max}}^{(l)} | \z^{\ast}_{\text{max}} | + 2 \gamma_{\text{max}}^{(l)}) \left( \sum_{k=1}^{t-1} \alpha (1 - \alpha)^{T-k-1} + 2 (s-1) \kappa_l (1 - \alpha)^{T-t} \right)
\end{aligned}
\end{equation}
Hence, given the sparsity level, the term below is bounded by $a_{\gamma}$ with probability of $1 - \epsilon_{\gamma}^{(l)}$.
\begin{equation}
\begin{aligned}
\sum_{t=1}^T E_{t-1, i} (1 - \alpha)^{T-t} &\leq \sum_{t=1}^T (1 + 2 (s-1) \alpha \frac{\mu_l}{\sqrt{m}} (t-1)) E_{0, \text{max}} \delta_{\alpha, T-2}\\
&+ (\beta_{\text{max}}^{(l)} | \z^{\ast}_{\text{max}} | + 2 \gamma_{\text{max}}^{(l)}) \left( \sum_{t=1}^T \sum_{k=1}^{t-1} \alpha (1 - \alpha)^{T-k-1} + \sum_{t=1}^T 2 (s-1) \kappa_l (1 - \alpha)^{T-t} \right)\\
&\leq (\beta_{\text{max}}^{(l)} | \z^{\ast}_{\text{max}} | + 2 a_{\gamma}^{(l)}) (1 + s \kappa_l) = \mathcal{O}(a_{\gamma}^{(l)})
\end{aligned}
\end{equation}
Finally, we get
\begin{equation}
\E[\z_{(i)}^{\ast} \zeta_{T,j}^{(l)} \mid S]] \leq
\begin{cases} 
2 \delta_l + \frac{\mu_l}{\sqrt{m}} \mathcal{O}(a_{\gamma}^{(l)}) + \tilde \kappa_{T,j}^{(l)} &\text{if}\ j \neq i\\
\tilde \kappa_{T,j}^{(l)} &\text{if}\ j = i
\end{cases}
\end{equation}
For appropriately large $T$, $\kappa_{T,j}^{(l)}$ can be make small; Hence, in this case, we get
\begin{equation}
\| U_{T,j}^{(l)} \|_2 \leq \mathcal{O}(\sqrt{p} p_{ij} \delta_l \| \D^{(l)}\|_2)
\end{equation}
Now, we can re-write the gradient as
\begin{equation}
\begin{aligned}
    \g_{T, j}^{(l)} &= p_j (1 - \beta_j^{(l)}) (\D_j^{(l)}
    - \D_j^{\ast}) + p_j (- \beta_j^{(l)} \D_j^{(l)} + \frac{1}{p_j} U_{T, j}^{(l)} + \frac{1}{p_j}\gamma) = \tau (\D_j^{(l)}
    - \D_j^{\ast}) + \theta
\end{aligned}
\end{equation}
where $\tau = p_j (1 - \beta_j^{(l)})$, and $\theta = p_j (- \beta_j^{(l)} \D_j^{(l)} + \frac{1}{p_j} U_{T, j}^{(l)} + \frac{1}{p_j}\gamma)$. We can bound the norm of $\theta$ as follows:
\begin{equation}
\begin{aligned}
  \| \theta \|_2 \leq p_j \beta_j^{(l)} \| \D_j^{(l)}\|_2  + \| U_{T, j}^{(l)} \|_2 + \gamma
\end{aligned}
\end{equation}
Given $\| \D_j^{(l)} \|_2 = 1$, and $\beta_j^{(l)} =  \langle \D_j^{\ast} - \D_j^{(l)}, \D_j^{\ast} \rangle = \frac{1}{2} \| \D_j^{(l)} - \D_j^{\ast} \|_2^2$, we modify the upper bound
\begin{equation}
  \| \theta \|_2 \leq \frac{1}{2} p_j \| \D_j^{(l)} - \D_j^{\ast} \|_2^2  + \mathcal{O}(\sqrt{p} p_{ij} \delta_l \| \D^{(l)}\|_2) + \gamma
\end{equation}
We assume a dictionary closeness during training, i.e., $\| \D^{(l)} - \D^{\ast} \|_2 \leq 2 \| \D^{\ast} \|_2$, which we prove in \Cref{lemma:maintaincloseness}. Given this closeness, we have
\begin{equation}
  \| \D^{(l)} \|_2 \leq \| \D^{(l)} - \D^{\ast} \|_2 + \| \D^{\ast} \|_2 = \mathcal{O}(\sqrt{\frac{p}{m}})
\end{equation}
Moreover, with $\gamma$ dropping with $\delta_l$, and for $s =\mathcal{O}(\sqrt{m})$, it is reduced to
\begin{equation}
  \| \theta \|_2 \leq p_j \| \D_j^{(l)} - \D_j^{\ast} \|_2
\end{equation}
We get
\begin{equation}
\begin{aligned}
  \| \g_{T, j}^{(l)} \|_2 &\leq p_j (1 - \beta_j^{(l)}) \| \D_j^{(l)}
    - \D_j^{\ast} \|_2 + p_j \| \D_j^{(l)}
    - \D_j^{\ast} \|_2\\
  \| \g_{T, j}^{(l)} \|_2^2 &\leq p_j^2 (2 - \beta_j^{(l)})^2 \| \D_j^{(l)}
    - \D_j^{\ast} \|_2^2
\end{aligned}
\end{equation}
Using this bound, we can find a lower bound on the correlation between the gradient direction and the desired direction as follows
\begin{equation}
\begin{aligned}
  \| \g_{T, j}^{(l)} \|_2^2 &= (p_j (1 - \beta_j^{(l)}))^2 \| \D_j^{(l)}
    - \D_j^{\ast} \|_2^2 + \|\theta\|_2^2 + 2 p_j (1 - \beta_j^{(l)}) \langle \theta, \D_j^{(l)}
    - \D_j^{\ast} \rangle\\
2 \langle \theta, \D_j^{(l)}
    - \D_j^{\ast} \rangle &= - p_j (1 - \beta_j^{(l)}) \| \D_j^{(l)} 
    - \D_j^{\ast} \|_2^2 + \frac{1}{p_j (1 - \beta_j^{(l)})} \| \g_{T, j}^{(l)} \|_2^2 - \frac{1}{p_j (1 - \beta_j^{(l)})} \|\theta\|_2^2\\
2 \langle \g_{T, j}^{(l)}, \D_j^{(l)}
    - \D_j^{\ast} \rangle &= p_j (1 - \beta_j^{(l)}) \| \D_j^{(l)} 
    - \D_j^{\ast} \|_2^2 + \frac{1}{p_j (1 - \beta_j^{(l)})} \| \g_{T, j}^{(l)} \|_2^2 - \frac{1}{p_j (1 - \beta_j^{(l)})} \|\theta\|_2^2\\
& \geq (p_j (1 - \beta_j^{(l)}) - p_j \frac{1}{1 - \beta_j^{(l)}}) \| \D_j^{(l)} 
    - \D_j^{\ast} \|_2^2 + \frac{1}{p_j (1 - \beta_j^{(l)})} \| \g_{T, j}^{(l)} \|_2^2
\end{aligned}
\end{equation}
Hence, using the descent property of Theorem 6 from~\citep{arora2015sparsecoding}, setting the learning rate to $\eta = \max_j \frac{1}{p_j (1 - \beta_j^{(l)})}$, and $\psi = \eta (p_j (1 - \beta_j^{(l)}) - p_j \frac{1}{1 - \beta_j^{(l)}}) \leq 1 - \frac{1}{(1 - \beta_j^{(l)})^2}$
\begin{equation}
    \| \D_j^{(l+1)} - \D_j^{\ast} \|_2^2 \leq (1 - \psi)   \| \D_j^{(l)} - \D_j^{\ast} \|_2^2 
\end{equation}
\end{proof}
\dictfixededec*
\begin{proof}
Following steps similar to \Cref{thm:dictvariabledec}, we write the gradient as
\begin{equation}
\begin{aligned}
    \g_{T, j}^{(l)} &= p_j \D_j^{(l)} (1 - \beta_j^{(l)})^2
    - p_j \D_j^{\ast} (1 - \beta_j^{(l)})
    + U_{T, j}^{(l)} + \gamma = p_j (1 - \beta_j^{(l)}) \left((1 - \beta_j^{(l)}) \D_j^{(l)} 
    - \D_j^{\ast}\right) + U_{T, j}^{(l)} + \gamma
\end{aligned}
\end{equation}
where $U_{T, j}^{(l)} = \E[\D_S^{(l)} (\eye - B_S^{(l)}) E[\z_{(S)}^{\ast} \zeta_{T,j}^{(l)} \mid S]] - \E[\D_S^{\ast} E[\z_{(S)}^{\ast} \zeta_{T,j}^{(l)} \mid S]] + \E[\D_S^{(l)} \E[( \z^{\ast}_{(j)} (1 - \beta_j^{(l)}) + \zeta_{T,j}^{(l)}) \zeta_{T,S}^{(l)} \mid S]]$. Given this gradient, we now find a bound on $U_{T, j}^{(l)}$. First, we bound $\E[\z_{(i)}^{\ast} \zeta_{T,j}^{(l)} \mid S]]$ as following.
\begin{equation}
\begin{aligned}
\E[\z_{(i)}^{\ast} \zeta_{T,j}^{(l)} \mid S]] &\leq \sum_{t=1}^T \alpha (1 - \alpha)^{T-t} \E[ \gamma_j^{(l)} \z_{(i)}^{\ast}\mid S] + \sum_{t=1}^T \alpha (1 - \alpha)^{T-t} \E[ \lambda^{\text{fixed}} \text{sign}(\z_{t-1, (j)}) \z_{(i)}^{\ast}\mid S]\\
&+ \frac{\mu_l}{\sqrt{m}} \sum_{t=1}^T \alpha (1 - \alpha)^{T-t} \sum_{k \neq j} \E[ E_{t-1,k} \text{sign}(\z_{(k)}^{\ast} - \z_{t-1,(k)}) \z_{(i)}^{\ast} \mid S ] + \tilde \kappa_{T,j}^{(l)}\\
\end{aligned}
\end{equation}
where we set all $\lambda_{t-1,j}^{(l)} = \lambda^{\text{fixed}}$ and $\tilde \kappa_{T,j}^{(l)} = \E[\z_{(i)}^{\ast} \kappa_{T,j}^{(l)}]$. Similar to $\kappa_{T,j}^{(l)}$, $\tilde \kappa_{T,j}^{(l)}$ decay very fast as $T$ increases. Hence, we bound the other terms. We have
\begin{equation}
\E[ \gamma_j^{(l)} \z_{(i)}^{\ast}\mid S] \begin{cases} 
\leq \delta_l &\text{if}\ j \neq i\\
= 0 &\text{if}\ j = i
\end{cases}
,
\end{equation}
\begin{equation}
\E[ \lambda^{\text{fixed}} \text{sign}(\z_{t-1, (j)}) \z_{(i)}^{\ast}\mid S] \begin{cases} 
= \lambda^{\text{fixed}} &\text{if}\ j = i\\
= 0 &\text{if}\ j \neq i
\end{cases}
,
\end{equation}
\begin{equation}
\E[ E_{t-1,k} \text{sign}(\z_{(k)}^{\ast} - \z_{t-1,(k)}) \z_{(i)}^{\ast} \mid S ]
\begin{cases} 
\leq E_{t-1,k} &\text{if}\ k = i\\
= 0 &\text{if}\ k \neq i
\end{cases}
\end{equation}
Hence,
\begin{equation}
\sum_{k\neq j} \E[ E_{t-1,k} \text{sign}(\z_{(k)}^{\ast} - \z_{t-1,(k)}) \z_{(i)}^{\ast} \mid S ] \begin{cases} 
\leq E_{t-1,i} &\text{if}\ j \neq i\\
= 0 &\text{if}\ j = i
\end{cases}
\end{equation}
Hence, for $j \neq i$, we can write
\begin{equation}
\begin{aligned}
\E[\z_{(i)}^{\ast} \zeta_{T,j}^{(l)} \mid S]] &\leq \delta_l + \frac{\mu_l}{\sqrt{m}} \sum_{t=1}^T \alpha (1 - \alpha)^{T-t} (E_{0,i} + E_{t-1,i}) + \tilde \kappa_{T,j}^{(l)}
\end{aligned}
\end{equation}
We have
\begin{equation}
\begin{aligned}
E_{t-1, i} (1 - \alpha)^{T-t} &\leq (1 + 2(s-1) \alpha \frac{\mu_l}{\sqrt{m}} (t-1)) E_{0, \text{max}} \delta_{\alpha, T-2}\\
&+ (\beta_{\text{max}}^{(l)} | \z^{\ast}_{\text{max}} | + 2 \gamma_{\text{max}}^{(l)}) \left( \sum_{k=1}^{t-1} \alpha (1 - \alpha)^{T-k-1} + 2 (s-1) \kappa_l (1 - \alpha)^{T-t} \right)
\end{aligned}
\end{equation}
Hence, given the sparsity level, the term below is bounded by $a_{\gamma}$ with probability of $1 - \epsilon_{\gamma}^{(l)}$.
\begin{equation}
\begin{aligned}
\sum_{t=1}^T E_{t-1, i} (1 - \alpha)^{T-t} &\leq \sum_{t=1}^T (1 + 2 (s-1) \alpha \frac{\mu_l}{\sqrt{m}} (t-1)) E_{0, \text{max}} \delta_{\alpha, T-2}\\
&+ (\beta_{\text{max}}^{(l)} | \z^{\ast}_{\text{max}} | + 2 \gamma_{\text{max}}^{(l)}) \left( \sum_{t=1}^T \sum_{k=1}^{t-1} \alpha (1 - \alpha)^{T-k-1} + \sum_{t=1}^T 2 (s-1) \kappa_l (1 - \alpha)^{T-t} \right)\\
&\leq (\beta_{\text{max}}^{(l)} | \z^{\ast}_{\text{max}} | + 2 a_{\gamma}^{(l)}) (1 + s \kappa_l) = \mathcal{O}(a_{\gamma}^{(l)})
\end{aligned}
\end{equation}
Finally, we get
\begin{equation}
\E[\z_{(i)}^{\ast} \zeta_{T,j}^{(l)} \mid S]] \leq
\begin{cases} 
\delta_l + \frac{\mu_l}{\sqrt{m}} \mathcal{O}(a_{\gamma}^{(l)}) + \tilde \kappa_{T,j}^{(l)} &\text{if}\ j \neq i\\
\lambda^{\text{fixed}} + \tilde \kappa_{T,j}^{(l)} &\text{if}\ j = i
\end{cases}
\end{equation}
For appropriately large $T$, $\tilde \kappa_{T,j}^{(l)}$ can be make small; Hence, in this case, we get
\begin{equation}
\| U_{T,j}^{(l)} \|_2 \leq \mathcal{O}(\sqrt{p} p_{ij} \delta_l \| \D^{(l)}\|_2 + p_j \lambda^{\text{fixed}})
\end{equation}
Now, we can re-write the gradient as
\begin{equation}
\begin{aligned}
    \g_{T, j}^{(l)} &= p_j (1 - \beta_j^{(l)}) (\D_j^{(l)}
    - \D_j^{\ast}) + p_j (- \beta_j^{(l)} \D_j^{(l)} + \frac{1}{p_j} U_{T, j}^{(l)} + \frac{1}{p_j}\gamma) = \tau (\D_j^{(l)}
    - \D_j^{\ast}) + \theta
\end{aligned}
\end{equation}
where $\tau = p_j (1 - \beta_j^{(l)})$, and $\theta = p_j (- \beta_j^{(l)} \D_j^{(l)} + \frac{1}{p_j} U_{T, j}^{(l)} + \frac{1}{p_j}\gamma)$. We can bound the norm of $\theta$ as follows:
\begin{equation}
\begin{aligned}
  \| \theta \|_2 \leq p_j \beta_j^{(l)} \| \D_j^{(l)}\|_2  + \| U_{T, j}^{(l)} \|_2 + \gamma
\end{aligned}
\end{equation}
Given $\| \D_j^{(l)} \|_2 = 1$, and $\beta_j^{(l)} =  \langle \D_j^{\ast} - \D_j^{(l)}, \D_j^{\ast} \rangle = \frac{1}{2} \| \D_j^{(l)} - \D_j^{\ast} \|_2^2$, we modify the upper bound
\begin{equation}
  \| \theta \|_2 \leq \frac{1}{2} p_j \| \D_j^{(l)} - \D_j^{\ast} \|_2^2  + \mathcal{O}(\sqrt{p} p_{ij} \delta_l \| \D^{(l)}\|_2 + p_j \lambda^{\text{fixed}}) + \gamma
\end{equation}
We assume a dictionary closeness during training, i.e., $\| \D^{(l)} - \D^{\ast} \|_2 \leq 2 \| \D^{\ast} \|_2$, which we prove in \Cref{lemma:maintaincloseness}. Given this closeness, we have
\begin{equation}
  \| \D^{(l)} \|_2 \leq \| \D^{(l)} - \D^{\ast} \|_2 + \| \D^{\ast} \|_2 = \mathcal{O}(\sqrt{\frac{p}{m}})
\end{equation}
Moreover, with $\gamma$ dropping with $\delta_l$, and for $s =\mathcal{O}(\sqrt{m})$, it is reduced to
\begin{equation}
  \| \theta \|_2 \leq p_j (\| \D_j^{(l)} - \D_j^{\ast} \|_2 + \lambda^{\text{fixed}})
\end{equation}
We get
\begin{equation}
\begin{aligned}
  \| \g_{T, j}^{(l)} \|_2 &\leq p_j (1 - \beta_j^{(l)}) \| \D_j^{(l)}
    - \D_j^{\ast} \|_2 + p_j (\| \D_j^{(l)}
    - \D_j^{\ast} \|_2 + \lambda^{\text{fixed}})\\
  \| \g_{T, j}^{(l)} \|_2^2 &\leq 2p_j^2 (2 - \beta_j^{(l)})^2 \| \D_j^{(l)}
    - \D_j^{\ast} \|_2^2 + 2p_j^2 \lambda^{\text{fixed}2}
\end{aligned}
\end{equation}
Using this bound, we can find a lower bound on the correlation between the gradient direction and the desired direction as follows
\begin{equation}
\begin{aligned}
  \| \g_{T, j}^{(l)} \|_2^2 &= (p_j (1 - \beta_j^{(l)}))^2 \| \D_j^{(l)}
    - \D_j^{\ast} \|_2^2 + \|\theta\|_2^2 + 2 p_j (1 - \beta_j^{(l)}) \langle \theta, \D_j^{(l)}
    - \D_j^{\ast} \rangle\\
2 \langle \theta, \D_j^{(l)}
    - \D_j^{\ast} \rangle &= - p_j (1 - \beta_j^{(l)}) \| \D_j^{(l)} 
    - \D_j^{\ast} \|_2^2 + \frac{1}{p_j (1 - \beta_j^{(l)})} \| \g_{T, j}^{(l)} \|_2^2 - \frac{1}{p_j (1 - \beta_j^{(l)})} \|\theta\|_2^2\\
2 \langle \g_{T, j}^{(l)}, \D_j^{(l)}
    - \D_j^{\ast} \rangle &= p_j (1 - \beta_j^{(l)}) \| \D_j^{(l)} 
    - \D_j^{\ast} \|_2^2 + \frac{1}{p_j (1 - \beta_j^{(l)})} \| \g_{T, j}^{(l)} \|_2^2 - \frac{1}{p_j (1 - \beta_j^{(l)})} \|\theta\|_2^2\\
& \geq (p_j (1 - \beta_j^{(l)}) - 2 p_j \frac{1}{1 - \beta_j^{(l)}}) \| \D_j^{(l)} 
    - \D_j^{\ast} \|_2^2 + \frac{1}{p_j (1 - \beta_j^{(l)})} \| \g_{T, j}^{(l)} \|_2^2 - \frac{2p_j}{(1 - \beta_j^{(l)})} \lambda^{\text{fixed}2}
\end{aligned}
\end{equation}
Hence, using the descent property of Theorem 6 from~\citep{arora2015sparsecoding}, setting the learning rate to $\eta = \max_j \frac{1}{p_j (1 - \beta_j^{(l)})}$, and $\psi = \eta (p_j (1 - \beta_j^{(l)}) - 2p_j \frac{1}{1 - \beta_j^{(l)}})$
\begin{equation}
    \| \D_j^{(l+1)} - \D_j^{\ast} \|_2^2 \leq (1 - \psi)   \| \D_j^{(l)} - \D_j^{\ast} \|_2^2 + \epsilon_{\lambda}
\end{equation}
where $\epsilon_{\lambda} \coloneqq \eta \frac{2p_j}{(1 - \beta_j^{(l)})} \lambda^{\text{fixed}2}$
\end{proof}
\begin{restatable}[Dictionary maintains closeness]{lemma}{maintaincloseness}\label{lemma:maintaincloseness}
Suppose $\D^{(l)}$ has $(\delta_l, 2)$-closeness to $\D^{\ast}$ where $\delta_l = \mathcal{O}^{\ast}(1/\log{m})$, then with probability of $1 - \epsilon_{\text{supp-pres}}^{(l)}$, we have $\| \D^{(l+1)} - \D^{\ast} \|_2 \leq 2 \| \D^{\ast} \|_2$ when using $\g_{T}^{\text{dec}}$ and the network parameters set by \Cref{thm:fwdzerrorvariable}.
\end{restatable}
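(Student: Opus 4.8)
The plan is to treat this as a single update step and reduce it to controlling the spectral norm of the error component of the gradient, reusing the per-column gradient decomposition already derived in the proof of \Cref{thm:dictvariabledec}. Conditioning on the support recovery and preservation events (\Cref{thm:supprec,thm:supppres}), which jointly hold with probability at least $1 - \epsilon_{\text{supp-pres}}^{(l)}$, that proof gives $\g_{T,j}^{(l)} = p_j (1 - \beta_j^{(l)})(\D_j^{(l)} - \D_j^{\ast}) + \theta_j^{(l)}$ for $j \in S$ and $\g_{T,j}^{(l)} = \zero$ otherwise, with $\theta_j^{(l)} = -p_j \beta_j^{(l)} \D_j^{(l)} + U_{T,j}^{(l)} + \gamma$. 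Collecting columns, I would write the gradient matrix as $\g_T^{(l)} = (\D^{(l)} - \D^{\ast}) Q + \bm\Theta$, where $Q$ is diagonal with entries $p_j(1-\beta_j^{(l)})$ and $\bm\Theta$ has $j$-th column $\theta_j^{(l)}$. Substituting into $\D^{(l+1)} = \D^{(l)} - \eta \g_T^{(l)}$ yields the clean identity
\begin{equation}
\D^{(l+1)} - \D^{\ast} = (\D^{(l)} - \D^{\ast})(\eye - \eta Q) - \eta \bm\Theta .
\end{equation}

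Next I would take spectral norms and use submultiplicativity to get $\| \D^{(l+1)} - \D^{\ast} \|_2 \leq \| \eye - \eta Q \|_2 \, \| \D^{(l)} - \D^{\ast} \|_2 + \eta \| \bm\Theta \|_2$. Because $Q$ is diagonal, $\| \eye - \eta Q \|_2 = \max_j | 1 - \eta p_j(1-\beta_j^{(l)}) |$; since by the symmetry of \Cref{assum:distz} the $p_j$ are all equal and $\beta_j^{(l)} \leq \delta_l^2/2$, the learning rate $\eta = \max_j \tfrac{1}{p_j(1-\beta_j^{(l)})}$ from \Cref{thm:dictvariabledec} makes every factor lie in $[0,1)$ and in fact gives $\| \eye - \eta Q \|_2 = \mathcal{O}(\delta_l^2)$, a strong contraction. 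Invoking the hypothesis $\| \D^{(l)} - \D^{\ast} \|_2 \leq 2\| \D^{\ast} \|_2$, the first term is at most $\mathcal{O}(\delta_l^2)\cdot 2\|\D^{\ast}\|_2$, so it suffices to show the error term obeys $\eta \| \bm\Theta \|_2 \leq (1 - \mathcal{O}(\delta_l^2))\,2\|\D^{\ast}\|_2$, i.e. $\| \bm\Theta \|_2 = \mathcal{O}(\|\D^{\ast}\|_2/\eta) = \mathcal{O}\!\left(\tfrac{s}{p}\|\D^{\ast}\|_2\right)$, which then closes the bound $\| \D^{(l+1)} - \D^{\ast} \|_2 \leq 2\| \D^{\ast} \|_2$.

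The main obstacle is the \emph{spectral} (not Frobenius) control of $\bm\Theta$: the naive bound $\|\bm\Theta\|_2 \leq \|\bm\Theta\|_F$ inflates the established column bounds by an extra $\sqrt{p}$ and is too weak under the mild closeness $\delta_l = \mathcal{O}^{\ast}(1/\log p)$. Instead I would split $\bm\Theta = -\D^{(l)}\,\mathrm{diag}(p_j \beta_j^{(l)}) + U + \Gamma$ into the shrinkage matrix, the matrix $U$ of columns $U_{T,j}^{(l)}$, and the support-mismatch matrix $\Gamma$. The shrinkage term is immediate: $\| \D^{(l)}\,\mathrm{diag}(p_j \beta_j^{(l)}) \|_2 \leq \| \D^{(l)} \|_2 \max_j p_j\beta_j^{(l)}$, using $\| \D^{(l)} \|_2 = \mathcal{O}(\sqrt{p/m})$ (itself a consequence of the assumed closeness) and $\beta_j^{(l)} \leq \delta_l^2/2$. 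For $U$ I would factor it as $U = \D^{(l)} A + \D^{\ast} B + \D^{(l)} C$, where the inner matrices $A,B,C$ collect the conditional correlations $\E[\z_{(i)}^{\ast}\zeta_{T,j}^{(l)}\mid S]$ weighted by the pair probabilities, then bound their spectral norms by Gershgorin, exploiting that interactions are confined to the $s$-element support, that each off-diagonal entry is $\mathcal{O}(p_{ij}\delta_l)$ with $p_{ij} = \Theta(s^2/p^2)$, and the $\mu_l$-incoherence of \Cref{lemma:mul}; multiplying by $\|\D^{(l)}\|_2,\|\D^{\ast}\|_2 = \mathcal{O}(\sqrt{p/m})$ keeps $\|U\|_2$ below the target up to a factor $\mathcal{O}(\delta_l)$ once $s = \mathcal{O}(\sqrt m)$ is substituted. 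This factorization-plus-Gershgorin step is the delicate part and mirrors the ``maintains nearness'' argument of \citet{arora2015sparsecoding}; assembling the three pieces gives $\eta\|\bm\Theta\|_2 = \mathcal{O}(\delta_l)\,\|\D^{\ast}\|_2$, which for $\delta_l = \mathcal{O}^{\ast}(1/\log m)$ small enough lies below the contraction slack, establishing $\| \D^{(l+1)} - \D^{\ast} \|_2 \leq 2\|\D^{\ast}\|_2$ and, together with the column contraction of \Cref{thm:dictvariabledec}, preserving $(\delta_{l+1},2)$-closeness.
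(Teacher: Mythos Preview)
Your proposal follows essentially the same route as the paper's proof. Both pass to the matrix form $(\D^{(l)}-\D^{\ast})\,\mathrm{diag}(1-\eta p_j(1-\beta_j^{(l)}))$ plus an error, factor the error columns $U_{T,j}^{(l)}$ through $\D^{(l)}$ and $\D^{\ast}$ times inner $p\times p$ matrices (the paper calls them $F$ and $H$, matching your $A,B,C$), and then control those inner matrices before multiplying by $\|\D^{(l)}\|_2,\|\D^{\ast}\|_2=\mathcal{O}(\sqrt{p/m})$.

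Two minor deviations are worth noting. First, the paper does not invoke Gershgorin on $F,H$; it simply uses $\|F\|_F,\|H\|_F\leq \mathcal{O}(p\,p_{ij}\,\delta_l)$, which yields the same order here since every entry is $\mathcal{O}(p_{ij}\delta_l)$---so your concern about the Frobenius route is unfounded once the factorization through the dictionaries is in place (it is the Frobenius bound on $\bm\Theta$ \emph{before} factoring that would be too loose, and neither you nor the paper does that). Second, the paper uses the looser contraction factor $1-\Omega(\eta s/p)$ on the leading term, which is valid for any $\eta=\mathcal{O}(p/s)$; your sharper $\|\eye-\eta Q\|_2=\mathcal{O}(\delta_l^2)$ is correct but hinges on the specific choice $\eta=\max_j \tfrac{1}{p_j(1-\beta_j^{(l)})}$ together with equality of all $p_j$. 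Neither difference changes the argument's structure.
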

\begin{proof}
Given the dictionary update
\begin{equation}
    \D_j^{(l+1)} - \D_j^{\ast} =  \D_j^{(l)} - \D_j^{\ast} - \eta \g_{T,j}^{(l)} 
\end{equation}
Then, with probability at least $1 - \epsilon_{\text{supp-pres}}^{(l)}$, we have the gradient
\begin{equation}
    \g_{T, j}^{(l)} = p_j (1 - \beta_j^{(l)}) (\D_j^{(l)}
    - \D_j^{\ast}) + p_j (- \beta_j^{(l)} \D_j^{(l)} + \frac{1}{p_j} U_{T, j}^{(l)} + \frac{1}{p_j}\gamma)
\end{equation}
which we substitute in the dictionary update as below
\begin{equation}
    \D_j^{(l+1)} - \D_j^{\ast} =  (1 - \eta (p_j (1 - \beta_j^{(l)}))) (\D_j^{(l)} - \D_j^{\ast}) + \eta p_j \beta_j^{(l)} \D_j^{(l)} - \eta U_{T, j}^{(l)} - \eta \gamma)
\end{equation}
writing the update in matrix form
\begin{equation}
    \D^{(l+1)} - \D^{\ast} = (\D^{(l)} - \D^{\ast}) \text{diag}(1 - \eta (p_j (1 - \beta_j^{(l)}))) + \eta  \D^{(l)} \text{diag}(p_j \beta_j^{(l)}) - \eta \D^{(l)} F + \eta \D^{\ast} H - \eta \gamma)
\end{equation}
where $F_{(ij)} = p_{ij} \E[(1 - \beta_i^{(l)}) \z_{(i)}^{\ast} \zeta_{T,j}^{(l)} \mid S] + p_{ij} \E[( \z^{\ast}_{(j)} (1 - \beta_j^{(l)}) + \zeta_{T,j}^{(l)}) \zeta_{T,i}^{(l)} \mid S]$, and  $H_{(ij)} = p_{ij} \E[\z_{(i)}^{\ast} \zeta_{T,j}^{(l)} \mid S]$. Given, the bound $\| U_{T,j}^{(l)} \|_2 \leq \mathcal{O}(\sqrt{p} p_{ij} \delta_l \| \D^{(l)}\|_2)$ from before, we get $\| F \|_F \leq \mathcal{O}(p p_{ij} \delta_l)$ and $\| H \|_F \leq \mathcal{O}(p p_{ij} \delta_l)$. Hence,
\begin{equation}
\| \D^{(l)} F + \D^{\ast} H \|_2 \leq \|\D^{(l)} \|_2 \| F \|_F + \| \D^{\ast} \|_2 \| H \|_F = \mathcal{O}(p p_{ij} \delta_l \| \D^{\ast} \|_2) = \mathcal{O}(\frac{s^2}{p \log{m}}) \| \D^{\ast} \|_2
\end{equation}
Using the maintained closeness at update $l$, we bound the terms in the dictionary update one by one below
\begin{equation}
    \| (\D^{(l)} - \D^{\ast}) \text{diag}(1 - \eta (p_j (1 - \beta_j^{(l)}))) \| \leq (1 - \min_j \eta p_j (1 - \beta_j^{(l)})) \| \D^{(l)} - \D^{\ast} \|_2 \leq 2 (1 - \Omega(\eta s/p)) \| \D^{\ast} \|_2
\end{equation}
\begin{equation}
    \| \D^{(l)} \text{diag}(p_j \beta_j^{(l)}) \|_2 \leq \max_j p_j \frac{\delta_l^2}{2} \| \D^{(l)} - \D^{\ast} + \D^{\ast} \|_2 \leq o(s/p) \| \D^{\ast} \|_2
\end{equation}
Given the bounds above, the dictionary update can be bounded as following
\begin{equation}
  \| \D^{(l+1)} - \D^{\ast} \|_2 \leq 2 (1 - \Omega(\eta s/p)) \| \D^{\ast} \|_2 + o(\eta s/p) \| \D^{\ast} \|_2 + \mathcal{O}(\frac{\eta s^2}{p \log{m}}) \| \D^{\ast} \|_2 + \eta \gamma \leq 2 \| \D^{\ast} \|_2
\end{equation}
\end{proof}
%
%%%%%%%%%%%%%%%%%%%%%%%%%
\subsection{Interpretability}
\interp*
\begin{proof}
For all stationary points, the objective gradient is $\zero$ with respect to the dictionary, i.e.,
\begin{equation}
\zero = (\Xx - \tilde \D \tilde \Z) \tilde \Z^{\text{T}} + \omega \tilde \D
\end{equation}
where $\tilde \D$ is the learned dictionary at convergence. Re-aranging the terms, we get
\begin{equation}
\tilde \D = \Xx \tilde \Z^{\text{T}} (\tilde \Z \tilde \Z^{\text{T}} + \omega \eye)^{-1}
\end{equation}
Using the relation $\A^{\text{T}} (\A \A^{\text{T}} + \omega \eye)^{-1} = (\A^{\text{T}} \A + \omega \eye)^{-1} \A^{\text{T}}$, we can re-write the solution as
\begin{equation}
\tilde \D = \Xx \G^{-1} \tilde \Z^{\text{T}}
\end{equation}
where we denote $\G \coloneqq (\tilde \Z^{\text{T}} \tilde \Z + \omega \eye)$.
\end{proof}

%%%%%%%%%%%%%%%%%%%%%%%%%
\section{Appendix - future works and limitations}\label{lim}
\paragraph{Beyond dictionary learning} Our results are founded on three main properties: Lipschitz differentiability of the loss, proximal gradient descent, and strong convexity in finite-iteration. The findings can be applied to other min-min optimization problems, e.g., ridge regression and logistic regression, following such properties. For example, our analysis generalizes to the unrolled network in~\citep{tolooshams2020icml} for learning dictionaries using data from the natural exponential family. In this case, the least-squares loss is replaced with negative log-likelihood, and the dictionary models the data expectation.
\paragraph{Limitations} Finite-iteration support selection (\Cref{prop:supp})~\citep{hale2007fixed} and strong convexity may seem stringent going beyond dictionary learning. \citet{ablin2020super} discuss generalization of local gradient convergence by relaxing strong convexity to the $p$-\L ojasiewicz property~\citep{ablin2020super,attouch2009convergence}. We considered the noiseless setting and conjecture that the relative comparison of the gradients in the presence of noise still holds, where the upper bounds will involve an additional noise term. We focused on infinite sample convergence to highlight the relative differences between the gradients. We leave for future work the derivation of finite-sample bounds, a step similar to~\citep{chatterji2017alternating, arora2015sparsecoding}.
%

%%%%%%%%%%%%%%%%%%%%%%%%%
\section{Appendix - details of experiments}\label{exp_app}
PUDLE is developed using PyTorch~\citep{paszke2017automatic}. We used one GeForce GTX 1080 Ti GPU.
\subsection{Numerical experiments for theories}
%
%
%%%
\begin{wrapfigure}[12]{r}{0.33\textwidth}
    \vspace{-6mm}
	\centering
	\includegraphics[width=0.999\linewidth]{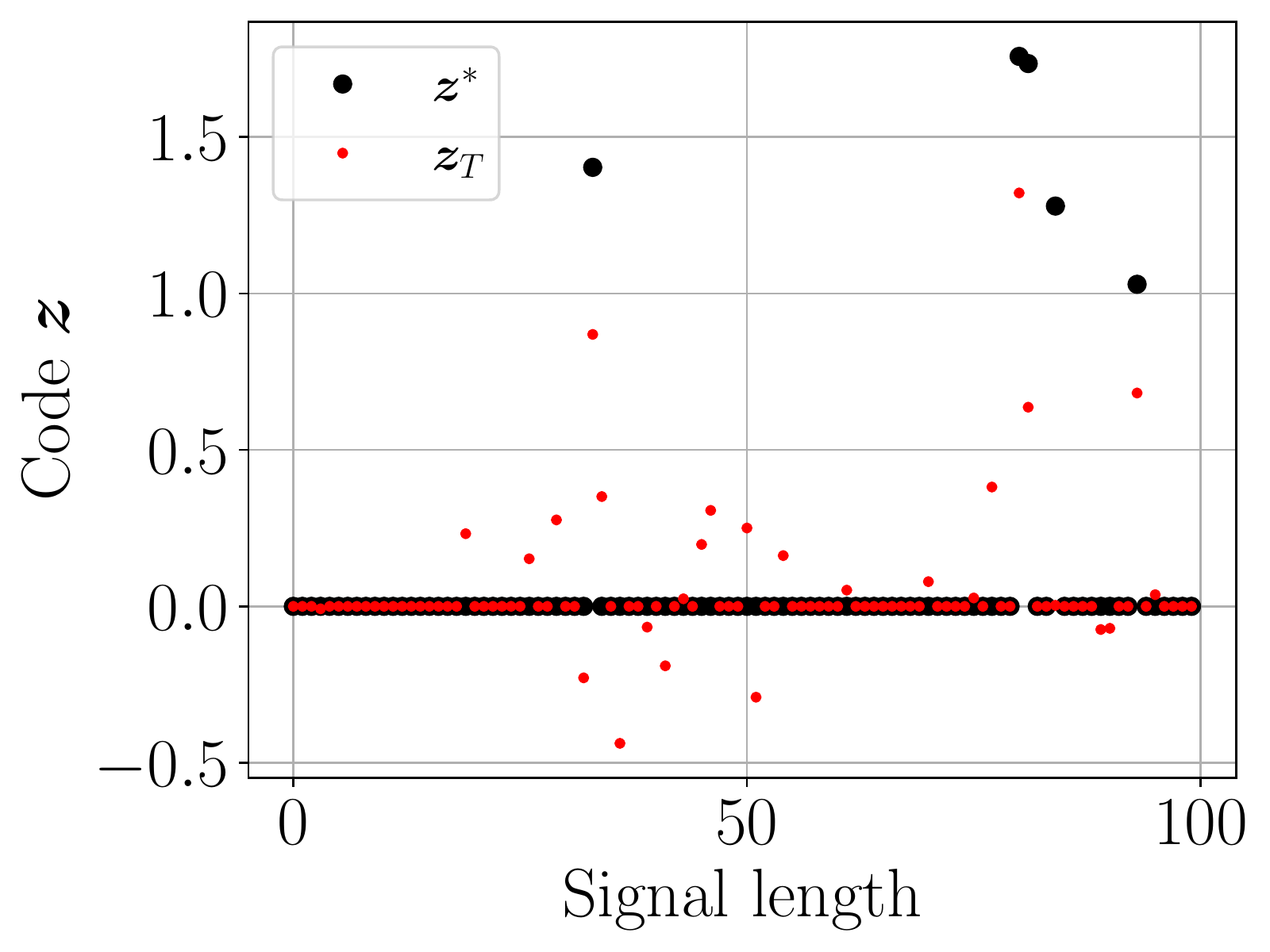}
    \caption{Example of code estimates with the initialized dictionary.}
	\label{fig:code_init}
% 	\vspace{-4mm}
\end{wrapfigure}
\paragraph{Dataset} We generated $n\!=\!10{,}000$ samples following~\eqref{eq:gen}. We sampled $\D^{\ast}\!\in\!\R^{50 \times 100}$ from zero-mean Gaussian distribution, and normalized the columns. The codes are $5$-sparse with their support uniformly chosen at random and their amplitudes are sampled from $\text{Uniform}(1,2)$.
\paragraph{Training} We let $T=200$, $\lambda = 0.2$, and $\alpha = 0.2$. The dictionary is initialized to $\D = \D^{\ast} + \tau_B {\bm B}$ with ${\bm B} \sim \mathcal{N}(\zero, \frac{1}{m} \eye)$. For~\Cref{fig:forwardpass,fig:localgradient,fig:dstar}, we set $\tau_B \approx \nicefrac{0.55}{\log m}$. \Cref{fig:code_init} shows the sparse code estimates from one example given this initialized dictionary; this is to highlight that a) the initial dictionary is not very close to the ground-truth dictionary, and b) our algorithm is able to successfully perform dictionary learning and recover the support by the end of training in spite of a failed exact recovery of the support.

For~\Cref{fig:dl_25,fig:dl_100}, we chose much larger noise level, $\tau_B \approx \nicefrac{2.8}{\log m}$. The network is trained for $600$ epochs with full-batch gradient descent using Adam optimizer~\citep{kingma2014adam} with learning rate of $10^{-3}$ and $\epsilon = 10^{-8}$. The learned dictionary is evaluated based on the error $\nicefrac{\| \D - \D^{\ast} \|_2}{\| \D^{\ast} \|_2}$. The results and conclusion were consistent across various realizations of the dataset and across various optimizers. Hence, in the main paper, the figures visualize results of one realization.
\paragraph{Noisy measurements} We repeated the dictionary learning experiments shown in \Cref{fig:dl_25,fig:dl_100} where the measurements $\x$ are corrupted by zero-mean Gaussian noise such that the SNR is approximately $12$ dB. Accordingly, we set $\lambda = 0.3$. \Cref{fig:dl_noisy} shows the results for both noisy and noiseless scenarios.

%
%%%
\begin{figure}[h]
	\centering
	%%%%%
	\begin{subfigure}[t]{0.24\linewidth}
	\centering
	\includegraphics[width=0.999\linewidth]{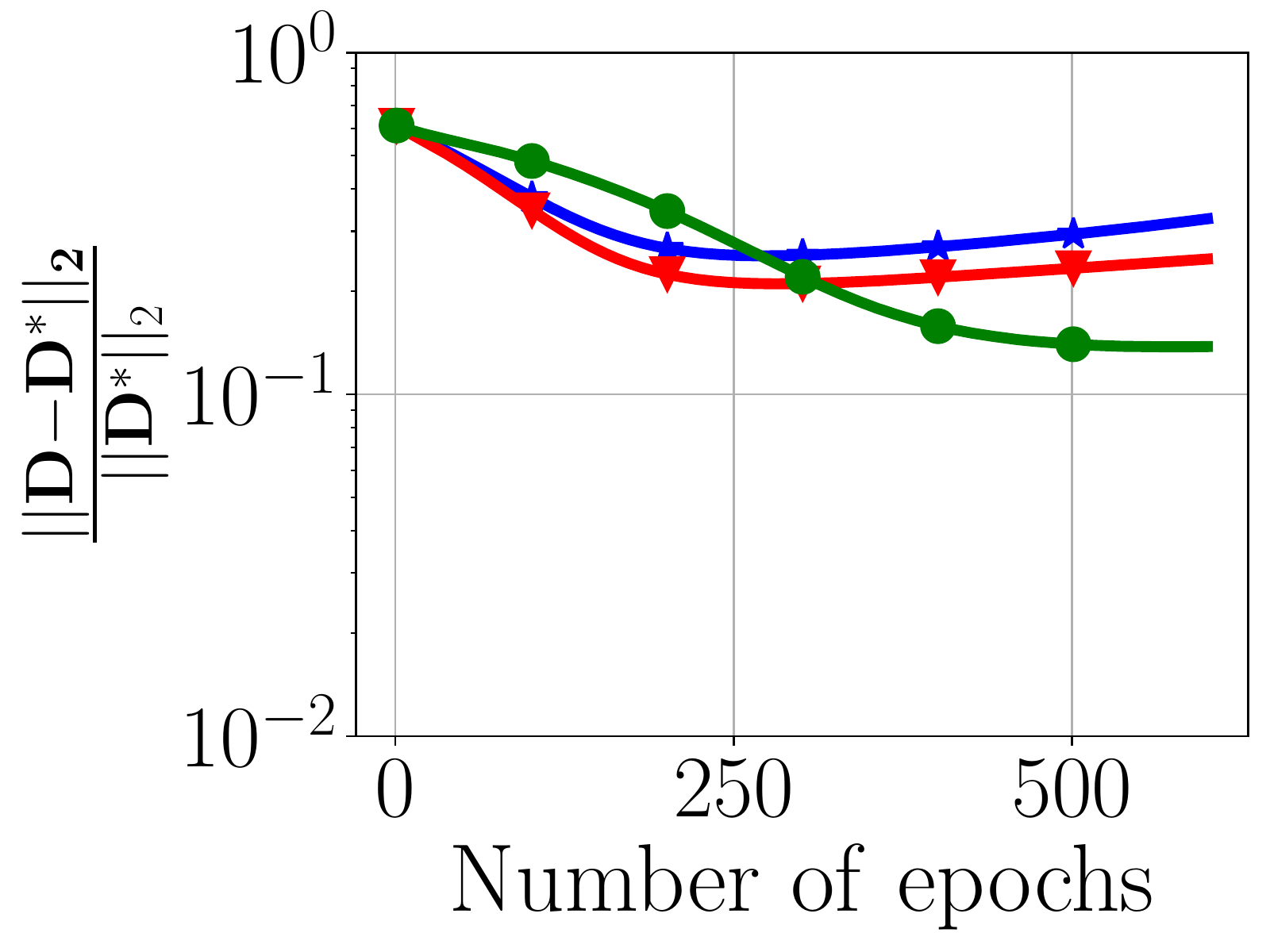}
	  \caption{Noisy ($T=25$).}
  	\label{fig:dl_25_noisy}
	\end{subfigure}
	%%%%%
	\begin{subfigure}[t]{0.24\linewidth}
	\centering
	\includegraphics[width=0.999\linewidth]{figures/T25_D_Dstar_err}
	  \caption{Noiseless ($T=25$).}
  	\label{fig:dl_25_a}
	\end{subfigure}
	%%%%%
	\begin{subfigure}[t]{0.24\linewidth}
	\centering
	\includegraphics[width=0.999\linewidth]{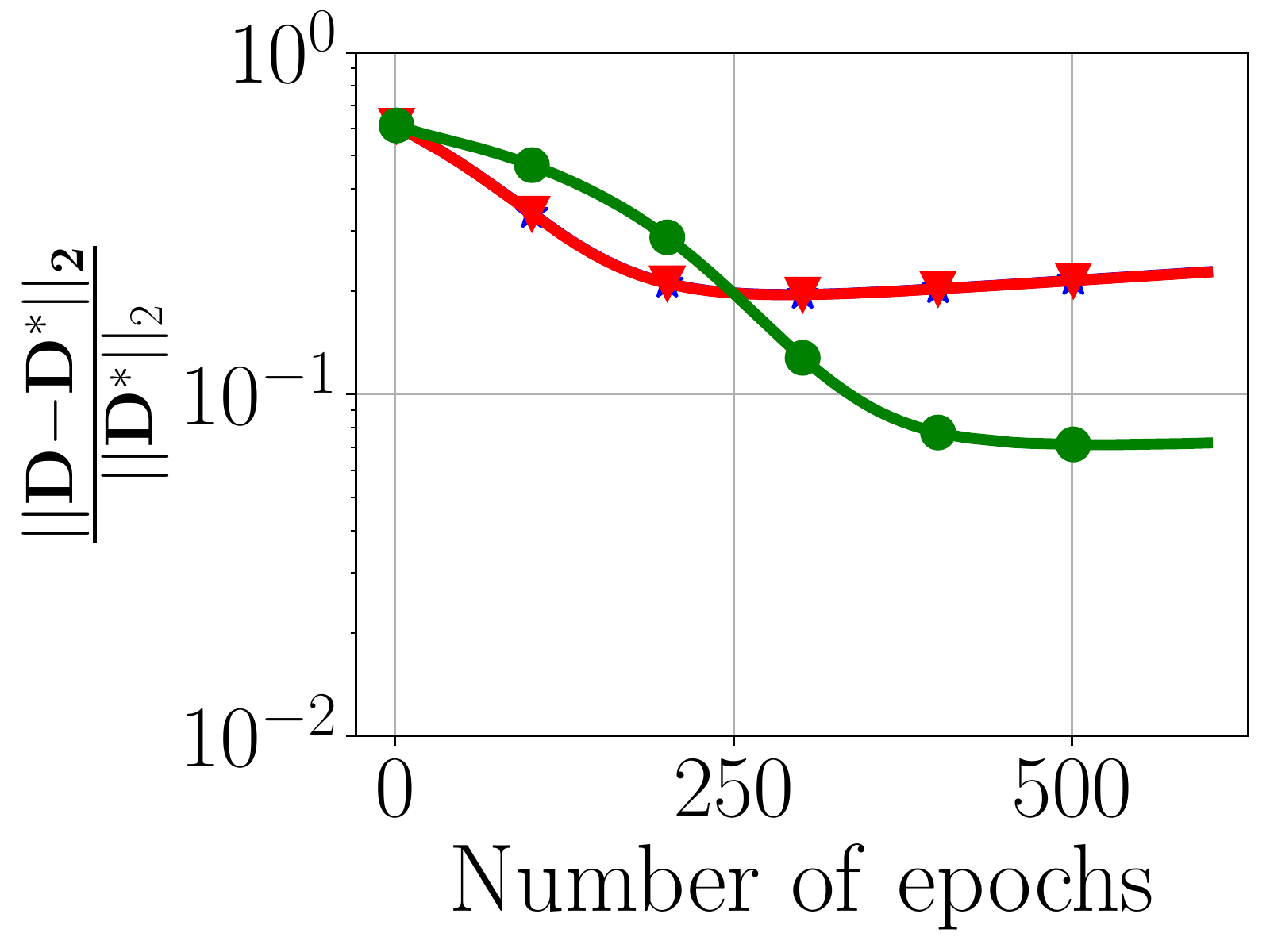}
	  \caption{Noisy ($T=100$).}
  	\label{fig:dl_100_noisy}
	\end{subfigure}
	%%%%%
	\begin{subfigure}[t]{0.24\linewidth}
	\centering
	\includegraphics[width=0.999\linewidth]{figures/T100_D_Dstar_err}
	  \caption{Noiseless ($T=100$).}
  	\label{fig:dl_100_a}
	\end{subfigure}
	%%%%%
    \caption{Dictionary learning in noisy and noiseless scenarios.}
	\label{fig:dl_noisy}
	\vspace{-4mm}
\end{figure}
\paragraph{Stochastic Dictionary Learning} In addition to the full-batch gradient descent results in the main paper, we repeated the experiments in \Cref{fig:dl_100} using batch size of $4, 16$ and $64$. We observed (\Cref{fig:dl_batch}) that in all scenarios PUDLE is able to learn a good estimate of the ground-truth dictionary, and $\g_t^{\text{ae-ls}}$ is superior to the other two. We note that for lower batch-size, there will be more gradient updates in one epoch, hence, the algorithm converges in lower number of epochs.
%%%
\begin{figure}[h]
	\centering
	\begin{subfigure}[t]{0.28\linewidth}
	\centering
	\includegraphics[width=0.999\linewidth]{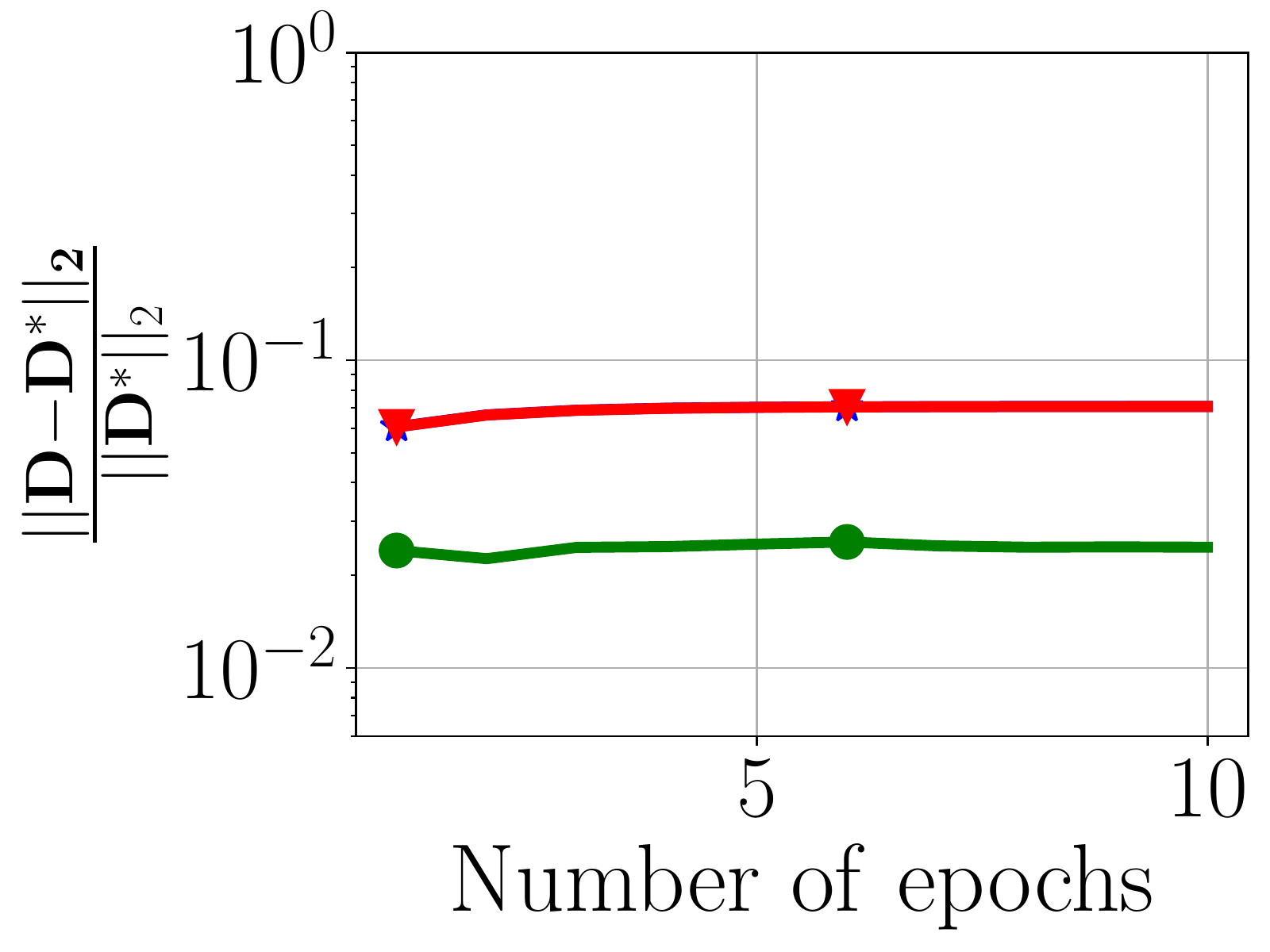}
	  \caption{Batch size $4$ ($T=25$).}
  	\label{fig:dl_100_batch4}
	\end{subfigure}
	%%%%%
	\begin{subfigure}[t]{0.28\linewidth}
	\centering
	\includegraphics[width=0.999\linewidth]{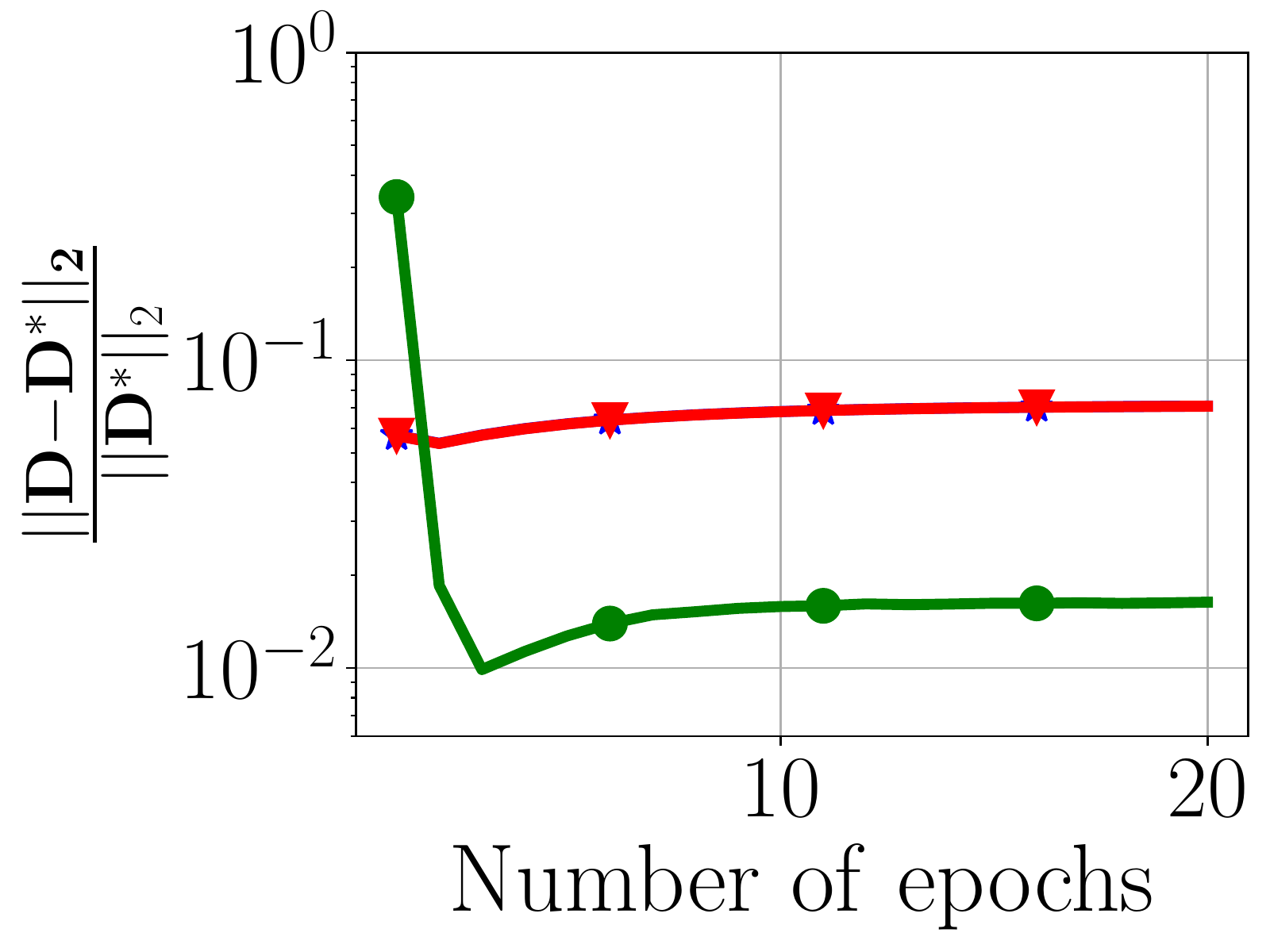}
	  \caption{Batch size $16$ ($T=100$).}
  	\label{fig:dl_100_batch16}
	\end{subfigure}
	%%%%%
	\begin{subfigure}[t]{0.28\linewidth}
	\centering
	\includegraphics[width=0.999\linewidth]{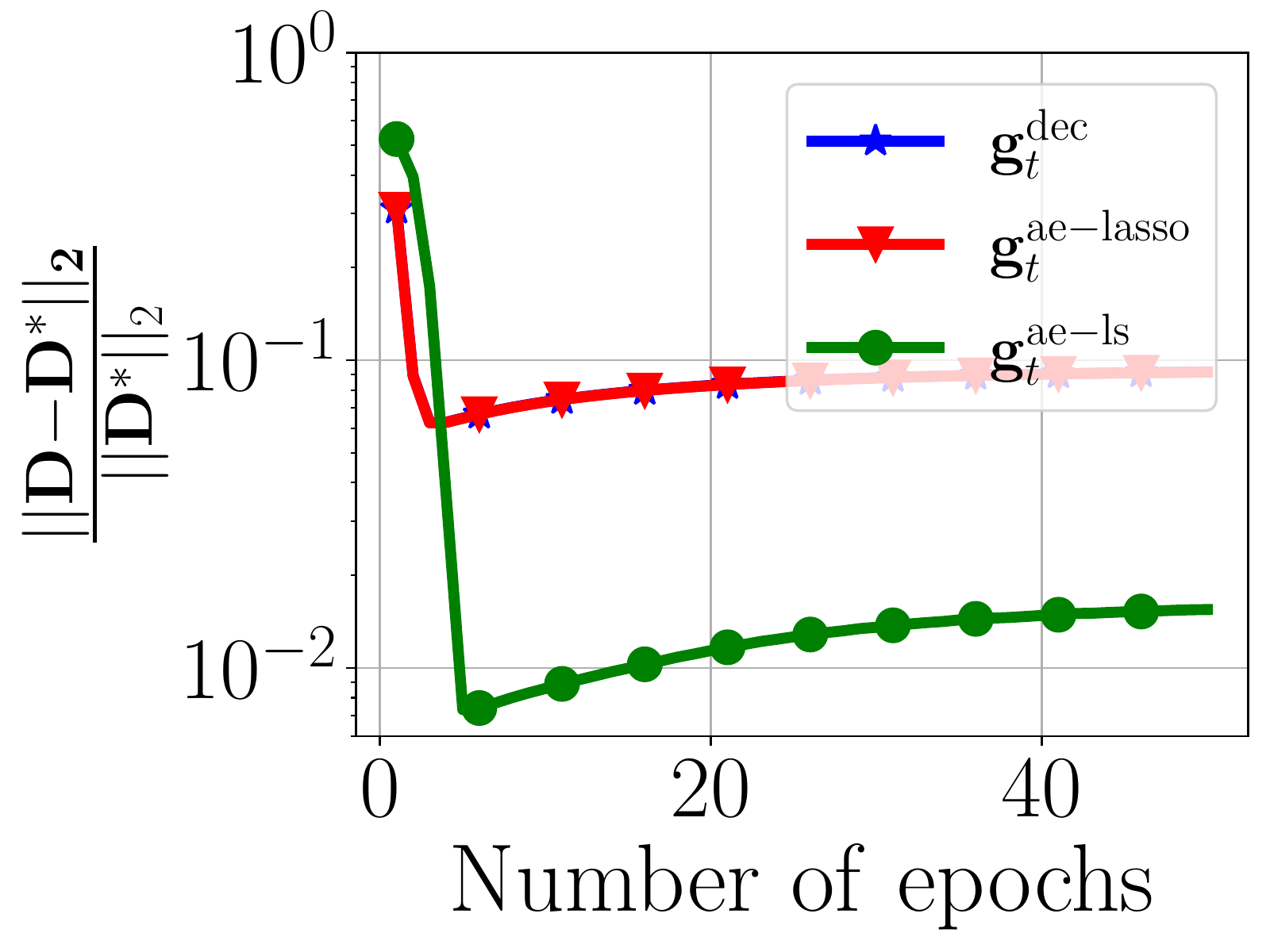}
	  \caption{Batch size $64$ ($T=100$).}
  	\label{fig:dl_100_batch64}
	\end{subfigure}
	%%%%%
    \caption{Dictionary learning using various batch sizes.}
	\label{fig:dl_batch}
	\vspace{-4mm}
\end{figure}
\paragraph{Effect of learning rate} We performed the experiments in \Cref{fig:dl_100} for various learning rate of $10^{-4}, 10^{-3}$, and $10^{-2}$. This shows the robustness of the gradient-based dictionary learning against learning rate. \Cref{fig:dl_lr} demonstrates such results where PUDLE successfully converges to the neighbourhood of the ground-truth dictionary; Regardless of the learning rate, $\g_t^{\text{ae-ls}}$ converges to a closer neighbourhood than the other two gradients. Overall, smaller the learning rate, more epochs is needed to reach convergence.
%%%
\begin{figure}[h]
	\centering
	\begin{subfigure}[t]{0.28\linewidth}
	\centering
	\includegraphics[width=0.999\linewidth]{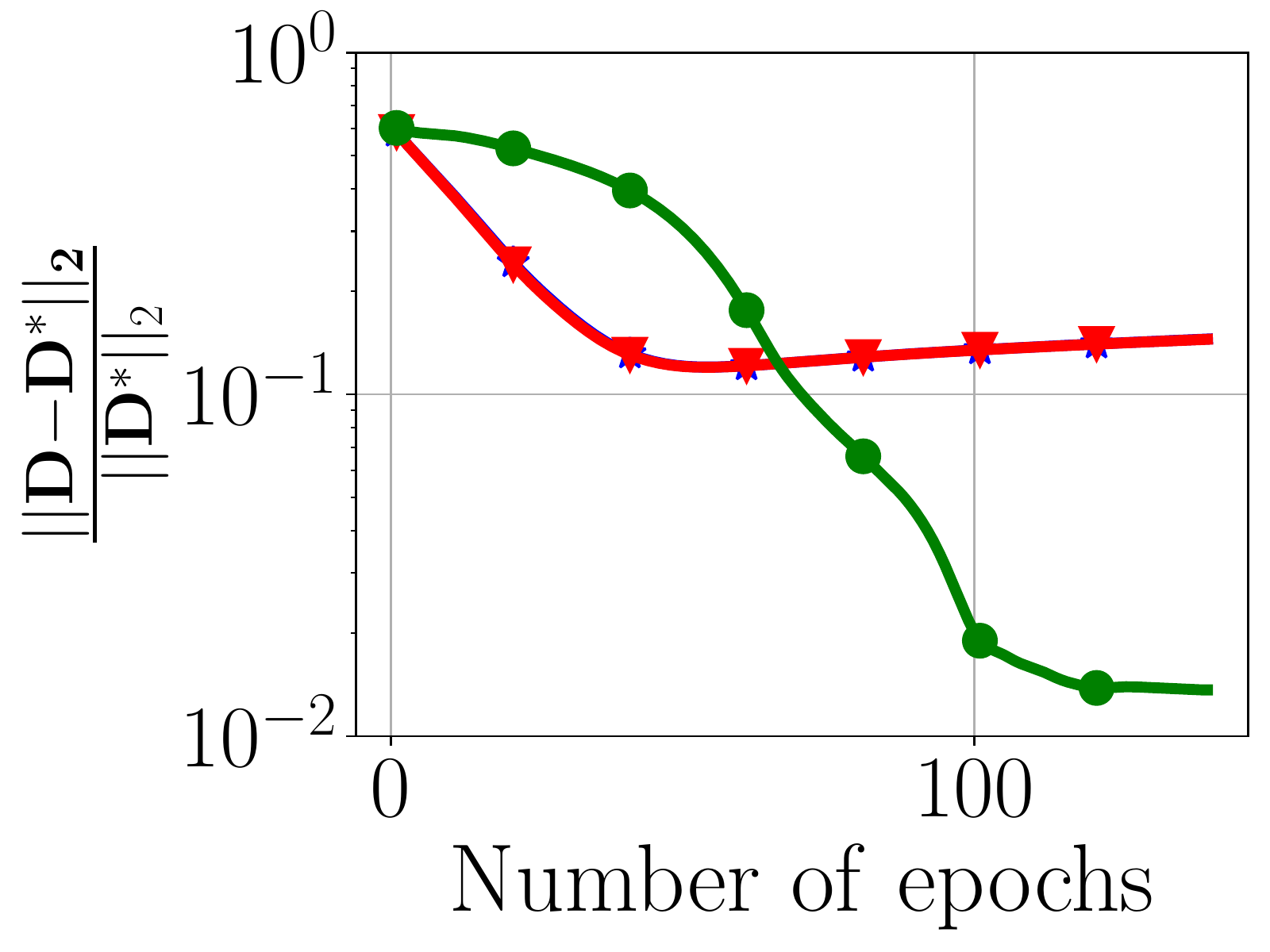}
	  \caption{Learning rate $10^{-2}$.}
  	\label{fig:dl_100_lr2}
	\end{subfigure}
	%%%%%
	\begin{subfigure}[t]{0.28\linewidth}
	\centering
	\includegraphics[width=0.999\linewidth]{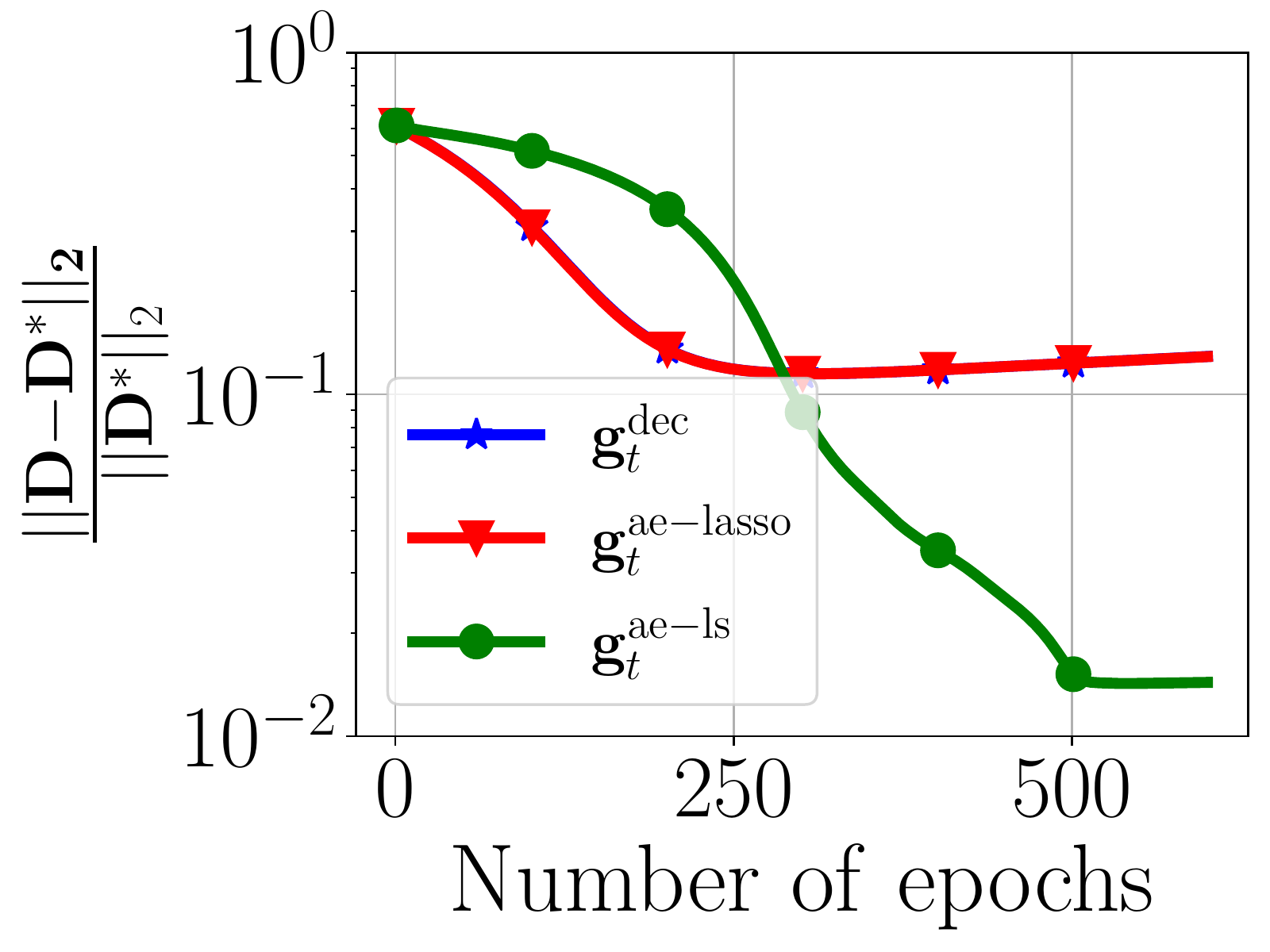}
	  \caption{Learning rate $10^{-3}$.}
  	\label{fig:dl_100_lr3}
	\end{subfigure}
	%%%%%
	\begin{subfigure}[t]{0.28\linewidth}
	\centering
	\includegraphics[width=0.999\linewidth]{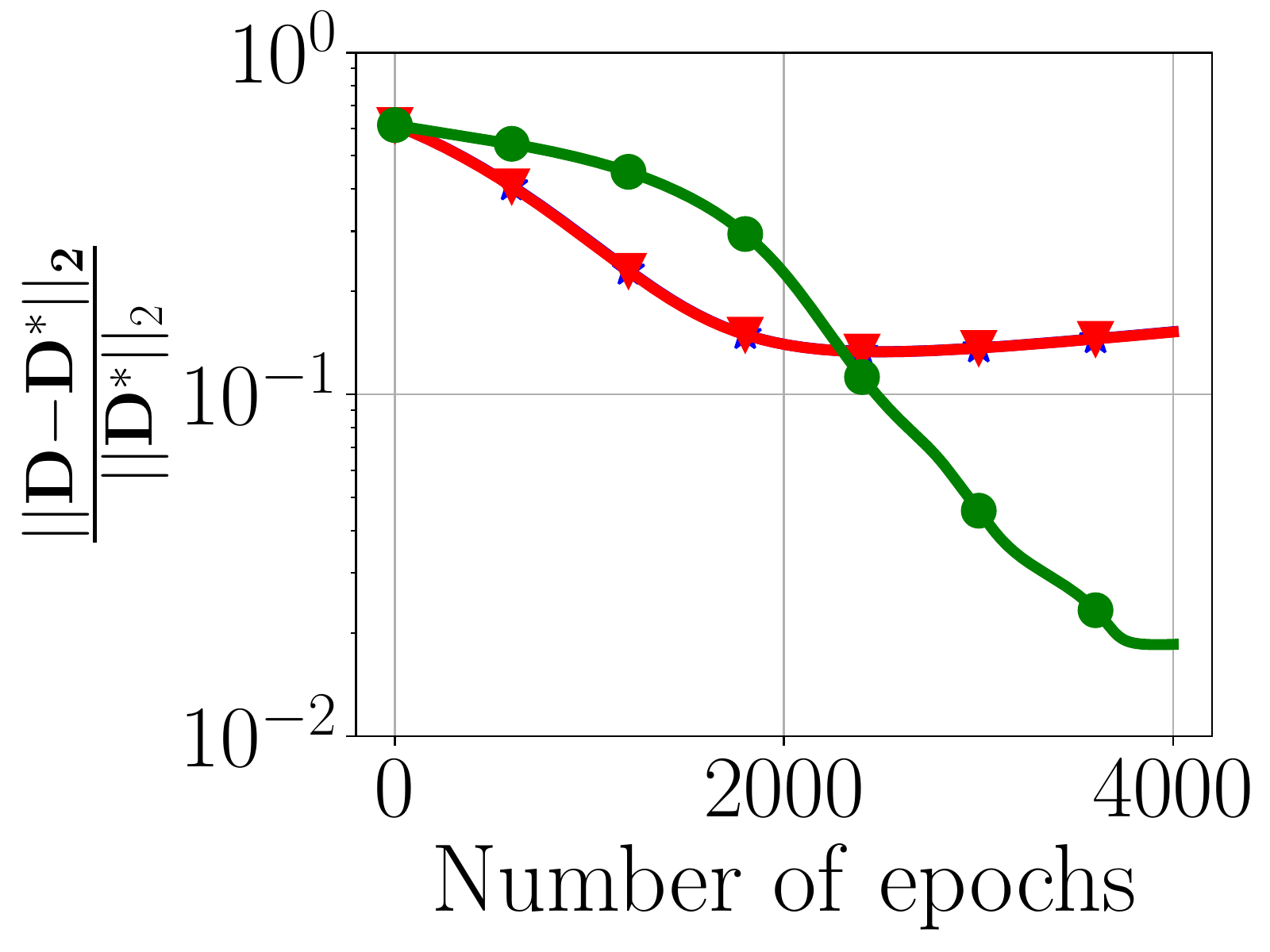}
	  \caption{Learning rate $10^{-4}$.}
  	\label{fig:dl_100_lr4}
	\end{subfigure}
	%%%%%
    \caption{Dictionary learning for various learning rates when $T=100$.}
	\label{fig:dl_lr}
	\vspace{-7mm}
\end{figure}
\paragraph{Dictionary initialization} We conducted similar experiments to \Cref{fig:dl_25,fig:dl_100}. We let $n=10{,}000$, $m=100$, and $p=100$. We generated an orthogonal $\D^{\ast}$. The sparse codes $\z^{\ast}$ are $5$-sparse and their amplitudes are drawn from sub-Gaussian $\mathcal{N}(0, 1)$. We set $\lambda=0.05$, and $\alpha = 0.2$. We used the pairwise method proposed by~\citet{arora2015sparsecoding} to initialize the dictionary. This close initialization resulted in a dictionary that provides support recovery prior training. \Cref{fig:dl_initarora} shows successful dictionary learning where $\g_t^{\text{ae-ls}}$ converges to a closer neighbourhood of $\D^{\ast}$ than the other two gradients. We used linear sum assignment optimization (i.e., \texttt{scipy.optimize.linear\_sum\_assignment}) to find the correct column permutations before computing the dictionary distance error.
%
%%%
\begin{figure}[h]
	\centering
	\begin{subfigure}[t]{0.42\linewidth}
	\centering
	\includegraphics[width=0.99\linewidth]{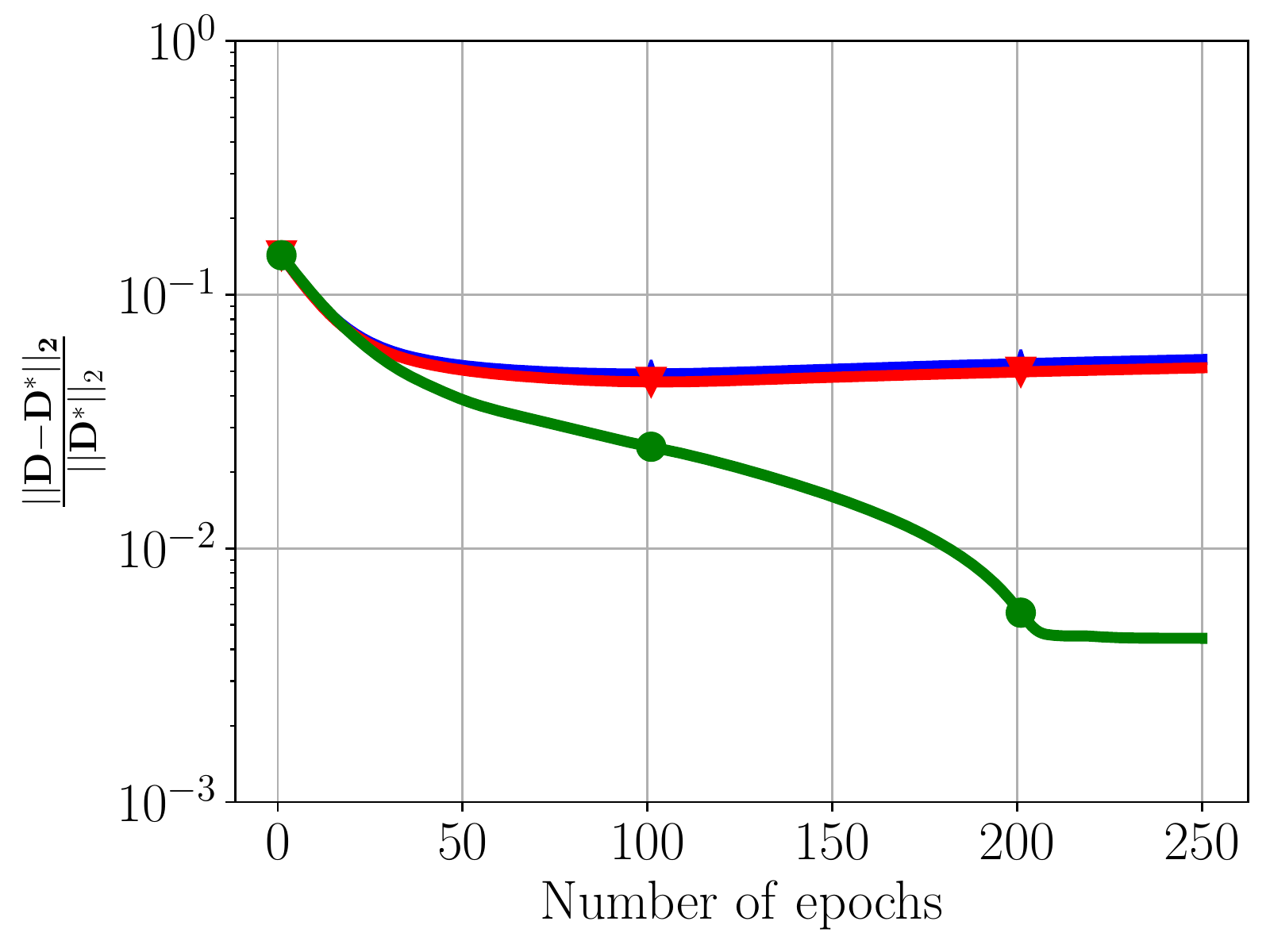}
	  \caption{$T=25$.}
  	\label{fig:dl_25_initarora}
	\end{subfigure}
	%%%%%
	\begin{subfigure}[t]{0.42\linewidth}
	\centering
	\includegraphics[width=0.99\linewidth]{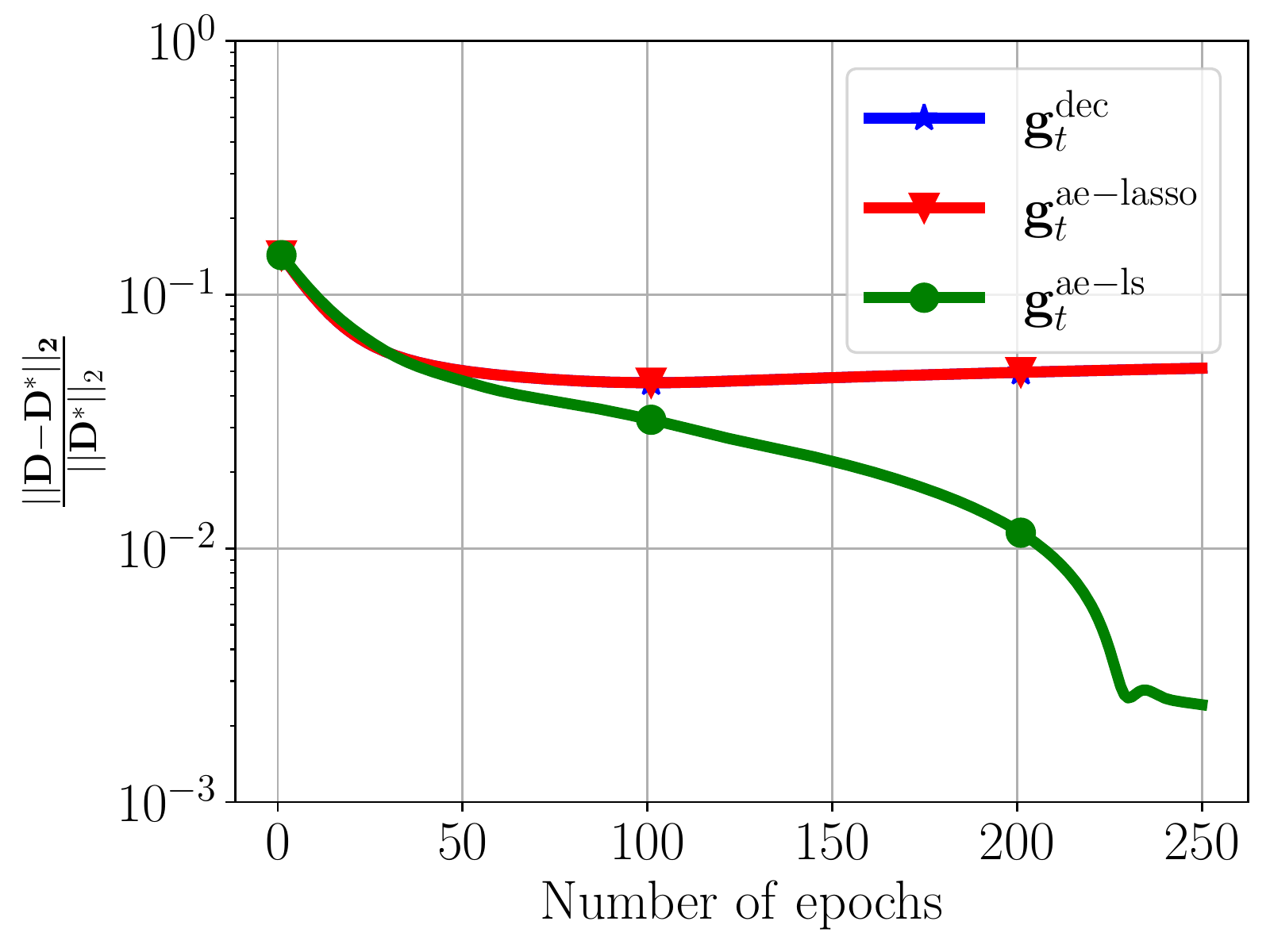}
	  \caption{$T=100$.}
  	\label{fig:dl_100_initarora}
	\end{subfigure}
	%%%%%
    \caption{Dictionary learning when $\D$ is initialized using the pairwise method~\citet{arora2015sparsecoding}.}
	\label{fig:dl_initarora}
% 	\vspace{-4mm}
\end{figure}
\subsection{Dictionary learning}
\paragraph{Dataset} We generated $n\!=\!50{,}000$ samples following~\eqref{eq:gen}. We let $m\!=\!1000$ and $p\!=\!1500$, and sample $\D^{\ast}$ from zero-mean Gaussian distribution, and then normalized the columns. The sparse codes $\z^i$ are $10$, $20$, $40$-sparse, where their supports are chosen uniformly at random and amplitudes are sampled from $\text{Uniform}(1,2)$.
\paragraph{Training} The dictionary is initialized to $\D = \D^{\ast} + \tau_B {\bm B}$ with ${\bm B} \sim \mathcal{N}(\zero, \frac{1}{m} \eye)$ where $\tau_B \approx \nicefrac{1}{\log m}$. We let $\lambda = 0.2$, and $\alpha = 0.2$, and $T=100$. The network is trained for $1{,}000$ iterative updates with batch-size of $50$ using Adam~\citep{kingma2014adam} with learning rate of $10^{-3}$ and $\epsilon = 10^{-3}$.  For decay method, $\nu$ is decreased in value by $0.005$ every $100$ update iterations. Each filter is normalized after every update. The learned dictionary is evaluated based on the relative error $\nicefrac{\| \D - \D^{\ast} \|_2}{\| \D^{\ast} \|_2}$.
\begin{figure}[t]
	\centering
	%%%%%
	\begin{subfigure}[t]{0.325\linewidth}
	\centering
	\includegraphics[width=0.999\linewidth]{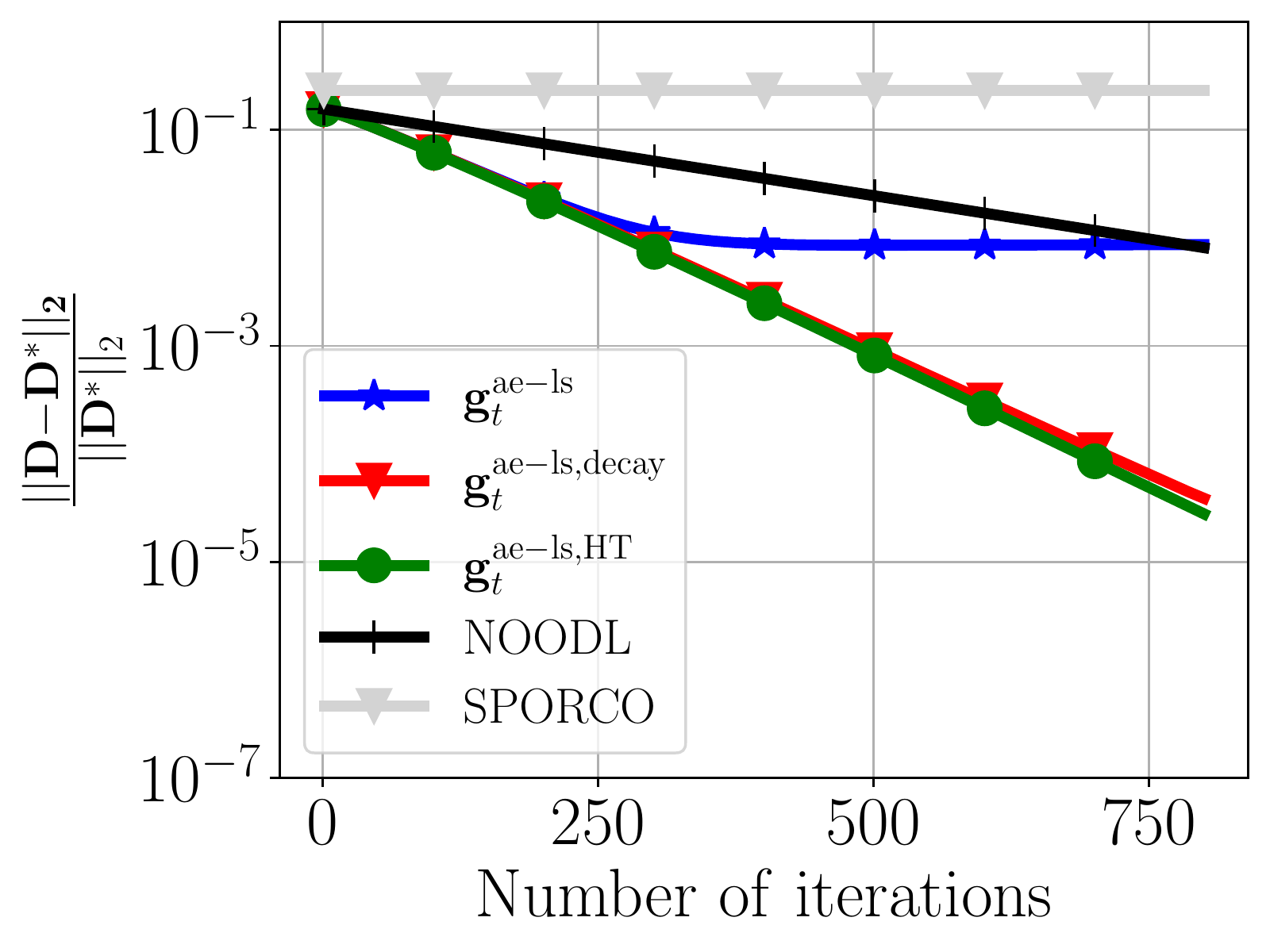}
	  \caption{$10$-sparse code.}
  	\label{fig:s10}
	\end{subfigure}
	\begin{subfigure}[t]{0.325\linewidth}
	\centering
	\includegraphics[width=0.999\linewidth]{figures/s20_D_Dstar_err}
 	 \caption{$20$-sparse code.}
  	\label{fig:s20}
	\end{subfigure}
	%%%%%
	\begin{subfigure}[t]{0.325\linewidth}
	\centering
	\includegraphics[width=0.999\linewidth]{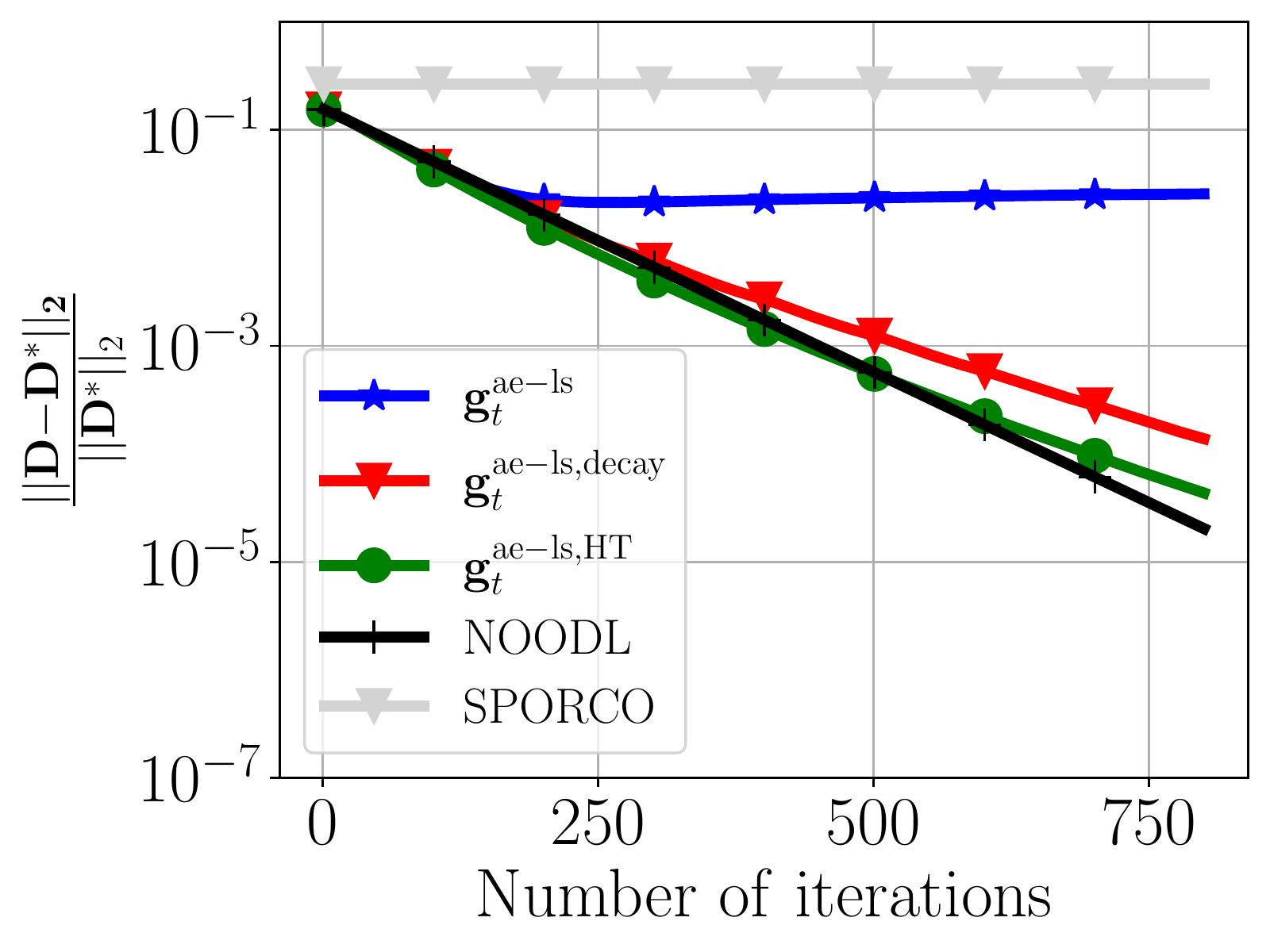}
	  \caption{$40$-sparse code.}
  	\label{fig:s40}
	\end{subfigure}
	\caption{Dictionary learning convergence using $\g_t^{\text{ae-ls}}$ compared to NOODL and SPORCO.}
	\label{fig:baslines_full}
	\vspace{-4mm}
\end{figure}
\subsection{Image denoising}
\paragraph{Training} We trained PUDLE where the dictionary is convolutional with $64$ filters of size $9 \times 9$ and strides of $4$. The encoder unrolls for $T=15$, and the step size is set to $\alpha = 0.1$. Unlike the theoretical analysis where full-batch gradient descent is studied, the network is trained stochastically with Adam optimizer~\citep{kingma2014adam} with a learning rate of $10^{-4}$ and $\epsilon = 10^{-3}$ for $250$ epochs. At every training iteration, a random $129 \times 129$ patch is cropped and a zero-mean Gaussian noise with a standard deviation of $25$ is added. We utilize random horizontal and vertical flip for augmentation. We report results in terms of the peak signal-to-noise ratio (PSNR). The standard deviation of the test PSNR across multiple noise realizations was lower than $0.02$ dB for all the methods. Hence, we only reported the mean PSNR of the test set.
\subsection{Interpretable sparse coding and dictionary learning}
We focused on digits of $\{0,1,2,3,4\}$ of MNIST. We set $T=15$, $\lambda=0.7$, and $\alpha = 1$. The dictionary dimensions are $m = 784$ and $p = 500$. We trained the network for $200$ epochs using Adam optimizer with a learning rate of $10^{-4}$ and batch size of $32$. For construction of $\G$, $\omega$ is set to $0.001$. For \Cref{fig:mnist_01234_learn_image_cont_to_dict}, we computed the image contributions using $6{,}000$ randomly chosen training images. The Gram matrix used in \Cref{fig:mnist_01234_learn_interpolate_gz}, is constructed by $6{,}000$ training examples, and the reconstruction is from the $200$ most contributed training images.

\end{document}